\newcommand*{\NEURIPS}{}
\newcommand*{\CAMREADY}{}
	\renewcommand{\cite}[1]{\citep{#1}}
	\definecolor{mydarkblue}{rgb}{0,0.08,0.5}
	\def\footnoterule{\kern-3pt \hrule width 12pc \kern 2.6pt }
	\renewenvironment{abstract}%
	{%
		\vskip 0in%
		\centerline%
		{\large\bf Abstract}%
		\vspace{-1ex}%
		\begin{quote}%
		}
		{
			\par%
		\end{quote}%
		\vskip 0ex%
	}
	\title{\vskip -4ex \bf{Do Neural Nets Need Gradient Descent to Generalize? Matrix Factorization as a Theoretical Testbed} \vskip 0ex}
	\author{%
		\centering                      % centre the whole block
		\begin{tabular}[t]{@{}c@{\hspace{3em}}c@{}}   % two centred columns
			% Add \thanks{Equal contribution.} for equal contribution and \footnotemark[1] in the other equal contributing authors
			% -------- first row ----------
			\textbf{Yotam Alexander}      & \textbf{Yonatan Slutzky} \\[0.2ex]
			Tel Aviv University           & Tel Aviv University      \\[-0.4ex]
			\texttt{\normalsize yotam.alexander@gmail.com} &
			\texttt{\normalsize slutzky1@mail.tau.ac.il} \\[1.2ex]
			% -------- second row ---------
			\textbf{Yuval Ran-Milo}       & \textbf{Nadav Cohen}     \\[0.2ex]
			Tel Aviv University           & Tel Aviv University      \\[-0.4ex]
			\texttt{\normalsize yuv.milo@gmail.com}   &
			\texttt{\normalsize cohennadav@tauex.tau.ac.il} \\
		\end{tabular}
	% \textbf{Yotam Alexander}\\ % Add \thanks{Equal contribution.} for equal contribution and \footnotemark[1] in the other equal contributing authors
	% 	Tel Aviv University\\
	% 	\texttt{\normalsize yotam.alexander@gmail.com} \\
	% 	\and
	% 	\textbf{Yonatan Slutzky}\\
	% 	Tel Aviv University\\
	% 	\texttt{\normalsize slutzky1@mail.tau.ac.il} \\
	% 	\and 
	% 	\textbf{Yuval Ran-Milo}\\
	% 	Tel Aviv University\\
	% 	\texttt{\normalsize yuv.milo@gmail.com} \\
	% 	\and
	% 	\textbf{Nadav Cohen}\\
	% 	Tel Aviv University\\
	% 	\texttt{\normalsize cohennadav@tauex.tau.ac.il} \\
	}	
	\date{}
	\newtheorem{claim}[theorem]{Claim}
	\newtheorem{fact}[theorem]{Fact}
	\newtheorem{procedure}{Procedure}
	\newtheorem{conjecture}{Conjecture}	
	\newtheorem{hypothesis}{Hypothesis}	
	\newcommand{\qed}{\hfill\ensuremath{\blacksquare}}
	\newtheorem{lemma}{Lemma}
	\newtheorem{corollary}{Corollary}
	\newtheorem{theorem}{Theorem}
	\newtheorem{proposition}{Proposition}
	\newtheorem{remark}{Remark}
	\theoremstyle{definition}
	\newtheorem{definition}{Definition}
\definecolor{green}{rgb}{0.0, 0.5, 0.0}
\definecolor{xcolor-gray}{gray}{0.95}
\def\be{\begin{equation}}
	\def\ee{\end{equation}}
\def\beas{\begin{eqnarray*}}
	\def\eeas{\end{eqnarray*}}
\def\bea{\begin{eqnarray}}
	\def\eea{\end{eqnarray}}
\newcommand{\xbf}{{\mathbf x}}
\newcommand{\ybf}{{\mathbf y}}
\newcommand{\zbf}{{\mathbf z}}
\newcommand{\ubf}{{\mathbf u}}
\newcommand{\vbf}{{\mathbf v}}
\newcommand{\tbf}{{\mathbf t}}
\newcommand{\ebf}{{\mathbf e}}
\newcommand{\A}{{\mathcal A}}
\newcommand{\B}{{\mathcal B}}
\newcommand{\CC}{{\mathcal C}}
\newcommand{\NN}{{\mathcal N}}
\newcommand{\PP}{{\mathcal P}}
\newcommand{\QQ}{{\mathcal Q}}
\newcommand{\U}{{\mathcal U}}
\newcommand{\I}{{\mathcal I}}
\renewcommand{\L}{\mathcal{L}}
\newcommand{\EE}{\mathop{\mathbb E}} % expectation operator
\newcommand{\R}{{\mathbb R}}
\newcommand{\N}{{\mathbb N}}
\newcommand{\lambdabf}{{\boldsymbol{\lambda}}}
\newcommand{\mubf}{{\boldsymbol{\mu}}}
\newcommand{\sigmabf}{{\boldsymbol{\sigma}}}
\newcommand{\inprod}[2]  {\left\langle{#1},{#2}\right\rangle}
\newcommand{\BN}{\mathbb{N}}
\newcommand{\BR}{\mathbb{R}}
\DeclareMathOperator*{\Cov}{Cov}
\DeclareMathOperator{\Tr}{Tr}
\DeclareMathOperator{\Sp}{span}
\newcommand{\rank}{\mathrm{rank}}
\newcommand{\GnC}{G\&C\xspace}
\newcommand{\din}{m}
\newcommand{\dout}{m'}
\newcommand{\dhid}{k}
\newcommand{\depth}{d}
\newcommand{\We}{W}
\newcommand{\Wmid}{W_{\depth-1:2}}
\newcommand{\gt}{W^{*}}
\newcommand{\Ltrn}{\L_{\textup{train}}}
\newcommand{\Lgen}{\L_{\textup{gen}}}
\newcommand{\etrn}{\epsilon_{\textup{train}}}
\newcommand{\egen}{\epsilon_{\textup{gen}}}
\newcommand{\rip}{\delta}
\newcommand{\clr}{\gamma}
\newcommand{\tr}{\L_{\text{train}}}
\newcommand{\gen}{\L_{\text{gen}}}
\newcommand{\Wiid}{W_{\textup{iid}}}
\DeclareFontFamily{U}{mathx}{\hyphenchar\font45}
\DeclareFontShape{U}{mathx}{m}{n}{<-> mathx10}{}
\DeclareSymbolFont{mathx}{U}{mathx}{m}{n}
\DeclareMathAccent{\widebar}{0}{mathx}{"73}
\definecolor{darkspringgreen}{rgb}{0.09, 0.45, 0.27}
	\renewcommand{\endnote}[1]{\null} 
\let\note\footnote
	\newcommand*{\ABBR}{}
	\newcommand*{\ABBR}{}
	\newcommand*{\ABBR}{}
	\newcommand*{\ABBR}{}
	\newcommand*{\ABBR}{}
	\newcommand{\eg}{{\it e.g.}}
	\newcommand{\ie}{{\it i.e.}}
\begin{document}
	
	% SPACING (only use when absolutely necessary!)
	% Use to control size below and above equations
	%	\setlength{\abovedisplayskip}{1.2pt}
	%	\setlength{\belowdisplayskip}{1.2pt}
	%	\setlength{\abovedisplayshortskip}{1.2pt}
	%	\setlength{\belowdisplayshortskip}{1.2pt}
	% Use to control size of itemize and enumerate environments
	%	\setenumerate{itemsep=0pt}
	%	\setitemize{itemsep=1pt}
	
	% TITLE AND AUTHORS
	\ifdefined\ARXIV
		\maketitle
	\fi
	\ifdefined\NEURIPS
	\title{Do Neural Networks Need Gradient Descent \\ to Generalize? A Theoretical Study}
		\author{
		\centering                      % centre the whole block
		\begin{tabular}[t]{@{}c@{\hspace{3em}}c@{}}   % two centred columns
			% Add \thanks{Equal contribution.} for equal contribution and \footnotemark[1] in the other equal contributing authors
			% -------- first row ----------
			\textbf{Yotam Alexander}      & \textbf{Yonatan Slutzky} \\[0.2ex]
			{\normalfont Tel Aviv University}           & {\normalfont Tel Aviv University}      \\[-0.4ex]
			\texttt{\normalsize yotam.alexander@gmail.com} &
			\texttt{\normalsize slutzky1@mail.tau.ac.il} \\[1.2ex]
			% -------- second row ---------
			\textbf{Yuval Ran-Milo}       & \textbf{Nadav Cohen}     \\[0.2ex]
			{\normalfont Tel Aviv University}           & {\normalfont Tel Aviv University}      \\[-0.4ex]
			\texttt{\normalsize yuv.milo@gmail.com}   &
			\texttt{\normalsize cohennadav@tauex.tau.ac.il} \\
		\end{tabular}
		}
		\maketitle
	\fi
	\ifdefined\CVPR
		\title{Paper Title}
		\author{
			Author 1 \\
			Author 1 Institution \\	
			\texttt{author1@email} \\
			\and
			Author 2 \\
			Author 2 Institution \\
			\texttt{author2@email} \\	
			\and
			Author 3 \\
			Author 3 Institution \\
			\texttt{author3@email} \\
		}
		\maketitle
	\fi
	\ifdefined\AISTATS
		\twocolumn[
		\aistatstitle{Paper Title}
		\ifdefined\CAMREADY
			\aistatsauthor{Author 1 \And Author 2 \And Author 3}
			\aistatsaddress{Author 1 Institution \And Author 2 Institution \And Author 3 Institution}
		\else
			\aistatsauthor{Anonymous Author 1 \And Anonymous Author 2 \And Anonymous Author 3}
			\aistatsaddress{Unknown Institution 1 \And Unknown Institution 2 \And Unknown Institution 3}
		\fi
		]	
	\fi
	\ifdefined\ICML
		\icmltitlerunning{Do Neural Nets Need Gradient Descent to Generalize?  A Theoretical Study}
		\twocolumn[
		\icmltitle{Do Neural Networks Need Gradient Descent to Generalize? A Theoretical Study} 
		\icmlsetsymbol{equal}{*}
		\begin{icmlauthorlist}
			\icmlauthor{Author 1}{inst} % Add ''equal'' next to institution identifier if appropriate
			\icmlauthor{Author 2}{inst}
		\end{icmlauthorlist}
		\icmlaffiliation{inst}{Some Institute}
		\icmlcorrespondingauthor{Author 1}{author1@email}
		\icmlkeywords{}
		\vskip 0.3in
		]
		\printAffiliationsAndNotice{} % Add \icmlEqualContribution inside {} if appropriate
	\fi
	\ifdefined\ICLR
		\title{Paper Title}
		\author{
			Author 1 \\
			Author 1 Institution \\
			\texttt{author1@email}
			\And
			Author 2 \\
			Author 2 Institution \\
			\texttt{author2@email}
			\And
			Author 3 \\ 
			Author 3 Institution \\
			\texttt{author3@email}
		}
		\maketitle
	\fi
	\ifdefined\COLT
		\title{Paper Title}
		\coltauthor{
			\Name{Author 1} \Email{author1@email} \\
			\addr Author 1 Institution
			\And
			\Name{Author 2} \Email{author2@email} \\
			\addr Author 2 Institution
			\And
			\Name{Author 3} \Email{author3@email} \\
			\addr Author 3 Institution}
		\maketitle
	\fi

	% ABSTRACT
	\begin{abstract}
Conventional wisdom attributes the mysterious generalization abilities of overparameterized neural networks to gradient descent (and its variants).
The recent volume hypothesis challenges this view: it posits that these generalization abilities persist even when gradient descent is replaced by Guess \&~Check (\GnC), \ie, by randomly drawing weight settings until one that fits the training data is found.
The validity of the volume hypothesis for wide and deep neural networks remains an open question.
In this paper, we theoretically investigate this question for matrix factorization (with linear and non-linear activation): a canonical testbed in neural network theory.
We first prove that generalization under \GnC deteriorates with increasing width, establishing what is, to our knowledge, the first canonical case where \GnC is provably inferior to gradient descent.
Conversely, we prove that generalization under \GnC improves with increasing depth, revealing a stark contrast between wide and deep networks, which we further validate empirically.
These findings suggest that even in simple settings, there may not be a simple answer to the question of whether neural networks need gradient descent to generalize~well.
\end{abstract}

	% KEYWORDS
	\ifdefined\COLT
		\medskip
		\begin{keywords}
			\emph{TBD}, \emph{TBD}, \emph{TBD}
		\end{keywords}
	\fi

	% INTRODUCTION
	\section{Introduction}
\label{sec:intro}

\emph{Overparameterized neural networks} trained by (variants of) \emph{gradient descent} are a cornerstone of modern artificial intelligence (AI)~\cite{vaswani2023attentionneed, he2015deepresiduallearningimage, alom2018historybeganalexnetcomprehensive, brown2020languagemodelsfewshotlearners, NIPS2012_c399862d, alom2018historybeganalexnetcomprehensive}.
Typically, an overparameterized neural network can fit its training data with any of multiple weight settings, some of which \emph{generalize} well (\ie, perform well on unseen test data), while others do not.
The fact that weight settings found by gradient descent often generalize well is a mystery attracting vast attention~\cite{zhang2016understanding, 
jacot2018neural, power2022grokkinggeneralizationoverfittingsmall, neyshabur2017implicitregularizationdeeplearning, nakkiran2019deepdoubledescentbigger}.
Conventional wisdom states that this phenomenon stems from a special implicit bias induced by gradient descent when applied to overparameterized neural networks~\cite{soudry2018implicit,gunasekar2018implicit,kou2023implicit,li2024feature}.

Recently, it has been argued that gradient descent is not necessary for overparameterized neural networks to generalize well, and in fact, any reasonable (non-adversarial) optimizer that fits the training data can suffice~\cite{chiang2023loss,buzaglo2024uniform,valle-perez2019deep,berchenko2024simplicity}.
Notable empirical support for this argument was provided by \citet{chiang2023loss}, who demonstrated that generalization comparable to that of gradient descent can be attained by \emph{Guess and Check} (\emph{\GnC}), \ie, by repeatedly drawing weight settings from a specified \emph{prior distribution}, until a weight setting that happens to fit the training data is drawn.
For a particular prior distribution over weight settings, hypothesizing that \GnC attains good generalization is equivalent to the so-called \emph{volume hypothesis}~\cite{chiang2023loss,peleg2024bias}, which states the following.
Define the \emph{volume} of a collection of weight settings to be the probability assigned to it by a \emph{posterior distribution} obtained from conditioning the prior distribution on the training data being fit.
Then, the volume of weight settings that generalize well is much greater than the volume of weight settings that do not.

Aside from \citet{chiang2023loss}, several works have supported the volume hypothesis in certain cases involving wide and deep overparameterized neural networks~\cite{buzaglo2024uniform,hanin2023bayesian,harel2024provable}.
However, the literature also includes contrasting evidence.
In particular, \citet{peleg2024bias} systematically experimented with overparameterized neural networks of varying width and depth, and found that the generalization attainable by \GnC is inferior to that of gradient descent, most prominently with larger network widths.
Overall, the current literature on overparameterized neural networks presents a conflicting view of how the generalization attainable by \GnC compares to that of gradient descent, and how this relationship depends on network width and depth.

In this paper, we present a theoretical study that takes a step toward clarifying the foregoing view, \ie, toward delineating the extent to which wide or deep overparameterized neural networks need gradient descent in order to generalize well.
Our theoretical study centers on \emph{matrix factorization}: a canonical testbed in the theory of neural networks, used for studying generalization~\cite{gunasekar2017implicitregularizationmatrixfactorization, arora2019implicitregularizationdeepmatrix, li2021resolvingimplicitbiasgradient, chou2023gradientdescentdeepmatrix, wu2021implicitregularizationmatrixsensing, Ma_2019} as well as other phenomena~\cite{ge2018matrixcompletionspuriouslocal, bhojanapalli2016globaloptimalitylocalsearch, Sun_2016, ge2017spuriouslocalminimanonconvex}.
Past analyses of matrix factorization have contributed to real-world neural networks---yielding theoretical insights~\cite{pmlr-v80-arora18a, NEURIPS2019_c0c783b5}, concrete mathematical tools~\cite{razin2021implicitregularizationtensorfactorization, pmlr-v162-razin22a}, and practical methods that improve empirical performance~\cite{jing2020implicitrankminimizingautoencoder, sun2021implicitgreedyranklearning}.
In its basic form, matrix factorization is akin to using overparameterized neural networks with linear (no) activation for tackling the low rank matrix sensing problem.
We consider a more general form that allows for alternative (non-linear) activations as well~\cite{radhakrishnan2022simple}.

Our first contribution is a theorem proving that, with an anti-symmetric activation (\eg, linear, tanh or sine), if the width of a network increases, then the generalization attained by \GnC deteriorates, to the point of being no better than chance---or more precisely, no better than the generalization attained by drawing a single weight setting from the prior distribution while disregarding the training data.
The theorem applies to any prior distribution satisfying mild conditions, and in particular to the canonical Gaussian and uniform prior distributions considered in previous works~\cite{buzaglo2024uniform,hanin2023bayesian}.
In light of known results proving that gradient descent attains good generalization~\cite{chizat2020implicit,lyu2019gradient,lyu2021gradient,kou2023implicit,soudry2018implicit}, we conclude that there are canonical cases where the generalization attainable by \GnC (with a conventional prior distribution) is provably inferior to that of gradient descent---that is, canonical cases where overparameterized neural networks provably need gradient descent in order to generalize well.
To our knowledge, this is the first establishment of the existence of such cases.\note{%
It is relatively straightforward to contrive \emph{some} cases where the generalization attainable by \GnC (with a conventional prior distribution) is provably inferior to that of gradient descent: indeed, one can retroactively define the ground truth distribution to be such that gradient descent attains perfect generalization.
The novelty of our conclusion is that the foregoing provable inferiority arises in cases that are canonical, \ie, that are fundamental and extensively studied.
\label{note:first_gap}
}

As a second contribution, we provide a theorem proving---for linear activation and a normalized Gaussian prior distribution---that if the depth of a network increases, then the generalization attained by \GnC improves, to the point of being perfect.
This theorem, which essentially implies that increasing network depth renders gradient descent not necessary for good generalization, stands in stark contrast to our analysis of increasing network width.
We empirically showcase this contrast, demonstrating that in matrix factorization, the generalization attained by \GnC improves with network depth but deteriorates with network width, whereas gradient descent attains good generalization~throughout.

The findings in this paper suggest that even in simple settings, there may not be a simple answer to the question of whether neural networks need gradient descent to generalize well: the answer may hinge on subtle dependencies between network width and depth.
We hope that our study of matrix factorization will serve as a stepping stone toward deriving a complete answer for real-world settings, thereby illuminating the role of gradient descent in modern AI.

	% PAPER ORGANIZATION
\subsection{Paper Organization}
\label{sec:intro:org}

The remainder of the paper is organized as follows.
\cref{sec:related} reviews related work.
\cref{sec:prelim} introduces notation and the setting we study.
\cref{sec:analysis} delivers our theoretical analysis, followed by \cref{sec:exper} which presents an empirical demonstration.
\cref{sec:limit} discusses the limitations of our theory.
Finally, \cref{sec:conclusion} concludes.

	% RELATED WORK
	%\input{old_related}
	\section{Related Work}
\label{sec:related}

Numerous works have been devoted to understanding why overparameterized neural networks trained by gradient descent (or variants thereof) often generalize well~\cite{zhang2016understanding,jacot2018neural,belkin2019reconciling,jiang2019fantastic,advani2020high,bartlett2020benign,zhang2021understanding,nakkiran2021deep,chatterjee2022generalization,schaeffer2023double,rohlfs2025generalization,gissin2019implicitbiasdepthincremental}.
While this generalization is most commonly attributed to an implicit bias induced by gradient descent \cite{saxe2013exact,lampinen2018analytic,neyshabur2014search,soudry2018implicit,gunasekar2018implicit,lyu2019gradient,ji2020directional,woodworth2020kernel,chizat2020implicit,yun2020unifying,azulay2021implicit,min2021explicit, lyu2021gradient,damian2022self, wu2023implicit, wind2023implicit,marion2023implicit,kou2023implicit, wang2024implicit,li2024feature,ravi2024implicit, Zhi_Qin_John_Xu_2020, rahaman2019spectralbiasneuralnetworks}, an emerging view is that much of it stems from the architectures of neural networks.
Results supporting this emerging view include: 
\emph{(i)}~results that establish a certain notion of simplicity when a weight setting is drawn from a prior distribution~\cite{berchenko2024simplicity,valle-perez2019deep,huh2021low,mingard2019neural,de2019random};
\emph{(ii)}~results that establish good generalization in a Bayesian framework, \ie, when predictions are defined through an expectation over weight settings, where the probability of weight settings is higher the better they fit the training data~\cite{wilson2020bayesian, izmailov2021bayesian};
and
\emph{(iii)}~results that establish good generalization with \GnC, \ie, when a weight setting is drawn from a posterior distribution obtained from conditioning a prior distribution on the training data being fit~\cite{hanin2023bayesian,buzaglo2024uniform,harel2024provable,theisen2021good}.
Among these, the third category---\ie, results concerning \GnC---is arguably the most aligned with the standard learning paradigm, as it involves selecting a single weight setting that fits the training data.

\citet{chiang2023loss} and \citet{peleg2024bias} compared the generalization attainable by \GnC (with a conventional prior distribution) to that of gradient descent, by experimenting with overparameterized neural networks of varying width and depth.
\citet{chiang2023loss} provided evidence suggesting that:
\emph{(i)}~the generalization attainable by \GnC is on par with that of gradient descent;
and
\emph{(ii)}~increasing the width of an overparameterized neural network improves the generalization of \GnC.
\citet{peleg2024bias} pointed to confounding factors in the experimental protocol of \citet{chiang2023loss}, and made different observations, namely:
\emph{(i)}~the generalization attainable by \GnC is inferior to that of gradient descent;
\emph{(ii)}~increasing the width of an overparameterized neural network improves the generalization of gradient descent but not that of \GnC;
and
\emph{(iii)}~increasing the depth of an overparameterized neural network deteriorates the generalization of both gradient descent and \GnC.
The latter observation does not align with the conventional wisdom, \ie, with the extensive empirical and theoretical evidence that deep neural networks generalize better than shallow ones~\cite{arora2019implicitregularizationdeepmatrix,tan2019efficientnet,malach2019deeper,he2022information}.
\citet{peleg2024bias} accordingly hedge this observation, effectively implying that it may result from confounding factors.

Our work is similar to those of \citet{chiang2023loss} and \citet{peleg2024bias} in that it compares the generalization attainable by \GnC (with a conventional prior distribution) to that of gradient descent for overparameterized neural networks of varying width and depth.
It markedly differs from these past works in that it provides a rigorous theoretical analysis (the works of \citet{chiang2023loss} and \citet{peleg2024bias} are purely empirical), and focuses on a canonical testbed model (matrix factorization).
This allows for a controlled study free from confounding factors.
Moreover, it allows us to conclude---for the first time, to our knowledge---that there exist canonical cases where the generalization attainable by \GnC (with a conventional prior distribution) is provably inferior to that of gradient descent.\footref{note:first_gap}

	% PRELIMINARIES
	\section{Preliminaries}
\label{sec:prelim}

	% NOTATION
\subsection{Notation}
\label{sec:prelim:not}

We use non-boldface lowercase letters for denoting scalars (\eg, $\alpha\in\BR$, $d\in\BN$), boldface lowercase letters for denoting vectors (\eg, $\vbf\in \BR^{d}$), and non-boldface uppercase letters for denoting matrices (\eg, $A\in\BR^{d, d}$). 
For $d \in \N$, we define $[ d ] := \lbrace 1 , \dots , d \rbrace$.
We let $\| \cdot \|_2$ and~$\| \cdot \|_F$ stand for the Euclidean norm of a vector and the Frobenius norm of a matrix, respectively.

	% LOW RANK MATRIX SENSING
\subsection{Low Rank Matrix Sensing}
\label{sec:prelim:ms}

\emph{Low rank matrix sensing} is a fundamental and extensively studied problem in science and engineering \cite{inproceedings, 4385788, qin2023generalalgorithmsolvingrankone, candes2008exactmatrixcompletionconvex,soltanolkotabi2023implicit, wu2021implicitregularizationmatrixsensing, bai2025connectivityshapesimplicitregularization}.
In its basic form, the goal in low rank matrix sensing is to reconstruct a low rank matrix based on linear measurements.
Namely, for $m , m' , r , n \in \N$, where $r < \min \{ m , m' \}$ and $n < m \cdot m'$, the goal is to reconstruct a \emph{ground truth} matrix \smash{$\gt \in \R^{m , m'}$} of rank~$r$ based on \smash{$( A_i \in \R^{m , m'} , y_i \in \R )_{i = 1}^n$}, where:
\be
y_i = \inprod{A_i}{\gt} := \Tr ( A_i^{\top} \gt )
~ , ~~
i \in [ n ]
\text{\,.}
\label{eq:ms_measure}
\ee
We refer to $( A_i )_{i = 1}^n$ as \emph{measurement matrices}, and to $( y_i )_{i = 1}^n$ as the corresponding \emph{measurements}.

The above can be cast as a supervised learning problem.
Indeed, we may identify a matrix $W \in \R^{m , m'}$ with the linear functional that maps \smash{$A \in \R^{m , m'}$} to $\inprod{A}{W} \in \R$.
Our goal is then to learn the linear functional~$\gt$ based on the \emph{training data} $( A_i , y_i )_{i = 1}^n$, \ie, based on the training \emph{instances} $( A_i )_{i = 1}^n$ and their corresponding \emph{labels} $( y_i )_{i = 1}^n$.
The training data induces a \emph{training loss} defined over linear functionals (or equivalently, over matrices):
\be
\Ltrn :  \R^{m , m'} \to \R_{\geq 0}
~~ , ~~
\Ltrn ( W ) := \frac{1}{n} \sum\nolimits_{i = 1}^n \big( \inprod{A_i}{W} - y_i \big)^2
\text{\,.}
\label{eq:ms_loss_train}
\ee
Any~$W \in \R^{m , m'}$ that minimizes the training loss, \ie, that fits the training data, necessarily~coincides with~$\gt$ on instances in $\Sp \{ A_i \}_{i = 1}^n$ (meaning $\inprod{A}{W}$ coincides with $\inprod{A}{\gt}$ for all $A \in \Sp \{ A_i \}_{i = 1}^n$).
Accordingly, we quantify generalization (performance on unseen test data) through instances orthogonal to $\Sp \{ A_i \}_{i = 1}^n$, or more precisely, via the following \emph{generalization loss}:
\be
\Lgen :  \R^{m , m'} \to \R_{\geq 0}
~~ , ~~
\Lgen ( W ) := \frac{1}{|\B|} \sum\nolimits_{A \in \B} \big( \inprod{A}{W} - \inprod{A}{\gt} \big)^2
\text{\,,}
\label{eq:ms_loss_gen}
\ee
where $\B \subset \R^{m , m'}$ is some orthonormal basis for the orthogonal complement of $\Sp \{ A_i \}_{i = 1}^n$ (it is straightforward to show that $\L_{\text{gen}} ( \cdot )$ is independent of the particular choice of~$\B$).

Much of the literature on low rank matrix sensing concerns a canonical special case
%two canonical special cases.
%The first is 
where the measurement matrices \smash{$( A_i )_{i = 1}^n$} satisfy a \emph{restricted isometry property} (\emph{RIP}) as defined below~\cite{inproceedings}.
Such a property holds with high probability when $( A_i )_{i = 1}^n$ are drawn from common distributions, for example Gaussian or Bernoulli~\cite{RIPproof}.
%The second special case, known as \emph{low rank matrix completion}, is where every measurement matrix~$A_i$ is an indicator matrix, meaning it holds one in a single entry and zeros elsewhere~\cite{works-analyzing-this-case}.
\begin{definition}\label{def:rip}
We say that the measurement matrices $( A_i )_{i = 1}^n$ satisfy a \emph{restricted isometry property} (\emph{RIP}) \emph{of order~$\rho \in \N$ with a constant $\rip \in ( 0 , 1 )$}, if for every matrix \smash{$W \in \R^{m , m'}$} whose rank is at most~$\rho$, it holds that:
\[
( 1 - \rip ) \| W \|_F^2 \leq \| \A ( W ) \|_2^2 \leq ( 1 + \rip ) \| W \|_F^2
\text{\,,}
\]
where $\A ( W ) := ( \inprod{A_1}{W} , \ldots , \inprod{A_n}{W} )^\top \in \R^n$.
\end{definition}

	% MATRIX FACTORIZATION
\subsection{Matrix Factorization}
\label{sec:prelim:mf}

\emph{Matrix factorization} is a canonical testbed in the theory of neural networks, used for studying~generalization \cite{gunasekar2017implicitregularizationmatrixfactorization, arora2019implicitregularizationdeepmatrix, li2021resolvingimplicitbiasgradient, chou2023gradientdescentdeepmatrix, wu2021implicitregularizationmatrixsensing, Ma_2019} as well as other phenomena~\cite{ge2018matrixcompletionspuriouslocal, bhojanapalli2016globaloptimalitylocalsearch, Sun_2016, ge2017spuriouslocalminimanonconvex}.
Past analyses of matrix factorization have contributed to real-world neural networks---yielding theoretical insights~\cite{pmlr-v80-arora18a, NEURIPS2019_c0c783b5}, concrete mathematical tools~\cite{razin2021implicitregularizationtensorfactorization, pmlr-v162-razin22a}, and practical methods that improve empirical performance~\cite{jing2020implicitrankminimizingautoencoder, sun2021implicitgreedyranklearning}.

In its basic form, matrix factorization is akin to using overparameterized neural networks with linear (no) activation for tackling the low rank matrix sensing problem described in \cref{sec:prelim:ms}.
We consider a more general form that allows for alternative (non-linear) activations as well~\cite{radhakrishnan2022simple}.
Concretely, in our context, a matrix factorization with \emph{activation} $\sigma : \R \to \R$, \emph{width} $k \in \N$ and \emph{depth} $d \in \N_{\geq 2}$, refers to~learning a matrix \smash{$W \in \R^{m , m'}$} aimed at approximating the ground truth rank~$r$ matrix~$\gt$,~through the following parameterization:
\be
W = W_d \, \sigma ( W_{d - 1} \, \sigma ( W_{d - 2} \cdots \sigma ( W_1 ) ) \cdots )
\text{\,,}
\label{eq:mf}
\ee
where $W_1 \in \R^{k , m'}$, $W_j \in \R^{k , k}$ for all $j \in \{ 2 , \ldots , d - 1 \}$, $W_d \in \R^{m , k}$, and the application of~$\sigma ( \cdot )$ to a matrix signifies an application of~$\sigma ( \cdot )$ to each of the matrix's entries.
We refer to $W_1 , \ldots , W_d$ as the \emph{weight matrices} of the factorization, and to a value assumed by $( W_1 , \ldots , W_d )$ as a \emph{weight setting}.
Our interest lies in the overparameterized regime, where the width~$k$ does not restrict the rank of the learned matrix~$W$.
Accordingly, we assume throughout that $k \geq \min \{ m , m' \}$.

The low rank matrix sensing losses $\Ltrn ( \cdot )$ and~$\Lgen ( \cdot )$ in \cref{eq:ms_loss_train,eq:ms_loss_gen}, respectively, induce training and generalization losses for the matrix factorization.
With a slight overloading of notation, these are:
\be
\Ltrn ( W_1 , \ldots , W_d ) := \frac{1}{n} \sum\nolimits_{i = 1}^n \Big( \big\langle A_i \, , W_d \, \sigma ( W_{d - 1} \cdots \sigma ( W_1 ) \cdots ) \big\rangle - y_i \Big)^2
\text{\,,}
\label{eq:ms_loss_train_mf}
\ee
and:
\be
\Lgen ( W_1 , \ldots , W_d ) := \frac{1}{|\B|} \sum\nolimits_{A \in \B} \Big( \big\langle A \, , W_d \, \sigma ( W_{d - 1} \cdots \sigma ( W_1 ) \cdots ) \big\rangle - \inprod{A}{\gt} \Big)^2
\text{\,.}
\label{eq:ms_loss_gen_mf}
\ee

  % GRADIENT DESCENT
\subsection{Gradient Descent}
\label{sec:prelim:gd}

Similar to real-world neural networks, matrix factorization admits a non-convex training loss, for which a baseline optimizer is \emph{gradient descent} emanating from small random initialization~\cite{cohen2024lecturenoteslinearneural, tu2016lowranksolutionslinearmatrix}.
Various studies---theoretical~\cite{cohen2024lecturenoteslinearneural, bhojanapalli2016globaloptimalitylocalsearch, gunasekar2017implicitregularizationmatrixfactorization, ge2017spuriouslocalminimanonconvex, zheng2016convergenceanalysisrectangularmatrix} and empirical~\cite{arora2019implicitregularizationdeepmatrix, 10113232, boyarski2021spectralgeometricmatrixcompletion}---were devoted to training matrix factorization with this baseline optimizer.
In our context, this amounts to implementing the following iterations:
\be
W_j^{( t + 1 )} \leftarrow W_j^{( t )} - \eta \frac{\partial}{\partial W_j} \Ltrn \big( W_1^{( t )} , \ldots , W_d^{( t )} \big)
~~ , ~~
j \in [ d ] 
~ , ~ 
t \in \N \cup \{ 0 \}
\text{\,,}
\label{eq:gd_mf}
\ee
where $\Ltrn ( \cdot )$ is the training loss defined in \cref{eq:ms_loss_train_mf}, $\eta \in \R_{> 0}$ is a predetermined \emph{step size}~(learning rate), and \smash{$( W_1^{( 0 )} , \ldots , W_d^{( 0 )} )$} holds a randomly chosen initial weight setting of small magnitude.

  % GUESS & CHECK
\subsection{Guess \& Check}
\label{sec:prelim:gnc}

A conceptual alternative to gradient descent is \emph{Guess and Check} (\emph{\GnC})~\cite{chiang2023loss,buzaglo2024uniform,harel2024provable}.
In the context of the matrix factorization described in \cref{sec:prelim:mf}, let $\PP ( \cdot )$ be a probability distribution over weight settings, \ie, over values that may be assumed by the tuple of weight matrices $( W_1 , \ldots , W_d )$.
Regard $\PP ( \cdot )$ as a \emph{prior distribution}, and let $\etrn > 0$ be some threshold on the training loss~$\Ltrn ( \cdot )$ (\cref{eq:ms_loss_train_mf}).
Applying \GnC to the matrix factorization then consists of repeatedly drawing $( W_1 , \ldots , W_d )$ from~$\PP ( \cdot )$, until the condition $\Ltrn ( W_1 , \ldots , W_d ) < \etrn$ is met.
From a statistical perspective, this is equivalent\note{%
See \cite{flury1990Acceptance,RobertCasella2004} for folklore arguments justifying the equivalence.
}
to a single draw of $( W_1 , \ldots , W_d )$ from $\PP ( \cdot \, | \Ltrn ( W_1 , \ldots , W_d ) < \etrn)$, where the latter is the \emph{posterior distribution} obtained from conditioning~$\PP ( \cdot )$ on the event $\Ltrn ( W_1 , \ldots , W_d ) < \etrn$.

	% THEORETICAL ANALYSIS
	\section{Theoretical Analysis}
\label{sec:analysis}

Consider a matrix factorization (\cref{sec:prelim:mf}) optimized by gradient descent (\cref{sec:prelim:gd}) or \GnC (\cref{sec:prelim:gnc}).
A large body of theoretical work~\cite{gunasekar2017implicitregularizationmatrixfactorization,li2018algorithmic,arora2019implicitregularizationdeepmatrix,Ma_2019,eftekhari2020implicit,wu2021implicitregularizationmatrixsensing,zhang2021preconditioned,li2021resolvingimplicitbiasgradient,soltanolkotabi2023implicit,jin2023understanding,xu2024implicit} has been devoted to establishing that gradient descent attains good generalization under various choices of width and depth for the factorization.
In this section we tackle the question of whether gradient descent is needed for good generalization.
Specifically, we theoretically analyze the generalization attainable by \GnC as the width and depth of the factorization vary.

  % DISTRIBUTIONS OVER WEIGHT SETTINGS
\subsection{Distributions Over Weight Settings}
\label{sec:analysis:distr}

Both \GnC and gradient descent are defined with respect to a probability distribution over weight settings: for \GnC it is the prior distribution (see \cref{sec:prelim:gnc}), and for gradient descent it is the distribution from which initialization is drawn (see \cref{sec:prelim:gd}).
We consider a broad class of distributions over weight settings specified by \cref{def:regular,def:generated} below.

\cref{def:regular} defines a \emph{regular} distribution over~$\R$ as one that has zero mean, is symmetric, and assigns positive probability to every neighborhood of the origin.
This definition of regularity covers canonical distributions over~$\R$, for example zero-centered Gaussian distributions and uniform distributions over symmetric intervals.
\cref{def:generated} builds on \cref{def:regular} to specify the class of distributions over weight settings we consider.
Namely, given a regular distribution (over~$\R$)~$\QQ ( \cdot )$, \cref{def:generated} defines a distribution over weight settings \emph{generated by~$\QQ ( \cdot )$} to be one in which entries are independently drawn from~$\QQ ( \cdot )$, and then subject to Kaiming scaling~\cite{he2015delvingdeeprectifierssurpassing}, \ie, to scaling that preserves magnitudes when the width of the factorization increases.
This definition of a generated distribution covers Kaiming Gaussian and Kaiming Uniform distributions: common choices for the initialization of gradient descent~\cite{huang2018denselyconnectedconvolutionalnetworks, xie2017aggregatedresidualtransformationsdeep, tan2020efficientnetrethinkingmodelscaling} and the prior of \GnC~\cite{buzaglo2024uniform,hanin2023bayesian,harel2024provable}.
\cref{def:generated} also defines a distribution over weight settings \emph{generated by~$\QQ ( \cdot )$ with normalization}, as one that is generated by~$\QQ ( \cdot )$, with an additional normalization (scaling) that ensures the product of weight matrices has unit norm.
The role of this normalization is to preserve magnitudes when the depth of the factorization increases, analogously to the role of normalization techniques applied when training real-world neural networks with large depth~\cite{ulyanov2017instancenormalizationmissingingredient, wu2018groupnormalization, salimans2016weightnormalizationsimplereparameterization, ba2016layernormalization,ioffe2015batchnormalizationacceleratingdeep}.

\begin{definition}
\label{def:regular}
Let $\QQ ( \cdot )$ be a probability distribution over~$\R$.
We say that $\QQ ( \cdot )$ is \emph{regular} if the following conditions hold:
%\emph{(i)}~$\QQ ( \cdot )$~is continuous, or more precisely, is absolutely continuous with respect to the Lebesgue measure;
\emph{(i)}~$\QQ ( \cdot )$ has zero mean and all of its moments exist, \ie, $\EE_{\alpha \sim \QQ ( \cdot )} [ \alpha ] = 0$ and $\EE_{\alpha \sim \QQ ( \cdot )} [ | \alpha^p | ] < \infty$ for all $p \in \N$;
\emph{(ii)}~$\QQ ( \cdot )$~is symmetric, meaning $\alpha \sim \QQ ( \cdot )$ implies $- \alpha \sim \QQ ( \cdot )$;
and
\emph{(iii)}~$\QQ ( \cdot )$~assigns positive probability to every neighborhood of the origin (\ie, for any neighborhood~$\I$ of $0 \in \R$, if $\alpha \sim \QQ ( \cdot )$ then the probability of the event $\alpha \in \I$ is positive).
\end{definition}

\begin{definition}
\label{def:generated}
Let $\QQ ( \cdot )$ be a regular probability distribution over~$\R$ (\cref{def:regular}), and let $\PP ( \cdot )$ be a probability distribution over weight settings, \ie, over values that may be assumed by the tuple of weight matrices $( W_1 , \ldots , W_d )$.
For every $j \in [ d ]$, denote by~$m_j$ the number of columns in the weight matrix~$W_j$, and by $\QQ_j ( \cdot )$ the probability distribution over~$\R$ obtained from scaling $\QQ ( \cdot )$ by $1 / \sqrt{m_j}$ (meaning $\alpha \sim \QQ_j ( \cdot )$ implies $\sqrt{m_j} \alpha \sim \QQ ( \cdot )$).
We say that $\PP ( \cdot )$ is \emph{generated by~$\QQ ( \cdot )$} if $( W_1 , \ldots , W_d ) \sim \PP ( \cdot )$ 
implies that $W_1 , \ldots , W_d$ are statistically independent, and for every $j \in [ d ]$ the entries of~$W_j$ are independently distributed per~$\QQ_j ( \cdot )$.
We say that $\PP ( \cdot )$ is \emph{generated by~$\QQ ( \cdot )$ with normalization} if $( W_1 , \ldots , W_d ) \sim \PP ( \cdot )$ can be implemented by drawing $( W_1 , \ldots , W_d )$ from a distribution generated by~$\QQ ( \cdot )$, and then, for all~$j \, {\in} \, [ d ]$, dividing each entry of~$W_j$ by \smash{$\| W_d \cdots W_1 \|_F^{1 / d}$}.
\end{definition}

  % INCREASING WIDTH: NEED FOR GRADIENT DESCENT
\subsection{Increasing Width: Need for Gradient Descent}
\label{sec:analysis:width}

In this subsection, we consider a regime where the width of the matrix factorization increases, and prove that gradient descent is needed for good generalization.
In particular, we present an analysis leading to the conclusion that there exist canonical cases where the generalization attainable by \GnC (with a prior distribution generated by a regular distribution) is provably inferior to that of gradient descent.
To our knowledge, this is the first establishment of the existence of such cases.\footref{note:first_gap}

\cref{def:admissible} below defines an \emph{admissible} activation as one that is non-constant, piece-wise continuously differentiable, has a polynomially bounded derivative, and does not vanish on both sides of the origin.
This definition of admissibility covers most activations used in practice (\eg, tanh, sigmoid, ReLU and Leaky ReLU~\cite{rumelhart1986learning,lecun1998efficient,nair2010rectified,maas2013rectifier}).
\begin{definition}
\label{def:admissible}
We say that the activation $\sigma : \R \to \R$ is \emph{admissible} if the following conditions~hold:
\emph{(i)}~$\sigma ( \cdot )$~is non-constant;
\emph{(ii)}~$\sigma ( \cdot )$~is (continuous and) piece-wise continuously differentiable;
\emph{(iii)}~the derivative of~$\sigma ( \cdot )$ is polynomially bounded, \ie, there exist $p \in \N$ and $c \in \R_{> 0}$ such that $\abs{\sigma' ( \alpha )} \leq c ( 1 + \abs{\alpha}^p )$ for every $\alpha \in \R$ at which $\sigma' ( \cdot )$ is defined;
and
\emph{(iv)}~$\sigma ( \cdot )$ does not vanish on both sides of the origin, \ie, any neighborhood of $0 \in \R$ includes some $\alpha \in \R$ for which $\sigma ( \alpha ) \neq 0$.
\end{definition}

\cref{result:width_gnc} below proves---for cases where the activation is admissible and anti-symmetric (\eg, it is linear, tanh or sine)---that as the width of the factorization increases, the generalization attained by \GnC deteriorates, to the point of being no better than chance, \ie, no better than the generalization attained by drawing a single weight setting from the prior distribution while disregarding the training data.
In the limit of width tending to infinity, the theorem applies to any prior distribution generated by some regular distribution over~$\R$ (\cref{def:regular,def:generated}).
In the canonical case where the regular distribution over~$\R$ is a zero-centered Gaussian, the theorem also accounts for finite widths.
\begin{theorem}
\label{result:width_gnc}
Suppose the activation $\sigma ( \cdot )$ is admissible (\cref{def:admissible}), and that it is anti-symmetric, meaning $\sigma ( - \alpha ) = - \sigma ( \alpha )$ for all $\alpha \in \R$.
Let $\QQ ( \cdot )$ be a regular probability distribution over~$\R$ (\cref{def:regular}), and let $\PP ( \cdot )$ be the probability distribution over weight settings that is generated by~$\QQ ( \cdot )$ (\cref{def:generated}).
Let $\etrn , \egen \in \R_{> 0}$.
Regard $\PP ( \cdot )$ as a prior distribution, and consider the posterior distribution $\PP ( \cdot \, | \Ltrn ( W_1 , \ldots , W_d ) < \etrn)$, \ie, the distribution obtained from conditioning~$\PP ( \cdot )$ on the event that the training loss~$\Ltrn ( \cdot )$ is smaller than~$\etrn$.
Then, as the width~$k$ of the matrix factorization tends to infinity, the posterior probability of the event that the generalization loss~$\Lgen ( \cdot )$ is smaller than~$\egen$, converges to its prior probability,~\ie:
\[
\hspace{-1mm}
\PP \big( \Lgen ( W_1 , \ldots , W_d )\,{<}\,\egen \,\big|\, \Ltrn ( W_1 , \ldots , W_d )\,{<}\,\etrn \big) - \PP \big( \Lgen ( W_1 , \ldots , W_d )\,{<}\,\egen \big) \xrightarrow[k \to \infty]{} 0
\text{\,.}
\label{eq:width_gnc_inf}
\]
Moreover, in the case where $\QQ ( \cdot )$ is a zero-centered Gaussian distribution, \ie, $\QQ ( \cdot ) = \NN ( \cdot \, ; 0 , \nu )$ for some $\nu \in \R_{> 0}$, it holds that for any~$k$:\note{%
The $O$-notation below hides constants that depend on $\sigma ( \cdot )$, $\etrn$, $\egen$ and~$\nu$, as well as the ground truth matrix $\gt$, the measurement matrices $( A_i )_{i = 1}^n$, the depth~$d$ and the dimensions $m$ and~$m'$ of the matrix factorization. See \cref{app:width_gnccanonical} for details.
}
\[
\hspace{-1mm}
\PP \big( \Lgen ( W_1 , \ldots , W_d )\,{<}\,\egen \,\big|\, \Ltrn ( W_1 , \ldots , W_d )\,{<}\,\etrn \big) - \PP \big( \Lgen ( W_1 , \ldots , W_d )\,{<}\,\egen\big) = O \big( \hspace{-0.5mm} \tfrac{1}{\sqrt{k}} \big)
\text{\,.}
\]
\end{theorem}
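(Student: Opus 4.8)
The plan is to show that as the width $k$ grows, the end-to-end matrix realized by the factorization under the prior $\PP(\cdot)$ converges in distribution to an \emph{isotropic} Gaussian matrix, which forces the events $\{\Ltrn < \etrn\}$ and $\{\Lgen < \egen\}$ to become asymptotically independent. To set this up, first note that both losses factor through the end-to-end matrix $W = W_d\,\sigma(W_{d-1}\cdots\sigma(W_1)\cdots)$, and in fact through complementary orthogonal projections of it: writing $P$ for the orthogonal projection (with respect to the Frobenius inner product on $\R^{m,m'}$) onto $\Sp\{A_i\}_{i=1}^n$ and $P^\perp = I - P$, one has $\inprod{A_i}{W}=\inprod{A_i}{PW}$ for every $i$, so $\Ltrn$ is a function of $PW$ alone, while $\Lgen$ --- being built from instances lying in the orthogonal complement --- is a function of $P^\perp W$ alone. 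Hence it suffices to prove that $PW$ and $P^\perp W$ become asymptotically independent, with a limiting law under which the two events retain well-defined, positive probabilities.

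The second step is to pin down the exact covariance of $W$ under $\PP(\cdot)$, for every $k$. Since $\sigma(\cdot)$ is anti-symmetric and $\QQ(\cdot)$ is symmetric, flipping the sign of the $b$-th column of $W_1$ preserves the law of the weight setting, while propagating through every layer to flip exactly the $b$-th column of each hidden activation and of $W$; this kills all cross-column covariances of the entries of $W$. Independence and zero-mean of the entries of the last matrix $W_d$ (together with its independence from the hidden layers) kills all cross-row covariances. Exchangeability of the rows of $W_d$ and of the columns of $W_1$ then makes the remaining diagonal constant. Thus $\EE[W] = 0$ and the covariance matrix of the vectorization of $W$ equals $c_k\,I_{mm'}$ for some $c_k > 0$; positivity of $c_k$ (and boundedness away from $0$ in the limit) follows because $\sigma(\cdot)$ is non-constant and, by admissibility item \emph{(iv)} together with anti-symmetry, does not vanish near the origin --- where $\QQ(\cdot)$ places positive mass.

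The heart of the proof is a multivariate central limit theorem: the vectorization of $W$ converges in distribution to $\NN(\cdot\,;\,0,c_\infty I_{mm'})$, with $c_\infty = \lim_k c_k > 0$. Conditioning on the hidden weight matrices $W_1,\dots,W_{d-1}$, each entry $W_{ab} = \sum_{c=1}^k (W_d)_{ac} H_{cb}$ (with $H$ the last hidden activation matrix) is a sum of $k$ independent mean-zero terms, so the Lindeberg--Feller central limit theorem yields conditional asymptotic normality with conditional covariance proportional to $\tfrac{1}{k} H^{\top} H \otimes I_m$; an induction over the $d-1$ hidden layers (each of width tending to infinity) shows that this empirical second-moment matrix converges in probability to a multiple of the identity, namely $c_\infty I_{mm'}$ (again invoking the sign-flip symmetry to see that the off-diagonal entries of $\tfrac{1}{k} H^{\top} H$ vanish in the limit). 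Averaging the conditional characteristic functions and invoking dominated convergence then removes the conditioning. All the required moment bounds --- the Lindeberg/Lyapunov condition, the layerwise second-moment recursion, integrability of the relevant variables --- reduce, via admissibility items \emph{(ii)}--\emph{(iii)} (which make $\sigma(\cdot)$ of at most polynomial growth), to the existence of all moments of $\QQ(\cdot)$. I expect this step --- organizing the inductive CLT across depth while tracking the layerwise second-moment recursion and controlling moments uniformly in $k$ --- to be the main obstacle.

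Finally, since the limit law is absolutely continuous, it assigns zero mass to the boundaries $\{\Ltrn = \etrn\}$, $\{\Lgen = \egen\}$ and their union (level sets of non-constant quadratics in the entries of $W$, hence Lebesgue-null). By the portmanteau theorem, the joint and marginal probabilities of $\{\Ltrn < \etrn\}$ and $\{\Lgen < \egen\}$ under $\PP(\cdot)$ converge to those under $W_\infty \sim \NN(\cdot\,;\,0,c_\infty I_{mm'})$; and for such $W_\infty$ the projections $PW_\infty$ and $P^\perp W_\infty$ are independent (orthogonal projections of an isotropic Gaussian), so $\Ltrn(W_\infty)$ and $\Lgen(W_\infty)$ are independent. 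Because the set of matrices $W$ with $\Ltrn(W) < \etrn$ is open and contains $\gt$ while the limit has full support, $\pr(\Ltrn(W_\infty) < \etrn) > 0$; hence the conditional probability is eventually well defined and $\PP\big(\Lgen < \egen \mid \Ltrn < \etrn\big) \to \pr(\Lgen(W_\infty) < \egen) = \lim_k \PP(\Lgen < \egen)$, which is the first claim. For the quantitative statement when $\QQ(\cdot)$ is Gaussian, I would replace the qualitative CLT by a Berry--Esseen-type bound for probabilities of quadratic (convex) regions, so that the $O(1/\sqrt{k})$ rate in distribution transfers to the event probabilities using boundedness of the limiting density near the relevant level sets; the bookkeeping is deferred to the appendix.
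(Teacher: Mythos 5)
Your proposal follows essentially the same route as the paper's proof: reduce the claim to convergence in distribution of the factorized matrix $W$ to an isotropic centered Gaussian, and then conclude via a portmanteau/continuity-set argument together with the independence of $\Ltrn$ and $\Lgen$ under the isotropic Gaussian limit. Your reformulation that $\Ltrn$ depends only on $PW$ and $\Lgen$ only on $P^\perp W$ (orthogonal projection onto $\Sp\{A_i\}$ and its complement) is a clean repackaging of what the paper argues directly from the orthogonality of the $A_i$ to the basis $\B$, and your finite-$k$ computation of $\mathrm{Cov}(\mathrm{vec}\,W)=c_k I_{mm'}$ via sign-flip and exchangeability symmetries is a nice self-contained ingredient (the paper only establishes the isotropy of the limiting covariance, by showing $K^{(d)}_{\alpha\beta}=0$ recursively). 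The one substantive divergence is how the convergence-in-distribution fact is obtained: the paper recasts each column of $W$ as the output of a fully connected network fed a standard basis vector and then cites the neural-network-to-Gaussian-process results of Hanin (2023) for the qualitative limit and Favaro et al.\ (2025) for the quantitative bound in convex distance, whereas you propose to prove the CLT directly by conditioning on the hidden layers, applying Lindeberg--Feller, and inducting over depth to show $\tfrac{1}{k}H^\top H$ concentrates on a multiple of the identity. This sketch is structurally correct and amounts to re-deriving (the relevant special case of) those cited theorems, but the inductive propagation of concentration and moment control across layers for a general regular law $\QQ$ --- which you rightly flag as the main obstacle --- is precisely the delicate content those papers supply, and as written that step is a plan rather than a proof; likewise the $O(1/\sqrt{k})$ rate would require actually carrying out the Berry--Esseen-for-convex-sets argument you gesture at, which is what the paper outsources to Favaro et al. So: same decomposition, same key lemma and conclusion, with the difference that the paper treats the limiting-Gaussian step as a black box while you propose to open it up, leaving the hardest part incomplete.
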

\begin{proof}[Proof sketch (full proof in \cref{app:width_gnc})]
The proof begins by establishing an equivalence between a matrix factorization and a feedforward fully connected neural network: each column of a factorized matrix~$W$ (\cref{eq:mf}) can be seen as the output of a feedforward fully connected neural network when its input is a standard basis vector.
This equivalence facilitates utilization of the following results from \citet{hanin2023random} and \citet{favaro2025quantitative}: 
\begin{enumerate}[label={\textit{(\roman*)}}, leftmargin=*, itemsep=0pt, topsep=2pt]
\item
as the width~$k$ of a feedforward fully connected neural network tends to infinity, drawing its weight settings from any distribution $\PP ( \cdot )$ generated by some regular probability distribution $\QQ ( \cdot )$ (\cref{def:regular,def:generated}), leads the output of each of the network's layers to converge in distribution to a Gaussian process, whose covariance structure can be computed recursively using the covariance structures of previous layers; 
and
\item
in the case where $\QQ ( \cdot )$ is a zero-centered Gaussian distribution, the convex distance (a standard distance metric for multivariate distributions) between the output of a network's layer and the Gaussian process to which it converges, is \smash{$O ( 1 / \sqrt{k} )$}.
\end{enumerate}

For treating the case where $\QQ ( \cdot )$ is an arbitrary regular probability distribution and the width~$k$ tends to infinity, the proof utilizes result~\emph{(i)} above.
Namely, it utilizes the recursive computation of covariance structures to show that the columns of the factorized matrix~$W$ (\ie, the vectors obtained by applying the feedforward fully connected neural network to standard basis vectors) converge in distribution to Gaussian vectors with statistically independent entries, and furthermore, since the activation $\sigma ( \cdot )$ is anti-symmetric, these Gaussian vectors are statistically independent of one another.
Overall, it holds that $W$ converges in distribution to a random matrix~$\Wiid$ whose entries are independently drawn from a zero-centered Gaussian distribution.
Since the measurement matrices $( A_i )_{i = 1}^n$ are orthogonal to the basis~$\B$ that defines the generalization loss (see \cref{eq:ms_loss_train,eq:ms_loss_gen}), the events $\Ltrn ( \Wiid ) < \etrn$ and $\Lgen ( \Wiid ) < \egen$ are statistically independent. 
Therefore, as the width~$k$ of the matrix factorization tends to infinity, conditioning on the event that the training loss is lower than~$\etrn$ does not change the probability of the event that the generalization loss is~lower~than~$\egen$.

The proof concludes by treating the case where $\QQ ( \cdot )$ is a zero-centered Gaussian distribution and the width~$k$ is finite. 
There, result~\emph{(ii)} above is utilized to show that the probabilities of the events $\Ltrn ( W ) < \etrn$ and $\Lgen ( W ) < \egen$ converge to those of the events $\Ltrn ( \Wiid ) < \etrn$ and $\Lgen ( \Wiid ) < \egen$, respectively, at a sufficiently fast rate.
\end{proof}

Theorem~3.3 from~\citet{soltanolkotabi2023implicit}---restated as \cref{result:width_gd} below---is a representative result from the large body of work establishing that, in matrix factorization, gradient descent attains good generalization~\cite{gunasekar2017implicitregularizationmatrixfactorization,li2018algorithmic,arora2019implicitregularizationdeepmatrix,Ma_2019,eftekhari2020implicit,wu2021implicitregularizationmatrixsensing,zhang2021preconditioned,li2021resolvingimplicitbiasgradient,jin2023understanding,xu2024implicit}.
The result proves---for cases where the activation is linear, the depth is two, and the measurement matrices satisfy an RIP (\cref{def:rip})---that gradient descent (with small step size and small Kaiming Gaussian initialization) attains good generalization, with probability (over the initialization) tending to one as the width of the factorization increases.
In light of this result, \cref{result:width_gnc} leads to the conclusion that there exist canonical cases where the generalization attainable by \GnC (with a prior distribution generated by a regular distribution) is provably inferior to that of gradient descent.
To our knowledge, this is the first establishment of the existence of such cases.\footref{note:first_gap}
\begin{proposition}[restatement of Theorem~3.3 from \cite{soltanolkotabi2023implicit}]
\label{result:width_gd}
There exists a universal constant $c_1 \in \R_{> 0}$ with which the following holds.
Suppose the activation~$\sigma ( \cdot )$ is linear (\ie, $\sigma ( \alpha ) = \alpha$ for all $\alpha \in \R$), and the depth~$d$ equals two.
Let $\QQ ( \cdot )$ be a zero-centered Gaussian probability distribution, \ie, $\QQ ( \cdot ) = \NN ( \cdot \, ; 0 , \nu )$, with variance $\nu \in \big( 0 , O ( k^{- 27 / 2} ) \big)$.
Let $\PP ( \cdot )$ be the probability distribution over weight settings that is generated by~$\QQ ( \cdot )$ (\cref{def:generated}). 
Assume the measurement matrices $( A_i )_{i = 1}^n$ satisfy an RIP (\cref{def:rip}) of order~$2 r + 1$ (recall that $r$ is the rank of the ground truth matrix~$\gt$) with a constant $\rip \in \big( 0 , \tilde{O} ( 1 ) \big)$.
Consider minimization of the training loss $\Ltrn ( \cdot )$~via~gradient descent (\cref{eq:gd_mf}) with initialization drawn from~$\PP ( \cdot )$ and step size \smash{$\eta \in \big( 0 , \tilde{O} ( 1 ) \big)$}.
Then, there exists some \smash{$\tau \in \N , \tau = \tilde{O} ( \eta^{-1} )$}, such that for any width~$k$ of the matrix factorization, after $\tau$ iterations of gradient descent, with probability at least \smash{$ 1 - O ( e^{- c_1 k} )$} over its initialization, the generalization loss~$\Lgen ( \cdot )$ is $O ( \nu^{3 / 10} k^{- 3 / 20} )$.\note{%
Throughout the statement of \cref{result:width_gd}, the $O$- and \smash{$\tilde{O}$}-notations hide constants that depend on the dimensions $m$ and~$m'$ of the matrix factorization, and on the ground truth matrix $\gt$.
The \smash{$\tilde{O}$}-notation also hides factors logarithmic in $k$ and~$\nu$. See \cref{app:gd_result} for details.
}
\end{proposition}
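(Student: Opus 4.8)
The statement is a restatement of Theorem~3.3 of \citet{soltanolkotabi2023implicit}, so the plan is to recount the strategy behind that result rather than re-derive every estimate. The overall idea is to reduce the generalization guarantee to a Frobenius-norm bound on the reconstruction error $W - \gt$, and then to obtain that bound through a trajectory analysis of gradient descent that leverages the restricted isometry property (\cref{def:rip}) together with the vanishing scale of the initialization.

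First I would reduce generalization to reconstruction error. With $\sigma(\cdot)$ linear and $d = 2$ the learned matrix is $W = W_2 W_1$ with $W_1 \in \R^{k,m'}$ and $W_2 \in \R^{m,k}$, and since $k \ge \min\{m,m'\}$ this imposes no rank constraint. As $\B$ is an orthonormal basis for the orthogonal complement of $\Sp\{A_i\}_{i=1}^n$, \cref{eq:ms_loss_gen} gives $\Lgen(W) = \tfrac{1}{|\B|}\,\|P_\perp(W - \gt)\|_F^2 \le \tfrac{1}{|\B|}\,\|W - \gt\|_F^2$, where $P_\perp$ is the orthogonal projection onto that complement. Hence it suffices to show that, outside an event of probability $O(e^{-c_1 k})$, after $\tau$ steps gradient descent reaches a point $W^{(\tau)}$ with $\|W^{(\tau)} - \gt\|_F$ at most a suitable polynomial in $\nu$ and $1/k$ (matching $\nu^{3/10}k^{-3/20}$ after squaring).

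Next comes the trajectory analysis. Decompose each factor into a ``signal'' part, living in the column space of $\gt$ for $W_2$ and in the row space of $\gt$ for $W_1$, and an orthogonal ``noise'' part. Two ingredients drive the dynamics: \emph{(i)} an approximate balancedness invariant $W_2^\top W_2 \approx W_1 W_1^\top$, which holds up to $O(\nu)$ at initialization and is preserved by gradient descent up to lower-order terms, keeping the two factors comparable in size; and \emph{(ii)} the RIP, which makes the gradient of $\Ltrn(\cdot)$ close, in operator norm on rank-$O(r)$ matrices, to the gradient of the ``population'' objective $\tfrac12\|W_2 W_1 - \gt\|_F^2$. Consequently the signal parts obey a power-iteration-type recursion that amplifies them geometrically at a rate set by $\eta$ and the singular values of $\gt$, so that $W_2 W_1$ aligns with $\gt$ after $\tau = \tilde{O}(\eta^{-1})$ steps, while the noise parts are driven only by the residual and by $O(\rip)$ cross-terms, so they never exceed a $\mathrm{poly}(k)$ multiple of their $O(\sqrt{\nu})$ initial magnitude. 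At the stopping time this yields $\|W^{(\tau)} - \gt\|_F = \tilde{O}\big(\mathrm{poly}(k)\,\sqrt{\nu}\big)$, which together with the scaling $\nu = O(k^{-27/2})$ collapses to the claimed bound.

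Finally I would establish the high-probability event. Standard non-asymptotic random-matrix bounds show that, except with probability $O(e^{-c_1 k})$: the operator norms of the Kaiming-Gaussian factors $W_1^{(0)}, W_2^{(0)}$ are $O(\sqrt{\nu})$; their projections onto the signal subspaces have smallest nonzero singular value bounded below by a constant times $\sqrt{\nu}$, giving the geometric amplification a large enough ``seed''; and the balancedness defect is $O(\nu)$. It is the smallest-singular-value lower bound for a wide Gaussian block, paired with the operator-norm upper bound for a $k\times k$ Gaussian block, that produces the $e^{-c_1 k}$ failure probability and dictates how small $\nu$ must be: the signal seed is $\Theta(\sqrt{\nu})$, the noise accrues $\mathrm{poly}(k)$ factors over $\tilde{O}(\eta^{-1})$ iterations, and the latter must stay negligible against the $\Theta(1)$ target. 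The main obstacle --- and the source of the precise exponents in the theorem --- is exactly this coupled bookkeeping over the whole trajectory: simultaneously tracking signal convergence, the balancedness invariant, and the RIP error terms while keeping every $\mathrm{poly}(k)$ loss compatible with the final $O(\nu^{3/10}k^{-3/20})$ guarantee. Citing \citet{soltanolkotabi2023implicit} directly is what lets the present paper avoid redoing this analysis.
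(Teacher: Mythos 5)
Your proposal is broadly faithful to the strategy in \citet{soltanolkotabi2023implicit}---signal/noise subspace decomposition, an approximate balancedness invariant $W_2^\top W_2 \approx W_1 W_1^\top$, RIP-controlled deviation of empirical from population gradients, geometric signal growth over $\tilde O(\eta^{-1})$ steps, and random-matrix tail bounds producing the $e^{-c_1 k}$ failure probability. The reduction $\Lgen(W) \le \|W-\gt\|_F^2/|\B|$ is also correct given \cref{eq:ms_loss_gen}. So as a sketch of the underlying theorem it is sound.

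However, this is not what the paper does, and the two approaches serve different purposes. The paper treats \cref{result:width_gd} purely as a \emph{restatement}: the ``proof'' consists of citing \citet{soltanolkotabi2023implicit} and, in \cref{app:gd_result}, reproducing the original theorem with all constants written out explicitly (\cref{thm:gd_main}), so that a reader can check that the $O$- and $\tilde O$-notation version in \cref{result:width_gd} is an accurate paraphrase. In particular, the burden discharged by the paper is not re-proving the trajectory analysis but verifying the \emph{translation}: that the exact variance bound reduces to $\nu = O(k^{-27/2})$, that the exact step-size and iteration-count bounds are $\tilde O(1)$ and $\tilde O(\eta^{-1})$ after absorbing $\kappa$, $\|\gt\|_F$, dimensions, and logarithmic factors, and that the exact error bound $c_{10}\|\gt\|_F^{7/10}\nu^{3/10}/(k m')^{3/20}$ becomes $O(\nu^{3/10} k^{-3/20})$. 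Your sketch omits this bookkeeping entirely, and it is precisely this step---not the trajectory analysis---that constitutes the paper's proof obligation here. So while your account of \emph{why} the cited theorem is true is useful background, it would not substitute for \cref{app:gd_result}: a reader would still have no way to verify that \cref{result:width_gd} is an accurate restatement without consulting the source.
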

%\YA{One can show that this low rank recovery behavior is not displayed by the NTK limit (which is studied in \cite{radhakrishnan2022simple}), so in this sense GD with small init exhibits "feature learning". Leaving this here as a remainder in case we want to mention this.}

  % INCREASING DEPTH: NO NEED FOR GRADIENT DESCENT
\vspace{-3mm}
\subsection{Increasing Depth: No Need for Gradient Descent}
\label{sec:analysis:depth}

In this subsection, we consider a regime where the depth of the matrix factorization increases, and prove that gradient descent is not necessary for good generalization.
In particular, \cref{result:depth_gnc} below establishes cases where, as the depth of the factorization increases, the generalization attained by \GnC improves, to the point of being perfect (in the sense that, with probability tending to one, the generalization loss is no greater than a constant times the threshold set for the training loss).
This stands in stark contrast to our analysis of increasing width (\cref{sec:analysis:width}), which established canonical cases where the generalization attainable by \GnC (with a prior distribution generated by a regular distribution) is provably inferior to that of gradient descent.
A large body of theoretical work has argued that neural networks benefit from depth more than from width in terms of expressiveness~\citep{mhaskar2016learning,cohen2016expressive,cohen2016inductive} and the implicit bias induced by gradient descent~\citep{gissin2019implicit,hariz2022implicit,pmlr-v162-razin22a}.
Our analyses suggest another potential advantage: depth may render neural network architectures more amenable to generalization than width, in the sense that the volume hypothesis holds to a greater extent.

The cases to which \cref{result:depth_gnc} applies are those where the activation is linear, the measurement matrices satisfy an RIP (\cref{def:rip}), the prior distribution is generated with normalization (\cref{def:generated}) from a zero-centered Gaussian distribution (over~$\R$), and the ground truth matrix has norm and rank equal to one (an extension to ground truth matrices of higher rank is discussed in \cref{rank_geq_1_intution}).
The theorem is non-asymptotic, meaning it applies to finite depths, not only to the limit of depth tending to infinity. 

\begin{theorem}
\label{result:depth_gnc}
Suppose the ground truth matrix~$\gt$ satisfies $\| \gt \|_{F} = 1$ and its rank~$r$ equals one.\note{%
An extension of \cref{result:depth_gnc} to ground truth matrices of higher rank is discussed in \cref{rank_geq_1_intution}.
}
Suppose the activation~$\sigma ( \cdot )$ is linear (\ie, $\sigma ( \alpha ) = \alpha$ for all $\alpha \in \R$).
Assume the measurement matrices $( A_i )_{i = 1}^n$ satisfy an RIP (\cref{def:rip}) of order two with some constant $\rip \in ( 0 , 1 )$.
Let $\QQ ( \cdot )$ be a zero-centered Gaussian probability distribution, \ie, $\QQ ( \cdot ) = \NN ( \cdot \, ; 0 , \nu )$ for some $\nu \in \R_{> 0}$.
Let $\PP ( \cdot )$ be the probability distribution over weight settings that is generated by~$\QQ ( \cdot )$ with normalization (\cref{def:generated}). 
Then, there exists $c \in \R_{> 0}$ (dependent only on~$\rip$) such that, for any $\etrn \in \R_{> 0}$ and any depth~$d$ of the matrix factorization:\note{%
The $O$-notation below hides constants that depend on the measurement matrices $( A_i )_{i = 1}^n$, the ground truth matrix $\gt$, the dimensions $m$ and~$m'$ of the matrix factorization, and the width~$k$ of the matrix factorization. See \cref{app:depth_gnc} for details.
}
\[
1 - \PP \big( \Lgen ( W_1 , \ldots , W_d ) < \etrn c \, \big| \, \Ltrn ( W_1 , \ldots , W_d ) < \etrn \big) =  O \big( \tfrac{1}{d} \big) 
\text{\,.}
\]
\end{theorem}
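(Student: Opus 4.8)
The plan is to turn the statement into a quantitative claim about how strongly the Gaussian prior pins the end‑to‑end matrix to the rank‑one locus, and then to push that concentration through the conditioning on a small training loss.

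\textbf{Step 1 (reduction via RIP).} With linear activation the end‑to‑end matrix is $W := W_d\cdots W_1$, and the normalization in \cref{def:generated} makes $\|W\|_F=1$. Fix an SVD $W=\sum_l\sigma_l u_l v_l^\top$, set $E:=\sum_{l\ge2}\sigma_l u_l v_l^\top$ (so $\|E\|_F^2=1-\sigma_1^2$ is the ``non‑rank‑one mass''), and $R:=\sigma_1 u_1 v_1^\top-\gt$, which has rank at most two. On the event $\Ltrn(W_1,\ldots,W_d)<\etrn$, i.e.\ $\|\A(W-\gt)\|_2^2<n\etrn$, one has $\|\A(R)\|_2\le\sqrt{n\etrn}+\|\A(E)\|_2\le\sqrt{n\etrn}+\|\A\|\,\|E\|_F$, where $\|\A\|$ is the operator norm of the linear map $\A$; applying the order‑two RIP (\cref{def:rip}) to $R$ then gives $\|R\|_F\le(1-\rip)^{-1/2}(\sqrt{n\etrn}+\|\A\|\,\|E\|_F)$. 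Since $\Lgen(W_1,\ldots,W_d)=|\B|^{-1}\|P(W-\gt)\|_F^2\le|\B|^{-1}(\|R\|_F+\|E\|_F)^2$, with $P$ the orthogonal projection onto the orthogonal complement of $\Sp\{A_i\}_{i=1}^n$, this yields constants $c_0,c_1>0$ (depending on $\rip$, $\|\A\|$, $n$, $|\B|$) with $\Lgen\le c_0\etrn+c_1\|E\|_F^2$ on $\{\Ltrn<\etrn\}$. Choosing $c>c_0$, the event $\{\Lgen\ge c\etrn\}\cap\{\Ltrn<\etrn\}$ lies inside $\{\|E\|_F^2>c'\etrn\}\cap\{\Ltrn<\etrn\}$ with $c':=(c-c_0)/c_1$, so it suffices to prove
\[
\PP\bigl(\|E\|_F^2>c'\etrn \ \big|\ \Ltrn(W_1,\ldots,W_d)<\etrn\bigr)=O(1/d).
\]

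\textbf{Step 2 (the prior concentrates on rank one).} The law of $W_d\cdots W_1$, hence of $W$, is invariant under $W\mapsto Q_1WQ_2$ for orthogonal $Q_1,Q_2$ (absorb $Q_1$ into $W_d$ and $Q_2$ into $W_1$ without changing their Gaussian laws). Therefore we may take $\gt=e_1e_1^\top$, and the singular vectors of $W$ are uniform and independent of its singular values. Now $\|E\|_F^2$ depends only on the spectrum of $W_d\cdots W_1$, whose decay with $d$ is governed by the theory of products of independent Gaussian matrices: the Lyapunov spectrum is simple, so the top two Lyapunov exponents are separated by a gap $g>0$ (depending on $m,m',k$), which — together with the sub‑Gaussian fluctuations of $\log\sigma_1$ and $\log\sigma_2$ about their means — yields a tail bound $\PP_{\mathrm{prior}}(\|E\|_F^2>t)\le\phi_d(t)$ with $\phi_d(t)\to0$ as $d\to\infty$, and in particular $\EE_{\mathrm{prior}}[\|E\|_F^2]=O(1/d)$ (in fact exponentially small in $d$).

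\textbf{Step 3 (transferring through the conditioning).} By Bayes, $\PP(\|E\|_F^2>c'\etrn\mid\Ltrn<\etrn)=\PP(\|E\|_F^2>c'\etrn,\Ltrn<\etrn)/\PP(\Ltrn<\etrn)$. For the denominator: restricting to the high‑probability spectral region $\{\|E\|_F^2\le c'\etrn\}$, the rank‑one direction $u_1v_1^\top$ of $W$ is uniform on $S^{m-1}\times S^{m'-1}$, and since the order‑two RIP makes $\gt$ the unique rank‑one unit matrix consistent with the measurements, $\Ltrn<\etrn$ holds whenever $u_1v_1^\top$ lies within $O(\sqrt{\etrn})$ of $\gt$; a cap‑volume estimate gives $\PP(\Ltrn<\etrn)\gtrsim(1-\phi_d(c'\etrn))\,\etrn^{(m+m'-2)/2}$. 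For the numerator: conditioning on a spectrum with $\|E\|_F^2=s>c'\etrn$, the constraint $\Ltrn<\etrn$ forces the measured part of $W-\gt$ to have Frobenius norm $O(\sqrt{\etrn})$, and a transversality/covering estimate over the uniform singular vectors bounds this conditional probability by $\etrn^{(m+m'-2)/2}$ times a factor decreasing in $s/\etrn$; integrating against the prior spectral law, $\PP(\|E\|_F^2>c'\etrn,\Ltrn<\etrn)\lesssim\etrn^{(m+m'-2)/2}\cdot\theta_d(\etrn)$ where $\theta_d(\etrn)\to0$ is controlled by $\phi_d$ and $\EE_{\mathrm{prior}}[\|E\|_F^2]$. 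Dividing, the $\etrn^{(m+m'-2)/2}$ factors cancel and the ratio is $O(1/d)$ by Step 2; since $\phi_d$ and $\theta_d$ are explicit, the bound is non‑asymptotic in $d$.

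\textbf{Main obstacle.} The crux is Step 3 — making the numerator/denominator comparison uniform in $\etrn$, in particular for $\etrn$ far below the typical scale of $\|E\|_F^2$. This needs a sufficiently sharp estimate for the probability that a random matrix with a prescribed, spread‑out spectrum approximately meets the RIP measurements (capturing how this probability shrinks as the spectrum moves away from rank one), together with control of the law of $\|E\|_F^2$ under the prior beyond its expectation; the quantitative control of the top Lyapunov gap for products of Gaussian matrices used in Step 2 is the other technically substantial ingredient.
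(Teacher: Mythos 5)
Your Steps 1 and 2 essentially reproduce the paper's strategy: decompose the end-to-end matrix into a rank-one part and a remainder, use the order-two RIP to argue that approximate rank-one together with low training loss forces low generalization loss (this is the paper's Lemma~\ref{lemma:approx_recovery}), and use Lyapunov-exponent concentration for products of Gaussian matrices to show the prior places $W$ near the rank-one locus with probability $1-O(1/d)$ (the paper's Lemmas~\ref{lemma:gap}--\ref{lemma:normalized_rank1}, invoking \citet{hanin2021non}). One small correction in Step 2: the closeness-to-rank-one probability is $1-O(1/d)$ \emph{polynomially}, not exponentially. The exponential decay governs only the middle factor $W_{d-1}\cdots W_2$; the $O(1/d)$ terms in the paper's Lemma~\ref{lemma:nonnormalized_rank1} come from the outer factors $W_d', W_1'$ (specifically, the inner products of fixed singular vectors with uniformly random ones being bounded away from zero), and these do not vanish exponentially. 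Consequently $\EE[\|E\|_F^2]=O(1/d)$, not $e^{-\Omega(d)}$.

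The real divergence is Step 3, and there you are attempting something strictly harder than the theorem requires. The theorem is a pointwise-in-$\etrn$ statement; the $O(1/d)$ constant is allowed to depend on $\etrn$. The paper exploits this: it writes
\[
1-\PP\bigl(\Lgen<c\etrn \mid \Ltrn<\etrn\bigr)\;\le\;\frac{\PP\bigl(\,W \text{ not }\clr\text{-close to rank one}\,\bigr)}{\PP\bigl(\Ltrn<\etrn\bigr)}\,,
\]
bounds the numerator by $O(1/d)$, and lower-bounds the denominator by $\Omega(1)$ (uniformly over $d$, but with a $\etrn$-dependent constant) via a symmetry argument: conditional on near-rank-one-ness, the leading singular direction of $W$ is uniformly distributed, so by invariance and a covering of the rank-one unit sphere by $M(\etrn)$ balls, $\PP(\Ltrn<\etrn)\ge 1/M(\etrn)$ times $\PP(\text{near rank one})$. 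This covering/union-bound step (Lemma~\ref{lemma:fit_lower}) is much less demanding than what you attempt, because it only needs a \emph{crude} lower bound on the denominator, not a sharp matching of $\etrn$-exponents between numerator and denominator.

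Your Step 3, by contrast, tries to show that both $\PP(\Ltrn<\etrn)$ and $\PP(\|E\|_F^2>c'\etrn,\,\Ltrn<\etrn)$ scale like $\etrn^{(m+m'-2)/2}$ and cancel, giving a $d$-rate that is uniform in $\etrn$. Two concrete concerns: first, the exponent $(m+m'-2)/2$ for the numerator is not clearly correct. When the spectrum of $W$ is spread out (the event $\|E\|_F^2>c'\etrn$), the measurement constraint $\|\A(W-\gt)\|_2\lesssim\sqrt{\etrn}$ is a codimension-$n$ constraint on the full matrix space, which should generically produce a factor $\etrn^{n/2}$, not $\etrn^{(m+m'-2)/2}$; these agree only when $n=m+m'-2$, and the argument that the dominant contribution comes from a rank-one ``tangent slice'' of dimension $m+m'-2$ would itself require proof. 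Second, you acknowledge that making this comparison uniform in $\etrn$ is ``the crux'' and is not filled in --- the ``transversality/covering estimate over the uniform singular vectors'' does not exist in the sketch. So as written, Step 3 is a genuine gap. The fix is to drop the uniformity ambition entirely and use the paper's simpler route: a covering argument for the denominator at fixed $\etrn$, plus the $O(1/d)$ bound on $\PP(C^c)$ from your Step 2.

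One additional minor discrepancy: your reduction pushes the dependence on $\|\A\|$, $n$, $|\B|$ into the constant $c$ (by requiring $c>c_0$), whereas the theorem asserts $c$ depends only on $\rip$. The paper instead fixes $c=2/(1-\rip)$ and absorbs those quantities into the choice of the closeness parameter $\clr$ (see Lemma~\ref{lemma:approx_recovery} and \cref{app:depth:proof}); you would want to mirror that bookkeeping.
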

\begin{proof}[Proof sketch (full proof in \cref{app:depth_gnc})]
The proof begins by decomposing the factorized matrix~$W$ (\cref{eq:mf}) into a product of three matrices: $W \,{=}\, W_d \Wmid W_1$, where $\Wmid \,{:=}\, W_{d - 1} \cdots W_2$.
Then, temporarily disregarding the normalization in~$\PP ( \cdot )$, \ie, temporarily treating~$\PP ( \cdot )$ as if it were generated by~$\QQ ( \cdot )$ without normalization (\cref{def:generated}), the proof applies concentration bounds established by \citet{hanin2021non} to the \emph{Lyapunov exponents} of $\Wmid$ (when normalization is disregarded, $\Wmid$~is a product of $d - 2$ zero-centered Gaussian $k$-by-$k$ matrices).
These Lyapunov exponents, denoted $\lambda_1 , \ldots , \lambda_k$, are defined by $\lambda_i := \log ( \sigma_i ) / ( d - 2 )$ for $i \in [ k ]$, where $\sigma_1 \geq \cdots \geq \sigma_k$ stand for the singular values of $\Wmid$.
\citet{hanin2021non} give non-asymptotic bounds on the deviation of $\lambda_1 , \ldots , \lambda_k$ from their respective infinite depth ($d \to \infty$) limits $\mu_1 , \ldots , \mu_k$, which satisfy $0 > \mu_1 > \ldots > \mu_k$.  
Since $\sigma_i = \exp ( ( d - 2 ) \lambda_i )$, and since $\mu_i \neq \mu_j$ for all $i \neq j$, the non-asymptotic bounds imply that when the depth~$d$ is large, with high probability, $\Wmid$~has one singular value much larger than the rest, \ie, $\Wmid$~has an approximate rank one structure. 

Next, while still (temporarily) disregarding the normalization in~$\PP ( \cdot )$, the proof uses properties of standard multivariate Gaussian distributions (namely, their rotational invariance and concentration bounds on their norms) to show that, with high probability, multiplication of~$\Wmid$ by~$W_d$ from the left and by~$W_1$ from the right preserves its approximate rank one structure.
In other words, with high probability, the factorized matrix $W \,{=}\, W_d \Wmid W_1$ has an approximate rank one structure.
Now (and hereinafter) accounting for the normalization in~$\PP ( \cdot )$, the latter finding is formalized as follows: for any $\clr \in \R_{> 0}$, the probability that $W$ is within~$\clr$ (in Frobenius norm) of a rank one matrix is~$1 - O ( 1 / d )$.

Choosing~$\clr$ appropriately, while employing compressed sensing arguments that rely on the RIP and the ground truth matrix having rank one (see, \eg, \cite{kutyniok2012theoryapplicationscompressedsensing}), it is then proven that the probability of the events $\Ltrn ( W ) < \etrn$ and $\Lgen ( W ) \geq \etrn c$ occurring simultaneously is~$O ( 1 / d )$.
Finally, using the facts that the distribution of~$W$ is rotationally invariant, that (due to the normalization in~$\PP ( \cdot )$) the norm of~$W$ equals one with probability one, and that~$W$ is with high probability close to a rank one matrix, the proof establishes that the probability of $\Ltrn ( W ) < \etrn$ is independent of the depth~$d$, \ie, it is~$\Omega ( 1 )$. 
By the definition of conditional probability, the latter two findings imply that the probability of $\Lgen ( W ) \geq \etrn c$ conditioned on $\Ltrn ( W ) < \etrn$ is~$O ( 1 / d )$.  
This~is~the~desired~result.
\end{proof}

	% EXPERIMENTS
	\section{Empirical Demonstration}
\label{sec:exper}
\vspace{-3mm}

\begin{figure*}[t]
\begin{center}
\begin{center}
    \vspace{-10mm}
\includegraphics[width=1\textwidth]{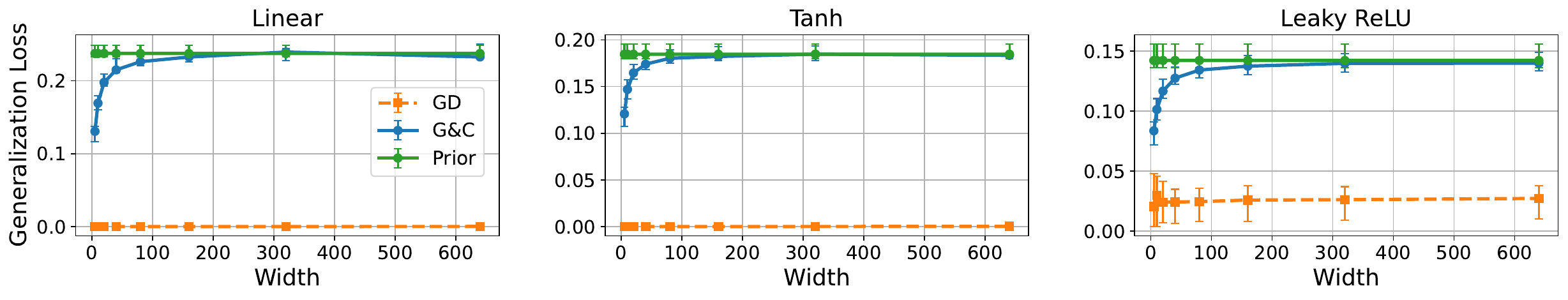}
\end{center}
\vspace{-2mm}
\end{center}
\caption{
In line with our theory (\cref{sec:analysis:width}), as the width of a matrix factorization increases, the generalization attained by \GnC deteriorates, to the point of being no better than chance, \ie, no better than the generalization attained by drawing a single weight setting from the prior distribution while disregarding the training data.
In contrast, gradient descent attains good generalization across all widths.
Each of the above plots corresponds to a matrix factorization as described in \cref{sec:prelim:mf}, with a different activation~$\sigma ( \cdot )$:
linear activation ($\sigma ( \alpha ) = \alpha$) for the left plot;
tanh activation ($\sigma ( \alpha ) = \tanh ( \alpha )$) for the middle plot;
and
Leaky ReLU activation ($\sigma ( \alpha ) = \max \{ c \cdot \alpha , \alpha \}$, with $c = 0.2$) for the right plot.\footref{note:relu}
In each plot, the generalization loss (\cref{eq:ms_loss_gen_mf}) is shown against the width of the matrix factorization, for three optimizers:
gradient descent with small step size and small initialization (\cref{sec:prelim:gd});
\GnC with a Kaiming Gaussian prior distribution (\cref{sec:prelim:gnc});
and
simply drawing a single weight setting from the prior distribution while disregarding the training data.
For each combination of width and optimizer, we report the median (marker) and interquartile range (error bar) of generalization losses attained over eight trials (differing only in random seed).
Across all experiments reported in this figure:
the matrix factorization had depth two and dimensions $m = m' = 5$;
the ground truth matrix had (Frobenius) norm and rank equal to one;
and
the training data size was $n = 15$.
We note that with Leaky ReLU activation, which lies beyond the scope of our theory, the generalization attained by gradient descent is not as good as it is with linear and tanh activations.
For further experiments and implementation details see \cref{app:details,app:exper}, respectively.
}
\vspace{-2mm}
\label{fig:width}
\end{figure*}

In this section, we corroborate our theory (\cref{sec:analysis}) by empirically demonstrating that in matrix factorization (\cref{sec:prelim:mf}), the generalization attained by \GnC (\cref{sec:prelim:gnc}) improves as depth increases but deteriorates as width increases, whereas gradient descent (\cref{sec:prelim:gd}) attains good generalization throughout. \cref{fig:width,fig:depth} present such demonstrations, plotting generalization as a function of width and depth, respectively, for both \GnC and gradient descent.
The demonstrations in \cref{fig:width,fig:depth} cover the theoretically analyzed linear and tanh activations, as well as the Leaky ReLU~activation~\cite{maas2013rectifier} which lies beyond the scope of our theory.\note{%
We attempted to include a demonstration with the more popular ReLU activation~\cite{nair2010rectified}, but its tendency to zero out matrix entries made \GnC computationally infeasible, requiring an excessive number of draws to obtain a weight setting that fits the training data.
\label{note:relu}
}
Additional demonstrations covering further cases (including gradient descent with momentum~\cite{qian1999momentum}) are provided in \cref{app:exper}.
Code for reproducing all demonstrations
\ifdefined\CAMREADY
    can be found in \url{https://github.com/YoniSlutzky98/nn-gd-gen-mf}. 
\else
    will be made publicly available with the camera-ready version of the paper.
\fi

\FloatBarrier

\begin{figure*}[t]
\begin{center}
\begin{center}
    \vspace{-10mm}
\includegraphics[width=1\textwidth]{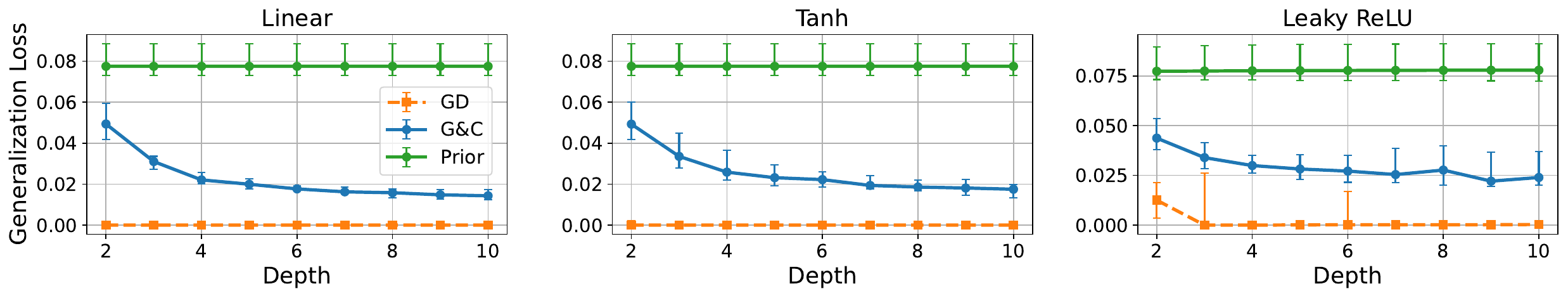}
\end{center}
\vspace{-2mm}
\end{center}
\caption{
In line with our theory (\cref{sec:analysis:depth}), as the depth of a matrix factorization increases, the generalization attained by \GnC improves, drawing closer to that of gradient descent. 
This figure adheres to the caption of \cref{fig:width}, except for the following differences:
\emph{(i)}~the matrix factorization had variable (rather than fixed) depth and fixed (rather than variable) width, with the latter set to five;
\emph{(ii)}~generalization losses are shown against the depth (rather than the width) of the factorization;
and
\emph{(iii)}~the prior distribution of \GnC included normalization (\cref{def:generated}).
We did not include depths greater than ten in our experiments, as they led to excessively long run times for gradient descent (due to vanishing gradients).
Note that such greater depths would not necessarily lead the generalization attained by \GnC to match that of gradient descent.
Indeed, our theory for increasing depth (\cref{result:depth_gnc}) guarantees that the generalization loss attained by \GnC tends to zero only if the threshold set for its training loss (see \cref{sec:prelim:gnc}) tends to zero, which makes \GnC computationally infeasible (as it requires an infeasible number of draws).
For further experiments and implementation details see \cref{app:details,app:exper}, respectively.
}
\vspace{-4mm}
\label{fig:depth}
\end{figure*}

	% LIMITIATIONS
	\section{Limitations}
\label{sec:limit}
\vspace{-3mm}
It is important to acknowledge several limitations of our theory.
First, while a large body of theoretical work~\cite{gunasekar2017implicitregularizationmatrixfactorization,li2018algorithmic,arora2019implicitregularizationdeepmatrix,Ma_2019,eftekhari2020implicit,wu2021implicitregularizationmatrixsensing,zhang2021preconditioned,li2021resolvingimplicitbiasgradient,soltanolkotabi2023implicit,jin2023understanding,xu2024implicit} has been devoted to establishing that gradient descent attains good generalization in matrix factorization, Theorem~3.3 from~\cite{soltanolkotabi2023implicit} (restated as \cref{result:width_gd} herein)---which applies only when the activation is linear, the depth is two, and the measurement matrices satisfy an RIP (\cref{def:rip})---is the only result at our disposal that formally guarantees low generalization loss with high probability for gradient descent with a positive (non-infinitesimal) step size and conventional (data-independent) initialization.
Second, the guarantees we prove for \GnC---namely, \cref{result:width_gnc,result:depth_gnc}---include unspecified constant factors, and in particular, are non-vacuous only when the width or depth of the matrix factorization is sufficiently large.
Third, \cref{result:width_gnc} assumes that the activation is anti-symmetric.
Fourth, \cref{result:depth_gnc} imposes even stronger assumptions: the activation is linear, the measurement matrices satisfy an RIP, and the ground truth matrix has norm and rank equal to one (an extension to ground truth matrices of higher rank is discussed in \cref{rank_geq_1_intution}, but is not formally established).
Fifth, \cref{result:width_gnc} requires the \GnC training loss threshold~$\etrn$ to be specified (the theorem does not rule out the possibility that for any width, a sufficiently small~$\etrn$ will lead \GnC to attain good generalization), and although \cref{app:epsilon_tozero} proves a result that allows unspecified~$\etrn$, it does so under strong assumptions not imposed by \cref{result:width_gnc}.
Finally, \cref{result:width_gnc,result:depth_gnc} consider different types of prior distributions: \cref{result:width_gnc} excludes normalization (\cref{def:generated}), whereas \cref{result:depth_gnc} includes it.

While we empirically demonstrate that the conclusions of our theory hold beyond its formal scope, the above limitations remain.
We hope that this paper will serve as a stepping stone towards addressing these limitations, and more broadly, towards extending our theory from matrix factorization to real-world neural networks.

	% CONCLUSION
	\section{Conclusion}
\label{sec:conclusion}
\vspace{-3mm}
Conventional wisdom attributes the miraculous generalization abilities of neural networks to gradient descent.
A recent bold argument claims that gradient descent is not necessary for neural networks to generalize well, and in fact, any reasonable optimizer can suffice.
This is justified by the so-called volume hypothesis, which posits that among the weight settings that fit the training data, the volume of the weight settings that generalize well is much greater than the volume of the weight settings that do not.
While several works have supported the volume hypothesis in certain cases involving wide and deep neural networks, the literature also includes contrasting evidence. 

In this paper, we presented a theoretical study for matrix factorization (with linear and non-linear activation)---a canonical and important testbed in the theory of neural networks---to rigorously examine the validity of the volume hypothesis.
Our first contribution is a proof that the volume hypothesis fails when the width of a network is large (compared to its depth), thereby establishing---for the first time, to our knowledge---a canonical case where gradient descent is provably necessary for a neural network to generalize well.
As a second contribution, we proved that the volume hypothesis holds when the depth of a network is large (compared to its width).
These contributions reveal a stark contrast between wide and deep networks, which we further validated through empirical~demonstrations.

Overall, our findings suggest that even in simple settings, whether the volume hypothesis holds may hinge on subtle dependencies between network width and depth.
Delineating when it holds is of interest not only from a theoretical perspective, but also in practice.
Indeed, when the volume hypothesis is known to hold, one may confidently employ fast-converging optimizers (\eg, Adam~\citep{kingma2014adam} or AdamW~\citep{loshchilov2017decoupled}) without worrying about their potential to deteriorate generalization~\citep{wilson2017marginal, zhou2020towards}.
 
Our study of matrix factorization may serve as a stepping stone toward analogous studies of more realistic models.
A natural next step is the study of \emph{tensor factorization}: a model obtained by lifting matrices (two-dimensional arrays) to \emph{tensors} (multi-dimensional arrays).
Tensor factorization has been studied extensively in the theory of neural networks~\citep{cohen2016expressive,cohen2016inductive,razin2021implicitregularizationtensorfactorization,razin2022implicitregularizationhierarchicaltensor,alexander2023makes}, partly due to its ability to capture convolutional~\citep{cohen2016convolutional}, recurrent~\citep{xu2021tensor} and self-attention~\citep{wies2021transformer,levine2021inductive} architectures.
Delineating when the volume hypothesis holds in tensor factorization---\ie, when tensor factorization needs gradient descent to generalize well---would comprise an important milestone toward elucidating the role of gradient descent in modern AI.

	% ACKNOWLEDGMENTS
	\ifdefined\NEURIPS
		\begin{ack}
			This work was supported by the European Research Council (ERC) grant NN4C 101164614, a Google Research Scholar Award, a Google Research Gift, Meta, the Yandex Initiative in Machine Learning, the Israel Science Foundation (ISF) grant 1780/21, the Tel Aviv University Center for AI and Data Science, the Adelis Research Fund for Artificial Intelligence, Len Blavatnik and the Blavatnik Family Foundation, and Amnon and Anat Shashua.
		\end{ack}
	\else
		\newcommand{\ack}{}
	\fi
	\ifdefined\ARXIV
		\section*{Acknowledgements}
		\ack
	\else
		\ifdefined\COLT
			\acks{\ack}
		\else
			\ifdefined\CAMREADY
				\ifdefined\ICLR
					\newcommand*{\subsuback}{}
				\fi
				\ifdefined\NEURIPS
					% DO NOTHING - HANDLED EARLIER
				\else
					\section*{Acknowledgements}
					\ack
				\fi
			\fi
		\fi
	\fi

	% REFERENCES
	\section*{References}
	{\small
		\ifdefined\ICML
			\bibliographystyle{icml2025}
		\else
			\bibliographystyle{plainnat}
		\fi
		\bibliography{refs}
	}

	% \clearpage
	% \input{sec_checklist.tex}

	% APPENDIXES
	\clearpage
	\appendix
	\crefalias{section}{appendix}
	\crefalias{subsection}{appendix}
	\crefalias{subsubsection}{appendix}

% Uncomment below for ICML appendix with two columns
	\onecolumn
	
	% ENDNOTES
	\ifdefined\ENABLEENDNOTES
		\theendnotes
	\fi
	
	% TABLE OF CONTENTS FOR APPENDIX
%	{
%		\hypersetup{hidelinks}
%		\tableofcontents
%	}

	% Add Appendix sections here
	
	\section{Proof of \cref{result:width_gnc}}
\label{app:width_gnc}

This appendix proves \cref{result:width_gnc}.
\cref{app:width_gncequiv} establishes an equivalence between a matrix~factorization and a feedforward fully connected neural network.
This equivalence allows us to utilize the theoretical results of \citet{hanin2023random} and \citet{favaro2025quantitative}, developed for feedforward fully connected neural networks of large widths.
Relying on these results:
\cref{app:width_gncgeneral} treats the case where $\QQ ( \cdot )$ is an arbitrary regular probability distribution and the width~$k$ tends to infinity;
and 
\cref{app:width_gnccanonical} treats the case where $\QQ ( \cdot )$ is a zero-centered Gaussian distribution and the width~$k$ is finite.

\subsection{An Equivalence Between Matrix Factorizations and Fully Connected Neural Networks}
\label{app:width_gncequiv}

We begin by defining the concept of a fully connected neural network.
\begin{definition}\label{def:ffnn}
    A \textit{fully connected neural network} of depth $\depth\in \BN$ with input dimension $\dout\in\BN$, output dimension $\din\in\BN$, hidden dimension $\dhid\in\N$ and activation function $\sigma(\cdot)$ is a function $\xbf_{\alpha}\in \BR^{\dout}\mapsto \zbf_{\alpha}^{(\depth)}\in \BR^{\din}$ of the following recursive form:
    \begin{align*}
        \zbf_{\alpha}^{(j)}=\begin{cases}
            W_{1}\xbf_{\alpha}, \quad& j=1\\
            W_{j}\sigma(\zbf_{\alpha}^{(j-1)}), \quad& i=2,\dots, \depth 
        \end{cases}
        \text{\,,}
    \end{align*}
    where $W_{1}\in \BR^{\dhid,\dout}$, $W_{\depth}\in \BR^{\din,\dhid}$ and $W_{2},\dots,W_{\depth}\in\BR^{\dhid,\dhid}$ are the networks weights, and $\sigma$ applied to a vector is shorthand for $\sigma$ applied to each entry.
\end{definition}

Next, we prove a useful equivalence which shows that when a matrix factorization and a fully connected neural network share their weights and activation function, each of the columns of the former are equal to the outputs of the latter when input the appropriate standard basis vectors.
\begin{lemma}\label{lemma:mf_ffnn_equiv}
    Let $\alpha\in[\dout]$. For any weight matrices $W_{1},\dots, W_{\depth}$ and activation function $\sigma(\cdot)$, the $\alpha$ column of the matrix factorization $\We$ (\cref{eq:mf}) produced by the weight settings $(W_{1},\dots, W_{\depth})$ and the activation function $\sigma(\cdot)$, is equal to the the output of the fully connected neural network (\cref{def:ffnn}) produced by the weights $(W_{1},\dots, W_{\depth})$ and the activation function $\sigma(\cdot)$, when the input is $\ebf_{\alpha}\in\BR^{\dout}$, the standard basis vector holding $1$ in its $\alpha$ coordinate and zeros in the rest. Formally, we denote this as
    \begin{align*}
        \left[\We\right]_{.\alpha}=\zbf_{\alpha}^{(\depth)}
        \text{\,,}
    \end{align*}
    where $\left[\We\right]_{.\alpha}$ is the $\alpha$ column of $\We$ and $\zbf_{\alpha}^{(\depth)}$ is the output of the fully connected neural network when the input is $\ebf_{\alpha}\in\BR^{\dout}$.
\end{lemma}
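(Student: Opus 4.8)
The plan is to prove the identity by induction on the layer index, tracking what happens to a single column as it passes through the nested recursion in \cref{eq:mf}. For $j \in [\depth]$ I would set $\We^{(1)} := W_1$ and $\We^{(j)} := W_j\,\sigma(\We^{(j-1)})$ for $j \geq 2$, so that $\We^{(\depth)} = \We$ by \cref{eq:mf}. The statement to establish by induction is that $[\We^{(j)}]_{.\alpha} = \zbf_\alpha^{(j)}$ for every $j \in [\depth]$, where $\zbf_\alpha^{(j)}$ denotes the $j$-th hidden state of the fully connected network (\cref{def:ffnn}) on input $\ebf_\alpha$. Specializing to $j = \depth$ then gives exactly the claimed equality $[\We]_{.\alpha} = \zbf_\alpha^{(\depth)}$.

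First I would dispatch the base case $j = 1$. Here $\We^{(1)} = W_1 \in \BR^{\dhid,\dout}$, and since $\ebf_\alpha \in \BR^{\dout}$ is the $\alpha$-th standard basis vector, $[\We^{(1)}]_{.\alpha} = W_1\ebf_\alpha = \zbf_\alpha^{(1)}$, where the last equality is the $j=1$ case of \cref{def:ffnn}. For the inductive step, assume $[\We^{(j)}]_{.\alpha} = \zbf_\alpha^{(j)}$. The argument rests on two elementary observations about columns: (i) applying $\sigma(\cdot)$ entrywise to a matrix and then extracting column $\alpha$ yields the same vector as first extracting column $\alpha$ and then applying $\sigma(\cdot)$ entrywise, i.e.\ $[\sigma(M)]_{.\alpha} = \sigma([M]_{.\alpha})$; and (ii) column $\alpha$ of a matrix product equals the left factor times column $\alpha$ of the right factor, i.e.\ $[AB]_{.\alpha} = A[B]_{.\alpha}$. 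Chaining these, $[\We^{(j+1)}]_{.\alpha} = [W_{j+1}\sigma(\We^{(j)})]_{.\alpha} = W_{j+1}[\sigma(\We^{(j)})]_{.\alpha} = W_{j+1}\sigma([\We^{(j)}]_{.\alpha}) = W_{j+1}\sigma(\zbf_\alpha^{(j)}) = \zbf_\alpha^{(j+1)}$, where the last equality is the $j \geq 2$ case of \cref{def:ffnn}. This closes the induction.

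Since this is a bookkeeping lemma, I do not anticipate a genuine obstacle; the only point requiring care is aligning the conventions of the two recursions. Concretely, one must check that the input dimension $\dout$ of the fully connected network matches the number of columns of $W_1$, so that $\ebf_\alpha$ is a legal input and the two objects' first layers coincide, and that the layer indexing in \cref{def:ffnn} (where $W_j$ acts \emph{after} $\sigma(\cdot)$ for $j \geq 2$) matches the nesting pattern in \cref{eq:mf}. Both checks are immediate from the respective definitions, so the full proof amounts to no more than a careful formalization of the induction sketched above.
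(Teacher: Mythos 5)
Your proof is correct and follows essentially the same route as the paper's: an induction on the layer index, using the facts that column extraction commutes with left multiplication and with entrywise application of $\sigma$ (the latter is the paper's \cref{lemma:one_hot_activation}). The only cosmetic difference is that the paper phrases column extraction as multiplication by $\ebf_\alpha$ throughout, whereas you keep it as the $[\cdot]_{.\alpha}$ operation.
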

\begin{proof}
    We prove the claim via induction on $\depth$. First, for the base case, it trivially holds that
    \begin{align*}
        \left[W_{1:1}\right]_{.\alpha}=W_{1:1}\ebf_{\alpha}=\zbf_{\alpha}^{(1)}
        \text{\,.}
    \end{align*}
    Next, fix $j\in[\depth]$ and assume that $\left[W_{1:j}\right]_{.\alpha}=\zbf_{\alpha}^{(j)}$. We thus have that
    \begin{align*}
        \left[W_{1:j+1}\right]_{.\alpha}&=W_{1:j+1}\ebf_{\alpha}\\
        &=W_{j+1}\sigma(W_{1:j})\ebf_{\alpha}\\
        &=W_{j+1}\sigma(W_{1:j}\ebf_{\alpha})\\
        &=W_{j+1}\sigma([W_{1:j}]_{.\alpha})\\
        &=W_{j+1}\sigma(\zbf_{\alpha}^{(j)})\\
        &=\zbf_{\alpha}^{j+1}
        \text{\,,}
    \end{align*}
    where the third equality is due to \cref{lemma:one_hot_activation}, and the fourth equality is due to the inductive assumption. With this we complete the proof.
\end{proof}

\subsection{Proof for Arbitrary Regular Distribution and Infinite Width}
\label{app:width_gncgeneral}

The outline of the proof for the arbitrary prior case is as follows; \cref{app:width_gncgeneral:restatement} presents \cref{thm:width_general}, of which the arbitrary prior case of \cref{result:width_gnc} is a special case. \cref{app:width_gncgeneral:cond} provides a useful Lemma used in the proof of \cref{thm:width_general}. \cref{app:width_gncgeneral:gauss_process} adapts a result from \citet{hanin2023random} showing that an infinitely wide matrix factorization converges in distribution to a centered Gaussian matrix (\cref{def:centered_gaussian_mat}). Finally, \cref{app:width_gncgeneral:cond_satisfied} applies tools from probability theory to show that the latter convergence implies the conditions required for \cref{lemma:reduction_general} in \cref{app:width_gncgeneral:cond}.

\subsubsection{Restatement of the Arbitrary Prior Case of \cref{result:width_gnc}}
\label{app:width_gncgeneral:restatement}

The arbitrary prior case of \cref{result:width_gnc} follows from \cref{thm:width_general}, which allows for the distribution $\QQ(\cdot)$ and the activation $\sigma(\cdot)$ to be slightly more general. \cref{thm:width_general} is presented below; afterwards, we demonstrate how it implies the arbitrary prior case of \cref{result:width_gnc}.

\begin{theorem}\label{thm:width_general}
    Let $\depth\in\BN$ be a fixed depth. Let $\QQ(\cdot)$ be some probability distribution on $\BR$ which satisfies
    \begin{align*}
        \EE_{x\sim \QQ(\cdot)}[x]=0, \quad \EE_{x\sim \QQ(\cdot)}[x^{2}]=1
        \text{\,,}
    \end{align*}
    has finite higher moments and is symmetric, \ie, if $x\sim \QQ(\cdot)$ then $-x\sim \QQ(\cdot)$. Let $\sigma(\cdot)$ be an activation function that is not constant and anti-symmetric, \ie,
    \begin{align*}
        \forall x\in\BR.\;\; \sigma(x)=-\sigma(-x)
        \text{\,,}
    \end{align*}
    furthermore suppose that \(\sigma\) is absolutely continuous, and that its almost-everywhere defined derivative is polynomially bounded, i.e:
    \begin{align*}
        \exists p>0\; \text{s.t.} \; \forall x\in\BR \;\; \left\|\frac{\sigma'(x)}{1+|x|^{p}}\right\|_{L^{\infty}(\BR)}<\infty
        \text{\,.}
    \end{align*}
    Suppose also that
    \begin{align*}
        \EE_{x\sim \QQ(\cdot)}\left[\sigma^{2}(x)\right]>0
        \text{\,.}
    \end{align*}
    Let $\egen,\etrn, c_{W} >0$. Suppose that for any $j\in[\depth]$, the entries of $W_{j}\in \BR^{\din_{j+1},\din_{j}}$ are drawn independently by first sampling $x\sim \QQ(\cdot)$ and then setting $[W_{j}]_{rs}=\sqrt{\frac{c_{W}}{\din_{j}}}x$.
    Then the matrix factorization $\We$ satisfies
    \begin{align*}
        \lim_{\dhid\rightarrow\infty}\PP\left(\gen(\We)<\egen \Big|  \tr(\We) < \etrn\right)=\lim_{\dhid\rightarrow\infty}\PP\left(\gen(\We)<\egen\right)
        \text{\,.}
    \end{align*}
\end{theorem}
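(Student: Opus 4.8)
The plan is to show that, as the width $\dhid$ grows, the factorized matrix $\We$ (\cref{eq:mf}) converges in distribution to a matrix $\Wiid\in\BR^{\din,\dout}$ with i.i.d.\ centered Gaussian entries, that for such a matrix the training and generalization losses are statistically \emph{independent}, and that this independence together with a continuity argument forces the conditional probability on the left-hand side to agree with the unconditional one in the limit. First I would move from the matrix factorization to a fully connected network: by \cref{lemma:mf_ffnn_equiv}, the $\alpha$-th column of $\We$ equals the output $\zbf_{\alpha}^{(\depth)}$ of a depth-$\depth$ fully connected network (\cref{def:ffnn}) fed the standard basis vector $\ebf_{\alpha}\in\BR^{\dout}$, so $\We$ is precisely the collection of outputs of one network at the $\dout$ orthonormal inputs $\ebf_{1},\dots,\ebf_{\dout}$. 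The hypotheses of \cref{thm:width_general} — i.i.d.\ Kaiming-scaled weights from a symmetric, finite-moment $\QQ(\cdot)$, and an absolutely continuous activation with polynomially bounded derivative and $\EE_{\QQ}[\sigma^{2}]>0$ — are exactly those under which the infinite-width Gaussian-process results of \citet{hanin2023random} and \citet{favaro2025quantitative} yield joint convergence in distribution of $(\zbf_{1}^{(\depth)},\dots,\zbf_{\dout}^{(\depth)})$ to a mean-zero Gaussian field governed by the usual NNGP kernel recursion. The crucial structural observation is that this limiting kernel $K^{(\depth)}$ is \emph{diagonal} across the inputs: at the input layer $K^{(1)}(\ebf_{\alpha},\ebf_{\beta})\propto\inprod{\ebf_{\alpha}}{\ebf_{\beta}}=0$ for $\alpha\neq\beta$, and whenever $K^{(j)}(\ebf_{\alpha},\ebf_{\beta})=0$, the pair $(u,u')$ in the centered bivariate Gaussian expectation defining $K^{(j+1)}(\ebf_{\alpha},\ebf_{\beta})=c_{W}\,\EE[\sigma(u)\sigma(u')]$ consists of independent Gaussians, so oddness of $\sigma$ gives $K^{(j+1)}(\ebf_{\alpha},\ebf_{\beta})=c_{W}\,\EE[\sigma(u)]\,\EE[\sigma(u')]=0$; hence the off-diagonal kernel vanishes at every layer. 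Since all $\ebf_{\alpha}$ have equal norm, the diagonal value $\nu_{\ast}:=K^{(\depth)}(\ebf_{\alpha},\ebf_{\alpha})>0$ (positive because $\sigma$ is continuous and $\EE_{\QQ}[\sigma^{2}]>0$) is common to every column, and distinct output coordinates decouple in the limit as well, so $\We\xrightarrow{d}\Wiid$ with $\Wiid$ having i.i.d.\ $\NN(0,\nu_{\ast})$ entries.

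Next I would exploit the orthogonality built into the losses. The generalization loss $\gen$ (\cref{eq:ms_loss_gen_mf}) is defined through an orthonormal basis $\B$ of the orthogonal complement of $\Sp\{A_{i}\}_{i=1}^{n}$, so since $\vect{\Wiid}$ is an isotropic Gaussian vector, the jointly Gaussian families $(\inprod{A_{i}}{\Wiid})_{i=1}^{n}$ and $(\inprod{A}{\Wiid})_{A\in\B}$ have cross-covariances proportional to $\inprod{A_{i}}{A}=0$ and are therefore independent; as $\tr(\Wiid)$ is a function of the former and $\gen(\Wiid)$ of the latter, $\tr(\Wiid)$ and $\gen(\Wiid)$ are independent. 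I would also record two nondegeneracy facts: $\PP(\tr(\Wiid)<\etrn)>0$, because the law of $\Wiid$ has full support and the event is open and nonempty; and both $\tr(\Wiid)$ and $\gen(\Wiid)$ have atomless laws — their level sets are spheres in the corresponding non-degenerate Gaussian coordinates, hence null — so that $\PP(\tr(\Wiid)=\etrn)=\PP(\gen(\Wiid)=\egen)=0$.

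Finally I would assemble these pieces. The map $W\mapsto(\tr(W),\gen(W))$ is continuous, so $\We\xrightarrow{d}\Wiid$ gives $(\tr(\We),\gen(\We))\xrightarrow{d}(\tr(\Wiid),\gen(\Wiid))$, and combining the Portmanteau theorem with the no-atoms facts above yields $\PP(\tr(\We)<\etrn)\to\PP(\tr(\Wiid)<\etrn)$, $\PP(\gen(\We)<\egen)\to\PP(\gen(\Wiid)<\egen)$, and $\PP(\tr(\We)<\etrn,\gen(\We)<\egen)\to\PP(\tr(\Wiid)<\etrn,\gen(\Wiid)<\egen)$. Since the first limit is strictly positive, for all large $\dhid$ the denominator is bounded away from $0$, the conditional probability is well defined, and
\[
\lim_{\dhid\to\infty}\PP\brk*{\gen(\We)<\egen \mid \tr(\We)<\etrn}=\frac{\PP\brk*{\tr(\Wiid)<\etrn}\,\PP\brk*{\gen(\Wiid)<\egen}}{\PP\brk*{\tr(\Wiid)<\etrn}}=\lim_{\dhid\to\infty}\PP\brk*{\gen(\We)<\egen}\text{\,,}
\]
where the middle equality uses independence of $\tr(\Wiid)$ and $\gen(\Wiid)$; this is the assertion of \cref{thm:width_general}. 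I expect the main obstacle to be the first step: rigorously upgrading the factorization-to-network correspondence into a genuine convergence in distribution (not merely of moments) toward the i.i.d.\ Gaussian matrix $\Wiid$ — verifying and adapting the infinite-width Gaussian-process theorems to depth-$\depth$ factorizations with possibly non-Gaussian weights, and isolating the precise point (oddness of $\sigma$ together with orthonormality of the $\ebf_{\alpha}$) that makes the limiting kernel diagonal and hence the columns of $\Wiid$ independent.
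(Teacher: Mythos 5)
Your proposal is correct and follows essentially the same route as the paper's proof: reduce the factorization to a fully connected network at the standard basis inputs via \cref{lemma:mf_ffnn_equiv}, invoke the infinite-width Gaussian-process limit of \citet{hanin2023random} and show by the kernel recursion (orthonormal inputs plus oddness of $\sigma$) that the off-diagonal kernel vanishes at every layer so that $\We$ converges in distribution to an i.i.d.\ centered Gaussian matrix $\Wiid$, observe that $\Ltrn(\Wiid)$ and $\Lgen(\Wiid)$ are independent because the $A_i$ and the basis $\B$ are orthogonal, and close with Portmanteau applied to continuity sets. The only cosmetic differences are that you phrase the base case in terms of the pre-activation kernel being proportional to $\inprod{\ebf_\alpha}{\ebf_\beta}$ (the paper instead directly computes the post-activation kernel using independence of distinct entries of $W_1$ and antisymmetry of $\sigma$), and you apply Portmanteau to the pushforward $(\Ltrn,\Lgen)$ with an atomlessness argument while the paper applies it to the matrix-valued limit using the polynomial zero-set lemma; both are sound and lead to the same conclusion.
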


Let $\QQ(\cdot)$ be a regular distribution over $\BR$ (\cref{def:regular}), and let $\sigma(\cdot)$ be an admissible activation function (\cref{def:admissible}) that is anti-symmetric. First, observe that since \cref{thm:width_general} allows for arbitrary $c_{W}>0$, the condition for $\EE_{x\sim \QQ(\cdot)}[x^{2}]=1$ is satisfied with $c_{W}=\EE_{x\sim \QQ(\cdot)}[x^{2}]$. Next, note that since $\QQ(\cdot)$ assigns a positive probability to every neighborhood of the origin and has finite higher moments, and since $\sigma(\cdot)$ does not vanish on both sides of the origin, it must hold that
\begin{align*}
    \EE_{x\sim \QQ(\cdot)}[\sigma^{2}(x)]>0
    \text{\,.}
\end{align*}
The rest of the conditions in \cref{thm:width_general} are directly fulfilled by the properties of regular distributions (\cref{def:regular}) and the properties of admissible activation functions (\cref{def:admissible}) that are anti-symmetric. Overall we showed that \cref{thm:width_general} applies for $\QQ(\cdot)$ and $\sigma(\cdot)$, and so the arbitrary prior case of \cref{result:width_gnc} will follow from \cref{result:width_gnc}. 

\subsubsection{Sufficient Condition for \cref{thm:width_general}}
\label{app:width_gncgeneral:cond}

A useful Lemma used in the proof of \cref{thm:width_general} is provided below. The Lemma shows that for an infinitely wide matrix factorization (\cref{eq:mf}) with probabilities for low training loss and low generalization loss equal to that of a centered Gaussian matrix (\cref{def:centered_gaussian_mat}), the probability for having low generalization loss (\cref{eq:ms_loss_gen}) conditioned on having low training loss (\cref{eq:ms_loss_train}) is equal to the probability of having low generalization loss.

\begin{lemma}\label{lemma:reduction_general}
    Let $\egen,\etrn>0$. Let $\Wiid\in\BR^{\din,\dout}$ be a centered Gaussian matrix (\cref{def:centered_gaussian_mat}). Suppose that as $\dhid\rightarrow 0$, the quantities
    \begin{align*}
		\left|\PP\left( \Ltrn(\We) < \etrn\,, \Lgen(\We) < \egen\right)-\PP\left( \Ltrn(\Wiid) < \etrn\,, \Lgen(\Wiid) < \egen\right)\right|
        \text{\,,}
	\end{align*}
    \begin{align*}
		\left|\PP\left( \Ltrn(\We) < \etrn \right)-\PP\left( \Ltrn(\Wiid) < \etrn\right)\right|
        \text{\,,}
	\end{align*}
    and
    \begin{align*}
		\left|\PP\left( \Lgen(\We)<\egen  \right)-\PP\left( \Lgen(\Wiid)<\egen \right)\right|
	\end{align*}
    all tend to $0$. Then 
    \begin{align*}
        \lim_{\dhid\rightarrow\infty}\PP\left(\Lgen(\We)<\egen \Big|  \Ltrn(\We) < \etrn\right)=\lim_{\dhid\rightarrow\infty}\PP\left(\Lgen(\We)<\egen\right)
        \text{\,.}
    \end{align*}
\end{lemma}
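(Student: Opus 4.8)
The plan is to push everything onto a single structural fact about the limiting matrix $\Wiid$, namely that the events $\{\Ltrn(\Wiid)<\etrn\}$ and $\{\Lgen(\Wiid)<\egen\}$ are statistically independent, and then to finish with elementary limit arithmetic. Write $p_{\dhid}:=\PP(\Ltrn(\We)<\etrn,\,\Lgen(\We)<\egen)$, $q_{\dhid}:=\PP(\Ltrn(\We)<\etrn)$ and $r_{\dhid}:=\PP(\Lgen(\We)<\egen)$, and let $p^{\star},q^{\star},r^{\star}$ be the same three quantities with $\We$ replaced by $\Wiid$. The three hypotheses of the lemma say exactly that $p_{\dhid}\to p^{\star}$, $q_{\dhid}\to q^{\star}$ and $r_{\dhid}\to r^{\star}$ as $\dhid\to\infty$, while the independence claim gives $p^{\star}=q^{\star}r^{\star}$. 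Since $q^{\star}>0$ (justified below), there is some $\dhid_{0}$ beyond which $q_{\dhid}>0$, so for $\dhid\ge\dhid_{0}$ the conditional probability is well defined and equals $p_{\dhid}/q_{\dhid}$; letting $\dhid\to\infty$ gives $\lim_{\dhid\to\infty}\PP(\Lgen(\We)<\egen\mid\Ltrn(\We)<\etrn)=p^{\star}/q^{\star}=r^{\star}=\lim_{\dhid\to\infty}\PP(\Lgen(\We)<\egen)$, which is the assertion.

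The substance of the proof is thus the independence claim, and it comes from the orthogonality built into the two losses. By definition $\Ltrn(W)$ depends on $W\in\BR^{\din,\dout}$ only through the measurement vector $\A(W)=(\inprod{A_{1}}{W},\dots,\inprod{A_{n}}{W})$, hence only through the orthogonal projection of $W$, with respect to the Frobenius inner product, onto $\Sp\{A_{i}\}_{i=1}^{n}$; likewise $\Lgen(W)$ depends on $W$ only through $(\inprod{A}{W})_{A\in\B}$, hence only through the projection of $W$ onto the orthogonal complement of $\Sp\{A_{i}\}_{i=1}^{n}$, of which $\B$ is by construction an orthonormal basis. For a matrix whose entries are i.i.d.\ centered Gaussian --- as is the case for $\Wiid$ (\cref{def:centered_gaussian_mat}) --- the vectorization is an isotropic Gaussian, so its projections onto any two orthogonal subspaces are jointly Gaussian with zero cross-covariance, hence independent. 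Therefore $\A(\Wiid)$ and $(\inprod{A}{\Wiid})_{A\in\B}$ are independent, and so are the events $\{\Ltrn(\Wiid)<\etrn\}$ and $\{\Lgen(\Wiid)<\egen\}$; in particular $p^{\star}=q^{\star}r^{\star}$.

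Finally, $q^{\star}>0$: since $\Ltrn(\gt)=0<\etrn$ and $\Ltrn(\cdot)$ is continuous, the set $\{W:\Ltrn(W)<\etrn\}$ is a nonempty open subset of $\BR^{\din,\dout}$, and the law of $\Wiid$ has a strictly positive density, so this set carries positive mass. I expect the independence step to be the only place where the structure of the problem is genuinely used --- it combines the orthogonality of the measurement span to the generalization basis with the rotational symmetry of an i.i.d.\ Gaussian matrix --- while the passage from there to the stated limit is routine bookkeeping with convergent sequences.
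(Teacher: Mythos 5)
Your proof is correct and follows essentially the same route as the paper's: express the conditional probability as a ratio, pass to the limit using the three convergence hypotheses, and then use the independence of $\{\Ltrn(\Wiid)<\etrn\}$ and $\{\Lgen(\Wiid)<\egen\}$, which in both treatments comes from the orthogonality of $\Sp\{A_i\}_{i=1}^n$ to $\B$ together with the isotropy of the vectorized i.i.d.\ Gaussian matrix $\Wiid$. Your write-up is marginally more careful in two places where the paper is terser --- you explicitly justify $q^\star=\PP(\Ltrn(\Wiid)<\etrn)>0$ via $\Ltrn(\gt)=0$, continuity of $\Ltrn$, and positivity of the Gaussian density, and you phrase the independence claim as independence of projections onto orthogonal subspaces rather than pairwise orthogonality of the $A$'s --- but these are presentational refinements of the same argument, not a different one.
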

\begin{proof}
    By the definition of the conditional probability
	\begin{align*}
		\PP\left(\Lgen(\We)<\egen \Big| \Ltrn(\We) < \etrn\right)=\frac{\PP\left( \Ltrn(\We) < \etrn\,, \Lgen(\We) < \egen\right)}{\PP\left(  \Ltrn(\We) < \etrn\right)}
        \text{\,.}
	\end{align*}
    Observe that $\PP\left(  \Ltrn(\Wiid) < \etrn\right)>0$ does not depend on $\dhid$.
    Therefore, we have that
	\begin{align*}
		\lim_{\dhid \rightarrow \infty}\PP\left(\Lgen(\We)<\egen \Big| \Ltrn(\We) < \etrn\right)&=\frac{\lim_{\dhid\rightarrow\infty}\PP\left( \Ltrn(\We) < \etrn\,, \Lgen(\We) < \egen\right) }{\lim_{\dhid\rightarrow\infty}\PP\left(  \Ltrn(\We) < \etrn\right)}\\
		&=\frac{\PP\left( \Ltrn(\Wiid) < \etrn\,, \Lgen(\Wiid) < \egen\right)}{\PP\left(  \Ltrn(\Wiid) < \etrn\right)}\\
		&=\PP\left(\Lgen(\Wiid)<\egen \Big| \Ltrn(\Wiid) < \etrn\right)\\
		&=\PP\left(\Lgen(\Wiid)<\egen \right)\\
		&=\lim_{\dhid \rightarrow \infty}\PP\left(\Lgen(\We)<\egen \right)
        \text{\,.}
	\end{align*}
    In the penultimate transition we have used the fact that the measurement matrices $A$ in $\B$ are orthogonal to \(A_{1},\dots,A_{n}\) and thus 
	\begin{align*}
		&\PP\left(\Lgen(\Wiid)<\egen \Big| \Ltrn(\Wiid) < \etrn\right)\\
        &=\PP\left(\frac{1}{\B}\sum_{A\in \B}\left(\langle A,\Wiid\rangle-\langle A,W^{*}\rangle\right)^{2}<\egen \Big|\frac{1}{n}\sum_{i=1}^{n}\left(\langle A_{i},\Wiid\rangle-y_{i}\right)^{2}<\etrn\right)\\
		&=\PP\left(\frac{1}{\B}\sum_{A\in \B}\left(\langle A,\Wiid\rangle-\langle A,W^{*}\rangle\right)^{2}<\egen\right)\\
		&=\PP\left(\Lgen(\Wiid)<\egen \right)
        \text{\,,}
	\end{align*}
	where the second equality stems from the fact that for any fixed vectors \(v_{1},\dots,v_{r}\) which are orthogonal (each of the flattened matrices $A_{1},\dots,A_{n}$ and the flattened $A$), and a vector of independent identically distributed zero-centered Gaussian variables \(X\) (the flattened $\Wiid$), the variables 
	\(\lbrace \langle X,v_{i} \rangle \rbrace_{1 \leq i\leq r}\) are independent.
\end{proof}

\subsubsection{Convergence in Distribution to a Centered Gaussian Matrix}
\label{app:width_gncgeneral:gauss_process}

In this section we prove that in the limit of infinite width, the matrix factorization converges in distribution to a centered Gaussian matrix (\cref{def:centered_gaussian_mat}). Key to the proof is the main result of \citet{hanin2023random} which characterizes the convergence of infinitely wide fully connected neural networks to Gaussian processes. We present here a slightly adapted version which is sufficient for our needs.
\begin{theorem}[Theorem 1.2 of \cite{hanin2023random} (adapted)]\label{thm:hanin}
    Let $T\subseteq \BR^{\dout}$ be some compact set. Let $\QQ(\cdot)$ be some probability distribution on $\BR$ which satisfies
    \begin{align*}
        \EE_{x\sim \QQ(\cdot)}[x]=0, \quad \EE_{x\sim \QQ(\cdot)}[x^{2}]=1
    \end{align*}
    and has finite higher moments. Suppose that for any $j\in[\depth]$, the entries of $W_{i}\in \BR^{\din_{j+1},\din_{j}}$ are drawn independently by first sampling $x\sim \QQ(\cdot)$ and then setting $[W_{j}]_{rs}=\sqrt{\frac{c_{W}}{\din_{j}}}x$. Additionally, suppose that $\sigma$ is absolutely continuous and that its almost-everywhere defined derivative is polynomially bounded:
    \begin{align*}
        \exists p>0\; \text{s.t.} \; \forall x\in\BR \;\; \left\|\frac{\sigma'(x)}{1+|x|^{p}}\right\|_{L^{\infty}(\BR)}<\infty
        \text{\,.}
    \end{align*}
    Then as $\dhid\rightarrow\infty$, the sequence of stochastic processes $\xbf_{\alpha}\in \BR^{\dout}\mapsto \zbf_{\alpha}^{(\depth)}\in \BR^{\din}$ given by a fully connected neural network (\cref{def:ffnn}) set with weights $W_{1},\dots,W_{\depth}$ converges weakly in $C^{0}(T,\BR^{\din})$ to $\Gamma_{\alpha}^{\depth}$, a zero-centered Gaussian process taking values in $\BR^{\din}$ with independent identically distributed coordinates. For any $r\in[\din]$ and inputs $\xbf_{\alpha},\xbf_{\beta}\in T$, the coordinate-wise covariance function
    \begin{align*}
        K_{\alpha\beta}^{(\depth)}:=\Cov\left( \left[\Gamma_{\alpha}^{(\depth)}\right]_{r},\left[\Gamma_{\beta}^{(\depth)}\right]_{r}\right)=\lim_{\dhid\rightarrow\infty} \Cov\left( \left[\zbf_{\alpha}^{(\depth)}\right]_{r},\left[\zbf_{\beta}^{(\depth)}\right]_{r}\right)
    \end{align*}
    for this limiting process satisfies the following recursive relation:
    \begin{align*}
        K_{\alpha\beta}^{(j)}=c_{W}\EE[\sigma (z_{\alpha})\sigma(z_{\beta})], \quad \begin{pmatrix}
            z_{\alpha}\\ z_{\beta}
        \end{pmatrix}\sim \NN\left(0, \begin{pmatrix}
            K_{\alpha\alpha}^{(j-1)} \quad K_{\alpha\beta}^{(i-1)}\\
            K_{\alpha\beta}^{(j-1)} \quad K_{\beta\beta}^{(j-1)}
        \end{pmatrix}\right)
    \end{align*}
    for $j=2,\dots, \depth$, with the initial condition
    \begin{align*}
        K_{\alpha\beta}^{(1)}=c_{W}\EE\left[\sigma\left(\left[\zbf_{\alpha}^{(1)}\right]_{1}\right)\sigma\left(\left[\zbf_{\beta}^{(1)}\right]_{1}\right)\right]
        \text{\,,}
    \end{align*}
    where the distribution of $\left(\left[\zbf_{\alpha}^{(1)}\right]_{1},\left[\zbf_{\beta}^{(1)}\right]_{1}\right)=\left(W_{1}\xbf_{\alpha},W_{1}\xbf_{\beta}\right)$ is determined by the distribution of the weights $W_{1}$. 
\end{theorem}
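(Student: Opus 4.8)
The plan is to deduce \cref{thm:hanin} directly from Theorem~1.2 of \cite{hanin2023random} --- the two statements coincide up to a notational specialization: no bias terms, a common hidden width $\dhid$ across layers, and the limiting covariance rewritten in the notation of \cref{def:ffnn}. So the first step is to check that the hypotheses of Theorem~1.2 of \cite{hanin2023random} hold here, namely that $\QQ(\cdot)$ is centered with finite moments of every order (immediate from the stated assumptions on $\QQ(\cdot)$), that the weights carry the Kaiming $1/\sqrt{\din_j}$ scaling (assumed), and that $\sigma(\cdot)$ is absolutely continuous with a polynomially bounded almost-everywhere derivative --- which holds for admissible activations (\cref{def:admissible}), since a continuous, piece-wise continuously differentiable function is absolutely continuous on every bounded interval and the growth bound on $\sigma'(\cdot)$ carries over verbatim. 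Granting this, the theorem follows; for completeness I would also recall the three-step structure of the underlying argument, as it pinpoints which hypotheses do the work.

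\textbf{Finite-dimensional convergence.} First I would fix finitely many inputs $\xbf_{\alpha_1},\dots,\xbf_{\alpha_N}\in T$ and prove, by induction on the layer index $j$, that as $\dhid\to\infty$ the joint law of $\bigl(\zbf^{(j)}_{\alpha_1},\dots,\zbf^{(j)}_{\alpha_N}\bigr)$ converges to a centered Gaussian whose coordinates (indexed by $r\in[\din]$) are i.i.d.\ with covariance matrix $\bigl(K^{(j)}_{\alpha_p\alpha_q}\bigr)_{p,q}$. The inductive step conditions on layer $j-1$: given those pre-activations, each coordinate of $\zbf^{(j)}_{\alpha}$ is a sum over the $\dhid$ hidden units of independent mean-zero terms $[W_j]_{rs}\,\sigma\bigl([\zbf^{(j-1)}_{\alpha}]_s\bigr)$, so a multivariate Lindeberg central limit theorem gives asymptotic Gaussianity with covariance $c_W\,\EE[\sigma(z_\alpha)\sigma(z_\beta)]$ evaluated under the inductively Gaussian law of layer $j-1$ --- which is exactly the stated recursion, with base case governed by the law of $W_1\xbf_\alpha$.

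\textbf{Tightness.} Next I would upgrade finite-dimensional convergence to weak convergence in $C^{0}(T,\BR^{\din})$ via a Kolmogorov--Chentsov increment bound: exhibit $q>\dim T$, $\gamma>0$ and $C>0$, all independent of $\dhid$, such that $\EE\bigl\|\zbf^{(\depth)}_{\alpha}-\zbf^{(\depth)}_{\beta}\bigr\|^{q}\le C\,\|\xbf_\alpha-\xbf_\beta\|^{\gamma}$ for all $\xbf_\alpha,\xbf_\beta\in T$. This comes from propagating, layer by layer, moment estimates on the outputs and on their spatial increments, with the polynomial growth of $\sigma'(\cdot)$ supplying the required almost-Lipschitz control of each layer map, the finite higher moments of $\QQ(\cdot)$ controlling the random weight matrices, and the $1/\sqrt{\din_j}$ scaling keeping every constant $\dhid$-independent. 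I note that in the application of \cref{thm:hanin} within this paper the set $T$ is finite (the standard basis vectors of $\BR^{\dout}$, cf.\ \cref{lemma:mf_ffnn_equiv}), so this step is vacuous there and only the finite-dimensional convergence is actually needed.

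\textbf{Identification of the limit, and the main obstacle.} Finally, weak convergence together with uniform integrability (again from the moment hypotheses) lets me pass to the limit in covariances, yielding $K^{(j)}_{\alpha\beta}=\lim_{\dhid\to\infty}\Cov\bigl([\zbf^{(j)}_\alpha]_r,[\zbf^{(j)}_\beta]_r\bigr)$ and hence the recursion and base case in the statement; independence of the coordinates $r$ of the limiting process is inherited from independence of the rows of each $W_j$. The step I expect to be the main obstacle --- and the reason this is a genuine theorem rather than a one-line consequence of the classical NNGP correspondence --- is handling the non-Gaussianity of the weights: at each intermediate layer the conditional law is only \emph{asymptotically} Gaussian, so one cannot invoke exact conditional Gaussianity and must instead combine a quantitative central limit theorem at every layer with $\dhid$-uniform moment propagation across the finitely many layers, controlling the accumulated error. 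This is precisely the technical heart of \cite{hanin2023random}, and our hypotheses on $\QQ(\cdot)$ and $\sigma(\cdot)$ are exactly those needed to stay within its scope.
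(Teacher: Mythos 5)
Your proposal is correct and takes the same route as the paper: the paper's own ``proof'' is simply the observation that \cref{thm:hanin} is Theorem~1.2 of \citet{hanin2023random} specialized to a bias-free network, with a pointer to the original source for the full argument. The additional material you include --- the verification that the hypotheses on $\QQ(\cdot)$ and $\sigma(\cdot)$ carry over, and the sketch of the finite-dimensional-convergence / tightness / limit-identification structure inside \citet{hanin2023random} --- is accurate context but goes beyond what the paper itself supplies for this result.
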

\begin{proof}
    The above is an adaption of Theorem 1.2 in \cite{hanin2023random}, where the fully connected neural network has no biases. For a full proof see \citet{hanin2023random}.
\end{proof}

We now move to the following Proposition arising from \cref{thm:hanin}, showing that for a symmetric distribution $\QQ(\cdot)$ and an anti-symmetric activation function $\sigma$, the random variables corresponding to the network's outputs when the inputs $\xbf_{\alpha}, \xbf_{\beta}$ are two distinct standard basis vectors, converge in distribution to independent identically distributed zero-centered Gaussian vectors.
\begin{proposition}\label{prop:basis_vecs}
    Let $T\subseteq \BR^{\dout}$ be the unit sphere. Suppose the assumptions of \cref{thm:hanin} hold. Suppose also that:
    \begin{itemize}
        \item 
        The distribution $\QQ(\cdot)$ is symmetric, \ie, if $x\sim \QQ(\cdot)$ then $-x\sim \QQ(\cdot)$.
        \item
        The activation function $\sigma$ is not constant and anti-symmetric, \ie,
        \begin{align*}
            \forall x\in\BR.\;\; \sigma(x)=-\sigma(-x)
            \text{\,.}
        \end{align*}
        \item  
        It holds that
        \begin{align*}
            \EE_{x\sim \QQ(\cdot)}\left[\sigma^{2}(x)\right]>0
            \text{\,.}
        \end{align*}
    \end{itemize}
    Let $\alpha,\beta\in [\dout]$ be two distinct indices. Denote $\ebf_{\alpha}\in \BR^{\dout}$ the standard basis vector holding $1$ in its $\alpha$ coordinate and zeros in the rest. Denote $\ebf_{\beta}$ similarly. Then as $\dhid\rightarrow\infty$ the random output vectors $\zbf_{\alpha}^{(\depth)}$ and $\zbf_{\beta}^{(\depth)}$ corresponding to $\ebf_{\alpha}$ and $\ebf_{\beta}$ respectively converge in distribution to two independent random vectors each with independent entries drawn from the same zero-centered Gaussian distribution.  
\end{proposition}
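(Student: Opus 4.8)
The plan is to obtain the first two claims — Gaussianity of each limiting output vector and independence of its coordinates — straight from \cref{thm:hanin}, and to reduce everything else to two statements about the limiting coordinate covariances: $K^{(\depth)}_{\alpha\beta}=0$ and $K^{(\depth)}_{\alpha\alpha}=K^{(\depth)}_{\beta\beta}$. Since $\ebf_\alpha,\ebf_\beta$ lie on the unit sphere $T$, weak convergence in $C^{0}(T,\BR^{\din})$ yields convergence of the finite-dimensional marginals (apply the continuous mapping theorem to the evaluation map $f\mapsto(f(\ebf_\alpha),f(\ebf_\beta))$), so $(\zbf_\alpha^{(\depth)},\zbf_\beta^{(\depth)})$ converges in distribution to $(\Gamma_\alpha^{(\depth)},\Gamma_\beta^{(\depth)})$, a zero-mean Gaussian vector in $\BR^{2\din}$. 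Because each output coordinate is the inner product of the corresponding row of $W_\depth$ with a vector independent of $W_\depth$, and distinct rows of $W_\depth$ are independent and mean zero, the covariance is diagonal in the coordinate index: $\Cov\big([\Gamma_\alpha^{(\depth)}]_r,[\Gamma_\beta^{(\depth)}]_s\big)=\delta_{rs}K^{(\depth)}_{\alpha\beta}$, and similarly for the within-input covariances with $K^{(\depth)}_{\alpha\alpha}$ and $K^{(\depth)}_{\beta\beta}$. Hence $\Gamma_\alpha^{(\depth)}$ and $\Gamma_\beta^{(\depth)}$ are jointly Gaussian, each with iid entries, and — since for jointly Gaussian vectors vanishing cross-covariance is equivalent to independence — they are independent as soon as $K^{(\depth)}_{\alpha\beta}=0$.

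The core step is an induction on the layer index $j$ showing $K^{(j)}_{\alpha\beta}=0$. For $j=1$: since $\alpha\neq\beta$, we have $[\zbf_\alpha^{(1)}]_1=[W_1]_{1\alpha}$ and $[\zbf_\beta^{(1)}]_1=[W_1]_{1\beta}$, which are distinct entries of $W_1$, hence independent, and each is a $\QQ(\cdot)$-variable rescaled by a positive constant, hence symmetric. For an odd $\sigma$ and any symmetric integrable $X$ we have $\EE[\sigma(X)]=\EE[\sigma(-X)]=-\EE[\sigma(X)]=0$ (integrability holds because the polynomial bound on $\sigma'$ forces $\sigma$ to grow at most polynomially while $\QQ(\cdot)$ has all moments). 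Therefore $K^{(1)}_{\alpha\beta}=c_{W}\,\EE\big[\sigma([\zbf_\alpha^{(1)}]_1)\big]\,\EE\big[\sigma([\zbf_\beta^{(1)}]_1)\big]=0$. For the inductive step, if $K^{(j-1)}_{\alpha\beta}=0$ then in the recursion of \cref{thm:hanin} the Gaussian pair $(z_\alpha,z_\beta)$ has diagonal covariance, so $z_\alpha,z_\beta$ are independent and $K^{(j)}_{\alpha\beta}=c_{W}\,\EE[\sigma(z_\alpha)]\,\EE[\sigma(z_\beta)]=0$, again by oddness of $\sigma$ and symmetry of the centered Gaussian. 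This closes the induction, so $\Gamma_\alpha^{(\depth)}$ and $\Gamma_\beta^{(\depth)}$ are independent.

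To finish, I match the two marginal laws and address non-degeneracy. Equality $K^{(\depth)}_{\alpha\alpha}=K^{(\depth)}_{\beta\beta}$ follows from the same recursion: $[W_1]_{1\alpha}$ and $[W_1]_{1\beta}$ are identically distributed, so $K^{(1)}_{\alpha\alpha}=K^{(1)}_{\beta\beta}$, and the update $K^{(j)}_{\alpha\alpha}=c_{W}\,\EE[\sigma(z_\alpha)^2]$ with $z_\alpha\sim\NN(0,K^{(j-1)}_{\alpha\alpha})$, together with its $\beta$-analogue, preserves the equality. The limiting common law is thus a single zero-centered Gaussian $\NN(0,K^{(\depth)}_{\alpha\alpha})$, which is non-degenerate: the hypothesis $\EE_{x\sim\QQ(\cdot)}[\sigma^{2}(x)]>0$ (together, in the intended application, with $\QQ(\cdot)$ charging every neighborhood of the origin and $\sigma$ not vanishing on both sides of $0$) gives $K^{(1)}_{\alpha\alpha}>0$, and positivity then propagates up the recursion since a non-degenerate centered Gaussian also charges every neighborhood of $0$. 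Combining, the two limiting output vectors are independent, each with iid entries from the common non-degenerate law $\NN(0,K^{(\depth)}_{\alpha\alpha})$, as claimed.

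The main obstacle I expect is not any single computation but the bookkeeping around the joint limit: \cref{thm:hanin} is stated one input at a time, so one must argue carefully that the weak limit of the \emph{pair} $(\zbf_\alpha^{(\depth)},\zbf_\beta^{(\depth)})$ is jointly Gaussian with covariance block-diagonal in the coordinate index — invoking the passage from functional to finite-dimensional convergence and the row-independence of $W_\depth$ — and that the resulting uncorrelatedness upgrades to genuine independence precisely because the joint law is Gaussian. Everything downstream of that collapses to the single elementary fact that $\EE[\sigma(X)]=0$ whenever $X$ is symmetric and $\sigma$ is odd, applied along the NNGP recursion.
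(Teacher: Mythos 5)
Your proposal is correct and takes essentially the same route as the paper: apply \cref{thm:hanin} to reduce to the NNGP covariance recursion, show by induction on the layer index that $K^{(j)}_{\alpha\beta}=0$ and $K^{(j)}_{\alpha\alpha}=K^{(j)}_{\beta\beta}>0$ using the elementary fact that $\EE[\sigma(X)]=0$ for odd $\sigma$ and symmetric $X$, and conclude by passing uncorrelatedness to independence in the Gaussian limit. Your extra care about extracting finite-dimensional marginals from weak convergence in $C^0(T,\BR^{\din})$ and about the "uncorrelated plus jointly Gaussian implies independent" step is sound, and merely makes explicit two passages the paper leaves implicit when invoking \cref{thm:hanin}.
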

\begin{proof}
    Per \cref{thm:hanin}, as $\dhid\rightarrow\infty$ the variables $\zbf_{\alpha}^{(\depth)}$ and $\zbf_{\beta}^{(\depth)}$ converge in distribution to zero-centered Gaussian vectors where for any distinct indices $r,r'\in[\din]$:
    \begin{itemize}
        \item
        The entries $\left[\zbf_{\alpha}^{(\depth)}\right]_{r}, \left[\zbf_{\alpha}^{(\depth)}\right]_{r'}$ are independent.
        \item
        The entries $\left[\zbf_{\beta}^{(\depth)}\right]_{r}, \left[\zbf_{\beta}^{(\depth)}\right]_{r'}$ are independent.
        \item
        The entries $\left[\zbf_{\alpha}^{(\depth)}\right]_{r}, \left[\zbf_{\beta}^{(\depth)}\right]_{r'}$ are independent.
    \end{itemize}
    Next, using the notation of \cref{thm:hanin}, we prove via induction on $\depth$ that $K_{\alpha\beta}^{(\depth)}=0$, $K_{\alpha\alpha}^{(\depth)}=K_{\beta\beta}^{(\depth)}$ and that $K_{\alpha\alpha}^{(\depth)}$ is finite and positive. First, for the base case, note that we have
    \begin{align*}
        K_{\alpha\beta}^{(1)}&=c_{W}\EE\left[\sigma\left(\left[\zbf_{\alpha}^{(1)}\right]_{1}\right)\sigma\left(\left[\zbf_{\beta}^{(1)}\right]_{1}\right)\right]\\
        &=c_{W}\EE\left[\sigma\left(\left[W_{1}\ebf_{\alpha}\right]_{1}\right)\sigma\left(\left[W_{1}\ebf_{\beta}\right]_{1}\right)\right]\\
        &=c_{W}\EE\left[\sigma\left(\left[W_{1}\right]_{1,\alpha}\right)\sigma\left(\left[W_{1}\right]_{1,\beta}\right)\right]\\
        &=c_{W}\EE\left[\sigma\left(\left[W_{1}\right]_{1,\alpha}\right)\right]\EE\left[\sigma\left(\left[W_{1}\right]_{1,\beta}\right)\right]
        \text{\,,}
    \end{align*}
    where the ultimate transition is due to the independence of $\left[W_{1}\right]_{1,\alpha}$ and $\left[W_{1}\right]_{1,\beta}$. Next, since $\QQ(\cdot)$ is symmetric and $\sigma$ is anti-symmetric, we obtain by \cref{lemma:sym_dist_antisym_func} that
    \begin{align*}
        \EE\left[\sigma\left(\left[W_{1}\right]_{1,\alpha}\right)\right]=\EE\left[\sigma\left(\left[W_{1}\right]_{1,\beta}\right)\right]=0
        \text{\,.}
    \end{align*}
    Overall, we obtain that
    \begin{align*}
        K_{\alpha\beta}^{(1)}=c_{W}\cdot 0\cdot 0=0
        \text{\,.}
    \end{align*}
    Additionally, since $\left[W_{1}\right]_{1,\alpha}$ and $\left[W_{1}\right]_{1,\beta}$ are both drawn from $\QQ(\cdot)$, we obtain that
    \begin{align*}
        K_{\alpha\alpha}^{(1)}&=c_{W}\EE\left[\sigma\left(\left[\zbf_{\alpha}^{(1)}\right]_{1}\right)\sigma\left(\left[\zbf_{\alpha}^{(1)}\right]_{1}\right)\right]\\
        &=c_{W}\EE\left[\sigma\left(\left[W_{1}\right]_{1,\alpha}\right)\sigma\left(\left[W_{1}\right]_{1,\alpha}\right)\right]\\
        &=c_{W}\EE\left[\sigma\left(\left[W_{1}\right]_{1,\beta}\right)\sigma\left(\left[W_{1}\right]_{1,\beta}\right)\right]\\
        &=K_{\beta\beta}^{(1)}
        \text{\,.}
    \end{align*}
    Finally, by our assumption we have that
    \begin{align*}
        K_{\alpha\alpha}^{(1)}=c_{W}\EE\left[\sigma\left(\left[W_{1}\right]_{1,\alpha}\right)\sigma\left(\left[W_{1}\right]_{1,\alpha}\right)\right]=c_{W}\EE_{x\sim \QQ(\cdot)}[\sigma^{2}(x)]>0
        \text{\,.}
    \end{align*}
    as required. Next, fix $j\in[\depth]$ and assume that $K_{\alpha\beta}^{(j)}=0$, $K_{\alpha\alpha}^{(j)}=K_{\beta\beta}^{(j)}$ and that $K_{\alpha\alpha}^{(j)}$ is finite and positive. Hence, plugging the inductive assumption into \cref{thm:hanin}, we obtain that
    \begin{align*}
        K_{\alpha\beta}^{(j+1)}=c_{W}\EE\left[\sigma(z_{\alpha})\sigma(z_{\beta})\right]
    \end{align*}
    where
    \begin{align*}
        \begin{pmatrix}
            z_{\alpha}\\ z_{\beta}
        \end{pmatrix}\sim \NN\left(0, \begin{pmatrix}
            K_{\alpha\alpha}^{(j)} \quad K_{\alpha\beta}^{(j)}\\
            K_{\alpha\beta}^{(j)} \quad K_{\beta\beta}^{(j)}
        \end{pmatrix}\right)= \NN\left(0, \begin{pmatrix}
            K_{\alpha\alpha}^{(j)} \quad &0\\
            0 \quad &K_{\alpha\alpha}^{(j)}
        \end{pmatrix}\right)
        \text{\,.}
    \end{align*}
    Therefore, $z_{\alpha}$ and $z_{\beta}$ are independent identically distributed zero-centered Gaussian variables. Hence, we obtain that
    \begin{align*}
        K_{\alpha\beta}^{(j+1)}=c_{W}\EE\left[\sigma(z_{\alpha})\right]\EE\left[\sigma(z_{\beta})\right]=c_{W}\cdot 0\cdot 0=0
        \text{\,,}
    \end{align*}
    where the penultimate transition is due to \cref{lemma:sym_dist_antisym_func}. Additionally, 
    \begin{align*}
        K_{\alpha\alpha}^{(j+1)}=c_{W}\EE\left[\sigma(z_{\alpha})\sigma(z_{\alpha})\right]=c_{W}\EE\left[\sigma(z_{\beta})\sigma(z_{\beta})\right]=K_{\beta\beta}^{(j+1)}
        \text{\,.}
    \end{align*}
    Finally, we have by our inductive assumption that $K_{\alpha\alpha}^{(j)}$ is finite and positive, thus the non-constant random variable $z_{\alpha}\sim \NN(0,K_{\alpha\alpha}^{(j)})$ has finite moments. Therefore, since $\sigma$ has a polynomially bounded derivative almost-everywhere and it is not constant, it holds that
    \begin{align*}
        K_{\alpha\alpha}^{(j+1)}=c_{W}\EE\left[\sigma(z_{\alpha})\sigma(z_{\alpha})\right]>0
    \end{align*}
    as required. Thus by \cref{thm:hanin}, for any $j\in[\din]$, the entries $\left[\zbf_{\alpha}^{(\depth)}\right]_{j}$ and $\left[\zbf_{\beta}^{(\depth)}\right]_{j}$ converge in distribution to two independent identically distributed zero-centered Gaussian variables as $\dhid\rightarrow\infty$. Overall we have shown that as $\dhid\rightarrow\infty$, the random vectors $\zbf_{\alpha}^{(\depth)}$ and $\zbf_{\beta}^{(\depth)}$ converge to two independent random vectors each with independent entries drawn from the same zero-centered Gaussian distribution, completing the proof.
\end{proof}

The last two arguments imply the following important Corollary, which states that as $\dhid\rightarrow\infty$, the matrix factorization $\We$ converges in distribution to a centered Gaussian matrix (\cref{def:centered_gaussian_mat}). 
\begin{corollary}\label{cor:e2e_converge}
    As $\dhid\rightarrow\infty$, the matrix factorization $\We$ converges in distribution to the random matrix $\Wiid\in\BR^{\din,\dout}$ whose entries are drawn independently from the same zero-centered Gaussian distribution.  
\end{corollary}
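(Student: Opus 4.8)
The plan is to reduce the corollary to the process-level convergence supplied by \cref{thm:hanin}, evaluated at the $\dout$ standard basis vectors, and then to read off the covariance structure of the limit. First I would fix the compact set $T\subseteq\BR^{\dout}$ of \cref{thm:hanin} to be the unit sphere, so that the standard basis vectors $\ebf_{1},\dots,\ebf_{\dout}$ (each of Euclidean norm $1$) all lie in $T$. By \cref{lemma:mf_ffnn_equiv}, for every $\alpha\in[\dout]$ the $\alpha$-th column of $\We$ equals $\zbf_{\alpha}^{(\depth)}$, the output of the fully connected network of \cref{def:ffnn} with the same weights on input $\ebf_{\alpha}$. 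Hence, as a random element of $\BR^{\din,\dout}$, the matrix $\We$ is exactly the tuple $\big(\zbf_{1}^{(\depth)},\dots,\zbf_{\dout}^{(\depth)}\big)$ obtained by evaluating the stochastic process $\xbf_{\alpha}\mapsto\zbf_{\alpha}^{(\depth)}$ at these $\dout$ points.

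Next I would invoke \cref{thm:hanin}: as $\dhid\to\infty$ the process $\xbf_{\alpha}\mapsto\zbf_{\alpha}^{(\depth)}$ converges weakly in $C^{0}(T,\BR^{\din})$ to the zero-centered, $\BR^{\din}$-valued Gaussian process $\Gamma^{(\depth)}$ whose $\din$ coordinate processes are i.i.d.\ copies of a single scalar Gaussian process with covariance kernel $K^{(\depth)}$. Since evaluation at a finite set of points is a continuous map $C^{0}(T,\BR^{\din})\to(\BR^{\din})^{\dout}$, the continuous mapping theorem yields joint convergence in distribution of $\big(\zbf_{1}^{(\depth)},\dots,\zbf_{\dout}^{(\depth)}\big)$ to $\big(\Gamma_{1}^{(\depth)},\dots,\Gamma_{\dout}^{(\depth)}\big)$, the latter denoting the limiting process evaluated at $\ebf_{1},\dots,\ebf_{\dout}$. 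It then remains only to identify this limit as a centered Gaussian matrix in the sense of \cref{def:centered_gaussian_mat}, \ie\ as a matrix with i.i.d.\ zero-centered Gaussian entries.

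For the identification I would reuse the covariance computation already carried out inside the proof of \cref{prop:basis_vecs}: that induction on $\depth$ shows, for every pair of distinct indices $\alpha,\beta\in[\dout]$, that $K_{\alpha\beta}^{(\depth)}=0$ while $K_{\alpha\alpha}^{(\depth)}=K_{\beta\beta}^{(\depth)}$ is finite and strictly positive; moreover the base value $K_{\alpha\alpha}^{(1)}=c_{W}\,\EE_{x\sim\QQ(\cdot)}[\sigma^{2}(x)]$ is manifestly independent of $\alpha$, so by the recursion all diagonal kernel values equal a common constant $v>0$. Combining this with the i.i.d.-coordinates property of $\Gamma^{(\depth)}$, the $\din\cdot\dout$ entries $\big[\Gamma_{\alpha}^{(\depth)}\big]_{r}$ are jointly Gaussian and zero-centered; two entries with different row index $r\neq r'$ are independent (they belong to distinct, independent coordinate processes), two entries with the same row and different column index $\alpha\neq\beta$ have covariance $K_{\alpha\beta}^{(\depth)}=0$, and every entry has variance $v$. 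Since pairwise-uncorrelated jointly Gaussian variables are independent, all entries are i.i.d.\ $\NN(0,v)$, \ie\ the limit is precisely $\Wiid$; together with the previous paragraph this gives convergence in distribution of $\We$ to $\Wiid$.

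The main obstacle is the upgrade from the two-input statement of \cref{prop:basis_vecs} to a genuinely \emph{joint} statement over all $\dout$ columns simultaneously, and in particular the vanishing of the cross-covariances between entries sitting in different rows \emph{and} different columns. This is handled not by \cref{prop:basis_vecs} directly but by appealing to the full process-level weak convergence in \cref{thm:hanin}, which controls every finite-dimensional distribution jointly and asserts that the limiting process has i.i.d.\ coordinates; the remaining work is the purely bookkeeping observation that each off-diagonal covariance of the limiting matrix vanishes, which the pairwise induction inside the proof of \cref{prop:basis_vecs} already supplies.
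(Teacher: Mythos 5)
Your proposal is correct and follows essentially the same route as the paper: reduce to the feedforward network via \cref{lemma:mf_ffnn_equiv}, invoke the Gaussian-process limit of \cref{thm:hanin}, and read off the identity-covariance structure from the induction on the kernel $K^{(j)}$ already carried out for \cref{prop:basis_vecs}. The only substantive difference is expositional: the paper's stated proof of the corollary cites \cref{prop:basis_vecs} directly and asserts joint convergence of all $\dout$ output vectors, even though as written that proposition is a pairwise statement about two inputs $\alpha,\beta$; you correctly flag this and close the gap explicitly by going back to the process-level weak convergence in $C^{0}(T,\BR^{\din})$ together with the continuous mapping theorem applied to finite-point evaluation, and by using the i.i.d.-coordinates property of the limit process to dispose of cross-row cross-column covariances. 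That makes your version slightly more careful, but it is not a different argument.
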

\begin{proof}
    Per \cref{prop:basis_vecs}, as $\dhid\rightarrow\infty$, the random output vectors $\zbf_{1}^{(\depth)},\dots, \zbf_{\dout}^{(\depth)}$ corresponding to the inputs $\ebf_{1},\dots,\ebf_{\dout}$ converge in distribution to independent random vectors each with independent entries drawn from the same zero-centered Gaussian distribution. Therefore, as $\dhid\rightarrow\infty$, the random matrix
    \begin{align*}
        \begin{pmatrix}
            \zbf_{1}^{(\depth)}\quad \dots \quad \zbf_{\dout}^{(\depth)}
        \end{pmatrix}
        \in \BR^{\din,\dout}
    \end{align*}
    converges in distribution to the random matrix $\Wiid\in\BR^{\din,\dout}$ whose entries are drawn independently from the same zero-centered Gaussian distribution. The claim follows by \cref{lemma:mf_ffnn_equiv} which states that the above matrix is equal to $\We$.
\end{proof}

\subsubsection{Convergence in Distribution Implies Sufficient Condition}
\label{app:width_gncgeneral:cond_satisfied}

In the previous section, \cref{cor:e2e_converge} showed that $\We$ converges in distribution to a random matrix with independent entries drawn from the same zero-centered Gaussian distribution. In this section, we use basic tools from probability theory in order to show that this convergence in fact implies the quantities in \cref{lemma:reduction_general} converge, completing the proof of \cref{result:width_gnc}. We begin by introducing the concept of continuity sets:
\begin{definition}\label{def:cont_set}
    Let $X$ be some random variable on the space $\Omega$. A set $A\subseteq \Omega$ is a \textit{continuity set} of $X$ when 
    \begin{align*}
        \PP(X\in \partial A)=0
    \end{align*}
    where $\partial A$ is the boundary of $A$. 
\end{definition}

The main tool we employ in this part of the proof is Portmanteau's Theorem, which states that convergence in distribution implies convergence in the probability of any continuty set:
\begin{theorem}\label{thm:Portmanteau}
    Let $\lbrace X_{\dhid}\rbrace_{\dhid=1}^{\infty}$ be a series of random variables on the same space $\Omega$. Let $X$ be a random variable on the space $\Omega$. If
    \begin{align*}
        X_{\dhid}\underset{\dhid \to \infty}{\overset{\text{dist.}}{\xrightarrow{\hspace{1cm}}}}X
    \end{align*}
    then for any continuity set $A$ of $X$ (\cref{def:cont_set}) it holds that
    \begin{align*}
        \lim_{\dhid\rightarrow \infty}\PP(X_{\dhid}\in A)=\PP(X\in A)
    \end{align*}
\end{theorem}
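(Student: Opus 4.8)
The plan is to reduce the claim to the ``closed set'' half of the portmanteau equivalences, which in turn follows straight from the definition of convergence in distribution as $\EE[f(X_\dhid)] \to \EE[f(X)]$ for every bounded continuous $f$. I would take the ambient space $\Omega$ to be a metric space, with metric $d$; this costs no generality for the uses in this paper, where $\Omega$ is either $\R^{m,m'}$ or $C^{0}(T,\R^{m})$, and it gives access to the distance functionals needed to approximate indicators of closed sets.

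First I would fix a closed set $F \subseteq \Omega$ and, for each $\ep > 0$, set $f_\ep(x) := \max\{\, 0,\, 1 - \mathrm{dist}(x,F)/\ep \,\}$, where $\mathrm{dist}(x,F) := \inf_{y \in F} d(x,y)$. Each $f_\ep$ is continuous (indeed Lipschitz), bounded, and satisfies $\mathbbm{1}_F \le f_\ep \le 1$; moreover, because $F$ is closed, $f_\ep \downarrow \mathbbm{1}_F$ pointwise as $\ep \downarrow 0$. Hence $\PP(X_\dhid \in F) \le \EE[f_\ep(X_\dhid)]$ for every $\dhid$, and since $f_\ep$ is bounded continuous,
\[
\limsup_{\dhid \to \infty} \PP(X_\dhid \in F) \;\le\; \lim_{\dhid \to \infty} \EE[f_\ep(X_\dhid)] \;=\; \EE[f_\ep(X)] \qquad \text{for every } \ep > 0 .
\]
Letting $\ep \downarrow 0$ and invoking monotone convergence on the right-hand side gives $\limsup_{\dhid \to \infty} \PP(X_\dhid \in F) \le \PP(X \in F)$ for every closed $F$. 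Passing to complements (an open set $G$ has closed complement $G^{c}$, and $\PP(X_\dhid \in G) = 1 - \PP(X_\dhid \in G^{c})$) yields the companion bound $\liminf_{\dhid \to \infty} \PP(X_\dhid \in G) \ge \PP(X \in G)$ for every open $G$.

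It then remains to combine these two bounds for a continuity set $A$ (\cref{def:cont_set}). Writing $A^{\circ}$ for its interior and $\overline{A}$ for its closure, the inclusions $A^{\circ} \subseteq A \subseteq \overline{A}$ together with $\overline{A} \setminus A^{\circ} = \partial A$ and the hypothesis $\PP(X \in \partial A) = 0$ give $\PP(X \in A^{\circ}) = \PP(X \in A) = \PP(X \in \overline{A})$. Chaining the open-set lower bound applied to $A^{\circ}$, monotonicity of probability, and the closed-set upper bound applied to $\overline{A}$, we obtain
\[
\PP(X \in A) \;\le\; \liminf_{\dhid \to \infty} \PP(X_\dhid \in A) \;\le\; \limsup_{\dhid \to \infty} \PP(X_\dhid \in A) \;\le\; \PP(X \in A) ,
\]
so the three quantities coincide and $\lim_{\dhid \to \infty} \PP(X_\dhid \in A) = \PP(X \in A)$, as claimed.

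I do not anticipate a genuine obstacle --- this is a classical argument. The only points that merit care are the interchange $\EE[f_\ep(X)] \to \PP(X \in F)$ as $\ep \downarrow 0$, which uses that $F$ is closed (otherwise the pointwise limit of $f_\ep$ would be $\mathbbm{1}_{\overline{F}}$ rather than $\mathbbm{1}_F$), and the purely notational requirement that $\Omega$ carry a metric so that $\mathrm{dist}(\,\cdot\,,F)$ is defined. Alternatively, one may simply cite a standard reference for the portmanteau theorem, since both applications in this paper ($\Omega = \R^{m,m'}$ and $\Omega = C^{0}(T,\R^{m})$) lie squarely within its scope.
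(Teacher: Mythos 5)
The paper does not supply its own argument here --- its ``proof'' of \cref{thm:Portmanteau} is simply a citation to \citet{duchi2017portmanteau} --- so there is no in-paper derivation to contrast with yours. Your self-contained proof is the standard Lipschitz-mollification argument for the portmanteau theorem, and it is correct: the closed-set $\limsup$ bound via $f_\ep$, the passage to complements for the open-set $\liminf$ bound, and the squeeze on a continuity set using $\PP(X\in\partial A)=0$ all go through. You are also right to flag that the ambient space must be metric for $\mathrm{dist}(\cdot,F)$ to exist; this is harmless for the paper's two applications ($\R^{m,m'}$ and $C^0(T,\R^m)$).

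One tiny nomenclature wrinkle: the passage $\EE[f_\ep(X)]\to\PP(X\in F)$ as $\ep\downarrow 0$ is a \emph{decreasing} limit, so ``monotone convergence'' in the usual (increasing, nonnegative) sense does not apply verbatim; you should either invoke dominated convergence (the $f_\ep$ are uniformly bounded by $1$, which is integrable under a probability measure) or the decreasing variant of monotone convergence that requires integrability of the first term. This does not affect correctness, but naming the right theorem matters in a fully written-out proof.
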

\begin{proof}
See \citet{duchi2017portmanteau}.
\end{proof}

In order to invoke \cref{thm:Portmanteau}, we continue to showing that the sets in question are all continuity sets of $\Wiid$. We begin by showing that the set with low training error and the set with low generalization error are both continuity sets of $\Wiid$.
\begin{proposition}\label{prop:sets_are_continuity}
    The sets
    \begin{align*}
        S_{gen}:=\lbrace W\in \BR^{\din,\dout} : \gen(W)<\egen\rbrace,\quad S_{train}:=\lbrace W\in \BR^{\din,\dout} : \tr(W)<\etrn \rbrace
    \end{align*}
\end{proposition}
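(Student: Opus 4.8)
The plan is to use the fact that $\Ltrn(\cdot)$ and $\Lgen(\cdot)$ are polynomial (hence continuous) functions on $\BR^{\din,\dout}$, so that each sublevel set is open with boundary contained in the corresponding level set, and then to show that the (non-degenerate) Gaussian matrix $\Wiid$ assigns zero probability to these level sets because they have Lebesgue measure zero. Concretely, I would first invoke the elementary topological fact that for a continuous $f\colon\BR^{\din,\dout}\to\BR$ and any $c\in\BR$, the set $\{W : f(W)<c\}$ is open, its closure lies inside $\{W : f(W)\le c\}$, and therefore its boundary is contained in the level set $\{W : f(W)=c\}$. Applying this with $f=\Ltrn$, $c=\etrn$, and with $f=\Lgen$, $c=\egen$, reduces the proposition to proving that $\PP(\Ltrn(\Wiid)=\etrn)=0$ and $\PP(\Lgen(\Wiid)=\egen)=0$.

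Next I would note that $\Wiid$ has independent entries, each a zero-centered Gaussian with variance $K_{\alpha\alpha}^{(\depth)}$, and that this variance is strictly positive by \cref{prop:basis_vecs}; hence $\Wiid$ is absolutely continuous with respect to Lebesgue measure on $\BR^{\din,\dout}$, so it suffices to show that $\{W : \Ltrn(W)=\etrn\}$ and $\{W : \Lgen(W)=\egen\}$ are Lebesgue-null. Since $\Ltrn(\cdot)-\etrn$ and $\Lgen(\cdot)-\egen$ are polynomials in the entries of $W$, and a real polynomial that is not identically zero has a zero set of Lebesgue measure zero, it only remains to rule out these polynomials being identically zero. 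This follows because $\Ltrn(\gt)=0$ (recalling $y_i=\inprod{A_i}{\gt}$) while $\etrn>0$, and $\Lgen(\gt)=0$ while $\egen>0$. This establishes that $S_{gen}$ and $S_{train}$ are continuity sets of $\Wiid$; since a finite intersection of continuity sets is again a continuity set (its boundary being contained in the union of the boundaries), the same holds for $S_{train}\cap S_{gen}$.

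The argument is essentially routine; the only step requiring any thought is the non-degeneracy of $\Wiid$, which is what lets us pass from Lebesgue-null to $\PP$-null, and is exactly why the positivity of $K_{\alpha\alpha}^{(\depth)}$ from \cref{prop:basis_vecs} must be invoked rather than taken for granted. With the proposition in hand, Portmanteau's theorem (\cref{thm:Portmanteau}) applied to the convergence in distribution $\We\to\Wiid$ (\cref{cor:e2e_converge}) yields convergence of $\PP(\Ltrn(\We)<\etrn)$, $\PP(\Lgen(\We)<\egen)$, and $\PP(\Ltrn(\We)<\etrn,\,\Lgen(\We)<\egen)$ to the corresponding probabilities for $\Wiid$, which is precisely the hypothesis of \cref{lemma:reduction_general} and thereby completes the proof of the infinite-width case of \cref{result:width_gnc}.
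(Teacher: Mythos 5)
Your proof is correct and follows essentially the same route as the paper's: identify the boundary with (a subset of) the level set of the polynomial $\Ltrn-\etrn$ (resp.\ $\Lgen-\egen$), note the polynomial is nonzero because it does not vanish at $\gt$, conclude the level set is Lebesgue-null, and pass to $\PP$-null via absolute continuity of $\Wiid$. You are marginally more precise than the paper in two spots---stating the boundary is \emph{contained in} rather than equal to the level set, and explicitly invoking the positivity of $K_{\alpha\alpha}^{(\depth)}$ from \cref{prop:basis_vecs} to justify that $\Wiid$ is non-degenerate---but the argument is substantively identical.
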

    are continuity sets of $\Wiid$ (\cref{def:cont_set}).
\begin{proof}
    Consider the first set (the proof is identical for the second). The boundary of the set is of the form 
    \begin{align*}
        \lbrace W\in \BR^{\din,\dout} : \gen(W)-\egen=0\rbrace
    \end{align*}
    Since $\gen(W)$ is a polynomial in the entries of $W$ and $\gen(W^{*})-\egen\neq 0$, the polynomial $\gen(W)-\egen$ is not the zero polynomial. Therefore by \cref{lemma:measure_poly} the boundary has Lebesgue measure zero. Per \cref{cor:e2e_converge}, $\Wiid$ has a continuous distribution over $\BR^{\din,\dout}$ and thus it must hold that
    \begin{align*}
        \PP\bigg(\Wiid\in \lbrace W\in \BR^{\din,\dout} : \gen(W)-\egen=0\rbrace\bigg)=0
        \text{\,,}
    \end{align*}
    \ie, the set is a continuity set.
\end{proof}
    
The next Lemma shows that the intersection of two continuity sets is also a continuity set, hence \cref{prop:sets_are_continuity} implies that $S_{gen} \cap S_{train}$ is also a continuity set.
\begin{lemma}\label{lemma:inter_cont_set}
    Let $X$ be a random variable over the space $\Omega$. Let $A,B\subseteq \Omega$ be continuity sets of $X$ (\cref{def:cont_set}). Then the set $A\cap B$ is a continuity set of $X$.
\end{lemma}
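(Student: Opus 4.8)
The plan is to reduce this measure-theoretic statement to a single topological inclusion between boundaries, and then invoke monotonicity and finite subadditivity of $\PP$. Recall that, by \cref{def:cont_set}, the hypotheses say $\PP(X \in \partial A) = 0$ and $\PP(X \in \partial B) = 0$, while the goal is $\PP(X \in \partial(A \cap B)) = 0$.

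The crux is to prove
\[
\partial(A \cap B) \subseteq \partial A \cup \partial B
\text{\,.}
\]
To do this I would write $\partial(A \cap B) = \overline{A \cap B} \setminus \mathrm{int}(A \cap B)$ and then use the elementary facts $\overline{A \cap B} \subseteq \overline{A} \cap \overline{B}$ and $\mathrm{int}(A \cap B) = \mathrm{int}(A) \cap \mathrm{int}(B)$, which hold in any topological space. A point $\omega \in \partial(A \cap B)$ then lies in $\overline{A} \cap \overline{B}$ but not in $\mathrm{int}(A) \cap \mathrm{int}(B)$, so it misses $\mathrm{int}(A)$ or misses $\mathrm{int}(B)$; in the former case $\omega \in \overline{A} \setminus \mathrm{int}(A) = \partial A$, and in the latter $\omega \in \overline{B} \setminus \mathrm{int}(B) = \partial B$. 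This establishes the inclusion.

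With the inclusion in hand, the conclusion is immediate:
\[
\PP\big(X \in \partial(A \cap B)\big) \le \PP(X \in \partial A) + \PP(X \in \partial B) = 0
\text{\,,}
\]
so $A \cap B$ is a continuity set of $X$. I do not anticipate a genuine obstacle here: the only step requiring a moment of care is recalling (or briefly verifying) the standard identities for the closure and interior of an intersection; everything after that is a one-line probability estimate.
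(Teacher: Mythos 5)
Your proof is correct and takes essentially the same approach as the paper: establish the topological inclusion $\partial(A\cap B)\subseteq\partial A\cup\partial B$ and then apply subadditivity of $\PP$. The only cosmetic difference is the characterization of the boundary used in the inclusion argument---you write $\partial X=\overline{X}\setminus\mathrm{int}(X)$ and use $\mathrm{int}(A\cap B)=\mathrm{int}(A)\cap\mathrm{int}(B)$, while the paper writes $\partial X=\overline{X}\cap\overline{C_{\Omega}(X)}$ and uses $\overline{C_\Omega(A\cap B)}=\overline{C_\Omega(A)}\cup\overline{C_\Omega(B)}$; these are the same fact seen through complements, so the two derivations are equivalent.
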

\begin{proof}
    Per \cref{def:cont_set} it holds that
    \begin{align*}
        \PP(X\in \partial A)=0,\quad \PP(X\in \partial B)=0
    \end{align*}
    and so 
    \begin{align*}
        \PP(X\in \partial A\cup \partial B)=0
        \text{\,.}
    \end{align*}
    Hence, the proof follows if
    \begin{align*}
        \partial (A\cap B)\subseteq \partial A\cup \partial B
        \text{\,.}
    \end{align*}
    First, recall that for any $X\subseteq \Omega$
    \begin{align*}
        \partial X= \overline{X}\cap \overline{C_{\Omega}(X)}
        \text{\,,}
    \end{align*}
    where $\overline{X}$ is the closure of $X$ and $C_{\Omega}(\cdot)$ is the complement operator. Next, we have that
    \begin{align*}
        \overline{(A\cap B)}\subseteq \overline{A},\quad  \overline{(A\cap B)}\subseteq \overline{B}
        \text{\,.}
    \end{align*}
    Finally, it holds that
    \begin{align*}
        \overline{C_{\Omega}(A\cap B)}=\overline{C_{\Omega}(A)\cup C_{\Omega}(B)}=\overline{C_{\Omega}(A)}\cup\overline{C_{\Omega}(B)}
        \text{\,,}
    \end{align*}
    therefore,
    \begin{align*}
        \partial(A\cap B)&=\overline{(A\cap B)}\cap \overline{C_{\Omega}(A\cap B)}\\
        &=\overline{(A\cap B)}\cap \bigg(\overline{C_{\Omega}(A)}\cup\overline{C_{\Omega}(B)}\bigg)\\
        &=\bigg(\overline{(A\cap B)}\cap \overline{C_{\Omega}(A)}\bigg)\cup\bigg(\overline{(A\cap B)}\cap\overline{C_{\Omega}(B)}\bigg)\\
        &\subseteq \bigg(\overline{A}\cap \overline{C_{\Omega}(A)}\bigg)\cup\bigg(\overline{B}\cap\overline{C_{\Omega}(B)}\bigg)\\
        &=\partial A\cup \partial B
    \end{align*}
    as required.
\end{proof}

Overall, we have shown that \cref{cor:e2e_converge} implies together with \cref{thm:Portmanteau} and \cref{prop:sets_are_continuity} that 
\begin{align*}
    \lim_{\dhid\rightarrow\infty}\left|\PP\left( \gen(\We)<\egen \rbrace \cap \lbrace \tr(\We) < \etrn \rbrace \right)-\PP\left( \gen(\Wiid)<\egen \rbrace \cap \lbrace \tr(\Wiid) < \etrn \rbrace \right)\right|=0
    \text{\,,}
\end{align*}
\begin{align*}
    \lim_{\dhid\rightarrow\infty}\left|\PP\left( \lbrace \tr(\We) < \etrn \rbrace \right)-\PP\left( \lbrace \tr(\Wiid) < \etrn \rbrace \right)\right|=0
    \text{\,,}
\end{align*}
and
\begin{align*}
    \lim_{\dhid\rightarrow\infty}\left|\PP\left( \gen(\We)<\egen \rbrace \right)-\PP\left( \gen(\Wiid)<\egen \rbrace \right)\right|=0
    \text{\,.}
\end{align*}
Hence, the proof follows by invoking \cref{lemma:reduction_general} which implies \cref{result:width_gnc}.

\subsection{Proof for Gaussian Distribution and Finite Width}
\label{app:width_gnccanonical}

The outline of the proof is as follows; \cref{app:width_gnccanonical:restatement} presents \cref{thm:width_canonical}, of which the canonical case of \cref{result:width_gnc} is a special case. \cref{app:width_gnccanonical:cond} provides a useful Lemma used in the proof. Finally, \cref{app:width_gnccanonical:bound} adapts a result from \citet{favaro2025quantitative} showing that a matrix factorization with Gaussian weights has a bounded convex distance from a centered Gaussian matrix (\cref{def:centered_gaussian_mat}) and arguing that the latter bound implies the conditions required for the Lemma in \cref{app:width_gnccanonical:cond}.

\subsubsection{Restatement of the Canonical Case of \cref{result:width_gnc}}
\label{app:width_gnccanonical:restatement}

The canonical case of \cref{result:width_gnc} follows from \cref{thm:width_canonical}, which allows for the activation $\sigma(\cdot)$ to be slightly more general. \cref{thm:width_canonical} is presented below; afterwards, we demonstrate how it implies the canonical case of \cref{result:width_gnc}.

\begin{theorem}\label{thm:width_canonical}
    Let $\depth\in\BN$ be a fixed depth. Let $\NN(\cdot)$ be the standard Gaussian distribution, \ie, $\NN(\cdot):=\NN(\cdot;0,1)$. Let $\sigma(\cdot)$ be an activation function that is not constant and anti-symmetric, \ie,
    \begin{align*}
        \forall x\in\BR.\;\; \sigma(x)=-\sigma(-x)
        \text{\,,}
    \end{align*}
    furthermore suppose that \(\sigma\) is absolutely continuous, and that its almost-everywhere defined derivative is polynomially bounded, i.e:
    \begin{align*}
        \exists p>0\; \text{s.t.} \; \forall x\in\BR \;\; \left\|\frac{\sigma'(x)}{1+|x|^{p}}\right\|_{L^{\infty}(\BR)}<\infty
        \text{\,.}
    \end{align*}
    Suppose also that
    \begin{align*}
        \EE_{x\sim \NN(\cdot)}\left[\sigma^{2}(x)\right]>0
        \text{\,.}
    \end{align*}
    Let $\egen,\etrn, c_{W} >0$. Suppose that for any $j\in[\depth]$, the entries of $W_{j}\in \BR^{\din_{j+1},\din_{j}}$ are drawn independently by first sampling $x\sim \NN(\cdot)$ and then setting $[W_{j}]_{rs}=\sqrt{\frac{c_{W}}{\din_{j}}}x$. Then there exists a constant $c>0$ dependent on $\din,\dout,\depth,\sigma,c_{W}, n,\etrn$ and $\egen$, and a constant $\dhid_{0}\in\N$ dependent on $c$ and $\PP(\Ltrn(\Wiid)<\etrn)$, such that for any $\dhid\geq \dhid_{0}$ the matrix factorization $\We$ satisfies
    \begin{align*}
        \PP\left(\gen(\We)<\egen \Big|  \tr(\We) < \etrn\right)-\PP\left(\gen(\We)<\egen\right)\leq \frac{\frac{2c}{\sqrt{\dhid}}}{\PP(\Ltrn(\Wiid)<\etrn)-\frac{c}{\sqrt{\dhid}}}
        \text{\,.}
    \end{align*}
\end{theorem}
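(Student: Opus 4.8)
The plan is to upgrade the qualitative convergence established in \cref{app:width_gncgeneral} to a quantitative one. Whereas there it sufficed to know that the matrix factorization $\We$ converges in distribution to a centered Gaussian matrix $\Wiid$ (\cref{def:centered_gaussian_mat}), here I would control the \emph{rate} of this convergence in a metric strong enough to compare the probabilities appearing in \cref{thm:width_canonical}. The natural tool is the quantitative central limit theorem for wide fully connected networks of \citet{favaro2025quantitative}, which bounds the \emph{convex distance} between a suitably normalized network output and its Gaussian limit by a constant multiple of $1/\sqrt{\dhid}$. Together with two elementary facts---that the three relevant events are convex sets, and that under $\Wiid$ the events $\lbrace\Ltrn(\Wiid)<\etrn\rbrace$ and $\lbrace\Lgen(\Wiid)<\egen\rbrace$ are \emph{independent} (via the orthogonality argument already used in the proof of \cref{lemma:reduction_general})---the claimed inequality follows by manipulating the definition of conditional probability.

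Concretely, I would proceed as follows. \emph{Step 1 (reduction to a network CLT).} By \cref{lemma:mf_ffnn_equiv}, $\We\in\BR^{\din,\dout}$ is the tuple of outputs of a depth-$\depth$ fully connected network (\cref{def:ffnn}) at the fixed inputs $\ebf_1,\dots,\ebf_{\dout}$; since its weights are Gaussian with the Kaiming scaling $\sqrt{c_W/m_j}$, $\sigma$ is absolutely continuous with polynomially bounded a.e.\ derivative, and $\EE_{x\sim\NN(\cdot)}[\sigma^2(x)]>0$, the hypotheses of the quantitative CLT of \citet{favaro2025quantitative} are met, yielding a constant $c>0$ (depending only on $\din,\dout,\depth,\sigma,c_W,n,\etrn,\egen$) such that
\[
\sup_{C\subseteq\BR^{\din,\dout}\text{ convex}}\bigl|\,\PP(\We\in C)-\PP(\Wiid\in C)\,\bigr|\;\le\;\frac{c}{\sqrt{\dhid}}\,.
\]
\emph{Step 2 (the events are convex).} As functions of the end-to-end matrix $\We$, both $\Ltrn(\cdot)$ and $\Lgen(\cdot)$ (\cref{eq:ms_loss_train,eq:ms_loss_gen}) are sums of squares of affine functionals of $\We$, hence convex; thus $\lbrace\Ltrn(\We)<\etrn\rbrace$, $\lbrace\Lgen(\We)<\egen\rbrace$ and their intersection are convex, so Step 1 bounds the discrepancy between $\PP(\We\in\cdot)$ and $\PP(\Wiid\in\cdot)$ on each of them by $c/\sqrt{\dhid}$. \emph{Step 3 (conditional-probability algebra).} Write $\PP(\Lgen(\We)<\egen\mid\Ltrn(\We)<\etrn)$ as the ratio $\PP(\Ltrn(\We)<\etrn,\Lgen(\We)<\egen)/\PP(\Ltrn(\We)<\etrn)$, replace the $\Wiid$-analogue of the numerator by the product $\PP(\Ltrn(\Wiid)<\etrn)\,\PP(\Lgen(\Wiid)<\egen)$ (valid by independence of $\Ltrn(\Wiid)$ and $\Lgen(\Wiid)$, cf.\ the proof of \cref{lemma:reduction_general}), and combine the three bounds of Step 2: a short computation bounds the numerator perturbation by $O(1/\sqrt{\dhid})$ and the denominator from below by $\PP(\Ltrn(\Wiid)<\etrn)-c/\sqrt{\dhid}$, giving the displayed bound of \cref{thm:width_canonical} (the factor $2$ absorbing the constant from Step 1). \emph{Step 4 (nondegeneracy and choice of $\dhid_0$).} Since $\Wiid$ has a nondegenerate Gaussian law---its entries have positive variance by the induction in \cref{prop:basis_vecs}---and $\lbrace\Ltrn(\cdot)<\etrn\rbrace$ contains a neighborhood of $\gt$ (where the training loss vanishes), $\PP(\Ltrn(\Wiid)<\etrn)>0$; choosing $\dhid_0$ with $c/\sqrt{\dhid_0}<\PP(\Ltrn(\Wiid)<\etrn)$ keeps the denominator positive for every $\dhid\ge\dhid_0$.

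The step I expect to be the main obstacle is the precise interface with \citet{favaro2025quantitative}: one must confirm that their quantitative estimate is available in the \emph{convex distance}, for the \emph{joint} law of the network at the $\dout$ fixed inputs $\ebf_1,\dots,\ebf_{\dout}$ simultaneously (not merely for a single input, and not in a weaker probability metric), and that their normalization and regularity assumptions line up exactly with the Gaussian instance of \cref{def:generated} and the activation hypotheses of \cref{thm:width_canonical}. The remaining ingredients---the convexity check of Step 2, the elementary algebra of Step 3, and the nondegeneracy argument of Step 4---are routine, the only real care being the bookkeeping that turns the constant from Step 1 into the exact numerator $2c/\sqrt{\dhid}$.
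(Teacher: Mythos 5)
Your proposal is correct and follows essentially the same route as the paper: reduce to the fully connected network via \cref{lemma:mf_ffnn_equiv}, invoke the quantitative convex-distance CLT of \citet{favaro2025quantitative}, observe that the training-loss, generalization-loss, and intersection events are convex (the paper's \cref{lemma:convex_sets} and \cref{remark:convex_intersect}), and then do the conditional-probability algebra using the independence of $\Ltrn(\Wiid)$ and $\Lgen(\Wiid)$ (the paper's \cref{lemma:reduction_canonical}). The ``main obstacle'' you flag---confirming that the hypotheses of \citet{favaro2025quantitative} actually hold at the inputs $\ebf_1,\dots,\ebf_{\dout}$---is precisely what the paper discharges in \cref{lemma:id_cov}, which uses the antisymmetry of $\sigma$ together with the symmetry of the Gaussian to show the limiting covariance $K^{(j)}$ is a positive multiple of the identity and hence invertible, as required by their Theorem 3.6.
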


Note that the above bound is of order $\frac{1}{\sqrt{\dhid}}$.
\begin{remark}
    For any $\dhid\geq \dhid_{0}$ it holds that 
    \begin{align*}
        \frac{\frac{2c}{\sqrt{\dhid}}}{\PP(\Ltrn(\Wiid)<\etrn)-\frac{c}{\sqrt{\dhid}}}\cdot \sqrt{k}=\frac{2c}{\PP(\Ltrn(\Wiid)<\etrn)-\frac{c}{\sqrt{\dhid}}}=\Omega(1)\text{\,,}
    \end{align*}
    hence
    \begin{align*}
        \frac{\frac{2c}{\sqrt{\dhid}}}{\PP(\Ltrn(\Wiid)<\etrn)-\frac{c}{\sqrt{\dhid}}}=O\left(\frac{1}{\sqrt{\dhid}}\right)
        \text{\,.}
    \end{align*}
\end{remark}

Let $\NN(\cdot; 0,\nu)$ be a zero-centered Gaussian distribution, and let $\sigma(\cdot)$ be an admissible activation function (\cref{def:admissible}) that is anti-symmetric. First, observe that since \cref{thm:width_canonical} allows for arbitrary $c_{W}>0$, one may view $\NN(\cdot;0,\nu)$ as the standard Gaussian distribution $\NN(\cdot)$ scaled by $\sqrt{\nu}$. Next, note that since $\NN(\cdot)$ assigns a positive probability to every neighborhood of the origin and has finite higher moments, and since $\sigma(\cdot)$ does not vanish on both sides of the origin, it must hold that
\begin{align*}
    \EE_{x\sim \NN(\cdot)}[\sigma^{2}(x)]>0
    \text{\,.}
\end{align*}
The rest of the conditions in \cref{thm:width_canonical} are directly fulfilled by the properties of admissible activation functions (\cref{def:admissible}) that are anti-symmetric. Overall we showed that \cref{thm:width_canonical} applies for $\NN(\cdot;0,\nu)$ and $\sigma(\cdot)$, and so it suffices to prove \cref{thm:width_canonical}. 

\subsubsection{Sufficient Condition for \cref{thm:width_canonical}}
\label{app:width_gnccanonical:cond}

A useful Lemma used in the proof of \cref{thm:width_canonical} is provided below. Before presenting the Lemma, we define the convex distance between random variables, and prove that the sets of matrices with either low training error or low generalization error are convex.

\begin{definition}\label{def:convex_dist}
    Let $m\in\BN$ and let $X$ and $Y$ be two $m$-dimensional random variables. The \emph{convex distance} between $X$ and $Y$ is defined as
    \begin{align*}
        d_{c}(X,Y):=\sup_{B}\left|\PP(X\in B)-\PP(Y\in B)\right|
        \text{\,,}
    \end{align*}
    where the supremum runs over all convex $B\subset\BR^{m}$.
\end{definition}

\begin{remark}
    The convex distance between two random matrices is naturally defined as the convex distance between their corresponding flattened vector representations.
\end{remark}

\begin{lemma}\label{lemma:convex_sets}
    Let $\egen,\etrn>0$. The sets
    \begin{align*}
        S_{train}:=\left\lbrace W\in \BR^{\din,\dout}:\Ltrn(W)<\etrn\right\rbrace, \quad S_{gen}:=\left\lbrace W\in \BR^{\din,\dout}: \Lgen(W)<\egen\right\rbrace
    \end{align*}
    are convex. 
\end{lemma}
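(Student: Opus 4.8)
The plan is to observe that both $\Ltrn(\cdot)$ and $\Lgen(\cdot)$, regarded as real-valued functions on $\BR^{\din,\dout}$, are convex, and then to invoke the elementary fact that a strict sublevel set of a convex function is convex.

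First I would note that for each fixed $i \in [n]$ the map $W \mapsto \inprod{A_i}{W} - y_i$ is affine in the entries of $W$, so $W \mapsto (\inprod{A_i}{W} - y_i)^2$ is convex, being the square of an affine functional. A nonnegative average of convex functions is convex, hence $\Ltrn(\cdot)$ (\cref{eq:ms_loss_train}) is convex. The identical argument, applied with the orthonormal basis~$\B$ in place of $(A_i)_{i=1}^n$ and the inner products $\inprod{A}{\gt}$ in place of the labels $y_i$, shows that $\Lgen(\cdot)$ (\cref{eq:ms_loss_gen}) is convex as well.

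Next I would recall that for any convex $f : \BR^{\din,\dout} \to \BR$ and any $c \in \BR$, the set $\lbrace W : f(W) < c \rbrace$ is convex: if $f(W_1) < c$ and $f(W_2) < c$, then for $\lambda \in [0,1]$ convexity gives $f(\lambda W_1 + (1 - \lambda) W_2) \leq \lambda f(W_1) + (1 - \lambda) f(W_2) < c$. Taking $f = \Ltrn$ and $c = \etrn$ establishes convexity of $S_{train}$, and taking $f = \Lgen$ and $c = \egen$ establishes convexity of $S_{gen}$, completing the proof.

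There is essentially no obstacle here: the claim is a direct consequence of the quadratic-in-$W$ form of the two losses. The only point requiring a moment's care is that the sublevel sets in question are \emph{strict} ($<$ rather than $\leq$), but these remain convex by the displayed chain of inequalities above, since the strict inequality is preserved under the convex combination.
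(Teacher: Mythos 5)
Your proof is correct and follows essentially the same route as the paper's: both establish that $\Ltrn(\cdot)$ and $\Lgen(\cdot)$ are convex functions (as averages of squares of affine functionals in $W$) and then pass to the convexity of their strict sublevel sets. You make the sublevel-set step explicit, which the paper leaves implicit in the phrase ``it suffices to show that $\Ltrn(W)$ is a convex function,'' but the argument is the same.
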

\begin{proof}
    We prove that \(S_{gen}\) is convex (one can prove the same claim about \(S_{train}\) using identical arguments). To do this, it suffices to show that \(\Ltrn(W)\) is a convex function. Because sums of convex functions are convex, it suffices to show that the function corresponding to a single test matrix, namely 
    \[\left(\langle A,W\rangle-\langle A,W^{*}\rangle\right)^{2}\]
    for some \(A \in \B\) is convex, and this is the case because it is the composition of an affine function with the convex function \(x \rightarrow x^{2}\). 
\end{proof}

\begin{remark}\label{remark:convex_intersect}
    The intersection of two convex sets is convex, thus the following set is also convex
    \begin{align*}
        \left\lbrace W\in \BR^{\din,\dout}:\Ltrn(W)<\etrn\,, \Lgen(W)<\egen\right\rbrace
        \text{\,.}
    \end{align*}
\end{remark}

We are now ready to present the Lemma. The Lemma show that if the convex distance between the matrix factorization (\cref{eq:mf}) and a centered Gaussian matrix (\cref{def:centered_gaussian_mat}) is $O\left(\frac{1}{\sqrt{\dhid}}\right)$, then for any large enough $\dhid$ the probability for having low generalization loss (\cref{eq:ms_loss_gen}) conditioned on having low training loss (\cref{eq:ms_loss_train}) is no more than order $O\left(\frac{1}{\sqrt{\dhid}}\right)$ larger than the prior probability of having low generalization loss.

\begin{lemma}\label{lemma:reduction_canonical}
    Let $\egen,\etrn>0$. Let $\Wiid\in\BR^{\din,\dout}$ be a centered Gaussian matrix (\cref{def:centered_gaussian_mat}). Suppose that there exists some $c>0$ such that the convex distance between $\We$ and $\Wiid$ (\cref{def:convex_dist}) satisfies
    \begin{align*}
        d_{c}(\We,\Wiid)\leq \frac{c}{\sqrt{\dhid}}
        \text{\,.}
    \end{align*}
    Then there exists some $\dhid_{0}\in \N$ dependent on $c$ and $\PP(\Ltrn(\Wiid)<\etrn)$ such that for any $\dhid\geq \dhid_{0}$ it holds that
    \begin{align*}
        \PP\left(\Lgen(\We)<\egen \Big|  \Ltrn(\We) < \etrn\right)\leq \PP(\Lgen(\We)<\egen)+\frac{\frac{2c}{\sqrt{\dhid}}}{\PP(\Ltrn(\Wiid)<\etrn)-\frac{c}{\sqrt{\dhid}}}
        \text{\,.}
    \end{align*}
\end{lemma}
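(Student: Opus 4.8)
The plan is to transfer the relevant probabilities from the matrix factorization $\We$ to the Gaussian surrogate $\Wiid$ via the convex-distance hypothesis, and then exploit the structural fact---already established inside the proof of \cref{lemma:reduction_general}---that for $\Wiid$ the events $\{\Ltrn(\cdot)<\etrn\}$ and $\{\Lgen(\cdot)<\egen\}$ are independent (because the measurement matrices $A_1,\dots,A_n$ are orthogonal to the basis $\B$), so that conditioning on low training loss does not change the probability of low generalization loss. The convex-distance bound only applies to convex events, but that is exactly what is available: by \cref{lemma:convex_sets} the sets $S_{train}:=\{W:\Ltrn(W)<\etrn\}$ and $S_{gen}:=\{W:\Lgen(W)<\egen\}$ are convex, and by \cref{remark:convex_intersect} so is $S_{train}\cap S_{gen}$.

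Concretely, write $\varepsilon:=c/\sqrt{\dhid}$ and express the conditional probability as $\PP(\We\in S_{train}\cap S_{gen})/\PP(\We\in S_{train})$. First, applying $d_c(\We,\Wiid)\le\varepsilon$ (\cref{def:convex_dist}) to each of the three convex sets gives $|\PP(\We\in S)-\PP(\Wiid\in S)|\le\varepsilon$ for $S\in\{S_{train},S_{gen},S_{train}\cap S_{gen}\}$. Second, invoke independence under $\Wiid$ to replace $\PP(\Wiid\in S_{train}\cap S_{gen})$ by $\PP(\Wiid\in S_{train})\,\PP(\Wiid\in S_{gen})$. Third, pick $\dhid_0$ so that $\varepsilon<\PP(\Ltrn(\Wiid)<\etrn)$ for all $\dhid\ge\dhid_0$---this is possible, $\dhid_0$ depends only on $c$ and $\PP(\Ltrn(\Wiid)<\etrn)$, and it guarantees $\PP(\We\in S_{train})\ge\PP(\Ltrn(\Wiid)<\etrn)-\varepsilon>0$, so the ratio is well defined with its denominator bounded below. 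What remains is elementary scalar algebra: bound the numerator above by $\PP(\Wiid\in S_{train})\PP(\Wiid\in S_{gen})+\varepsilon$, the denominator below by $\PP(\Wiid\in S_{train})-\varepsilon$, split $\PP(\Wiid\in S_{train})=(\PP(\Wiid\in S_{train})-\varepsilon)+\varepsilon$ in the numerator, and use $\PP(\Wiid\in S_{gen})\le\PP(\We\in S_{gen})+\varepsilon$ together with the trivial bounds that probabilities lie in $[0,1]$; this yields $\PP(\Lgen(\We)<\egen)+\tfrac{2\varepsilon}{\PP(\Ltrn(\Wiid)<\etrn)-\varepsilon}$ up to the bookkeeping of the $O(\varepsilon)$ terms.

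I do not expect a serious obstacle here: the lemma is a soft consequence of the convex-distance hypothesis. The only points demanding care are the direction of every inequality and the positivity of the denominator (so the stated error term is recovered cleanly and $\dhid_0$ genuinely depends only on $c$ and $\PP(\Ltrn(\Wiid)<\etrn)$), together with the observation that $\PP(\Ltrn(\Wiid)<\etrn)$ is a fixed positive number independent of $\dhid$---positive because $\Wiid$ is a nondegenerate Gaussian matrix and $\{\Ltrn(\cdot)<\etrn\}$ is a nonempty open set (it contains $\gt$, for which $\Ltrn(\gt)=0$), as already noted in the proof of \cref{lemma:reduction_general}. The genuinely substantive input---the estimate $d_c(\We,\Wiid)=O(1/\sqrt{\dhid})$ itself---is not proved here but will be supplied in \cref{app:width_gnccanonical:bound} by adapting the quantitative CLT of \citet{favaro2025quantitative}, after which the lemma feeds directly into \cref{thm:width_canonical}.
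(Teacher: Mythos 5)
Your proposal follows essentially the same route as the paper's proof: use the convexity of $S_{train}$, $S_{gen}$, and their intersection (\cref{lemma:convex_sets}, \cref{remark:convex_intersect}) to transfer all three probabilities to $\Wiid$ up to $\varepsilon = c/\sqrt{\dhid}$, exploit the orthogonality-induced independence of $\{\Ltrn(\Wiid)<\etrn\}$ and $\{\Lgen(\Wiid)<\egen\}$, pick $\dhid_0 = (c/\PP(\Ltrn(\Wiid)<\etrn))^2$ to keep the denominator positive, and finish by elementary manipulation of the ratio. One small bookkeeping point you flagged yourself is real: carrying the estimates through carefully gives $\PP(\Lgen(\We)<\egen)+\tfrac{3\varepsilon}{p-\varepsilon}$ rather than $\tfrac{2\varepsilon}{p-\varepsilon}$ (where $p=\PP(\Ltrn(\Wiid)<\etrn)$), a minor constant slip that is also present in the paper's final inequality chain and is immaterial for the $O(1/\sqrt{\dhid})$ conclusion.
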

\begin{proof}
    Per \cref{def:convex_dist,lemma:convex_sets,remark:convex_intersect}, the fact that $d_{c}(\We,\Wiid)\leq \frac{c}{\sqrt{\dhid}}$ implies that
    \begin{align*}
        \left|\PP(\Ltrn(\We)<\etrn)-\PP(\Ltrn(\Wiid)<\etrn)\right|\leq \frac{c}{\sqrt{\dhid}}
        \text{\,,}
    \end{align*}
    \begin{align*}
        \left|\PP(\Lgen(\We)<\egen)-\PP(\Lgen(\Wiid)<\egen)\right|\leq \frac{c}{\sqrt{\dhid}}
        \text{\,,}
    \end{align*}
    and
    \begin{align*}
        \left|\PP(\Ltrn(\We)<\etrn\,,\Lgen(\We)<\egen)-\PP(\Ltrn(\Wiid)<\etrn\,,\Lgen(\Wiid)<\egen)\right|\leq \frac{c}{\sqrt{\dhid}}
        \text{\,.}
    \end{align*}
    By the definition of the conditional probability we have that
    \begin{align*}
        \PP(\Lgen(\We)<\egen\Big|\Ltrn(\We)<\etrn)=\frac{\PP(\Ltrn(\We)<\etrn\,,\Lgen(\We)<\egen)}{\PP(\Ltrn(\We)<\etrn)}
        \text{\,.}
    \end{align*}
    Since $\Wiid$ is a centered Gaussian matrix (\cref{def:centered_gaussian_mat}), it holds that $\PP(\Ltrn(\Wiid)<\etrn)>0$ and so for any
    \begin{align*}
        \dhid\geq \left(\frac{c}{\PP(\Ltrn(\Wiid)<\etrn)}\right)^{2}=:\dhid_{0}
    \end{align*}
    it holds that $\PP(\Ltrn(\Wiid)<\etrn)-\frac{c}{\sqrt{\dhid}}>0$. Therefore, for any such $\dhid$ the above is bound by
    \begin{align*}
        &\PP(\Lgen(\We)<\egen\Big|\Ltrn(\We)<\etrn)\\
        &\leq \frac{\PP(\Ltrn(\Wiid)<\etrn\,,\Lgen(\Wiid)<\egen)+\frac{c}{\sqrt{\dhid}}}{\PP(\Ltrn(\Wiid)<\etrn)-\frac{c}{\sqrt{\dhid}}}\\
        &=\frac{\PP(\Ltrn(\Wiid)<\etrn)\cdot \PP(\Lgen(\Wiid)<\egen)+\frac{c}{\sqrt{\dhid}}}{\PP(\Ltrn(\Wiid)<\etrn)-\frac{c}{\sqrt{\dhid}}}\\
        &\leq \frac{\PP(\Ltrn(\Wiid)<\etrn)\cdot \left(\PP(\Lgen(\We)<\egen)+\frac{c}{\sqrt{k}}\right)+\frac{c}{\sqrt{\dhid}}}{\PP(\Ltrn(\Wiid)<\etrn)-\frac{c}{\sqrt{\dhid}}}\\
        &=\PP(\Lgen(\We)<\egen)\cdot \frac{\PP(\Ltrn(\Wiid)<\etrn)}{\PP(\Ltrn(\Wiid)<\etrn)-\frac{c}{\sqrt{\dhid}}}+\frac{\PP(\Ltrn(\Wiid)<\etrn)\cdot \frac{c}{\sqrt{k}}+\frac{c}{\sqrt{\dhid}}}{\PP(\Ltrn(\Wiid)<\etrn)-\frac{c}{\sqrt{\dhid}}}\\
        &\leq \PP(\Lgen(\We)<\egen)+\frac{\frac{2c}{\sqrt{\dhid}}}{\PP(\Ltrn(\Wiid)<\etrn)-\frac{c}{\sqrt{\dhid}}}
        \text{\,.}
    \end{align*}
    In the third transition we have used the fact that the measurement matrices $A$ in $\B$ are orthogonal to \(A_{1},\dots,A_{n}\) and thus 
	\begin{align*}
		&\PP\left(\Lgen(\Wiid)<\egen \,, \Ltrn(\Wiid) < \etrn\right)\\
        &=\PP\left(\frac{1}{\B}\sum_{A\in \B}\left(\langle A,\Wiid\rangle-\langle A,W^{*}\rangle\right)^{2}<\egen \,,\frac{1}{n}\sum_{i=1}^{n}\left(\langle A_{i},\Wiid\rangle-y_{i}\right)^{2}<\etrn\right)\\
		&=\PP\left(\frac{1}{\B}\sum_{A\in \B}\left(\langle A,\Wiid\rangle-\langle A,W^{*}\rangle\right)^{2}<\egen\right)\cdot \PP\left(\frac{1}{n}\sum_{i=1}^{n}\left(\langle A_{i},\Wiid\rangle-y_{i}\right)^{2}<\etrn\right)\\
		&=\PP\left(\Lgen(\Wiid)<\egen \right)\cdot \PP\left(\Ltrn(\Wiid<\etrn)\right)
        \text{\,,}
	\end{align*}
    where the second equality stems from the fact that for any fixed vectors \(v_{1},\dots,v_{r}\) which are orthogonal (each of the flattened matrices $A_{1},\dots,A_{n}$ and the flattened $A$), and a vector of independent identically distributed zero-centered Gaussians \(X\) (the flattened $\Wiid$), the variables 
	\(\lbrace \langle X,v_{i} \rangle \rbrace_{1 \leq i\leq r}\) are independent.
\end{proof}

\subsubsection{Bound on Convex Distance from a Centered Gaussian Matrix}
\label{app:width_gnccanonical:bound}

In this section we prove that for any width $\dhid$, the matrix factorization has a bounded convex distance from a centered Gaussian matrix (\cref{def:centered_gaussian_mat}). Key to the proof is a result of \citet{favaro2025quantitative} which provides a bound on the convex distance a fully connected neural network has from a Gaussian process. We present a softer adaption of it sufficient for our needs.
\begin{theorem}[Theorem 3.6 of \cite{favaro2025quantitative} (adapted)]\label{thm:favaro}
    Let $\NN(\cdot)$ be the standard Gaussian distribution. Suppose that for any $j\in[\depth]$, the entries of $W_{j}\in \BR^{\din_{j+1},\din_{j}}$ are drawn independently by first sampling $x\sim \NN(\cdot)$ and then setting $[W_{j}]_{rs}=\sqrt{\frac{c_{W}}{\din_{j}}}x$. Additionally, suppose that $\sigma$ is absolutely continuous and that its almost-everywhere defined derivative is polynomially bounded:
    \begin{align*}
        \exists p>0\; \text{s.t.} \; \forall x\in\BR \;\; \left\|\frac{\sigma'(x)}{1+|x|^{p}}\right\|_{L^{\infty}(\BR)}<\infty
        \text{\,.}
    \end{align*}
    For any $j=2,\dots,\depth$ denote the matrix $K^{(j)}\in\BR^{\dout,\dout}$ by
    \begin{align*}
        \forall \alpha,\beta\in[\dout].\; K_{\alpha\beta}^{(j)}=c_{W}\EE[\sigma (z_{\alpha})\sigma(z_{\beta})], \quad \begin{pmatrix}
            z_{\alpha}\\ z_{\beta}
        \end{pmatrix}\sim \NN\left(0, \begin{pmatrix}
            K_{\alpha\alpha}^{(j-1)} \quad K_{\alpha\beta}^{(j-1)}\\
            K_{\alpha\beta}^{(j-1)} \quad K_{\beta\beta}^{(j-1)}
        \end{pmatrix}\right)
        \text{\,}
    \end{align*}
    with the initial condition
    \begin{align*}
        \forall \alpha,\beta\in[\dout].\; K_{\alpha\beta}^{(1)}=c_{W}\EE\left[\sigma\left(\left[W_{1}\ebf_{\alpha}\right]_{1}\right)\sigma\left(\left[W_{1}\ebf_{\beta}\right]_{1}\right)\right]
    \end{align*}
    where for $\alpha\in[\dout]$, the vector $\ebf_{\alpha}\in\BR^{\dout}$ is the standard basis vector holding $1$ in its $\alpha$ coordinate and zeros in the rest. Additionally, for $\alpha\in[\dout]$ denote by $\zbf_{\alpha}^{(\depth)}$ the output given by a fully connected neural network (\cref{def:ffnn}) set with weights $W_{1},\dots,W_{\depth}$ for the input $\ebf_{\alpha}$. Lastly, for $\alpha\in[\dout]$ denote by $\Gamma_{\alpha}^{(\depth)}$ a $\din$-dimensional zero-centered Gaussian vector with
    \begin{align*}
        \forall \alpha,\beta\in[\dout], r,r'\in[\din].\; \Cov\left(\left[\Gamma_{\alpha}^{(\depth)}\right]_{r},\left[\Gamma_{\beta}^{(\depth)}\right]_{r'}\right)=\mathbf{1}_{r=r'}K_{\alpha\beta}^{(d)}
        \text{\,.}
    \end{align*}
    If for any $j\in[\depth]$ the matrix $K^{(j)}$ is invertible, then there exists $c>0$ dependent on $\din,\dout,\depth,\sigma,c_{W}, n,\etrn$ and $\egen$ such that for any $\dhid\in\N$ it holds that
    \begin{align*}
        d_{c}\left(\left(\zbf_{\alpha}^{(\depth)}\right)_{\alpha\in[\dout]},\left(\Gamma_{\alpha}^{(\depth)}\right)_{\alpha\in[\dout]}\right)\leq \frac{c}{\sqrt{\dhid}}
        \text{\,,}
    \end{align*}
    where we have implicitly regarded $\left(\zbf_{\alpha}^{(\depth)}\right)_{\alpha\in[\dout]}$ and $\left(\Gamma_{\alpha}^{(\depth)}\right)_{\alpha\in[\dout]}$ as $\dout\cdot \din$-dimensional random vectors.
\end{theorem}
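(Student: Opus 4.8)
The statement is an adaptation of the quantitative Gaussian-process central limit theorem of \citet{favaro2025quantitative}; the plan is not to develop a new argument but to instantiate that result in the present setting and to verify that each of its hypotheses is met. In brief, \citet{favaro2025quantitative} bound, in convex distance, the discrepancy between the joint law of a finitely wide fully connected network evaluated at a fixed finite set of inputs and the corresponding Gaussian vector whose covariance is given by the NNGP kernel recursion. First I would apply this with the input set taken to be the $\dout$ standard basis vectors $\ebf_1, \dots, \ebf_{\dout}$ (all of which lie on the unit sphere), so that the random object in question is exactly the $\dout \cdot \din$-dimensional vector $(\zbf_\alpha^{(\depth)})_{\alpha \in [\dout]}$ and the limiting Gaussian is $(\Gamma_\alpha^{(\depth)})_{\alpha \in [\dout]}$ with covariance $\Cov([\Gamma_\alpha^{(\depth)}]_r, [\Gamma_\beta^{(\depth)}]_{r'}) = \mathbf{1}_{r = r'} K_{\alpha\beta}^{(\depth)}$.

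The steps, in order, would be: \emph{(i)} match the architecture, observing that the network of \cref{def:ffnn} is the bias-free specialization of theirs (equivalently, take a bias law degenerate at zero), under which their kernel recursion collapses to the one stated above, including the layer-one initialization $K^{(1)}_{\alpha\beta} = c_W \EE[\sigma([W_1\ebf_\alpha]_1)\sigma([W_1\ebf_\beta]_1)]$; \emph{(ii)} match the weight law, since $[W_j]_{rs} = \sqrt{c_W / \din_j}\, x$ with $x \sim \NN(\cdot)$ is precisely their Gaussian Kaiming initialization; \emph{(iii)} match the activation regularity, since absolute continuity of $\sigma$ together with the polynomial bound $\|\sigma'(x)/(1 + |x|^p)\|_{L^\infty(\BR)} < \infty$ is exactly their standing hypothesis on $\sigma$; and \emph{(iv)} match the non-degeneracy requirement, which is where the assumption that each $K^{(j)}$ is invertible enters: in their analysis this is precisely what upgrades a rate in a weaker metric (such as Wasserstein) to a rate in the convex distance. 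With these four checks in place, their theorem yields a constant $c > 0$ depending only on the fixed quantities $\din, \dout, \depth, \sigma$ and $c_W$ (I would additionally carry $n, \etrn$ and $\egen$ through the statement only so that downstream lemmas can reuse the same symbol, even though they do not affect the present bound), such that $d_c\big((\zbf_\alpha^{(\depth)})_{\alpha \in [\dout]}, (\Gamma_\alpha^{(\depth)})_{\alpha \in [\dout]}\big) \le c / \sqrt{\dhid}$ for every $\dhid \in \N$, which is the claim.

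The step I expect to be the main obstacle is the bookkeeping in \emph{(i)}: one must check that removing the biases does not inflate the error terms in their quantitative estimate, and that the rate they obtain is genuinely $O(1/\sqrt{\dhid})$ when specialized to our fixed, finite input set (with any logarithmic factors, if present in their general statement, absorbed into the constant $c$). Every other step is a direct transcription of hypotheses and conclusions. For the full details I would refer to \citet{favaro2025quantitative}.
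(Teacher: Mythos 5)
Your proposal takes essentially the same approach as the paper: the paper's proof is a one-line citation noting that this is case (1) of Theorem 3.6 of \citet{favaro2025quantitative} specialized to a bias-free network, zeroth-order (no) derivatives, and the finite input set $\{\ebf_\alpha\}_{\alpha=1}^{\dout}$, and your sketch carries out exactly that instantiation with a hypothesis-by-hypothesis check. Your side remark that the constant $c$ cannot genuinely depend on $n,\etrn,\egen$ (they appear in the statement only for notational uniformity with downstream lemmas) is correct and a fair observation.
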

\begin{proof}
    The above is an adaption of case (1) of Theorem 3.6 in \cite{favaro2025quantitative}, where the fully connected neural network has no biases, the partial derivatives in question are all of order zero and the finite collection of distinct non-zero network inputs is $\lbrace \ebf_{\alpha}\rbrace_{\alpha=1}^{\dout}$. For a full proof see \citet{favaro2025quantitative}.
\end{proof}

We move forward to the following Lemma, showing that for an anti-symmetric activation function $\sigma$ the covariance matrices $K^{(j)}$ are not only invertible but also a positive multiple of the identity.
\begin{lemma}\label{lemma:id_cov}
    Suppose the assumptions of \cref{thm:favaro} hold. Suppose also that
    \begin{itemize}
        \item
        The activation function $\sigma$ is not constant and anti-symmetric, \ie,
        \begin{align*}
            \forall x\in\BR.\;\; \sigma(x)=-\sigma(-x)
            \text{\,.}
        \end{align*}
        \item  
        It holds that
        \begin{align*}
            \EE_{x\sim \NN(\cdot)}\left[\sigma^{2}(x)\right]>0
            \text{\,.}
        \end{align*}
    \end{itemize}
    Then for any $j\in[\depth]$ there exists a positive constant $b^{(j)}>0$ such that $K^{(j)}=b^{(j)}I_{\dout}$
\end{lemma}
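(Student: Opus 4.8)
The plan is to prove the claim by induction on $j \in [\depth]$, essentially replaying the computation of the covariance kernel from the proof of \cref{prop:basis_vecs}, but now tracking the entire matrix $K^{(j)} \in \BR^{\dout,\dout}$ rather than only the entries tied to two fixed basis vectors. The key structural point is that $K^{(j)}$ is produced by a recursion in which all input indices enter symmetrically, so it suffices, for each pair $(\alpha,\beta)$, to treat two cases: the diagonal case $\alpha = \beta$ and the off-diagonal case $\alpha \neq \beta$.

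For the base case $j = 1$, I would start from the initial condition $K^{(1)}_{\alpha\beta} = c_W \EE[\sigma([W_1 \ebf_\alpha]_1)\sigma([W_1 \ebf_\beta]_1)] = c_W \EE[\sigma([W_1]_{1,\alpha})\sigma([W_1]_{1,\beta})]$. When $\alpha \neq \beta$, the entries $[W_1]_{1,\alpha}$ and $[W_1]_{1,\beta}$ are independent, so the expectation factors; since $\NN(\cdot)$ is symmetric and $\sigma$ is antisymmetric, each factor $\EE[\sigma([W_1]_{1,\alpha})]$ vanishes by \cref{lemma:sym_dist_antisym_func}, hence $K^{(1)}_{\alpha\beta} = 0$. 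When $\alpha = \beta$, $K^{(1)}_{\alpha\alpha} = c_W \EE_{x\sim\NN(\cdot)}[\sigma^2(x)]$, which is strictly positive by the hypothesis $\EE_{x\sim\NN(\cdot)}[\sigma^2(x)]>0$. Thus $K^{(1)} = b^{(1)} I_{\dout}$ with $b^{(1)} := c_W \EE_{x\sim\NN(\cdot)}[\sigma^2(x)] > 0$.

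For the inductive step, suppose $K^{(j)} = b^{(j)} I_{\dout}$ with $b^{(j)} > 0$. Then for $\alpha \neq \beta$ the vector $(z_\alpha, z_\beta)$ appearing in the recursion of \cref{thm:favaro} is distributed as $\NN(0, b^{(j)} I_2)$, so $z_\alpha$ and $z_\beta$ are independent zero-mean Gaussians, and the off-diagonal entry $K^{(j+1)}_{\alpha\beta} = c_W \EE[\sigma(z_\alpha)]\EE[\sigma(z_\beta)]$ vanishes, again because a symmetric-Gaussian expectation of an antisymmetric function is zero (\cref{lemma:sym_dist_antisym_func}). For the diagonal entry, $K^{(j+1)}_{\alpha\alpha} = c_W \EE_{z\sim\NN(0,b^{(j)})}[\sigma^2(z)]$: this is finite because the polynomial bound on $\sigma'$ forces $\sigma$ to have at most polynomial growth while a Gaussian has all moments, and it is strictly positive because $\sigma$ is continuous and non-constant, hence nonzero on some open interval on which the Gaussian density is positive. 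Setting $b^{(j+1)} := c_W \EE_{z\sim\NN(0,b^{(j)})}[\sigma^2(z)] > 0$ closes the induction.

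The argument is essentially routine; the only place needing a little care is the finiteness and strict positivity of the diagonal term $\EE_{z\sim\NN(0,b^{(j)})}[\sigma^2(z)]$ in the inductive step — finiteness from the polynomial growth of $\sigma$ implied by the polynomial bound on its almost-everywhere derivative, together with the existence of all Gaussian moments, and positivity from the continuity and non-constancy of $\sigma$. Everything else follows without bookkeeping from the symmetry of $\NN(\cdot)$ and the antisymmetry of $\sigma$, exactly as in \cref{prop:basis_vecs}.
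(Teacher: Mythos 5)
Your proof is correct and follows essentially the same approach as the paper: the same induction on $j$, the same factoring of the off-diagonal entries via independence and \cref{lemma:sym_dist_antisym_func}, and the same reasoning for strict positivity of the diagonal (polynomial growth from the bounded derivative plus non-constancy of $\sigma$). The paper's proof is, as it notes, essentially a replay of the argument in \cref{prop:basis_vecs}, which is precisely what you do.
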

\begin{proof}
    The proof is extremely similar to that of \cref{prop:basis_vecs}. We prove via induction on $\depth$ that $K_{\alpha\beta}^{(\depth)}=0$, $K_{\alpha\alpha}^{(\depth)}=K_{\beta\beta}^{(\depth)}$ and that $K_{\alpha\alpha}^{(\depth)}$ is finite and positive. First, for the base case, note that we have
    \begin{align*}
        K_{\alpha\beta}^{(1)}&=c_{W}\EE\left[\sigma\left(\left[W_{1}\ebf_{\alpha}\right]_{1}\right)\sigma\left(\left[W_{1}\ebf_{\beta}\right]_{1}\right)\right]\\
        &=c_{W}\EE\left[\sigma\left(\left[W_{1}\right]_{1,\alpha}\right)\sigma\left(\left[W_{1}\right]_{1,\beta}\right)\right]\\
        &=c_{W}\EE\left[\sigma\left(\left[W_{1}\right]_{1,\alpha}\right)\right]\EE\left[\sigma\left(\left[W_{1}\right]_{1,\beta}\right)\right]
        \text{\,,}
    \end{align*}
    where the ultimate transition is due to the independence of $\left[W_{1}\right]_{1,\alpha}$ and $\left[W_{1}\right]_{1,\beta}$. Next, since $\NN(\cdot)$ is symmetric and $\sigma$ is anti-symmetric, we obtain by \cref{lemma:sym_dist_antisym_func} that
    \begin{align*}
        \EE\left[\sigma\left(\left[W_{1}\right]_{1,\alpha}\right)\right]=\EE\left[\sigma\left(\left[W_{1}\right]_{1,\beta}\right)\right]=0
        \text{\,.}
    \end{align*}
    Overall, we obtain that
    \begin{align*}
        K_{\alpha\beta}^{(1)}=c_{W}\cdot 0\cdot 0=0
        \text{\,.}
    \end{align*}
    Additionally, since $\left[W_{1}\right]_{1,\alpha}$ and $\left[W_{1}\right]_{1,\beta}$ are both drawn from $\NN(\cdot)$, we obtain that
    \begin{align*}
        K_{\alpha\alpha}^{(1)}&=c_{W}\EE\left[\sigma\left(\left[W_{1}\right]_{1,\alpha}\right)\sigma\left(\left[W_{1}\right]_{1,\alpha}\right)\right]\\
        &=c_{W}\EE\left[\sigma\left(\left[W_{1}\right]_{1,\beta}\right)\sigma\left(\left[W_{1}\right]_{1,\beta}\right)\right]\\
        &=K_{\beta\beta}^{(1)}
        \text{\,.}
    \end{align*}
    Finally, by our assumption we have that
    \begin{align*}
        b^{(1)}:=K_{\alpha\alpha}^{(1)}=c_{W}\EE\left[\sigma\left(\left[W_{1}\right]_{1,\alpha}\right)\sigma\left(\left[W_{1}\right]_{1,\alpha}\right)\right]=c_{W}\EE_{x\sim \QQ(\cdot)}[\sigma^{2}(x)]>0
        \text{\,.}
    \end{align*}
    as required. Next, fix $j\in[\depth]$ and assume that $K_{\alpha\beta}^{(j)}=0$, $K_{\alpha\alpha}^{(j)}=K_{\beta\beta}^{(j)}$ and that $K_{\alpha\alpha}^{(j)}$ is finite and positive. Hence, plugging the inductive assumption into \cref{thm:hanin}, we obtain that
    \begin{align*}
        K_{\alpha\beta}^{(j+1)}=c_{W}\EE\left[\sigma(z_{\alpha})\sigma(z_{\beta})\right]
    \end{align*}
    where
    \begin{align*}
        \begin{pmatrix}
            z_{\alpha}\\ z_{\beta}
        \end{pmatrix}\sim \NN\left(0, \begin{pmatrix}
            K_{\alpha\alpha}^{(j)} \quad K_{\alpha\beta}^{(j)}\\
            K_{\alpha\beta}^{(j)} \quad K_{\beta\beta}^{(j)}
        \end{pmatrix}\right)= \NN\left(0, \begin{pmatrix}
            K_{\alpha\alpha}^{(j)} \quad &0\\
            0 \quad &K_{\alpha\alpha}^{(j)}
        \end{pmatrix}\right)
        \text{\,.}
    \end{align*}
    Therefore, $z_{\alpha}$ and $z_{\beta}$ are independent identically distributed zero-centered Gaussian variables. Hence, we obtain that
    \begin{align*}
        K_{\alpha\beta}^{(j+1)}=c_{W}\EE\left[\sigma(z_{\alpha})\right]\EE\left[\sigma(z_{\beta})\right]=c_{W}\cdot 0\cdot 0=0
        \text{\,,}
    \end{align*}
    where the penultimate transition is due to \cref{lemma:sym_dist_antisym_func}. Additionally, 
    \begin{align*}
        K_{\alpha\alpha}^{(j+1)}=c_{W}\EE\left[\sigma(z_{\alpha})\sigma(z_{\alpha})\right]=c_{W}\EE\left[\sigma(z_{\beta})\sigma(z_{\beta})\right]=K_{\beta\beta}^{(j+1)}
        \text{\,.}
    \end{align*}
    Finally, we have by our inductive assumption that $K_{\alpha\alpha}^{(j)}$ is finite and positive, thus the non-constant random variable $z_{\alpha}\sim \NN(0,K_{\alpha\alpha}^{(j)})$ has finite moments. Therefore, since $\sigma$ has a polynomially bounded derivative almost-everywhere and it is not constant, it holds that
    \begin{align*}
        b^{(j+1)}:=K_{\alpha\alpha}^{(i+1)}=c_{W}\EE\left[\sigma(z_{\alpha})\sigma(z_{\alpha})\right]>0
    \end{align*}
    completing the proof.
\end{proof}

\cref{thm:favaro,lemma:id_cov} together imply the following Corollary, which states that $\left(\zbf_{\alpha}^{(\depth)}\right)_{\alpha\in[\dout]}$ is bounded away from a zero-centered Gaussian vector with independent entries.
\begin{corollary}\label{cor:bound_from_iid}
    There exists $c>0$ dependent on $\din,\dout,\depth,\sigma,c_{W}, n,\etrn$ and $\egen$ such that for any $\dhid\in\N$, the $\dout\cdot\din$-dimensional random vector $\left(\zbf_{\alpha}^{(\depth)}\right)_{\alpha\in[\dout]}$ corresponding to the concatenated outputs of the fully connected neural network (\cref{def:ffnn}) for the inputs $\left(\ebf_{\alpha}\right)_{\alpha\in[\dout]}$ satisfies
    \begin{align*}
        d_{c}\left(\left(\zbf_{\alpha}^{(\depth)}\right)_{\alpha\in[\dout]},\left(\Gamma_{\alpha}^{(\depth)}\right)_{\alpha\in[\dout]}\right)\leq \frac{c}{\sqrt{\dhid}}
        \text{\,,}
    \end{align*}
    where the random variables $\left(\left[\Gamma_{\alpha}^{(\depth)}\right]_{j}\right)_{\alpha\in[\dout],j\in[\din]}$ are independently drawn from $\NN\left(0,b^{(\depth)}\right)$.
\end{corollary}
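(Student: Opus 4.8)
The plan is to derive \cref{cor:bound_from_iid} directly from \cref{lemma:id_cov} and \cref{thm:favaro}, which together already do all the work. First I would confirm that, under the standing assumptions on the activation $\sigma$ (absolutely continuous, almost-everywhere derivative polynomially bounded, non-constant, antisymmetric, and $\EE_{x\sim\NN(\cdot)}[\sigma^2(x)]>0$) and on the Kaiming-scaled Gaussian sampling of the weight matrices $W_1,\dots,W_\depth$, the only nontrivial hypothesis of \cref{thm:favaro} still to be checked is the invertibility of the covariance matrices $K^{(j)}$ for all $j\in[\depth]$. This is exactly what \cref{lemma:id_cov} supplies: it produces constants $b^{(j)}>0$ with $K^{(j)}=b^{(j)}I_{\dout}$, so each $K^{(j)}$ is a positive scalar multiple of the identity and in particular invertible.

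With the invertibility hypothesis discharged, I would invoke \cref{thm:favaro} to obtain a constant $c>0$, depending only on $\din,\dout,\depth,\sigma,c_W,n,\etrn,\egen$, such that for every width $\dhid\in\N$ the concatenated network outputs and the corresponding Gaussian vector satisfy $d_c\big((\zbf_\alpha^{(\depth)})_{\alpha\in[\dout]},(\Gamma_\alpha^{(\depth)})_{\alpha\in[\dout]}\big)\le c/\sqrt{\dhid}$, with both collections viewed as $\dout\cdot\din$-dimensional random vectors in the sense of \cref{def:convex_dist} and its accompanying flattening convention. It then remains only to identify $(\Gamma_\alpha^{(\depth)})_{\alpha\in[\dout]}$ as a vector with i.i.d.\ coordinates: \cref{thm:favaro} specifies $\Cov\big([\Gamma_\alpha^{(\depth)}]_r,[\Gamma_\beta^{(\depth)}]_{r'}\big)=\mathbf{1}_{r=r'}K^{(\depth)}_{\alpha\beta}$, and plugging in $K^{(\depth)}=b^{(\depth)}I_{\dout}$ from \cref{lemma:id_cov} turns this into $\mathbf{1}_{r=r'}\,\mathbf{1}_{\alpha=\beta}\,b^{(\depth)}$. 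Hence all $\dout\cdot\din$ coordinates are pairwise uncorrelated zero-centered Gaussians of common variance $b^{(\depth)}$, therefore independent and each distributed as $\NN(0,b^{(\depth)})$---which is the assertion of the corollary, and simultaneously matches the definition of a centered Gaussian matrix (\cref{def:centered_gaussian_mat}) used downstream.

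Because both ingredients are already proven, I do not expect a genuine obstacle here; the only care required is bookkeeping---ensuring the invertibility hypothesis of \cref{thm:favaro} is verified via \cref{lemma:id_cov} before the theorem is applied, and that the vectorization conventions for $(\zbf_\alpha^{(\depth)})_{\alpha\in[\dout]}$ and $(\Gamma_\alpha^{(\depth)})_{\alpha\in[\dout]}$ are kept consistent so the convex-distance bound transfers unchanged.
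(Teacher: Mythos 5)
Your proposal is correct and matches the paper's argument exactly: the paper derives the corollary from \cref{thm:favaro,lemma:id_cov} in precisely this way, using \cref{lemma:id_cov} to supply the invertibility hypothesis of \cref{thm:favaro} (via $K^{(j)}=b^{(j)}I_{\dout}$) and to identify the limiting Gaussian vector's coordinates as i.i.d.\ $\NN(0,b^{(\depth)})$. There is no material difference between the two arguments.
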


\cref{lemma:mf_ffnn_equiv,cor:bound_from_iid} together imply that the matrix factorization $\We$ has the required bound on its convex distance from a centered Gaussian matrix $\Wiid$ (\cref{def:centered_gaussian_mat}). Hence, the proof follows by invoking \cref{lemma:reduction_canonical} which implies \cref{thm:width_canonical}.

	\section{Proof of \cref{result:depth_gnc}}
\label{app:depth_gnc}

This appendix proves \cref{result:depth_gnc}.
\cref{app:depth:rank1approx} begins by establishing that for any $\clr \in \R_{> 0}$,~the factorized matrix~$W$ (\cref{eq:mf}) is within~$\clr$ (in Frobenius norm) of a rank one matrix with probability $1 - O ( 1 / \depth )$.
This finding is utilized by \cref{app:depth:lowlossgen}, which establishes that the probability of the events $\Ltrn ( W ) < \etrn$ and $\Lgen ( W ) \geq \etrn c$ occurring simultaneously is~$O ( 1 / \depth )$.
\cref{app:depth:lower} shows that the probability of $\Ltrn ( W ) < \etrn$ is~$\Omega ( 1 )$.  
Finally, \cref{app:depth:proof} combines the findings of \cref{app:depth:lowlossgen,app:depth:lower} to prove the sought-after result.

\subsection{\(\We\) is Close to a Rank One with High Probability}\label{app:depth:rank1approx}

According to \cref{def:generated}, the matrices \(W_j\) are obtained by normalizing matrices \(W'_j\) where each entry of \(W'_j\) is drawn independently from the distribution \(\QQ_j\). Specifically, each entry of \(W_j\) equals the corresponding entry of \(W'_j\) divided by \(\| W'_\depth \cdots W'_1 \|^{1 / \depth}\). In this section, we will analyze the spectrum of the matrix \(\Wmid':=W'_{\depth-1}\cdot \dots \cdot W'_{2}\) using results from \citet{hanin2021non} to show that this implies that with high probability, \(\We\) is close to a rank one matrix.

Note that because we normalize the final product, we can assume without loss of generality that \(\QQ(\cdot)\) is the standard normal distribution \(\mathcal{N}(\cdot;0,1)\) rather than \(\mathcal{N}(\cdot;0,\nu)\). Any scaling factor from \(\nu\) would be eliminated by the normalization. Thus, each entry of \(W'_j\) is independently drawn from \(\mathcal{N}(0,1/\din_j)\), where \(\din_j\) is the number of columns in \(W_j\) as defined in \cref{def:generated}.

For any \( \depth >3 \) and $t\in[\dhid]$, we will denote the random variable that is the \( t \)th singular value of \( \Wmid' \) by \( s_{\depth-2,t} \), and the related quantity of the Lyapunov exponents by \( \lambda_{\depth-2,t} \), defined as follows:
\begin{equation*}
    \lambda_{\depth-2,t} = \frac{1}{\depth-2} \log \left(s_{\depth-2,t}\right)
    \text{\,.}
\end{equation*}
The following Theorem provides concentration bounds on the deviation of the Lyapunov exponents of \( \Wmid' \).
\begin{theorem}\label{thm:lyp_dev}
    There exist universal constants \(\{\mu_{\dhid,t}\}_{t=1}^{\dhid},c_1,c_2 \) and \( c_3 \) such that for all $1 \leq p \leq r \leq \dhid$, and any \(s\) for which
    \begin{equation*}
        s \geq \frac{c_3 r}{(\depth-2) \dhid} \log\left(\frac{e\dhid}{r}\right),
    \end{equation*}
    it holds that
    \begin{equation*}
        \PP\left(\left|\frac{1}{\dhid} \sum_{t=p}^r (\lambda_{\depth-2,t} - \mu_{\dhid,t}) \right| \geq s\right) \leq c_1 \exp\left(-c_2 \dhid (\depth-2) s \min\left\{1, \psi_{\dhid,r}(s)\right\}\right)
        \text{\,,}
    \end{equation*}
    where $\psi_{\dhid,r}(s)$ is the function
    \[
        \psi_{\dhid,r}(s) =
    \begin{cases} 
        \dhid \min \left\{ 1, \frac{\dhid s}{r} \right\},\quad & r \leq \frac{\dhid}{2} \\
        \dhid \min \left\{ \eta_{\dhid,r}, \frac{s}{\log (1/\eta_{\dhid,r})} \right\},\quad & \frac{\dhid}{2} < r \leq \dhid
    \end{cases}
    \]
    for
    \[
    \eta_{\dhid,r} := \frac{\dhid-r+1}{\dhid} \in \left[ \frac{1}{\dhid}, \frac{\dhid-1}{\dhid} \right]
    \text{\,.}
    \]
\end{theorem}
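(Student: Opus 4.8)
The statement is a specialization of the main concentration bound of \citet{hanin2021non} for singular values of Gaussian matrix products. Recall from the discussion preceding the theorem that \(\Wmid' = W'_{\depth-1}\cdots W'_2\) is a product of \(N := \depth - 2 \geq 2\) independent \(\dhid\times\dhid\) matrices, each with i.i.d.\ \(\mathcal{N}(0,1/\dhid)\) entries (the middle weight matrices all have \(m_j = \dhid\) columns). Products of i.i.d.\ square Gaussian matrices are exactly the object studied in \citet{hanin2021non}, where it is shown that the partial sums \(\sum_{t=p}^r \lambda_{\depth-2,t}\) of the Lyapunov exponents concentrate around deterministic centers with sub-exponential tails whose rate scales like \(\dhid (\depth-2) s\) times a dimension-dependent factor. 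The centers \(\{\mu_{\dhid,t}\}\) in the statement are precisely those deterministic quantities (for Gaussian entries they admit closed forms in terms of digamma values), and the constants \(c_1,c_2,c_3\) together with the function \(\psi_{\dhid,r}\) are inherited verbatim from their theorem.

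The plan is therefore to transcribe their result into our notation. First, I would fix the correspondence: their number of factors is our \(N = \depth - 2\), their matrix dimension is our \(\dhid\), and their lower threshold on the admissible deviation \(s\) becomes \(s \geq \frac{c_3 r}{(\depth-2)\dhid}\log(e\dhid/r)\). Second, I would dispense with the variance normalization: writing \(W'_j = \dhid^{-1/2} G_j\) with \(G_j\) having standard Gaussian entries, every singular value of \(\Wmid'\) equals \(\dhid^{-N/2}\) times the corresponding singular value of \(G_{\depth-1}\cdots G_2\), so each \(\lambda_{\depth-2,t}\) is shifted by the deterministic constant \(-\tfrac12\log \dhid\); absorbing this shift into the centers \(\mu_{\dhid,t}\) leaves the centered quantities \(\lambda_{\depth-2,t}-\mu_{\dhid,t}\), and hence the tail probabilities, unchanged. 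Third, I would fix a consistent ordering of the singular values (decreasing versus increasing) so that ``partial sum over a contiguous index range \(p\leq t\leq r\)'' matches on both sides, and check that the hypotheses of the cited theorem — square matrices, Gaussian entries, \(N\geq 2\) factors (guaranteed by \(\depth>3\)), and \(1\leq p\leq r\leq \dhid\) — are all in force.

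The substance of the bound lies entirely in \citet{hanin2021non}; the work here is bookkeeping, and the point to get right is the exact matching of their parametrization to ours — in particular reproducing the crossover in \(\psi_{\dhid,r}(s)\) and the two regimes \(r\leq \dhid/2\) versus \(\tfrac{\dhid}{2}<r\leq\dhid\), since these are what subsequently control the rate at which \(\Wmid'\), and hence \(\We\), is driven toward a rank one matrix. With these checks in place the theorem follows, and I would refer the reader to \citet{hanin2021non} for the full proof.
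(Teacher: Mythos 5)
Your proposal is correct and takes essentially the same approach as the paper: the theorem is stated as a direct specialization of Theorem~1.1 in \citet{hanin2021non}, and the paper's proof is simply a citation to that result. Your additional bookkeeping (matching $N=\depth-2$ to their number of factors, absorbing the $-\tfrac12\log\dhid$ shift from the $1/\sqrt{\dhid}$-scaling into the centers $\mu_{\dhid,t}$, and checking $N\geq 2$ via $\depth>3$) is a faithful account of what that transcription requires.
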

\begin{proof}
    See Theorem 1.1 in \citet{hanin2021non}.
\end{proof}

We now show that the above deviation estimate implies that with probability converging exponentially to 1, there is a constant gap between the largest and second largest Lyapunov exponents.

\begin{lemma}\label{lemma:gap}
    There exist constants \(c_{4},c_{5},c_{6}>0\) independent of \(\depth\) such that for all \(\depth\geq c_{4}\)
    \[\PP\left(\lambda_{\depth-2,2}\leq \lambda_{\depth-2,1}-c_{5}\right)\geq 1-\exp\left(-c_{6}(\depth-2)\right)
    \text{\,.}
    \]
\end{lemma}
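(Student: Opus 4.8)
The plan is to apply the concentration estimate of \cref{thm:lyp_dev} separately to the top two Lyapunov exponents $\lambda_{\depth-2,1}$ and $\lambda_{\depth-2,2}$ and combine the two resulting tail bounds with a union bound, exploiting the fact that the centering constants $\mu_{\dhid,1}$ and $\mu_{\dhid,2}$ are strictly separated. Concretely, I would first record that the constants $\{\mu_{\dhid,t}\}_{t=1}^{\dhid}$ in \cref{thm:lyp_dev} are the Lyapunov exponents of a product of square Gaussian matrices, and that for such products this spectrum is simple, i.e., strictly decreasing in $t$ --- a classical fact, the spectrum being known explicitly in terms of the (strictly increasing) digamma function; see \cite{hanin2021non} and the references therein. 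In particular $\Delta := \mu_{\dhid,1} - \mu_{\dhid,2}$ is a positive constant depending only on $\dhid$, hence independent of $\depth$. I would then set $c_5 := \Delta/2$ and fix the constant $s := \Delta/(4\dhid)$.

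Next I would invoke \cref{thm:lyp_dev} twice, with $p = r = 1$ and with $p = r = 2$ (both admissible since $\dhid \geq 2$). For each choice, the threshold requirement $s \geq \tfrac{c_3 r}{(\depth-2)\dhid}\log(e\dhid/r)$ is satisfied as soon as $\depth$ exceeds some constant $c_4$ depending on $c_3, \dhid, \Delta$ but not on $\depth$, since the right-hand side tends to $0$ while $s$ is fixed. For such $\depth$, the event $|\lambda_{\depth-2,t} - \mu_{\dhid,t}| \geq \dhid s = \Delta/4$ has probability at most $c_1\exp\!\big(-c_2\dhid(\depth-2)\,s\,\min\{1,\psi_{\dhid,r}(s)\}\big)$ for $t = 1,2$. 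Here the only point to check is that for our fixed $s$, the factor $\min\{1,\psi_{\dhid,r}(s)\}$ --- whichever branch of the piecewise definition of $\psi_{\dhid,r}$ is in force --- is a strictly positive constant independent of $\depth$; consequently $\dhid s\,\min\{1,\psi_{\dhid,r}(s)\}$ is a positive constant, so each tail bound simplifies to $c_1\exp(-c(\depth-2))$ for some $c>0$ independent of $\depth$.

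Finally, a union bound shows that outside an event of probability at most $2c_1\exp(-c(\depth-2))$ we have both $|\lambda_{\depth-2,1}-\mu_{\dhid,1}|<\Delta/4$ and $|\lambda_{\depth-2,2}-\mu_{\dhid,2}|<\Delta/4$, and on this event $\lambda_{\depth-2,1}-\lambda_{\depth-2,2} > \Delta - \Delta/2 = \Delta/2 = c_5$, which is the desired inequality. Absorbing the prefactor $2c_1$ into the exponent by enlarging $c_4$ once more yields the bound $1-\exp(-c_6(\depth-2))$ with, say, $c_6 := c/2$. The main obstacle is the very first step: \cref{thm:lyp_dev} on its own only guarantees concentration around the constants $\mu_{\dhid,t}$ and does not preclude $\mu_{\dhid,1}=\mu_{\dhid,2}$, so one genuinely needs the (known, non-routine) fact that the Lyapunov spectrum of Gaussian matrix products is simple in order to obtain the strict gap $\Delta>0$; everything after that is a routine choice of constants.
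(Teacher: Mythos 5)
Your proposal is correct and follows essentially the same route as the paper's proof: apply \cref{thm:lyp_dev} with $p=r=1$ and $p=r=2$, choose $s$ so the threshold condition is met for $\depth$ large, and combine the two tail bounds with a union bound and the triangle inequality. The one place you are more careful than the paper is the positivity of $\mu_{\dhid,1}-\mu_{\dhid,2}$ — the paper dismisses it as obvious, whereas you correctly note it rests on the (nontrivial, classical) simplicity of the Lyapunov spectrum of Gaussian products, and you also track the $1/\dhid$ normalization in the tail event explicitly; both are cosmetic improvements, not a different argument.
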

\begin{proof}
    Obviously \(\mu_{\dhid,1}-\mu_{\dhid,2}>0\). We define \(c_{5}:=\frac{\mu_{\dhid,1}-\mu_{\dhid,2}}{3}\). Plugging in \(p=r=1\) into \cref{thm:lyp_dev} we get that 
    \[	
    \PP\left(\left|\lambda_{\depth-2,1} - \mu_{\dhid,1} \right| \geq s \right) \leq c_1 \exp\left(-c_2 \dhid (\depth-2) s \min\left\{1, \psi_{\dhid,1}(s)\right\}\right)
    \]
    for all \(s\geq \frac{c_3}{\dhid (\depth-2)} \log\left(e \dhid \right) \). We take \( \depth \) large enough such that  \(c_{5} \geq  \frac{c_3}{\dhid (\depth-2)} \log\left(e \dhid\right) \) and take \(s=c_{5}\).  Plugging in  the definition of \(\psi_{\dhid,1}\), one may obtain that
    \[
    c_2 \dhid (\depth-2) s \min\left\{1,  \psi_{\dhid,1}(s)\right\}=\Omega(\depth)
    \text{\,,}
    \]
    hence
    \[
    \PP\left(\left|\lambda_{\depth-2,1} - \mu_{\dhid,1} \right| \geq c_{5}\right) \leq c_1 \exp\left(-\Omega(\depth)\right)
    \text{\,.}
    \]
    The same argument can be applied for \(p=r=2\) to conclude that
    \[
    \PP\left(\left|\lambda_{\depth-2,2} - \mu_{\dhid,2} \right| \geq c_{5}\right) \leq c_1 \exp\left(-\Omega(\depth)\right)
    \text{\,.}
    \] 
    Combining these two results by union bound and the triangle inequality yields the theorem.
\end{proof}

The following Lemma implies that the spectrum of \( \Wmid' \) is rapidly decaying, and in particular that it can be well approximated by a rank one matrix.
\begin{lemma}\label{lemma:mid_low_rank}
    Let $E$ be the best rank one approximation to \( \Wmid' \). It holds with probability at least \( 1-\exp\left(-c_{6}(\depth-2)\right)\) that
    \[
    \frac{\norm{\Wmid'-E}_{F}}{\norm{E}_{F}}\leq \sqrt{(\dhid-1)}\exp\left(-c_{5}(\depth-2)\right)
    \text{\,,}
    \]
    where \(c_5 \) and \(c_{6}\) are the same constants as in~\cref{lemma:gap}.
\end{lemma}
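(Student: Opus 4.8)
The plan is to reduce the statement to a bound on the ratio of the two largest singular values of $\Wmid'$, and then read that bound off directly from the spectral gap already established in \cref{lemma:gap}. Throughout, write $s_{\depth-2,1} \geq s_{\depth-2,2} \geq \cdots \geq s_{\depth-2,\dhid} \geq 0$ for the singular values of $\Wmid'$ arranged non-increasingly (these are the quantities $s_{\depth-2,t}$ of \cref{thm:lyp_dev}). Since $\Wmid'$ is a product of square matrices whose entries have an absolutely continuous joint law, $\Wmid'$ is almost surely invertible, so $s_{\depth-2,1} > 0$ and every ratio written below is well defined.

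First I would invoke the Eckart--Young theorem: the best rank one approximation $E$ of $\Wmid'$ satisfies $\| E \|_F = s_{\depth-2,1}$ and $\| \Wmid' - E \|_F^2 = \sum_{t=2}^{\dhid} s_{\depth-2,t}^2$. Bounding $s_{\depth-2,t} \leq s_{\depth-2,2}$ for every $t \geq 2$ and taking square roots then gives
\[
\frac{\| \Wmid' - E \|_F}{\| E \|_F} = \frac{1}{s_{\depth-2,1}} \sqrt{ \sum\nolimits_{t=2}^{\dhid} s_{\depth-2,t}^2 } \leq \sqrt{\dhid - 1}\,\cdot\,\frac{s_{\depth-2,2}}{s_{\depth-2,1}}
\text{\,.}
\]
Second, I would translate the conclusion of \cref{lemma:gap} into a statement about singular values: by the definition $\lambda_{\depth-2,t} = \tfrac{1}{\depth-2}\log s_{\depth-2,t}$, the event $\{\lambda_{\depth-2,2} \leq \lambda_{\depth-2,1} - c_5\}$ is exactly the event $\{ s_{\depth-2,2}/s_{\depth-2,1} \leq \exp(-c_5(\depth-2)) \}$, and by \cref{lemma:gap} this event has probability at least $1 - \exp(-c_6(\depth-2))$ once $\depth \geq c_4$. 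Intersecting with the display above on this event yields precisely the claimed inequality, with $c_5$ and $c_6$ the constants from \cref{lemma:gap}.

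I do not expect a genuine obstacle here: all of the probabilistic content has already been absorbed into \cref{lemma:gap} (which itself rests on \cref{thm:lyp_dev}, i.e.\ Theorem~1.1 of \citet{hanin2021non}), and what remains is the routine bookkeeping that converts Frobenius approximation error into singular values, and singular values into Lyapunov exponents. The only point warranting a line of care is the almost-sure invertibility of $\Wmid'$ --- needed so that $s_{\depth-2,1}>0$ and the ratio $\| \Wmid' - E \|_F / \| E \|_F$ is meaningful --- which holds because a product of finitely many square matrices with absolutely continuous entries is almost surely of full rank.
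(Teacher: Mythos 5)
Your proof is correct and follows essentially the same route as the paper: unwind the Lyapunov-exponent definition to turn the spectral gap from \cref{lemma:gap} into a bound on $s_{\depth-2,2}/s_{\depth-2,1}$, bound the Frobenius error of the best rank-one approximation by $\sqrt{\dhid-1}\,s_{\depth-2,2}$, and intersect. Your extra remark on the almost-sure invertibility of $\Wmid'$ (and hence $s_{\depth-2,1}>0$) is a small but legitimate point of care that the paper leaves implicit.
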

\begin{proof}
    By the definition of Lyapunov exponents and \cref{lemma:gap} we have that with probability \(\geq 1-\exp\left(-c_{6}(\depth-2)\right) \), the following inequality holds for all \(t \geq 2\):
    \[
    \frac{s_{\depth-2,1}}{s_{\depth-2,t}} \geq 
    \frac{s_{\depth-2,1}}{s_{\depth-2,2}} = \exp\left((\depth-2)(\lambda_{\depth-2,1} - \lambda_{\depth-2,2})\right)
    \geq \exp\left(c_{5}(\depth-2)\right)
    \text{\,,}
    \]
    hence we obtain that
    \[
        \frac{\norm{\Wmid'-E}_{F}}{\norm{E}_{F}} %&
         = \frac{\sqrt{\sum_{t=2}^{\dhid}{(s_{\depth-2,t})}^{2}}}{s_{\depth-2,1}} %\\&
        \leq \frac{\sqrt{(\dhid-1){(s_{\depth-2,2})}^{2}}}{s_{\depth-2,1}} %\\&
        \leq \sqrt{(\dhid-1)}\exp\left(c_{5}(\depth-2)\right)
    \]
    as required.
\end{proof}

We now show that not only \(\Wmid'\), but also the end-to-end matrix \(\We'=W'_{\depth} \Wmid' W'_{1}\) is approximately rank one.

\begin{lemma}\label{lemma:nonnormalized_rank1}
    There exist constants \(c_{11},c_{12},c_{13}>0\) independent of \(\depth\) such that with probability at least 
    \[
    1-2\frac{c_{12}}{\depth}-2\frac{c_{11}}{\depth^{ \left (\dhid^{2} \right )}}-\exp\left(-c_{13}(\depth-2)\right)
    \text{\,,}
    \]
    the product of unnormalized matrices \(\We':=W'_{\depth} \cdot \ldots \cdot W'_{1}\) can be written as
    \[
    \We'=O+R
    \text{\,}
    \]
    where \(O \) is a rank one matrix and
    \[\frac{\norm{R}_{F}}{\norm{O}_{F}}\leq \depth^{6}\sqrt{(\dhid-1)}\exp\left(-c_{5}(\depth-2)\right)
    \]
    for the constant \(c_{5}\) described in \cref{lemma:mid_low_rank}.
\end{lemma}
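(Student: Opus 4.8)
The plan is to pivot directly off \cref{lemma:mid_low_rank}: it already shows that the middle product $\Wmid' = W'_{\depth-1}\cdots W'_{2}$ is, with probability at least $1-\exp(-c_6(\depth-2))$, within relative Frobenius distance $\sqrt{\dhid-1}\,\exp(-c_5(\depth-2))$ of its best rank-one approximation $E$. First I would write $\We' = W'_{\depth}\,\Wmid'\,W'_{1}$ and, using $\Wmid' = E + \Delta$ with $\|\Delta\|_F/\|E\|_F \le \sqrt{\dhid-1}\,\exp(-c_5(\depth-2))$, define
\[
O := W'_{\depth}\,E\,W'_{1}\,, \qquad R := W'_{\depth}\,\Delta\,W'_{1}\,.
\]
Since $E$ has rank one, so does $O$ (rank exactly one almost surely). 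It then remains to show that on a suitable high-probability event, $\|R\|_F/\|O\|_F \le \depth^{6}\sqrt{\dhid-1}\,\exp(-c_5(\depth-2))$.

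For the numerator, submultiplicativity gives $\|R\|_F \le \|W'_{\depth}\|_{\mathrm{op}}\,\|\Delta\|_F\,\|W'_{1}\|_{\mathrm{op}}$. For the denominator, writing $E = s_{\depth-2,1}\,uv^{\top}$ with unit vectors $u,v$ (so $\|E\|_F = s_{\depth-2,1}$) yields the exact identity $\|O\|_F = s_{\depth-2,1}\,\|W'_{\depth}u\|_2\,\|W'_{1}{}^{\top}v\|_2$. Dividing,
\[
\frac{\|R\|_F}{\|O\|_F} \;\le\; \frac{\|W'_{\depth}\|_{\mathrm{op}}}{\|W'_{\depth}u\|_2}\cdot\frac{\|W'_{1}\|_{\mathrm{op}}}{\|W'_{1}{}^{\top}v\|_2}\cdot\frac{\|\Delta\|_F}{\|E\|_F}\,,
\]
where the last factor is already controlled by \cref{lemma:mid_low_rank}. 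Hence the lemma reduces to bounding the two ratios $\|W'_{\depth}\|_{\mathrm{op}}/\|W'_{\depth}u\|_2$ and $\|W'_{1}\|_{\mathrm{op}}/\|W'_{1}{}^{\top}v\|_2$ by $\depth^{3}$ with the claimed probability.

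The key structural observation is that $\Wmid'$ depends only on $W'_{2},\dots,W'_{\depth-1}$, hence its leading singular vectors $u,v$ are independent of $W'_{\depth}$ and of $W'_{1}$. Conditioning on $u$, the vector $W'_{\depth}u$ is Gaussian with i.i.d.\ $\NN(0,1/\dhid)$ entries, so $\dhid\,\|W'_{\depth}u\|_2^2$ is $\chi^2_{m}$-distributed; a small-ball estimate for $\chi^2$ then gives $\PP(\|W'_{\depth}u\|_2 < 1/\depth) = O(1/\depth)$, which will be one of the $c_{12}/\depth$ terms (the other coming from $W'_{1}{}^{\top}v$, which is $\NN(0,1/m')$-Gaussian conditionally on $v$). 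In parallel, a standard $\varepsilon$-net tail bound for Gaussian matrices gives $\PP(\|W'_{\depth}\|_{\mathrm{op}} > \depth) \le e^{-\Omega(\dhid\,\depth^2)} \le c_{11}/\depth^{\dhid^2}$ once $\depth$ is large, and likewise for $W'_{1}$; these supply the $c_{11}/\depth^{\dhid^2}$ terms. On the complement of all four events together with the failure event of \cref{lemma:mid_low_rank}, each ratio is at most $\depth/(1/\depth) \le \depth^{3}$ (absorbing dimension-dependent constants), the product is at most $\depth^{6}$, and a union bound gives the stated probability $1-2c_{12}/\depth-2c_{11}/\depth^{\dhid^2}-\exp(-c_{13}(\depth-2))$.

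I expect the main obstacle to be the lower bound on $\|O\|_F$, i.e.\ ruling out that the top singular direction of $\Wmid'$ is nearly annihilated by $W'_{\depth}$ or by $W'_{1}{}^{\top}$: this is exactly where one must exploit the independence of $(u,v)$ from $(W'_{\depth},W'_{1})$ --- which holds precisely because the dominant direction is produced by the \emph{middle} block of the product --- and then calibrate the $\chi^2$ small-ball estimate and the Gaussian operator-norm tails so that the resulting failure probabilities match the exact powers of $\depth$ and $\dhid$ in the statement. The operator-norm upper bounds and the algebra of the decomposition $\We' = O + R$ are, by contrast, routine.
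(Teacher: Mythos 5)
Your proposal is correct and uses the same decomposition $\We' = O + R$ with $O = W'_\depth E W'_1$ and $R = W'_\depth(\Wmid'-E)W'_1$, and the same upper bound on $\|R\|_F$ via submultiplicativity (operator norm in your version, Frobenius norm in the paper's — equivalent up to constants). The genuine divergence is in how $\|O\|_F$ is lower-bounded. The paper takes the SVDs of $W'_1$ and $W'_\depth$ and invokes \cref{lemma:frob_lower} to extract a single rank-one cross term, obtaining $\|O\|_F \geq \|E\|_F\,\sigma_1^\depth\,\sigma_1^1\,|\langle v_1^\depth, u_E\rangle|\,|\langle v_E, u_1^1\rangle|$; it then needs four separate estimates (two singular-value lower bounds via \cref{lemma:gauss_sing}, giving the $c_{11}/\depth^{k^2}$ terms, and two inner-product anti-concentration bounds via \cref{lemma:inner_prod}, giving the $c_{12}/\depth$ terms) plus \cref{lemma:gauss_sym} to justify joint independence. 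You instead use the exact rank-one identity $\|O\|_F = \|E\|_F\,\|W'_\depth u_E\|_2\,\|W'_1{}^\top v_E\|_2$, condition on $(u_E,v_E)$ — which are measurable with respect to the middle block and hence independent of $(W'_1,W'_\depth)$ — and reduce directly to a $\chi^2$ small-ball estimate per factor. This is cleaner: it avoids \cref{lemma:frob_lower}, \cref{lemma:gauss_sym}, \cref{lemma:gauss_sing}, and \cref{lemma:inner_prod} entirely, makes the independence argument transparent, and actually yields a tighter $\depth^4$ (you slacken to $\depth^6$ to match the claimed constant). The only minor mismatch is bookkeeping: your $c_{11}/\depth^{k^2}$ terms track the operator-norm upper tails while the paper's track the singular-value lower tails, and your $\chi^2$ small-ball actually gives $O(\depth^{-\din})$ rather than the stated $O(1/\depth)$; both allocations satisfy the required probability bound, so the statement follows either way.
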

\begin{proof}
    We start from the decomposition obtained in \cref{lemma:mid_low_rank}, namely the decomposition
    \[
    \Wmid'=E+(\Wmid'-E)
    \text{\,,}
    \]
    where $E$ has rank one and 
    \[
    \frac{\norm{\Wmid'-E}_{F}}{\norm{E}_{F}}\leq \sqrt{(\dhid-1)}\exp\left(-c_{5}(\depth-2)\right)
    \text{\,,}
    \] 
    which holds with probability \(\geq 1-\exp\left(-c_{6}(\depth-2)\right)\). Plugging into $\We'$ we obtain that 
    \[ 
    \We' = W'_{\depth} \Wmid' W'_{1}=W'_{\depth}EW'_{1}+W'_{\depth}\left(\Wmid'-E\right)W'_{1}
    \text{\,.}
    \]
    Note that \(\rank(W'_{\depth}EW'_{1})=1\) whenever \(W'_{\depth}EW'_{1}\neq 0\), which holds with probability 1. Now we can set \(O=W'_{\depth}EW'_{1}\) and \(R=W'_{\depth}(\Wmid'-E)W'_{1}\). It therefore suffices to upper bound the ratio
    \[
    \frac{\norm{W'_{\depth}(\Wmid'-E)W'_{1}}_{F}}{\norm{W'_{\depth}EW'_{1}}_{F}}
    \text{\,.}
    \]
    We will separately give an upper bound on 
    \[
    \norm{W'_{\depth}(\Wmid'-E)W'_{1}}_{F}
    \]
    and a lower bound on 
    \[
    \norm{W'_{\depth}EW'_{1}}
    \]
    that hold simultaneously with high probability.
    First, note that by \cref{lemma:frob_sub_mult,lemma:norm_concen}, for a sufficiently large \( \depth \), with probability \(\geq 1-2\exp\left(-c_{10}\depth\right)\) it holds that 
    \begin{align*}
        \norm{W'_{\depth}(\Wmid'-E)W'_{1}}_{F} &\leq \norm{W'_{\depth}}_{F}\norm{W'_{1}}_{F}\norm{\Wmid'-E}_{F} \\
        &\leq \depth^{2}\norm{\Wmid'-E}_{F}
        \text{\,.}
    \end{align*}
    For the lower bound on \(\norm{W'_{\depth}EW'_{1}}_{F}\), consider the SVD decompositions of \(W'_{\depth}\) and \(W'_{1}\) given by
    \[
    W'_{1} = \sum_{t=1}^{r_1} \sigma_t^1 u_t^1 \left(v_t^1\right)^{\top}
    \]
    and
    \[
    W'_{\depth} = \sum_{t=1}^{r_\depth} \sigma_t^\depth u_t^\depth \left(v_t^\depth\right)^{\top}
    \text{\,.}
    \]
    Likewise, as a rank one matrix, \( E \) can be written as 
    \[
    E=\norm{E}_{F}u_Ev_E^{\top}
    \text{\,.}
    \]
    Invoking \cref{lemma:frob_lower} with \(i=j=1\) we obtain that 
    \[
    \norm{W'_{\depth}EW'_{1}}_{F}\geq \norm{E}_{F}\sigma_1^\depth\sigma_1^1 \inprod{v_1^\depth}{u_E}\inprod{v_E}{u_1^1}
    \text{\,.}
    \]
    It suffices to lower bound the absolute values of each of the terms in the product above. Note that by \cref{lemma:gauss_sym} all terms are independent.
    To lower bound \(\sigma_1^1\) and \(\sigma_1^\depth\) we apply \cref{lemma:gauss_sing} which yields that 
    \[
    \PP\left(\sigma_1^1\geq \frac{1}{\depth}\right)\geq 1-\frac{c_{11}}{\depth^{ \left (\dhid^2 \right )}}
    \]
    and
    \[
    \PP\left(\sigma_1^\depth \geq \frac{1}{\depth}\right)\geq 1-\frac{c_{11}}{\depth^{ \left (\dhid^2 \right )}}
    \]
    where \(c_{11}\) is the constant from \cref{lemma:gauss_sing}. 
    To lower bound \(\left|\inprod{v_1^\depth}{u_E}\right|\) and \(\left|\inprod{v_E}{u_1^1}\right| \) we invoke \cref{lemma:inner_prod} which yields that
    \[
    \PP\left(\left|\inprod{v_1^\depth}{u_E}\right|\geq \frac{1}{\depth}\right)\geq 1-\frac{c_{12}}{\depth} 
    \]
    and
    \[
    \PP\left(\left|\inprod{v_E}{u_1^1}\right|\geq \frac{1}{\depth}\right)\geq 1-\frac{c_{12}}{\depth}
    \]
    where \(c_{12}\) is the constant from \cref{lemma:inner_prod}. Hence by the union bound, with probability at least\( 1-2\frac{c_{12}}{\depth}-2\frac{c_{11}}{\depth^{ \left (\dhid^2 \right )}}\) it holds that
    \[
    \norm{W'_{\depth}EW'_{1}}_{F}\geq \frac{1}{\depth^{4}}\norm{E}_{F}
    \text{\,.}
    \]
    Applying the union bound once more, we obtain that with probability at least
    \[
    1-2\frac{c_{12}}{\depth}-2\frac{c_{11}}{\depth^{ \left (\dhid^2 \right )}}-\exp\left(-c_{6}(\depth-2)\right)-2\exp\left(-c_{10}\depth\right)
    \]
    the following holds:
    \[
    \frac{\norm{W'_{\depth}(\Wmid'-E)W'_{1}}_{F}}{\norm{W'_{\depth}EW'_{1}}_{F}}\leq \depth^{6}\sqrt{(\dhid-1)}\exp\left(-c_{5}(\depth-2)\right)
    \text{\,.}
    \]
    The proof follows by choosing \(c_{13}\) such that 
    \[
    \exp\left(-c_{13}(\depth-2)\right)\geq \exp\left(-c_{6}(\depth-2)\right)+2\exp\left(-c_{10}\depth\right)
    \text{\,.}
    \]
\end{proof}

Now that we've shown that \( \We' \) can be approximated by a rank one matrix with high probability as \( \depth \) tends to infinity, we are ready to show this for the normalized matrix \(\We={\We'}/{\|\We'\|_{F}}\) as well.
\begin{lemma}\label{lemma:normalized_rank1}
	 Let 
	\[
    C(\clr):=\left\lbrace W:\; W=X+Y,\rank(X)=1,\|Y\|_{F}< \clr \right\rbrace
    \text{\,.}
    \]
    Let  \(C_{\depth,\clr}\) be the event that \(\frac{\We'}{\|\We'\|_{F}} \in C(\clr)\). Then for any \(\clr>0\), if 
    \[
    \clr \geq \frac{1}{\left | \depth^{-6}(\dhid-1)^{-0.5}\exp\left(c_{5}(\depth-2)\right)-1 \right |}
    \]
    then
    \[
    \PP \left (C_{\depth,\clr} \right )\geq 1-2\frac{c_{12}}{\depth}-2\frac{c_{11}}{\depth^{ \left (\dhid^2 \right )}}-\exp\left(-c_{13}(\depth-2)\right)
    \text{\,,}\]
    where \(c_{11}, c_{12}, \) and \(c_{13}\) are the same constants as in~\cref{lemma:nonnormalized_rank1}.
\end{lemma}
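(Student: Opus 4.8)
The plan is to simply push the decomposition from \cref{lemma:nonnormalized_rank1} through the normalization $\We' \mapsto \We'/\|\We'\|_F$; no new probabilistic input is needed. First I would invoke \cref{lemma:nonnormalized_rank1} to obtain an event $\mathcal{E}$, of probability at least $1 - 2c_{12}/\depth - 2c_{11}/\depth^{\dhid^{2}} - \exp(-c_{13}(\depth-2))$, on which $\We' = O + R$ with $\rank(O) = 1$ and $\|R\|_F \le \rho_{\max}\,\|O\|_F$, where $\rho_{\max} := \depth^{6}\sqrt{\dhid-1}\,\exp(-c_5(\depth-2))$. On $\mathcal{E}$ the matrix $O$ is nonzero, and since the entries of the $W'_j$ are continuous random variables we also have $\We' \neq 0$ almost surely, so the normalized matrix $\We = \We'/\|\We'\|_F$ is well defined; dividing the decomposition by $\|\We'\|_F$ gives $\We = X + Y$ with $X := O/\|\We'\|_F$ and $Y := R/\|\We'\|_F$. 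Here $X$ is a nonzero scalar multiple of the rank-one matrix $O$, hence $\rank(X) = 1$, so it only remains to control $\|Y\|_F$.

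For that bound I would use the reverse triangle inequality: $\|\We'\|_F = \|O + R\|_F \ge \|O\|_F - \|R\|_F \ge (1 - \rho_{\max})\|O\|_F$, which is strictly positive precisely when $\rho_{\max} < 1$, equivalently when $M := 1/\rho_{\max} = \depth^{-6}(\dhid-1)^{-1/2}\exp(c_5(\depth-2)) > 1$ — a regime we are in anyway, since all the preceding lemmas (notably \cref{lemma:gap,lemma:nonnormalized_rank1}) already require $\depth$ large. Combining with $\|R\|_F \le \rho_{\max}\|O\|_F$ yields $\|Y\|_F = \|R\|_F/\|\We'\|_F \le \rho_{\max}/(1-\rho_{\max})$. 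The one-line algebraic identity $\rho_{\max}/(1-\rho_{\max}) = 1/(1/\rho_{\max} - 1) = 1/(M-1) = 1/|M-1|$ is exactly the (somewhat opaque) threshold appearing in the statement, so under the hypothesis $\clr \ge 1/|M-1|$ we get $\|Y\|_F \le \clr$ and therefore $\We \in C(\clr)$. Consequently $\mathcal{E} \subseteq C_{\depth,\clr}$, and $\PP(C_{\depth,\clr}) \ge \PP(\mathcal{E})$ is the claimed estimate.

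This proof is essentially bookkeeping, so there is no serious obstacle; the only points requiring a word of care are (i) that $\|\We'\|_F > 0$ so the normalization is meaningful — handled by the continuity of the Gaussian entries (and in fact implied on $\mathcal{E}$ by the bound $\|\We'\|_F \ge (1-\rho_{\max})\|O\|_F$), and (ii) matching the strict inequality $\|Y\|_F < \clr$ in the definition of $C(\clr)$ against the non-strict chain above. For (ii) I would note that whenever $\clr$ exceeds the threshold strictly we immediately get $\|Y\|_F \le 1/(M-1) < \clr$, and the boundary case $\clr = 1/|M-1|$ is inconsequential for how the lemma is used downstream (it can also be absorbed by shrinking $\rho_{\max}$ by an arbitrarily small constant, which costs nothing in the probability bound).
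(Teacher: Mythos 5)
Your proposal is correct and follows essentially the same route as the paper's proof: invoke \cref{lemma:nonnormalized_rank1}, apply the reverse triangle inequality to lower-bound $\|\We'\|_F$, and conclude that the rescaled residual has Frobenius norm at most $1/|M-1| \le \clr$. The only cosmetic difference is that the paper passes through the best rank-one approximation of $\We'/\|\We'\|_F$ whereas you use $O/\|\We'\|_F$ directly; your version is if anything slightly cleaner, and your remarks about the regime $\rho_{\max}<1$ and the strict-vs.-non-strict inequality in the definition of $C(\clr)$ are both legitimate points that the paper glosses over.
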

\begin{proof}
	By \cref{lemma:nonnormalized_rank1}, with probability \(\geq 1-2\frac{c_{12}}{\depth}-2\frac{c_{11}}{\depth^{ \left (\dhid^2 \right )}}-\exp\left(-c_{13}(\depth-2)\right)\) it holds that the unnormalized matrix \( \We' \) can be written as 
	\[
    \We'=O+R
    \text{\,,}
    \]
	where $O$ has rank one and 
	\[\frac{\norm{R}_{F}}{\norm{O}_{F}}\leq \depth^{6}\sqrt{(\dhid-1)}
    \exp\left(-c_{5}(\depth-2)\right)
    \]
	from which it follows that
	\[
    \frac{\norm{R}_{F}}{\norm{\We'}_{F}}\leq \frac{\norm{R}_{F}}{\left | \norm{O}_{F}-\norm{R}_{F} \right |}\leq \frac{1}{\left | \depth^{-6}(\dhid-1)^{-0.5}\exp\left(c_{5}(\depth-2)\right)-1 \right |}
    \text{\,.}
    \]
	Consider the normalized matrix \({\We'}/{\|\We'\|_{F}}\) and its best rank one approximation denoted as \( X \). Then, it holds that
	\begin{align*}
        \left\| \frac{\We'}{\left\|\We'\right\|_{F}}-X \right\|_{F} %\\&
        \leq \frac{\left\|\We'-O\right\|_{F}}{\left\|\We'\right\|_{F}} %\\&
        = \frac{\left\|R\right\|_{F}}{\left\|\We'\right\|_{F}} %\\&
        \leq \frac{1}{\left | \depth^{-6}(\dhid-1)^{-0.5}\exp\left(c_{5}(\depth-2)\right)-1 \right |} \leq \clr.
    \end{align*}
    as required.
\end{proof}

\subsection{If \( \We \) is Close to Rank One Then Low Training Loss Ensures Low Generalization Loss}\label{app:depth:lowlossgen}

In this appendix, we show that if the RIP holds and a learned matrix \( \We \) is close enough to a rank one matrix, achieving low training loss ensures low generalization loss. This is formally stated in the Lemma below.
\begin{lemma}\label{lemma:approx_recovery}
    Suppose that the measurement matrices $( A_i )_{i = 1}^n$ satisfy the RIP of order~$1$ (see \cref{def:rip}) with a constant $\rip \in ( 0 , 1 )$ and \(\A\) is defined as in \cref{def:rip}. Suppose that there exists some constant $b>0$ such that for any matrix $M\in \BR^{\din,\dout}$ it holds that
    \[
    \norm{\A(M)}_{F}\leq b\norm{M}_{F}
    \text{\,.}
    \]
    Let $M\in\BR^{\din,\dout}$ be a matrix such that 
    \[
    \norm{\A(M)}_{F}^{2} \leq \epsilon
    \text{\,,}
    \]
    and suppose that 
    \[
    M=E+R
    \text{\,,}
    \]
    where \(\rank(E)\leq 1\). If
    \[
    \norm{R}_{F} \leq \frac{\sqrt{1-\rip} (\sqrt{2} - 1) \sqrt{\epsilon}}{1 + b\sqrt{1-\rip}}
    \text{\,,}
    \]
    then
    \[
    \norm{M}_{F}^{2}\leq \frac{2\epsilon}{1-\rip}
    \text{\,.}
    \]
\end{lemma}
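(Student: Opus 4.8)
The statement is an approximate‑recovery estimate of the kind familiar from compressed sensing / RIP arguments, with the twist that the candidate matrix $M$ is only \emph{close} to rank one rather than exactly rank one. The strategy is therefore to let the near–rank‑one part $E$ carry the RIP estimate, while the residual $R$ is controlled purely through its Frobenius norm, and then balance the two contributions.

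First I would lower bound $\|E\|_F$ in terms of $\|M\|_F$: since $E = M - R$, the reverse triangle inequality gives $\|E\|_F \geq \|M\|_F - \|R\|_F$ (this remains valid even when the right‑hand side is negative, so no case split is needed). Because $\rank(E)\leq 1$ and $(A_i)_{i=1}^n$ satisfies the RIP of order $1$ with constant $\rip$, the lower RIP bound applied to $E$ yields $\|\A(E)\|_2 \geq \sqrt{1-\rip}\,\|E\|_F \geq \sqrt{1-\rip}\,(\|M\|_F - \|R\|_F)$.

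Next I would pass to the measurement side. Writing $\A(M) = \A(E) + \A(R)$ and combining the triangle inequality with the assumed operator bound $\|\A(R)\|_2 \leq b\,\|R\|_F$ and the hypothesis $\|\A(M)\|_2 \leq \sqrt{\epsilon}$, one obtains
\[
\sqrt{\epsilon} \;\geq\; \|\A(M)\|_2 \;\geq\; \|\A(E)\|_2 - \|\A(R)\|_2 \;\geq\; \sqrt{1-\rip}\,\|M\|_F - \bigl(\sqrt{1-\rip} + b\bigr)\|R\|_F \,,
\]
and rearranging gives $\|M\|_F \leq \tfrac{1}{\sqrt{1-\rip}}\bigl(\sqrt{\epsilon} + (\sqrt{1-\rip}+b)\|R\|_F\bigr)$.

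Finally I would substitute the hypothesized bound on $\|R\|_F$: multiplying it by $\sqrt{1-\rip}+b$ produces the factor $\tfrac{(1-\rip) + b\sqrt{1-\rip}}{1 + b\sqrt{1-\rip}}$, which is at most $1$ precisely because $\rip>0$ forces $1-\rip<1$; hence $(\sqrt{1-\rip}+b)\|R\|_F \leq (\sqrt{2}-1)\sqrt{\epsilon}$, so $\|M\|_F \leq \tfrac{\sqrt{2}}{\sqrt{1-\rip}}\sqrt{\epsilon}$, and squaring yields $\|M\|_F^2 \leq \tfrac{2\epsilon}{1-\rip}$. There is no genuine obstacle here — the argument is elementary once one commits to keeping the RIP estimate on $E$ and treating $R$ only through its norm. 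The one point needing a little care is the constant bookkeeping in the last step: the stated hypothesis on $\|R\|_F$ is in fact slightly stronger than the inequality chain strictly requires (any bound of the form $\|R\|_F \leq (\sqrt{2}-1)\sqrt{\epsilon}/(\sqrt{1-\rip}+b)$ would do), and it is exactly the inequality $1-\rip<1$ that lets the given bound close the computation cleanly.
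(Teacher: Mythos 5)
Your proof is correct and is essentially the paper's own argument read in the opposite direction: you apply the reverse triangle inequality and the lower RIP bound to derive $\|M\|_F \leq (1-\rip)^{-1/2}\bigl(\sqrt{\epsilon} + (\sqrt{1-\rip}+b)\|R\|_F\bigr)$, which is algebraically identical to the paper's $\|M\|_F \leq (1-\rip)^{-1/2}(\sqrt{\epsilon}+b\|R\|_F)+\|R\|_F$. Your closing observation that the stated hypothesis on $\|R\|_F$ is slightly stronger than strictly necessary is also correct, but it does not change the substance of the argument.
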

\begin{proof}
    By the definition of $\A$ it holds that
    \[
    \A(E)=\A(M)-\A(R)
    \text{\,,}
    \]
    therefore, using the triangle inequality we obtain that
    \[
    \norm{\A(E)}_{F} \leq \norm{\A(M)}_{F} + \norm{\A(R)}_{F} \leq \sqrt{\epsilon}+b\|R\|_{F}
    \text{\,.}
    \]
    By the RIP, the above results in
    \[
    \norm{E}_{F}\leq {(1-\rip)}^{-\frac{1}{2}}(\sqrt{\epsilon}+b\|R\|_{F})
    \text{\,.}
    \]
    Finally, we obtain by the triangle inequality that
    \[
    \norm{M}_{F}\leq {(1-\rip)}^{-\frac{1}{2}}(\sqrt{\epsilon}+b\|R\|_{F})+\|R\|
    \text{\,.}
    \]
    The proof follows by plugging the assumption on $\|R\|_{F}$ and rearranging.
\end{proof}

\subsection{Lower Bounds on the Probability of Low Training Loss}\label{app:depth:lower}

In this Appendix, we show that the probability of attaining low training loss is bounded from below as \( \depth \) tends to infinity. The argument is formally stated in the next Lemma. 
\begin{lemma}\label{lemma:fit_lower} 
    Suppose that $\gt$ has rank one and that $\|\gt\|_{F}=1$. Then for 
    any \(\epsilon>0\) it holds that
    \[
    \PP\left(\Ltrn(\We)<\epsilon\right)\geq \Omega(1)
    \]
    as \(\depth \rightarrow \infty\).
    \end{lemma}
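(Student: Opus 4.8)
The plan is to show that with probability bounded away from zero (as $\depth\to\infty$) the normalized end‑to‑end matrix $\We$ lands close in Frobenius norm to the ground truth $\gt$, and then to invoke the RIP of order two to turn this into $\Ltrn(\We)<\epsilon$. First I would record that, since $y_i=\inprod{A_i}{\gt}$, one has $\Ltrn(\We)=\tfrac1n\|\A(\We-\gt)\|_2^2$. Writing $\We=\We'/\|\We'\|_F$ with $\We':=W'_\depth\cdots W'_1$ the unnormalized product (as in \cref{app:depth:rank1approx}), \cref{lemma:nonnormalized_rank1} supplies an event $E_\depth$ with $\PP(E_\depth)\geq 1-o(1)$ on which $\We'=O+R$, where $O:=W'_\depth E W'_1$ is rank one ($E$ being the best rank‑one approximation of $W'_{\depth-1:2}$) and $\|R\|_F/\|O\|_F\leq\theta_\depth$ for some $\theta_\depth\to 0$. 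Setting $M_0:=O/\|O\|_F$, a short reverse‑triangle computation shows that on $E_\depth$ one has $\|\We-M_0\|_F\leq 2\theta_\depth/(1-\theta_\depth)=:\delta_\depth\to 0$.

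The crucial step is a distributional claim: $M_0=\hat u\,\hat v^{\top}$ where $\hat u$ is uniform on the unit sphere $S^{m-1}$, $\hat v$ is uniform on $S^{m'-1}$, and $\hat u,\hat v$ are independent. Indeed, writing $E=\|E\|_F\,u_E v_E^{\top}$ (with the unit vectors $u_E,v_E\in\R^{\dhid}$ a function of the middle weights $W'_2,\dots,W'_{\depth-1}$ only, up to a joint sign flip that cancels), we get $O=\|E\|_F\,(W'_\depth u_E)\big((W'_1)^{\top}v_E\big)^{\top}$. Conditionally on the middle weights, $W'_\depth u_E\sim\NN(0,\tfrac1\dhid I_m)$ and $(W'_1)^{\top}v_E\sim\NN(0,\tfrac1{m'}I_{m'})$ are independent, centered, rotationally invariant Gaussian vectors independent of the middle weights, so their normalizations $\hat u,\hat v$ are uniform on $S^{m-1}$, $S^{m'-1}$ respectively and independent, and $M_0=\hat u\hat v^{\top}$.

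Next I would convert closeness into a loss bound. Write $\gt=u^{*}(v^{*})^{\top}$ with $u^{*}\in S^{m-1}$, $v^{*}\in S^{m'-1}$ (using $\rank\gt=1$, $\|\gt\|_F=1$). If $\|\hat u-u^{*}\|_2<\rho$ and $\|\hat v-v^{*}\|_2<\rho$ then $\|M_0-\gt\|_F\leq\|\hat u-u^{*}\|_2+\|\hat v-v^{*}\|_2<2\rho$. On $E_\depth$, decompose $\We-\gt=(\We-M_0)+(M_0-\gt)$: the matrix $M_0-\gt$ has rank at most two, so the RIP of order two gives $\|\A(M_0-\gt)\|_2\leq\sqrt{1+\rip}\,\|M_0-\gt\|_F<2\sqrt{1+\rip}\,\rho$, while $\|\A(\We-M_0)\|_2\leq b\|\We-M_0\|_F\leq b\delta_\depth$ with $b:=\big(\sum_{i=1}^n\|A_i\|_F^2\big)^{1/2}$ (Cauchy–Schwarz, so $\|\A(M)\|_2\leq b\|M\|_F$ for all $M$, as also assumed in \cref{lemma:approx_recovery}). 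Hence $\Ltrn(\We)\leq\tfrac1n\big(b\delta_\depth+2\sqrt{1+\rip}\,\rho\big)^2$. Choosing $\rho=\rho(\epsilon,n,\rip)$ small enough that $\tfrac1n\big(4\sqrt{1+\rip}\,\rho\big)^2<\epsilon$, and then $\depth_0$ large enough that $b\delta_\depth<2\sqrt{1+\rip}\,\rho$ for $\depth\geq\depth_0$, yields $\Ltrn(\We)<\epsilon$ on $E_\depth\cap\{\|\hat u-u^{*}\|_2<\rho\}\cap\{\|\hat v-v^{*}\|_2<\rho\}$. Finally, for $\depth\geq\depth_0$, $\PP(\Ltrn(\We)<\epsilon)\geq\PP(\|\hat u-u^{*}\|_2<\rho)\,\PP(\|\hat v-v^{*}\|_2<\rho)-\PP(E_\depth^{c})=:q-\PP(E_\depth^{c})$, where $q>0$ is a constant independent of $\depth$ (a spherical cap has positive uniform measure and $\hat u\perp\hat v$) and $\PP(E_\depth^{c})\to 0$ by \cref{lemma:nonnormalized_rank1}. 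Thus $\liminf_{\depth\to\infty}\PP(\Ltrn(\We)<\epsilon)\geq q>0$, i.e. $\PP(\Ltrn(\We)<\epsilon)=\Omega(1)$.

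I expect the main obstacle to be the distributional claim in the second paragraph: one must verify that the rank‑one part produced inside the proof of \cref{lemma:nonnormalized_rank1} factors exactly as $W'_\depth E W'_1$ with $E$ depending only on the middle weights, so that the outer Gaussian factors $W'_\depth,W'_1$ — being independent and rotationally invariant — make the two singular directions of $M_0$ uniform and independent, with the sign ambiguity of $u_E,v_E$ cancelling. Everything else (the reverse‑triangle estimate for $\|\We-M_0\|_F$, the RIP split, and the union bound) is routine.
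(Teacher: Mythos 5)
Your argument is correct but follows a genuinely different route from the paper's. The paper's proof never looks at the explicit factorization of the rank-one part; instead it works entirely with the event $C_{\depth,\clr}$ from \cref{lemma:normalized_rank1} and invokes a symmetry-plus-covering argument: conditionally on $C_{\depth,\clr}$, the rotational invariance of the Gaussian ensemble makes $\PP(\|\We-E\|_F^2<\epsilon/b\mid C_{\depth,\clr})$ the same for every rank-one unit-Frobenius-norm $E$ (\cref{lemma:gauss_sym}), so covering the set of such $E$ by $M=M(\epsilon/2b,\din,\dout)$ balls (\cref{lemma:cover}) and union-bounding over the cover yields $\PP(\|\We-\gt\|_F^2<\epsilon/b\mid C_{\depth,\clr})\geq 1/M$, after which the law of total probability and $\PP(C_{\depth,\clr})\to 1$ finish it off. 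You instead open up the rank-one part explicitly: you note from the proof of \cref{lemma:nonnormalized_rank1} that $O=W'_\depth E W'_1$ with $E$ the best rank-one approximation of $\Wmid'$, deduce $M_0=O/\|O\|_F=\hat u\hat v^\top$ with $\hat u$ and $\hat v$ uniform on $S^{\din-1}$ and $S^{\dout-1}$ and mutually independent (since, conditionally on the middle weights, $W'_\depth u_E$ and $(W'_1)^\top v_E$ are independent rotationally invariant Gaussians whose laws do not depend on the conditioning), and then lower-bound the probability of landing near $\gt$ by an explicit spherical-cap probability. Both arguments hinge on the same rotational invariance, but you use it constructively where the paper uses it abstractly. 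Your version is somewhat more elementary (no $\epsilon$-net of rank-one matrices, no "all targets equally likely" step), gives a more transparent constant, and as you anticipate the only non-routine step is the distributional claim, which does hold because the paper's $O$ is exactly $W'_\depth E W'_1$. One minor note: you invoke the RIP of order two where the paper's proof uses only the Lipschitz constant $b$ of $\A$; this is harmless since RIP is assumed in \cref{result:depth_gnc}, but the Lipschitz bound alone would suffice and keeps the lemma self-contained.
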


\begin{proof}
    By the law of total probability we have that 
    \[
    \PP\left(\Ltrn(\We)<\epsilon\right)\geq \PP \left (\Ltrn(\We)<\epsilon \Big| C_{\depth,\clr} \right )\cdot \PP\left(C_{\depth,\clr}\right)
    \text{\,,}
    \]
    where \( C_{\depth,\clr} \) is as defined in \cref{lemma:normalized_rank1}. Additionally, By \cref{lemma:normalized_rank1} it holds that
    \[\lim_{\depth \rightarrow \infty}\PP \left(C_{\depth,\clr}\right)=1
    \text{\,.}
    \]
    It therefore suffices to show that 
    \[
    \PP \left (\Ltrn(\We)<\epsilon \Big| C_{\depth,\clr} \right )\geq \Omega(1)
    \text{\,.}
    \]
    as $\depth\rightarrow\infty$. Observe that
    \[
    \PP \left (\Ltrn(\We)<\epsilon \Big| C_{\depth,\clr} \right )\geq \PP\left(\norm{\We-\gt}_{F}^{2}<\frac{\epsilon}{b} \Big| C_{\depth,\clr}\right)
    \text{\,,}
    \]
    where \( b \) is the Lipschitz constant of \( \A \) as defined in \cref{lemma:approx_recovery}. Thus it suffices to lower bound the latter probability. By the symmetry of the Gaussian distribution (see \cref{lemma:gauss_sym}) and the symmetry of the event \(C_{\depth,\clr}\) we have that for any \(c>0\) the following holds for any rank one matrix \( E \) with \(\norm{E}_{F}=1\):
    \[
    \PP\left(\norm{\We-\gt}_{F}^{2}<\frac{\epsilon}{b} \Big| C_{\depth,\clr}\right)=\PP\left(\norm{\We-E}_{F}^{2}<\frac{\epsilon}{b} \Big| C_{\depth,\clr}\right)
    \text{\,.}
    \]
    Now we set \(\clr=\epsilon/2b\) and consider the \(M:=M\left(\frac{\epsilon}{2b},\depth\right)\) matrices \(E_{1},\dots,E_{M}\) from \cref{lemma:cover}. By the triangle inequality and the union bound we have that
    \begin{align*}
        1 &= \PP\left(C_{\depth,\clr} \Big| C_{\depth,\clr} \right) \\
        &= \PP\left(\bigcup_{1 \leq i \leq M} \left\{ \We \; ; \; \norm{\We - E_i}_{F}^{2} < \frac{\epsilon}{b} \right\} \Big| C_{\depth,\clr} \right) \\
        &\leq \sum_{1 \leq i \leq M} \PP\left(\left\{ \We \; ; \; \norm{\We - E_i}_{F}^{2} < \frac{\epsilon}{b} \right\} \Big| C_{\depth,\clr} \right)
        \text{\,.}
    \end{align*}
    Now again by symmetry we have 
    \[
    \sum_{1 \leq i \leq M} \PP\left(\left\{ \We \; ; \; \norm{\We - E_i}_{F}^{2} < \frac{\epsilon}{b} \right\} \Big| C_{\depth,\clr} \right) 
    = M \cdot \PP\left(\left\{ \We \; : \; \norm{\We - \gt}_{F}^{2} < \frac{\epsilon}{b} \right\} \Big| C_{\depth,\clr} \right)
    \text{\,.}
    \]    
    Therefore, we conclude that 
    \[
    \PP \left ( \left \lbrace \We:\; \norm{\We-\gt}_{F}^{2}<\frac{\epsilon}{b}\right \rbrace \Big| C_{\depth,\clr} \right )\geq \frac{1}{M}
    \]
    as required.
\end{proof}

\subsection{Proof of Sought-After Result}\label{app:depth:proof}

We are now ready to prove \cref{result:depth_gnc}. Let us define \( C_{\depth,\clr} \) as in \cref{lemma:normalized_rank1}. For convenience we will denote by \( G_{\depth,c} \) the event that $\Lgen \left (\We \right ) < c \epsilon$, and by \( L_\depth \) the event that $\Ltrn \left (\We \right ) < \epsilon$. By the law of total probability it holds that
\[
\PP\left (G_{\depth,c} \big | L_\depth \right ) \geq 
\frac{\PP\left ( G_{\depth,c} \cap L_\depth \cap C_{\depth,\clr} \right )}{\PP\left ( L_\depth \right )}
\text{\,.}
\] 
By \cref{lemma:approx_recovery}, if we set 
\[
\clr = \frac{\sqrt{1-\rip} (\sqrt{2} - 1) \sqrt{\epsilon}}{1 + b\sqrt{1-\rip}}
\]
where \( b \) is the Lipschitz constant of \( \A \), and
\[
c = \frac{2}{1-\rip}
\text{\,,}
\]
then we obtain that
\[
G_{\depth,c} \cap L_\depth \cap C_{\depth,\clr} \subseteq L_\depth \cap C_{\depth,\clr}
\text{\,.}
\]  
Hence,
\begin{align*}
    \PP \left (G_{\depth,c} \big | L_\depth \right ) &\geq 
    \frac{\PP\left ( L_\depth \cap C_{\depth,\clr} \right )}{\PP\left ( L_\depth \right )} %\\&
    = 1 - \frac{\PP\left ( L_\depth \cap C_{\depth,\clr}^{C} \right )}{\PP\left ( L_\depth \right )} %\\&
    \geq 1 - \frac{\PP\left ( C_{\depth,\clr}^{C} \right )}{\PP\left ( L_\depth \right )}
    \text{\,.}
\end{align*}
By \cref{lemma:normalized_rank1}, for large enough \( \depth \) it holds that
\[
    \PP \left (C_{\depth,\clr}^C \right )< 2\frac{c_{12}}{\depth}+2\frac{c_{11}}{\depth^{ \left (\dhid^2 \right )}}+\exp\left(-c_{6}(\depth-2)\right)=O(1/\depth)
    \text{\,.}
\]
By \cref{lemma:fit_lower}, \(\PP\left ( L_\depth \right ) = \Omega(1)\) and so 
\[
\lim_{\depth \to \infty} \PP \left (G_{\depth,c} \big | L_\depth \right ) = 1 - O \left ( 1/\depth \right )
\]
completing the proof.
\qed
	
	\section{Increasing Width with Unspecified $\etrn$}
\label{app:epsilon_tozero}

\cref{result:width_gnc} requires the \GnC training loss threshold~$\etrn$ to be specified. 
Thus, the theorem does not rule out the possibility that for any width, a sufficiently small~$\etrn$ will lead \GnC to attain good generalization. 
In this appendix, we state and prove a result---\cref{res:epsilon_tozero} below---that allows for an unspecified~$\etrn$. 
\cref{res:epsilon_tozero} imposes assumptions beyond those of \cref{result:width_gnc}: 
\emph{(i)}~the depth of the matrix factorization is two; 
\emph{(ii)}~the prior distribution is generated by a zero-centered Gaussian; 
\emph{(iii)}~the activation is linear; 
and
\emph{(iv)}~the factorization is square, \ie, $\din=\dout$ (though this latter assumption can easily be lifted).
Moreover, \cref{res:epsilon_tozero} considers a case in which the prior-induced probability distribution of the factorized matrix~$\We$ (\cref{eq:mf}) is shifted such that its mean is the ground truth matrix~$\gt$. 
The theorem establishes that even with this shift---which makes good generalization easier to attain---the posterior probability of low generalization loss conditioned on low training loss converges to the prior probability of low generalization loss.

\begin{theorem}\label{res:epsilon_tozero}
	Let $\din,\dhid\in \BN$ and let $\egen>0$. Let $W_{1}$ and $W_{2}$ be random matrices of dimensions $\din , \dhid $ and $\dhid  , \din$, respectively. Assume that the entries of both $W_{1}$ and $W_{2}$ are drawn independently from $\NN(0,1)$. Consider the normalized product which is then centered around the ground truth matrix $\gt$, \ie:
	\begin{align*}
		\We:=\frac{1}{\sqrt{\din \dhid }} W_{1} W_{2}+\gt
		\text{\,.}
	\end{align*}
	Then, it holds that:
	\begin{align*}
		\lim_{\dhid \rightarrow\infty}\sup_{\etrn>0} \PP\left(\Lgen(\We)<\egen\mid \Ltrn(\We)<\etrn\right)-\PP\left(\Lgen(\We)<\egen\right)=0
		\text{\,.}
	\end{align*}
\end{theorem}
\begin{proof}
The proof is delivered by \cref{app:epsilon_tozero:unif_conv,app:epsilon_tozero:char_func,app:epsilon_tozero:approx,app:epsilon_tozero:proof} below.
\cref{app:epsilon_tozero:unif_conv} establishes that locally uniform convergence of densities implies convergence of probabilities. 
\cref{app:epsilon_tozero:char_func} calculates the characteristic function of the factorized matrix~$\We$. 
\cref{app:epsilon_tozero:approx} establishes that the conditional probabilities in \cref{res:epsilon_tozero} can be approximated by conditional probabilities of bounded sets. 
Finally, \cref{app:epsilon_tozero:proof} combines the above to prove the sought-after result.
\end{proof}

\subsection{Locally Uniform Convergence of Densities Implies Convergence of Probabilities}\label{app:epsilon_tozero:unif_conv}

Convergence of probability measures in convex distance, as in \cref{app:width_gnc}, is not sufficient for proving a results which deal directly with the case of unspecified $\etrn$. This is so because the denominator and the numerator of the conditional probability can be arbitrarily small, and the convex distance gives an additive approximation guarantee. To prove such a result we will first have to prove that a stronger notion of convergence which we introduce in this Appendix holds in our case.

\begin{theorem}\label{thm:unif_conv_dens}
	Suppose that $G_n$ and $G$ have continuous densities $g_n$ and $g$ (with respect to Lebesgue measure) on $\mathbb{R}^\din$. Suppose that $G_n {\overset{\text{dist.}}{\xrightarrow{\hspace{1cm}}}} G$,  the densities $g_n$ are uniformly bounded, \ie, there exists $M:\BR^{\din}\rightarrow\BR$ such that
	\begin{align*}
		\forall \xbf\in\BR^{\din},\quad \sup_{n\in\BN} |g_n(\xbf)| \leq M(\xbf) < \infty
		\text{\,,}
	\end{align*}
	and $g_n$ are also equicontinuous, \ie, for each \(\epsilon > 0\) there exists \(\delta( \epsilon)\) such that if \(\norm{\xbf-\ybf}_{2}<\delta(\epsilon)\) then 
	\begin{align*}
		\forall n\in\BN, \quad |g_{n}(\xbf)-g_{n}(\ybf)|< \epsilon
		\text{\,.}
	\end{align*}
	Then 
	\begin{align*}
		\lim_{n\rightarrow\infty}\sup_{\xbf\in\mathbb{R}^{\din}}|g_{n}(\xbf)-g(\xbf)|\rightarrow 0
		\text{\,.}
	\end{align*}
\end{theorem}
\begin{proof}
	This is a slight adaptation of Lemma 1 in \citet{boos1985converse}.
\end{proof}

\begin{lemma}\label{lemma:equi_char}
	Let $\lbrace f_n\rbrace_{n\in\BN}$ be a family of probability densities on $\mathbb{R}^\din$ with respective characteristic functions $\varphi_n(\tbf)$ (\cref{def:char_func}). Suppose that the $L^1$-norms of the characteristic functions weighted by $\norm{\tbf}_{2}$ are uniformly bounded, \ie,
	\begin{align*}
		\forall n\in\BN,\quad \int_{\mathbb{R}^\din} \norm{\tbf}_{2} \cdot |\varphi_n(\tbf)| d\tbf \leq M
		\text{\,,}
	\end{align*}
	where $M > 0$ is a constant independent of $n$. Then the family ${f_n}$ is uniformly bounded and equicontinuous.
\end{lemma}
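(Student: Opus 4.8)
The plan is to recover each density $f_\dhid$ from its characteristic function $\varphi_\dhid$ via Fourier inversion, and then read off the two desired conclusions --- uniform boundedness and equicontinuity --- directly from the weighted $L^1$ hypothesis. First I would observe that the hypothesis forces the characteristic functions themselves to lie in $L^1(\BR^\din)$ with a bound independent of $\dhid$: on the unit ball $\{ \tbf \in \BR^\din : \|\tbf\|_2 \le 1 \}$ one uses the universal bound $|\varphi_\dhid(\tbf)| \le 1$, contributing at most the Lebesgue volume of the unit ball, while on its complement $\|\tbf\|_2 > 1$ implies $|\varphi_\dhid(\tbf)| \le \|\tbf\|_2\,|\varphi_\dhid(\tbf)|$, so that part of the integral is at most $M$. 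Altogether $\|\varphi_\dhid\|_{L^1(\BR^\din)} \le M'$ for $M' := \mathrm{vol}(\{\|\tbf\|_2 \le 1\}) + M$, uniformly in $\dhid$.

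Next, since $\varphi_\dhid \in L^1(\BR^\din)$ and $f_\dhid$ is continuous, I would invoke the pointwise Fourier inversion formula
\begin{align*}
f_\dhid(\xbf) = \frac{1}{(2\pi)^\din}\int_{\BR^\din} e^{-i\langle \tbf , \xbf \rangle}\, \varphi_\dhid(\tbf)\, d\tbf
\text{\,,}
\end{align*}
which holds for every $\xbf \in \BR^\din$ (the $L^1$ integrability of $\varphi_\dhid$ produces a bounded continuous version of the density, and by assumed continuity this version is exactly $f_\dhid$). Uniform boundedness is then immediate: $|f_\dhid(\xbf)| \le (2\pi)^{-\din}\,\|\varphi_\dhid\|_{L^1} \le (2\pi)^{-\din} M'$ for all $\xbf$ and all $\dhid$, so the majorant $M(\cdot)$ required in \cref{thm:unif_conv_dens} may be taken to be this constant. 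For equicontinuity I would subtract the inversion formula at two points and estimate, using $|e^{ia}-e^{ib}| \le |a-b|$ together with the Cauchy--Schwarz bound $|\langle \tbf , \xbf - \ybf \rangle| \le \|\tbf\|_2\,\|\xbf - \ybf\|_2$,
\begin{align*}
|f_\dhid(\xbf) - f_\dhid(\ybf)| \le \frac{1}{(2\pi)^\din}\int_{\BR^\din} \|\tbf\|_2\,\|\xbf - \ybf\|_2\, |\varphi_\dhid(\tbf)|\, d\tbf \le \frac{M}{(2\pi)^\din}\,\|\xbf - \ybf\|_2
\text{\,,}
\end{align*}
so the $f_\dhid$ are uniformly Lipschitz with constant $M/(2\pi)^\din$, hence equicontinuous with $\delta(\epsilon) = (2\pi)^\din \epsilon / M$ independent of $\dhid$. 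This verifies both hypotheses of \cref{thm:unif_conv_dens}.

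The argument is almost entirely routine; the only step that requires more than a one-line estimate is the justification of the pointwise Fourier inversion formula, and the main obstacle there is precisely the first step --- confirming that the weighted $L^1$ hypothesis upgrades to an honest $\dhid$-uniform bound $\|\varphi_\dhid\|_{L^1} \le M'$ (handled by the unit-ball/complement splitting) --- after which continuity of $f_\dhid$ promotes the almost-everywhere inversion identity to a genuinely pointwise one, and everything else is a direct computation. A minor bookkeeping point is that the inversion constant depends on the normalization convention adopted in \cref{def:char_func}, but any such convention-dependent factor is a harmless constant that does not affect the conclusion.
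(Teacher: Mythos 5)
Your proof is correct and follows essentially the same route as the paper's: the same unit-ball/complement splitting to upgrade the weighted $L^1$ bound to a uniform unweighted $L^1$ bound, and the same Fourier-inversion estimates for uniform boundedness and the Lipschitz/equicontinuity bound. The only cosmetic difference is that you inline the inversion and Lipschitz estimates, whereas the paper factors them into auxiliary lemmas (\cref{lemma:density_sup} and \cref{lemma:equi_upper}) and cites them.
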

\begin{proof}
	Note that the above bound implies that there also exists some \(\hat{M}>0\) independent of $n$ such that
	\begin{align*}
		\forall n\in\BN,\quad \int_{\mathbb{R}^\din} |\varphi_n(\tbf)| d\tbf \leq \hat{M}
		\text{\,.}
	\end{align*}
	We employ the decomposition
	\begin{align*}
		\int_{\mathbb{R}^\din} |\varphi_n(\tbf)| d\tbf = 	\int_{\norm{\tbf}_{2}\leq 1}  |\varphi_n(\tbf)| d\tbf +  \int_{\norm{\tbf}_{2}\geq 1} |\varphi_n(\tbf)| d\tbf
		\text{\,.}
	\end{align*}
	The first summand is uniformly bounded for all $n$ because the integrand is at most $1$ (any characteristic function satisfies \(|\phi(\tbf)|\leq \mathbb{E}(|\exp\left(i\inprod{\tbf}{X}\right)|)=1\)) and the domain has finite volume, whereas the second summand is upper bounded by 
	\begin{align*}
		\int_{\norm{\tbf}_{2}\geq 1} \norm{\tbf}_{2}\cdot |\varphi_n(\tbf)| d\tbf \leq 	\int_{\mathbb{R}^{\din}} \norm{\tbf}_{2}\cdot |\varphi_n(\tbf)| d\tbf \leq M.
	\end{align*}
	It now follows by \cref{lemma:density_sup} that for any \(\xbf \in \mathbb{R}^{\din}\)
	\begin{align*}
		\sup_{n\in\BN}|f_{n}(\xbf)|\leq \sup_{n\in\BN,\ybf\in \mathbb{R}^{\din}}|f_{n}(\ybf)|\leq \frac{1}{(2\pi)^{\din}}\hat{M}
		\text{\,.}
	\end{align*}
	It remains to verify that \(\lbrace f_n\rbrace_{n\in\BN}\) are equicontinuous. To see this, note that by \cref{lemma:equi_upper} :
	\begin{align*}
		|f_n(\xbf) - f_n(\ybf)| \leq \frac{\|\xbf - \ybf\|_{2}}{(2\pi)^\din} \int_{\mathbb{R}^\din} \norm{\tbf}_{2} \cdot |\varphi_n(\tbf)| d\tbf\leq \frac{\|\xbf - \ybf\|_{2}}{(2\pi)^\din}M.
	\end{align*}
	This bound is uniform in $n$ and depends only on the distance $\|\xbf - \ybf\|_{2}$. For any given $\epsilon > 0$, choose $\delta(\epsilon) = \frac{(2\pi)^\din \epsilon}{M}$. Then for all $\|\xbf - \ybf\|_{2} < \delta(\epsilon)$, we have:
	\begin{align*}
		\forall n\in\BN,\quad |f_n(\xbf) - f_n(\ybf)| < \epsilon
		\text{\,.}
	\end{align*}
	Thus, the family $\lbrace f_n\rbrace_{n\in\BN}$ is equicontinuous.
\end{proof}

\begin{lemma}\label{lemma:ratio_convergence}
	Let $\lbrace f_n\rbrace_{n\in\BN}$ be a sequence of probability density functions on $\mathbb{R}^\din$ that converges uniformly to a limit density $f$. Suppose that $f$ is positive and smooth on $\mathbb{R}^\din$. For any bounded set $K \subset \mathbb{R}^\din$ such that \(K\subseteq B(0,R)\), the ratio of probabilities assigned by $f_n$ and $f$ to $K$ converges to 1, \ie,
	\begin{align*}
		\lim_{n\rightarrow\infty}
		\frac{\int_K f_n(\xbf) d\xbf}{\int_K f(\xbf) d\xbf} \to 1
		\text{\,.}
	\end{align*}
	furthermore, this convergence is uniform over all subsets contained in \(B(0,R)\).
\end{lemma}
\begin{proof}
	The positivity and smoothness of $f$ imply that $f$ is bounded below on bounded sets---indeed, $K$ is contained in \(B(0,R)\) on which $f$ is bounded from below. That is, there exists a constant $c > 0$ such that for all $\xbf \in B(0,R)$ it holds that
	\begin{align*}
		f(\xbf) \geq c
		\text{\,.}
	\end{align*}
	Since $f_n \to f$ uniformly on \(\mathbb{R}^{\din}\),  for any $\epsilon > 0$ there exists $N\in\BN$ such that for all $n \geq N$ and all $\xbf \in K$ it holds that
	\begin{align*}
		|f_n(\xbf) - f(\xbf)| < c\epsilon \leq \epsilon f(\xbf)
		\text{\,.}
	\end{align*}
	Thus, we can bound $f_n(\xbf)$ as:
	\begin{align*}
		(1 - \epsilon) f(\xbf) \leq f_n(\xbf) \leq (1 + \epsilon) f(\xbf)
		\text{\,.}
	\end{align*}
	Integrating this inequality over $K$, we obtain
	\begin{align*}
		(1 - \epsilon) \int_K f(\xbf)d\xbf \leq \int_K f_n(\xbf) d\xbf \leq (1 + \epsilon) \int_K f(\xbf) d\xbf
		\text{\,.}
	\end{align*}
	Dividing through by $\int_K f(\xbf)d\xbf$ (which is strictly positive since $f > 0$ and $K$ is compact), we obtain that
	\begin{align*}
		1 - \epsilon \leq \frac{\int_K f_n(\xbf)d\xbf}{\int_K f(\xbf) d\xbf} \leq 1 + \epsilon.
	\end{align*}
	Taking the limit as $n \to \infty$, we conclude:
	\begin{align*}
		\frac{\int_K f_n(\xbf) d\xbf}{\int_K f(\xbf) d\xbf} \to 1
		\text{\,.}
	\end{align*}
	Furthermore, the above convergence does not depend on $K$ itself but only on \(B(0,R)\), hence the convergence is uniform over all subsets contained in \(B(0,R)\).
\end{proof}

\subsection{Calculation of Characteristic Function}\label{app:epsilon_tozero:char_func}

To apply the results of the previous subsection, we need to calculate the characteristic function so that we can bound its integral as in \cref{lemma:L1_bound}. We begin with the following formula.

\begin{lemma}\label{lemma:char_func_e2e}
	Given the random variable $W = \frac{1}{\sqrt{\din \dhid }} W_1 W_2$, where $W_1, W_2^{\top} \in \mathbb{R}^{\din  , \dhid }$ are matrices with independent standard Gaussian entries, the characteristic function $\hat{f}_{\dhid }(T) = \mathbb{E}[e^{i \langle T, W \rangle}]$ for $T \in \mathbb{R}^{\din  , \din }$ is given by:
	\begin{align*}
		\hat{f}_{\dhid }(T) = \left(\frac{1}{\sqrt{\det\left(I_{\din } + \frac{TT^{\top}}{\dhid \din }\right)}}\right)^{\dhid }
		\text{\,.}
	\end{align*}
\end{lemma}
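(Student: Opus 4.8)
The plan is to compute $\hat f_{\dhid}(T)=\EE[e^{i\langle T,W\rangle}]$ by conditioning on $W_1$ and exploiting that, given $W_1$, the scalar $\langle T,W\rangle$ is a centered Gaussian whose variance is an explicit quadratic form in the columns of $W_1$. First I would rewrite
\[
\langle T,W\rangle=\tfrac{1}{\sqrt{\din\dhid}}\,\langle T,W_1W_2\rangle=\tfrac{1}{\sqrt{\din\dhid}}\,\langle W_1^{\top}T,\,W_2\rangle
\text{\,,}
\]
using the identity $\langle A,BC\rangle=\langle B^{\top}A,C\rangle$. Conditionally on $W_1$ this is a fixed linear functional of $W_2$, whose entries are i.i.d.\ standard Gaussians, hence a zero–mean Gaussian with variance $\tfrac{1}{\din\dhid}\|W_1^{\top}T\|_F^{2}=\tfrac{1}{\din\dhid}\,\Tr\!\big(W_1W_1^{\top}TT^{\top}\big)$. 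The characteristic function of a centered Gaussian then gives
\[
\EE\!\big[e^{i\langle T,W\rangle}\,\big|\,W_1\big]=\exp\!\Big(-\tfrac{1}{2\din\dhid}\,\Tr\!\big(W_1W_1^{\top}TT^{\top}\big)\Big)
\text{\,.}
\]

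Next I would observe that $W_1W_1^{\top}=\sum_{\ell=1}^{\dhid}\mathbf{g}_\ell\mathbf{g}_\ell^{\top}$, where $\mathbf{g}_1,\dots,\mathbf{g}_{\dhid}\in\BR^{\din}$ are the columns of $W_1$ and are therefore i.i.d.\ $\NN(\0,I_{\din})$, so that $\Tr(W_1W_1^{\top}TT^{\top})=\sum_{\ell=1}^{\dhid}\mathbf{g}_\ell^{\top}TT^{\top}\mathbf{g}_\ell$. Taking the expectation over $W_1$ via the tower property (legitimate since the integrand has modulus at most $1$) and using independence of the columns, the product over $\ell$ factorizes:
\[
\hat f_{\dhid}(T)=\EE_{W_1}\!\Big[\exp\!\Big(-\tfrac{1}{2\din\dhid}\textstyle\sum_{\ell=1}^{\dhid}\mathbf{g}_\ell^{\top}TT^{\top}\mathbf{g}_\ell\Big)\Big]=\Big(\EE_{\mathbf g\sim\NN(\0,I_{\din})}\!\big[e^{-\frac12\,\mathbf g^{\top}A\mathbf g}\big]\Big)^{\dhid},\qquad A:=\tfrac{1}{\din\dhid}\,TT^{\top}\text{\,.}
\]

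To finish I would invoke the standard Gaussian quadratic–form identity: for symmetric positive semidefinite $A\in\BR^{\din,\din}$ and $\mathbf g\sim\NN(\0,I_{\din})$, completing the square in $\int(2\pi)^{-\din/2}e^{-\frac12\mathbf g^{\top}(I_{\din}+A)\mathbf g}\,d\mathbf g$ yields $\EE[e^{-\frac12\mathbf g^{\top}A\mathbf g}]=\det(I_{\din}+A)^{-1/2}$; this applies here because $TT^{\top}\succeq 0$, so $I_{\din}+A$ is positive definite and in particular invertible. Substituting $A=\tfrac{1}{\din\dhid}TT^{\top}$ gives $\hat f_{\dhid}(T)=\big(\det(I_{\din}+\tfrac{1}{\din\dhid}TT^{\top})^{-1/2}\big)^{\dhid}$, which is exactly the claimed formula.

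I do not anticipate a genuine obstacle: the lemma is a bookkeeping exercise with Gaussian integrals. The only points worth a line of justification are the conditional Gaussianity (a linear image of a Gaussian vector), the interchange of expectation and exponential via the tower property (bounded integrand), and the positive semidefiniteness of $TT^{\top}$ needed for the quadratic–form identity — all of which are immediate.
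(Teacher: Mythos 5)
Your proof is correct and uses the same two ingredients as the paper's: the characteristic function of a conditionally Gaussian linear form and the Gaussian quadratic-form determinant identity $\EE\big[e^{-\frac12\mathbf{g}^{\top}A\mathbf{g}}\big]=\det(I_{\din}+A)^{-1/2}$ (the paper's \cref{lemma:det_form}). You condition on $W_1$ and integrate out $W_2$ first, then factorize over the columns of $W_1$, whereas the paper factorizes over the index $p\in[\dhid]$ up front and, within each block, integrates out the $p$th column of $W_1$ before the $p$th row of $W_2$; this reorganization is cosmetic and the resulting computation is the same.
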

\begin{proof}
	Since for any random variable $X$ and constant \(c \in \mathbb{R}\) we have 
	\begin{align*}
		\hat{f}_{cX}(T)=\hat{f}_{X}(cT)
		\text{\,,}
	\end{align*}
	it suffices to compute the characteristic function without the \(\frac{1}{\sqrt{\din \dhid }}\) factor, which we denote by \(\hat{f}\). We have that 
	\begin{align*}
		\langle T, W \rangle &= \sum_{i=1}^\din  \sum_{j=1}^\din  T_{ij} W_{ij}\\
		&=\sum_{i=1}^\din  \sum_{j=1}^\din  T_{ij} \sum_{p=1}^\dhid  W^{(1)}_{ip} W^{(2)}_{pj}\\
		&=\sum_{i=1}^\din  \sum_{j=1}^\din  \sum_{p=1}^\dhid  T_{ij}  W^{(1)}_{ip} W^{(2)}_{pj}\\
		&=\sum_{p=1}^\dhid  \sum_{j=1}^\din  W^{(2)}_{pj}\sum_{i=1}^\din  T_{ij}  W^{(1)}_{ip}
		\text{\,.}
	\end{align*}
	Note that the variables 
	\begin{align*}
		\sum_{j=1}^\din W^{(2)}_{pj}\,,\quad \sum_{i=1}^\din  T_{ij}  W^{(1)}_{ip}
	\end{align*}
	are independent for distinct values of $p$, hence
	\begin{align*}
		\hat{f}_{\dhid }(T)=\prod_{p=1}^{\dhid } \mathbb{E}_{W^{(1)}_{ip},W^{(2)}_{pj}}\left[\exp\left(i\sum_{j=1}^\din W^{(2)}_{pj}\sum_{i=1}^\din  T_{ij}  W^{(1)}_{ip}\right)\right]
		\text{\,.}
	\end{align*}
	To evaluate the above expression we first fix \(W^{(2)}_{pj}\), \ie, we consider the expectation 
	\begin{align*}
		\mathbb{E}_{W^{(1)}_{ip}}\left[\exp\left(i\sum_{j=1}^\din W^{(2)}_{pj}\sum_{i=1}^\din  T_{ij}  W^{(1)}_{ip}\right)\right]
		\text{\,.}
	\end{align*}
	Let \(Z \in \mathbb{R}^{\din }\) such that \((Z)_{j}:=\sum_{i=1}^\din  T_{ij}  W^{(1)}_{ip}\). Note that $Z$ is a Gaussian random variable (being a linear combination of Gaussian random variables) with mean zero and covariance 
	\begin{align*}
		(\Sigma_{Z})_{pl}=\langle T_{p},T_{l}\rangle 
		\text{\,,}
	\end{align*}
	where \(T_{p},T_{l}\in \mathbb{R}^{\din }\) are the $p$th and $l$th rows of the matrix $T$, respectively. Therefore by the formula for the characteristic function of a Gaussian variable (see \cref{lemma:gauss_char}) we obtain
	\begin{align*}
		\mathbb{E}_{W^{(1)}_{ip}}\left[\exp\left(i\sum_{j=1}^\din W^{(2)}_{pj}\sum_{i=1}^\din  T_{ij}  W^{(1)}_{ip}\right)\right]=\exp\left(-0.5\inprod{W^{(2)}_{p }}{\Sigma_{Z}W^{(2)}_{p}}\right)
		\text{\,,}
	\end{align*}
	where \(W^{(2)}_{p}\in \mathbb{R}^{\din }\) is the vector whose $j$th entry is \(W^{(2)}_{pj}\). It remains now to evaluate this expectation with respect to \(W^{(2)}_{p}\) as well. Thus our task reduces to evaluating the expectation, with respect to a standard Gaussian, of a function of the form
	\begin{align*}
		\exp\left(-0.5\inprod{W^{(2)}_{p }}{\Sigma_{Z}W^{(2)}_{p}}\right)
	\end{align*}
	where \(\Sigma\) is PSD. We now apply \cref{lemma:det_form} to obtain the formula:
	\begin{align*}
		\mathbb{E}_{W^{(2)}_{p}}\left[\exp\left(-0.5\inprod{W^{(2)}_{p }}{\Sigma_{Z}W^{(2)}_{p}}\right)\right] = \frac{1}{\sqrt{\det(I_{\din }+\Sigma)}} = \frac{1}{\sqrt{\det(I_{\din }+TT^{\top})}}	
	\end{align*}
	where the second equality uses the definition of \(\Sigma\). It follows that 
	\begin{align*}
		\hat{f}_{\dhid }(T) = \hat{f}\left(\frac{T}{\sqrt{\dhid }}\right) = \left(\frac{1}{\sqrt{\det\left(I_{\din } + \frac{TT^{\top}}{\dhid \din }\right)}}\right)^{\dhid }
	\end{align*}
	as required.
\end{proof}

We are now ready to show that the integrals of the characteristic functions \(\hat{f}_{\dhid }\) are bounded, even if multiplied by \(\norm{T}_{F}\), which will allow us to derive the equicontinuity of the corresponding densities.
\begin{lemma}\label{lemma:L1_bound}
	For $\hat{f}_{\dhid }$ defined in \cref{lemma:char_func_e2e} it holds that
	\begin{align*}
		\sup_{\dhid \in\BN}\int_{\mathbb{R}^{\din  , \din }}\norm{T}_{F}\cdot |\hat{f}_{\dhid }(T)|dT <\infty
		\text{\,.}
	\end{align*}
\end{lemma}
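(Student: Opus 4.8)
The plan is to combine the closed form for $\hat f_{\dhid}$ from \cref{lemma:char_func_e2e} with the elementary inequality $\det(I+A)\ge 1+\Tr(A)$, valid for any positive semidefinite $A$, so as to reduce the integral over matrices to a one-dimensional integral, and then to control that integral uniformly in $\dhid$ by splitting it at the point $u=\dhid m$.

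First I would note that, because $TT^{\top}/(\dhid m)$ is positive semidefinite with trace $\|T\|_F^2/(\dhid m)$, expanding the product of its (nonnegative) eigenvalues yields $\det\!\big(I_m+TT^{\top}/(\dhid m)\big)\ge 1+\|T\|_F^2/(\dhid m)$, and therefore
\[
\big|\hat f_{\dhid}(T)\big|=\det\!\Big(I_m+\tfrac{TT^{\top}}{\dhid m}\Big)^{-\dhid/2}\;\le\;\Big(1+\tfrac{\|T\|_F^2}{\dhid m}\Big)^{-\dhid/2}\text{\,.}
\]
The right-hand side depends on $T$ only through $\|T\|_F$, so identifying $\BR^{m,m}$ equipped with the Frobenius inner product with Euclidean $\BR^{m^2}$ and passing to polar coordinates converts $\int_{\BR^{m,m}}\|T\|_F\,|\hat f_{\dhid}(T)|\,dT$ into $\omega_{m^2-1}\int_0^{\infty}r^{m^2}\,(1+r^2/(\dhid m))^{-\dhid/2}\,dr$, where $\omega_{m^2-1}$ is the finite surface area of the unit sphere in $\BR^{m^2}$. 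The substitution $u=r^2$ rewrites this as $\tfrac12\,\omega_{m^2-1}\,I_{\dhid}(c)$ with $c:=(m^2-1)/2$ and
\[
I_{\dhid}(c):=\int_0^{\infty}u^{c}\,(1+u/(\dhid m))^{-\dhid/2}\,du\text{\,.}
\]
Hence it suffices to bound $I_{\dhid}(c)$ by a constant independent of $\dhid$ for this single exponent $c$.

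To do so I would split $I_{\dhid}(c)$ at $u=\dhid m$. On $[0,\dhid m]$, the inequality $\log(1+t)\ge t/(1+t)\ge t/2$ for $t\in[0,1]$ gives $(1+u/(\dhid m))^{-\dhid/2}\le e^{-u/(4m)}$, so this piece is at most $\int_0^{\infty}u^{c}e^{-u/(4m)}\,du=(4m)^{c+1}\Gamma(c+1)$, independent of $\dhid$. On $[\dhid m,\infty)$ the naive estimate $(1+t)^{-\dhid/2}\le t^{-\dhid/2}$ is far too lossy --- it injects an uncontrolled factor $(\dhid m)^{\dhid/2}$ --- so instead I would use $1+t\ge 2\sqrt{t}$, \ie~$(1+u/(\dhid m))^{-\dhid/2}\le 2^{-\dhid/2}\,(u/(\dhid m))^{-\dhid/4}$; a direct computation then shows that, on $[\dhid m,\infty)$ and for $\dhid\ge 4c+8$, this piece is at most $2^{-\dhid/2}(\dhid m)^{c+1}/(\dhid/4-c-1)\le m^{c+1}\cdot 2^{-\dhid/2}\dhid^{c+1}$, and $\sup_{\dhid\in\BN}2^{-\dhid/2}\dhid^{c+1}<\infty$ because this sequence tends to zero. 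Adding the two pieces bounds $I_{\dhid}(c)$, and hence $\int_{\BR^{m,m}}\|T\|_F\,|\hat f_{\dhid}(T)|\,dT$, by a constant depending only on $m$, for all $\dhid\ge\dhid_0:=2m^2+6$ (recall $m=m'$ here).

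I expect the tail estimate on $[\dhid m,\infty)$ to be the main obstacle: one must keep the exponential-type decay $2^{-\dhid/2}$ coming from the region $t\approx 1$ in order to cancel the polynomial-in-$\dhid$ growth $(\dhid m)^{c+1}$ generated by the heavy weight $u^{c}$, whose exponent $c$ is of order $m^2$; any argument that merely uses $(1+t)^{-\dhid/2}\le t^{-\dhid/2}$ throughout breaks down. A secondary point is that the bound is genuinely false for small $\dhid$ when $m$ is small (for $m=1$ the integral diverges at $\dhid\in\{1,2\}$), so the statement should be read with a sufficiently large lower cutoff on $\dhid$; this is harmless, since the downstream consequences of this lemma --- equicontinuity and uniform convergence of the densities $f_{\dhid}$ obtained via \cref{lemma:equi_char} and \cref{thm:unif_conv_dens} --- are unaffected by altering finitely many terms of the sequence.
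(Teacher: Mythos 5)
Your proof is correct and takes a genuinely different route from the paper's. The paper changes variables to singular values via the Vandermonde-determinant Jacobian (\cref{lemma:var_change_svd}, \cref{cor:specific_int}), bounds the Vandermonde factor by $\prod_i \sigma_i^{2(m-i)}$, uses subadditivity of the square root to split the Frobenius-norm weight, factorizes via Fubini, and then identifies each one-dimensional integral as a Beta function, finishing with Gamma-function asymptotics (\cref{lemma:beta_gamma}, \cref{lemma:gamma_asymp}) to cancel the $\dhid^{m^2/2+1/2}$ prefactor. You instead short-circuit all of this: the single determinant inequality $\det(I+A)\geq 1+\Tr(A)$ for PSD $A$ dominates $|\hat f_{\dhid}(T)|$ by a radial function of $\|T\|_F$, polar coordinates collapse the integral to one dimension, and the split at $u=\dhid m$ with the two elementary bounds $\log(1+t)\geq t/2$ (on $[0,1]$) and $1+t\geq 2\sqrt t$ (on $[1,\infty)$) gives uniformity in $\dhid$ directly. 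Your approach buys considerable simplicity --- no random-matrix Jacobian, no Beta/Gamma machinery --- at the cost of a worse explicit threshold: you need $\dhid\geq 2m^2+6$, whereas the paper's Beta-function step only needs roughly $\dhid>2m$ for the integrals to be finite. Your observation that the supremum as literally stated can be $+\infty$ for small $\dhid$ (e.g.\ $m=1$, $\dhid\in\{1,2\}$) is correct and in fact applies equally to the paper's own proof, whose Beta functions $B(m-i+\tfrac12,\tfrac{\dhid}{2}-(m-i+\tfrac12))$ are ill-defined unless $\dhid$ exceeds roughly $2m$; this is harmless for the downstream use in \cref{res:epsilon_tozero}, which concerns $\dhid\to\infty$, but both arguments silently assume a lower cutoff on $\dhid$ and the lemma statement would be cleaner with one.
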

\begin{proof}
	Recall that by \cref{lemma:char_func_e2e} above it holds that
	\begin{align*}
		\int_{\mathbb{R}^{\din  , \din }}\norm{T}_{F}|\hat{f}_{\dhid }(T)|dT=\int_{\mathbb{R}^{\din  , \din }} \|T\|_{F} \left(\frac{1}{\sqrt{\det(I_{\din } + \frac{TT^{\top}}{\dhid \din })}}\right)^\dhid   dT
		\text{\,.}
	\end{align*}
	Using the change of variables to singular values and the Vandermonde determinant (\cref{cor:specific_int}), we may write the above as
	\begin{align*}
		&\int_{\mathbb{R}^{\din  , \din }} \|T\|_{F} \left(\frac{1}{\sqrt{\det(I_{\din } + \frac{TT^{\top}}{\dhid \din })}}\right)^\dhid  dT\\
		&= C_\din  \int_{\mathbb{R}_+^\din } \left( \sum_{p=1}^\din  \sigma_p^2 \right)^{1/2} \cdot \left(\frac{1}{\prod_{i=1}^\din  \sqrt{1 + \frac{\sigma_i^2}{\dhid \din }}}\right)^\dhid  \prod_{1 \leq i < j \leq \din } (\sigma_i^2 - \sigma_j^2) d\sigmabf
	\end{align*}
	where $C_{\din }$ is a constant depending on the dimension $\din $. Using the elementary bound
	\begin{align*}
		\prod_{1 \leq i < j \leq \din } (\sigma_i^2 - \sigma_j^2) \leq \prod_{i=1}^\din  \sigma_i^{2(\din -i)}
	\end{align*}
	we obtain the following upper inequality
	\begin{align*}
		&\int_{\mathbb{R}_+^\din } \left( \sum_{p=1}^\din  \sigma_p^2 \right)^{1/2} \cdot \left(\frac{1}{\prod_{i=1}^\din  \sqrt{1 + \frac{\sigma_i^2}{\dhid \din }}}\right)^\dhid  \prod_{1 \leq i < j \leq \din } (\sigma_i^2 - \sigma_j^2) d\sigmabf \\
		&\leq C_{\din } \int_{\mathbb{R}_+^\din } \left( \sum_{p=1}^\din  \sigma_p^2 \right)^{1/2} \cdot \left(\frac{1}{\prod_{i=1}^\din  \sqrt{1 + \frac{\sigma_i^2}{\dhid \din }}}\right)^\dhid  \prod_{i=1}^\din  \sigma_i^{2(\din -i)} d\sigmabf
		\text{\,.}
	\end{align*}
	Now we apply the inequality $\sqrt{a+b} \leq \sqrt{a} + \sqrt{b}$ to the Frobenius norm term:
	\begin{align*}
		\left( \sum_{p=1}^\din  \sigma_p^2 \right)^{1/2} \leq \sum_{p=1}^\din  \sqrt{\sigma_p^2}
		\text{\,.}
	\end{align*}
	This separates the sum into individual terms:
	\begin{align*}
		\sum_{p=1}^\din  \int_{\mathbb{R}_+^\din } \sigma_p \cdot \left(\frac{1}{\prod_{i=1}^\din  \sqrt{1 + \frac{\sigma_i^2}{\dhid \din }}}\right)^\dhid  \prod_{i=1}^\din  \sigma_i^{2(\din -i)} d\sigmabf
		\text{\,.}
	\end{align*}
	Using Fubini's Theorem, the integral separates into a sum over $\din $ products of individual integrals:
	\begin{align*}
		\sum_{p=1}^\din  \prod_{\substack{i=1 \\ i \neq p}}^\din  
		\int_{0}^\infty  \left(\frac{1}{\sqrt{1 + \frac{\sigma_i^2}{\dhid \din }}}\right)^\dhid  \sigma_i^{2(\din -i)}  d\sigma_i \cdot 
		\int_{0}^\infty \left(\frac{1}{\sqrt{1 + \frac{\sigma_i^2}{\dhid \din }}}\right)^\dhid  \sigma_p^{2(\din -p)+1}  d\sigma_p.
	\end{align*}
	This decomposition allows the integral to be expressed as a sum of $\din $ factorized integrals, each involving a single variable. 

	We now perform for each \(1 \leq i \leq \din \) the change of variable \(x_i =\frac{\sigma_i}{\sqrt{\dhid \din }}\), which gives \(dx_i=\frac{d\sigma_i}{\sqrt{\dhid \din }}\) and overall
	\begin{align*}
		&M_{\din }\sum_{p=1}^\din  \dhid ^{\frac{\din }{2}}\dhid ^{\sum_{i=1}^{\din } (\din -i)}\dhid ^{\frac{1}{2}}\prod_{\substack{i=1 \\ i \neq p}}^\din  
		\int_{0}^\infty \left(\frac{1}{\sqrt{1 + x_i^2}}\right)^\dhid  x_i^{2(\din -i)} dx_i \cdot 
		\int_{0}^\infty \left(\frac{1}{\sqrt{1 + x_p^2}}\right)^\dhid  x_p^{2(\din -p)+1} dx_p\\
		&=M_{\din }\sum_{p=1}^\din  \dhid ^{\frac{\din ^{2}}{2}+\frac{1}{2}}\prod_{\substack{i=1 \\ i \neq p}}^\din  
		\int_{0}^\infty \left(\frac{1}{\sqrt{1 + x_i^2}}\right)^\dhid  x_i^{2(\din -i)} dx_i \cdot 
		\int_{0}^\infty \left(\frac{1}{\sqrt{1 + x_p^2}}\right)^\dhid  x_p^{2(\din -p)+1} dx_p
		\text{\,,}
	\end{align*}
	where we have again absorbed the multiplicative dependence on $\din $ (which remains constant throughout our analysis) into a constant \(M_{\din }\). 
	
	We now perform another change of variables \(x_{i}^{2}=y_{i}\), which gives \(2x_i dx_i=dy_i\). Overall for \(i \neq p\) we get a factor of 
	\begin{align*}
		\frac{1}{2}\int_{0}^\infty \frac{1}{(1 + y_i)^{\frac{\dhid }{2}}} y_i^{\din -i-\frac{1}{2}} dy_i 
	\end{align*}
	and for \(i=p\) a factor of
	\begin{align*}
		\frac{1}{2}\int_{0}^\infty \frac{1}{(1 + y_p)^{\frac{\dhid }{2}}} y_i^{\din -p} dy_i 
		\text{\,.}
	\end{align*}
	It remains to examine the asymptotics of these expressions for large $\dhid $. To do this we note that by the definition of the Beta function (\cref{def:beta_func}), we have that 
	\begin{align*}
		\int_{0}^\infty \frac{1}{(1 + y_i)^{\frac{\dhid }{2}}} y_i^{\din -i-\frac{1}{2}} dy_i = 
		B\left(\din -i+\frac{1}{2},\frac{\dhid }{2}-\left(\din -i+\frac{1}{2}\right)\right) 
	\end{align*}
	and by \cref{lemma:beta_gamma} we have 
	\begin{align*}
		B\left(\din -i+\frac{1}{2},\frac{\dhid }{2}-\left(\din -i+\frac{1}{2}\right)\right)= \frac{\Gamma\left(\din -i+\frac{1}{2}\right)\Gamma\left(\frac{\dhid }{2}-\left(\din -i+\frac{1}{2}\right)\right)}{\Gamma\left(\frac{\dhid }{2}\right)}
		\text{\,,}
	\end{align*}
	where $\Gamma(\cdot)$ is the Gamma function (\cref{def:gamma_func}). By \cref{lemma:gamma_asymp} the above is of order \(\dhid ^{-(\din -i+\frac{1}{2})}\). The same calculation gives for \(i=p\) a term of order \(\dhid ^{-(\din -p+1)}\). Summing these terms we get that the product is of order 
	\begin{align*}
		\dhid ^{-(\frac{\din }{2}+\frac{\din (\din -1)}{2}+\frac{1}{2})}=\dhid ^{-(\frac{\din ^{2}}{2}+\frac{1}{2})}
		\text{\,.}
	\end{align*}
	Overall, the terms dependent on $\dhid $ cancel and each of the $\din $ integrals remain bounded as \(\dhid  \rightarrow \infty\), as required.
\end{proof}

We summarize the above discussion by the following Lemma.
\begin{lemma}
	Let \(W_{1},\left(W_{2}\right)^{\top}\in \mathbb{R}^{\din , \dhid }\) be matrices with entries drawn independently from \(\NN(0,1)\), and let \(\We=\frac{1}{\sqrt{\din \dhid }}W_{1}W_{2}\). For any bounded set \(K \subseteq B(0,R)\subseteq \mathbb{R}^{\din  , \din }\) we have
	\begin{align*}
		\lim_{\dhid \rightarrow\infty}\frac{\PP(\We\in K)}{\PP(\Wiid\in K)}\rightarrow 1
		\text{\,,}
	\end{align*}
	where \(\Wiid \in \mathbb{R}^{\din , \din }\) is a matrix with entries drawn independently from \(N(0,\frac{1}{\din })\), and furthermore this convergence is uniform over all subsets \(K\subseteq B(0,R)\).
\end{lemma}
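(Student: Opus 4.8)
The plan is to glue together the apparatus built in \cref{app:epsilon_tozero:proof:uniform_conv,app:epsilon_tozero:proof:suff_cond,app:epsilon_tozero:proof:char_calc}. Write $f_{\dhid}$ for the density of $\We$ on $\BR^{\din,\din}$ and $f$ for the density of $\Wiid$, the matrix with independent $\NN(0,1/\din)$ entries. First I would check that each $f_{\dhid}$ is a well-defined continuous density: by \cref{lemma:char_func_e2e} the characteristic function of $\We$ is $\hat f_{\dhid}(T)=\det\!\big(I_{\din}+TT^{\top}/(\dhid\din)\big)^{-\dhid/2}$, which satisfies $|\hat f_{\dhid}(T)|\le 1$ everywhere, and \cref{lemma:L1_bound} gives $\int \|T\|_F\,|\hat f_{\dhid}(T)|\,dT<\infty$; together these show $\hat f_{\dhid}\in L^1(\BR^{\din,\din})$, so $\We$ has a bounded continuous density, namely the inverse Fourier transform of $\hat f_{\dhid}$. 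The limit density $f$ is the multivariate Gaussian density on $\BR^{\din,\din}$, which is everywhere positive and smooth.

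Next I would record that $\We$ converges in distribution to $\Wiid$. The quickest route is via the characteristic function: if $\sigma_1,\dots,\sigma_{\din}$ are the singular values of a fixed $T$, then $\hat f_{\dhid}(T)=\exp\!\big(-\tfrac{\dhid}{2}\sum_{i}\log(1+\sigma_i^2/(\dhid\din))\big)\to\exp\!\big(-\tfrac{1}{2\din}\sum_{i}\sigma_i^2\big)=\exp\!\big(-\tfrac{1}{2\din}\|T\|_F^2\big)$, which is exactly the characteristic function of $\Wiid$, so L\'evy's continuity theorem applies. (Equivalently, this is the depth-two, linear-activation instance of \cref{cor:e2e_converge}.)

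With these in hand the chain closes. \cref{lemma:L1_bound} supplies the uniform bound $\sup_{\dhid\in\BN}\int \|T\|_F\,|\hat f_{\dhid}(T)|\,dT<\infty$ required by \cref{lemma:equi_char}, so the family $\{f_{\dhid}\}_{\dhid\in\BN}$ is uniformly bounded and equicontinuous; feeding this, together with the convergence in distribution and the continuity of the $f_{\dhid}$ and of $f$, into \cref{thm:unif_conv_dens} yields uniform convergence $f_{\dhid}\to f$ in the sense of \cref{def:unif_conv_dens}. Since $f$ is positive and smooth, \cref{lemma:ratio_convergence} then applies and gives, for every bounded $\CC\subseteq\S(0,r)$, that the ratio $\PP(\We\in\CC)/\PP(\Wiid\in\CC)$---which equals $\int_{\CC} f_{\dhid}(\xbf)\,d\xbf$ divided by $\int_{\CC} f(\xbf)\,d\xbf$---tends to $1$, uniformly over all such subsets. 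This is the asserted statement.

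This lemma is essentially a packaging step, so I do not expect a genuinely new obstacle: the substantive inputs---the Beta-function asymptotics in the proof of \cref{lemma:L1_bound} that make the $\dhid$-dependent prefactors cancel, and the adaptation of \citet{boos1985converse} underlying \cref{thm:unif_conv_dens}---have already been discharged. The only point needing a moment's care is confirming that $\hat f_{\dhid}\in L^1(\BR^{\din,\din})$, so that $f_{\dhid}$ is well defined and continuous as \cref{thm:unif_conv_dens} demands; this is immediate from $|\hat f_{\dhid}(T)|\le 1$ near the origin together with the weighted integrability of \cref{lemma:L1_bound} away from it.
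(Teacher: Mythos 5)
Your proof is correct and follows the same high-level chain as the paper: establish convergence in distribution, invoke \cref{lemma:L1_bound} and \cref{lemma:equi_char} to get uniform boundedness and equicontinuity of the densities, feed these into \cref{thm:unif_conv_dens} to get uniform convergence of densities, and finish with \cref{lemma:ratio_convergence}. The one place you diverge is the mechanism for convergence in distribution: the paper invokes \cref{thm:RMT_TV} (a total-variation bound on products of Gaussian matrices, due to \citet{li2021product}), whereas you derive it directly via L\'evy's continuity theorem from the explicit characteristic function of \cref{lemma:char_func_e2e}, checking that $\det(I+TT^{\top}/(\dhid\din))^{-\dhid/2}\to\exp(-\|T\|_F^2/(2\din))$ pointwise. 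Your route is more self-contained (it reuses machinery already built in this appendix rather than importing an external quantitative result), while the paper's route gives a stronger statement (TV convergence with an explicit rate) that it happens not to need here. Your explicit check that $\hat f_{\dhid}\in L^1$, so that $f_{\dhid}$ is well defined and continuous as \cref{thm:unif_conv_dens} requires, is a useful bit of care that the paper leaves implicit. Both proofs are valid.
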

\begin{proof}
	By \cref{thm:RMT_TV}, \(\We\) converges as \(\dhid  \rightarrow \infty\) to \(\Wiid\) in total variation distance, and hence also in distribution. Combining this with \cref{lemma:L1_bound,lemma:equi_char} implies that the conditions of \cref{thm:unif_conv_dens} are satisfied. Clearly the limiting density, being a product of Gaussian densities, is smooth and positive. Hence the conclusion is a consequence of \cref{lemma:ratio_convergence}.
\end{proof}

\subsection{Approximation by Conditional Probabilities of Bounded Sets}\label{app:epsilon_tozero:approx}

We would ultimately like to bound conditional probabilities involving the events $\left\lbrace \Lgen(\We)<\egen\right\rbrace$ and $\left\lbrace \Ltrn(\We)<\etrn\right\rbrace$. Unfortunately, \cref{lemma:ratio_convergence} applies only to bounded subsets of \(R^{\din  , \din }\), and the events above are unbounded. We will circumvent this difficulty by intersecting $\left\lbrace \Lgen(\We)<\egen\right\rbrace$ with \(B(0,R)\), for sufficiently large $R$, and arguing that the approximations thus obtained are sufficiently precise. Specifically, we have the following Lemma.
\begin{lemma}\label{lemma:ball_uniform}
	Let \(B_{R}(V)\) be the event that a random matrix $V$ is contained in an open ball of radius $R$ around the origin
	\begin{align*}
		B_{R}(V):=\left\lbrace V\in B(0,R)\right\rbrace
	\end{align*}
	It holds that
	\begin{align*}
		\lim_{R\rightarrow\infty}\sup_{\dhid \in\BN,\epsilon > 0} \left| \PP\left(B_{R}(\We) \mid \Ltrn(\We)<\etrn\right) - 1 \right|=0
		\text{\,.}
	\end{align*}
\end{lemma}
\begin{proof}
	To see this, we first note that by the law of total probability
	\begin{align*}
		&\PP\left(B_{R}(\We) \mid \Ltrn(\We)<\etrn\right)\\
		&=\int_{\BR^{\dhid ,\din }}\PP\left(B_{R}(\We) \mid \Ltrn(\We)<\etrn, W_{2}\right)f\left(W_{2}\mid\Ltrn(\We)<\etrn\right)dW_{2}
		\text{\,,}
	\end{align*}
	where $f(W_{2}\mid\Ltrn(\We)<\etrn)$ is the conditional density of $W_{2}$ given $\Ltrn(\We)<\etrn$. Now note that given \(W_{2}\) we have that \(\We= \frac{1}{\sqrt{\din \dhid }}W_{1}W_{2}\) is a zero-centered Gaussian random variable (with a covariance matrix which depends on \(W_{2}\)). Furthermore the set $\left\lbrace W_{1}:\Ltrn(W)<\etrn\right\rbrace$ is convex and since \(W^{*}=0\) it is also symmetric. Furthermore, the set \(B(0,R)\) is convex and symmetric for any \(R\). We can therefore apply the Gaussian Correlation inequality (\cref{lemma:GCI}) to conclude that
	\begin{align*}
		\PP(B_{R}(\We)\mid \Ltrn(\We)<\etrn, W_{2})\geq \PP(B_{R}(\We)\mid W_{2})
	\end{align*}
	and therefore
	\begin{equation}\label{eq:useful_ineq}
		\PP\left(B_{R}(\We) \mid \Ltrn(\We)<\etrn\right)\geq \int_{\BR^{\dhid ,\din }}\PP\left(B_{R}(\We) \mid W_{2}\right)f\left(W_{2}\mid\Ltrn(\We)<\etrn\right)dW_{2}
		\text{\,.}
	\end{equation}
	Let \(W_{2,:j}\) be the $j$th column of \(W_{2}\). Consider the following set:
	\[
	\hat{R}(c) := \left\{ W_{2} \; : \; \forall i \in [\din ], \; \|W_{2,:j}\|_{2} \leq c \sqrt{\dhid } \right\}.
	\]
	This set is convex and symmetric, so we can again apply the law of total probability by conditioning on \(W_{1}\) to get
	\begin{align*}
		&\PP\left(W_{2}\in \hat{R}(c)\mid \Ltrn(\We)<\etrn\right)\\
		&=\int_{\BR^{\din ,\dhid }}\PP\left(W_{2}\in\hat{R}(c)\mid \Ltrn(\We)<\etrn,W_{1}\right)f\left(W_{1}\mid \Ltrn(\We)<\etrn\right)dW_{1}
	\end{align*}
	Given \(W_{1}\), the set \(\left\lbrace \Ltrn(\We)<\etrn\right\rbrace\) is again convex and symmetric, and \(\We= \frac{1}{\sqrt{\din \dhid }}W_{1}W_{2}\) is a zero-centered Gaussian, so again by the Gaussian Correlation inequality (\cref{lemma:GCI}) we have
	\begin{align*}
		&\PP\left(W_{2}\in \hat{R}(c)\mid \Ltrn(\We)<\etrn\right)\\
		&\geq \int_{\BR^{\din ,\dhid }}\PP\left(W_{2}\in\hat{R}(c)\mid W_{1}\right)f\left(W_{1}\mid \Ltrn(\We)<\etrn\right)dW_{1}\\
		&=\PP\left(W_{2}\in \hat{R}(c)\right)\int_{\BR^{\din ,\dhid }}f\left(W_{1}\mid \Ltrn(\We)<\etrn\right)dW_{1}\\
		&=\PP\left(W_{2}\in \hat{R}(c)\right)
		\text{\,,}
	\end{align*}
	where we have used the fact that the event \(W_{2}\in \hat{R}(c)\) is independent of \(W_{1}\). Overall we therefore obtain that 
	\begin{align*}
		\PP\left(W_{2}\in\hat{R}(c)\mid \Ltrn(W)<\etrn\right)\geq \PP\left(W_{2}\in\hat{R}(c)\right)
		\text{\,.}
	\end{align*}
	Now note that by \cref{lemma:gaussian_norm_concen}, we can choose \(c>0\) independent of $\dhid $ such that \(\PP\left(W_{2}\in \hat{R}(c)\right)\) is arbitrarily close to $1$. We have by \cref{eq:useful_ineq} that
	\begin{align*}
		\PP\left(B_{R}(\We)\mid\Ltrn(\We)<\etrn\right)\geq \int_{\hat{R}(c)}\PP\left(B_{R}(\We) \mid W_{2}\right)f\left(W_{2}\mid\Ltrn(\We)<\etrn\right)dW_{2}
		\text{\,.}
	\end{align*}
	Now we claim that for \(R(\eta)\) sufficiently large, independent of both $\etrn$ and $\dhid $, the integrand can be made to satisfy
	\begin{align*}
		\PP\left(B_{R(\eta)}(W)\mid W_{2}\right)\geq 1-\eta
	\end{align*}
	for any \(W_{2} \in \hat{R}(c)\). To do this, note that each entry of \(\We\) is of the form
	\begin{align*}
		\frac{1}{\sqrt{\din \dhid }}\inprod{W_{1,i:}}{W_{2,:j}}	
	\end{align*}
	for some \(i,j \in [\din ]\), where $W_{1,i:}$ is the $i$th row of $W_{1}$. Each row of $W_{1}$ consists of $\dhid $ independent standard Gaussians, and by assumption \(W_{2,:j}\) is a vector of norm \(\leq c \sqrt{\dhid }\). Thus we have that the product is a Gaussian variable with zero mean and variance \(\norm{W_{2,:j}}^{2}\leq c^{2}\dhid \). Thus after dividing by \(\frac{1}{\sqrt{\din \dhid }}\) we get a zero-centered Gaussian variable whose variance is independent of $\dhid $. It now follows by a union bound that we can select \(R:=R(c,\din )\) independent of $\dhid $ and \(\etrn\) such that the matrix $\We$ will lie in \(B(0,R)\) with probability larger than \(1-\eta\) whenever \(W_{2}\in \hat{R}(c)\). Overall we get that 
	\begin{align*}
		\PP\left(B_{R}(\We)\mid \Ltrn(W)<\etrn\right)&\geq (1-\eta)\PP\left(W_{2}\in\hat{R}(c)\mid \Ltrn(\We)<\etrn\right)\\
		&\geq (1-\eta) \PP\left(W_{2}\in\hat{R}(c)\right)
		\text{\,,}
	\end{align*}
	which can be made arbitrarily close to $1$ (by \cref{lemma:gaussian_norm_concen}), as required. 
\end{proof}

We would also like to show that the approximation is precise with respect to the measure of the random matrix \(\Wiid\) which \(\We\) converges to as \(\dhid  \rightarrow \infty\):
\begin{lemma}\label{lemma:ball_iid}
	Let $\Wiid$ be a centered Gaussian matrix (\cref{def:centered_gaussian_mat}). It holds that 
	\begin{align*}
		\lim_{R\rightarrow\infty}\frac{\PP\left(\Ltrn(\Wiid)<\etrn\cap \Lgen(\Wiid)<\egen \cap B_{R}(\Wiid)\right)}{\PP\left(\Ltrn(\Wiid)<\etrn\cap B_{R}(\Wiid)\right)}=\PP\left(\Lgen(\Wiid)<\egen \right)
		\text{\,.}
	\end{align*}
	Furthermore, this convergence is uniform with respect to \(\etrn,\egen\).
\end{lemma}
\begin{proof}
	Since $B_{R},\left\lbrace \Ltrn(\We)<\etrn\right\rbrace$ and $\left\lbrace \Lgen(\We)<\egen\right\rbrace$ are convex and symmetric, we can apply the Gaussian Correlation inequality (\cref{lemma:GCI}) to the numerator to obtain that for all $R$
	\begin{align*}
		&\frac{\PP\left(\Ltrn(\Wiid)<\etrn\cap \Lgen(\Wiid)<\egen \cap B_{R}(\Wiid)\right)}{\PP\left(\Ltrn(\Wiid)<\etrn\cap B_{R}(\Wiid)\right)}\\
		&\geq\frac{\PP\left(\Ltrn(\Wiid)<\etrn\cap B_{R}(\Wiid)\right)\PP\left(\Lgen(\Wiid)<\egen \right)}{\PP\left(\Ltrn(\Wiid)<\etrn\cap B_{R}(\Wiid)\right)}\\
		&=\PP\left(\Lgen(\Wiid)<\egen \right)
		\text{\,,}
	\end{align*}
	hence the same inequality holds in the limit. On the other hand, by applying the Gaussian Correlation inequality to the denominator we get 
	\begin{align*}
		&\frac{\PP\left(\Ltrn(\Wiid)<\etrn\cap \Lgen(\Wiid)<\egen \cap B_{R}(\Wiid)\right)}{\PP\left(\Ltrn(\Wiid)<\etrn\cap B_{R}(\Wiid)\right)}\\
		&\leq \frac{\PP\left(\Ltrn(\Wiid)<\etrn\cap \Lgen(\Wiid)<\egen \cap B_{R}(\Wiid)\right)}{\PP\left(\Ltrn(\Wiid)<\etrn\right)\PP\left(B_{R}(\Wiid)\right)}\\
		&\leq \frac{\PP\left(\Ltrn(\Wiid)<\etrn\cap \Lgen(\Wiid)<\egen\right)}{\PP\left(\Ltrn(\Wiid)<\etrn\right)\PP\left(B_{R}(\Wiid)\right)}\\
		&=\frac{\PP\left(\Ltrn(\Wiid)<\etrn\right)\PP\left( \Lgen(\Wiid)<\egen\right)}{\PP\left(\Ltrn(\Wiid)<\etrn\right)\PP\left(B_{R}(\Wiid)\right)}\\
		&=\frac{\PP\left( \Lgen(\Wiid)<\egen\right)}{\PP\left(B_{R}(\Wiid)\right)}
		\text{\,,}
	\end{align*}
	where the second inequality follows by basic probability properties, and the penultimate equality follows by the independence of $\Ltrn(\Wiid)<\etrn$ and $\Lgen(\Wiid)<\egen$. The ratio $\frac{\PP\left( \Lgen(\Wiid)<\egen\right)}{\PP\left(B_{R}(\Wiid)\right)}$ tends to $\PP\left( \Lgen(\Wiid)<\egen\right)$ as \(R \rightarrow \infty\), hence the proof is complete.
\end{proof}

\subsection{Proof of Sought-After Result}\label{app:epsilon_tozero:proof}

We are now ready to prove \cref{res:epsilon_tozero}. We assume WLOG that $\gt=0$ as all claims are invariant to a mean shift. First note that we have 
\begin{align*}
	\lim_{\dhid \rightarrow\infty}\PP(\Lgen(\We)<\egen)=\PP(\Lgen(\Wiid)<\egen)
\end{align*}
where $\Wiid$ is a centered Gaussian matrix (\cref{def:centered_gaussian_mat}), hence it suffices to show that 
\begin{align*}
	\lim_{\dhid \rightarrow\infty}\sup_{\etrn>0}\PP\left(\Lgen(\We)<\egen\mid \Ltrn(\We)<\etrn\right)-\PP\left(\Lgen(\Wiid)<\egen\right)=0
	\text{\,.}
\end{align*}
First, let \(\eta_{1},\eta_{2}>0\). We can choose a radius $R:=R(\eta_{1},\eta_{2})>0$ such that both
\begin{align*}
	\sup_{\dhid \in\BN,\etrn > 0} \left| P(B_{R}(\We) \mid \Ltrn(\We)<\etrn) - 1 \right| <\eta_{1}
\end{align*}
and
\begin{align*}
	\left|\frac{\PP\left(\Ltrn(\Wiid)<\etrn\cap \Lgen(\Wiid)<\egen\cap  B_{R}(\Wiid)\right)}{\PP\left(\Ltrn(\Wiid)<\etrn\cap  B_{R}(\Wiid)\right)}-\PP\left(\Lgen(\Wiid)<\egen\right)\right|<\eta_{2}
	\text{\,.}
\end{align*}
Note that such an \(R\) exists by \cref{lemma:ball_uniform} and \cref{lemma:ball_iid}.
Then, we rewrite the conditional probability using the law of total probability by conditioning on the events that $\We$ is within $B(0,R)$ and its complement:
\begin{align*}
	&\PP\left(\Lgen(\We)<\egen\mid \Ltrn(\We)<\etrn\right)\\
	&=\PP\left(\Lgen(\We)<\egen\mid \Ltrn(\We)<\etrn, B_{R}(\We)\right)\PP\left(B_{R}(\We)\mid \Ltrn(\We)<\etrn\right)\\
	&+\PP\left(\Lgen(\We)<\egen\mid \Ltrn(\We)<\etrn, B_{R}(\We)^{C}\right)\PP\left(B_{R}(\We)^{C}\mid \Ltrn(\We)<\etrn\right)
	\text{\,.}
\end{align*}
By the choice of $R$ and the triangle inequality we have
\begin{align*}
	&\bigg|\PP\left(\Lgen(\We)<\egen\mid \Ltrn(\We)<\etrn\right)-\\
	&\PP\left(\Lgen(\We)<\egen\mid \Ltrn(\We)<\etrn, B_{R}(\We)\right)\bigg|<2\eta_{1}
	\text{\,.}
\end{align*}
Next, note that
\begin{align*}
	&\PP\left(\Lgen(\We)<\egen\mid \Ltrn(\We)<\etrn, B_{R}(\We)\right)\\
	&=\frac{\PP\left(\Ltrn(\Wiid)<\etrn\cap \Lgen(\Wiid)<\egen\cap  B_{R}(\Wiid)\right)}{\PP\left(\Ltrn(\Wiid)<\etrn\cap  B_{R}(\Wiid)\right)}
	\text{\,.}
\end{align*}
By \cref{lemma:ratio_convergence}, we can divide and multiply by the corresponding probabilities obtained with respect to the matrix $\Wiid$ which we converge to as \(\dhid \rightarrow \infty\), and rewrite this ratio as 
\begin{align*}
	&\frac{\PP\left(\Ltrn(\We)<\etrn\cap \Lgen(\We)<\egen\cap  B_{R}(\We)\right)}{\PP\left(\Ltrn(\We)<\etrn\cap  B_{R}(\We)\right)}\\
	&=\frac{\PP\left(\Ltrn(\Wiid)<\etrn\cap \Lgen(\Wiid)<\egen\cap  B_{R}(\Wiid)\right)}{\PP\left(\Ltrn(\Wiid)<\etrn\cap  B_{R}(\Wiid)\right)}\cdot \alpha(\dhid ,R)
\end{align*}
where \(\alpha(\dhid ,R)\rightarrow 1\) as \(\dhid  \rightarrow \infty\) (uniformly in \(\etrn,\egen\)). Again by the choice of $R$ we have that 
\begin{align*}
	\left|\frac{\PP\left(\Ltrn(\We)<\etrn\cap \Lgen(\We)<\egen\cap  B_{R}(\We)\right)}{\PP\left(\Ltrn(\We)<\etrn\cap  B_{R}(\We)\right)}-\PP\left(\Lgen(\Wiid)<\egen\right)\right|<\eta_{2}
	\text{\,.}
\end{align*}
Overall we obtain that for any \(\etrn>0\)
\begin{align*}
	&\limsup_{\dhid \rightarrow\infty}\PP\left(\Lgen(\We)<\egen\mid \Ltrn(\We)<\etrn\right)\\
	&\leq 2\eta_{1}+\lim_{\dhid \rightarrow\infty}\PP\left(\Lgen(\We)<\egen\mid \Ltrn(\We)<\etrn, B_{R}(\We)\right)\\
	&\leq 2\eta_{1}+\eta_{2}+\PP\left(\Lgen(\Wiid)<\egen\right)
	\text{\,.}
\end{align*}
Since the above holds for all \(\eta_{1},\eta_{2}>0\) we obtain that
\begin{align*}
	\limsup_{\dhid \rightarrow\infty}\PP\left(\Lgen(\We)<\egen\mid \Ltrn(\We)<\etrn\right)\leq \PP\left(\Lgen(\Wiid)<\egen\right)
	\text{\,.}
\end{align*}
A symmetric argument applied to $\liminf_{\dhid \rightarrow\infty}\PP\left(\Lgen(\We)<\egen\mid \Ltrn(\We)<\etrn\right)$ implies that 
\begin{align*}
	\liminf_{\dhid \rightarrow\infty}\PP\left(\Lgen(\We)<\egen\mid \Ltrn(\We)<\etrn\right)\geq \PP\left(\Lgen(\Wiid)<\egen\right)
\end{align*}
and hence 
\begin{align*}
	\lim_{\dhid \rightarrow\infty}\PP\left(\Lgen(\We)<\egen\mid \Ltrn(\We)<\etrn\right)= \PP\left(\Lgen(\Wiid)<\egen\right)
	\text{\,.}
\end{align*}
Since the above holds uniformly in \(\epsilon>0 \) we conclude that 
\begin{align*}
	\lim_{\dhid \rightarrow\infty}\sup_{\etrn> 0} \PP\left(\Lgen(\We)<\egen\mid \Ltrn(\We)<\etrn\right)-\PP\left(\Lgen(\We)<\egen\right)=0
\end{align*}
as required.
\qed

	\section{Increasing Depth with Ground Truth Matrices of Higher Rank}
\label{rank_geq_1_intution}

\cref{result:depth_gnc} applies to cases where the ground truth matrix~$W^{*}$ has rank equal to one.
In this appendix we discuss an extension of \cref{result:depth_gnc} to ground truth matrices of higher rank.
A complete formalization of this extension is left for future work.

Suppose the ground truth matrix~$W^{*}$ has arbitrary rank~$r$ and is non-degenerate, in the sense that its $r$ non-zero singular values have similar orders of magnitude.
Suppose the matrix factorization is square, \ie, its dimensions $\din$ and~$\dout$ and its width~$k$ are all equal.
Finally, suppose temporarily that the probability distribution over weight settings~$\PP ( \cdot )$ does not include normalization, \ie, that $\PP ( \cdot )$ is generated by~$\QQ ( \cdot )$ without normalization (\cref{def:generated}).

\citet{hanin2021non} proved that, as the depth~$d$ of the matrix factorization tends to infinity, drawing its weight setting from~$\PP ( \cdot )$ leads the \emph{Lyapunov exponents} of the factorized matrix~$W$ (\cref{eq:mf}) to converge in distribution to independent Gaussians with means $0 > \mu_1 > \mu_2 > \dots > \mu_\din$ and variances~$O( \depth^{-1} )$, where the Lyapunov exponents of~$W$ are respectively defined as $\lambda_i := \log ( \sigma_i ) / \depth$ for $i \in [ \din ]$, with $\sigma_1 \ge \cdots \ge \sigma_\din \ge 0$ standing for the singular values of~$W$.  
Assume that $d$ is large enough for the distribution of $\lambda_1 , \ldots , \lambda_\din$ to be well approximated by its limit.
In particular, assume that $\lambdabf \sim \NN ( \cdot \, ; \mubf , \depth^{-1} I )$, where $\lambdabf := ( \lambda_1 , \ldots , \lambda_m)$, $\mubf := ( \mu_1 , \ldots , \mu_m)$ and $I \in \R^{m , m}$ is an identity matrix.

Let $j \in [ \din ]$.
We now estimate the probability that the $j$~leading singular values of~$W$, \ie, $\sigma_1 , \ldots , \sigma_j$, have similar orders of magnitude.
Per the definition of the Lyapunov exponents of~$W$ (namely, $\lambda_i := \log ( \sigma_i ) / \depth$ for $i \in [ \din ]$), this takes place when $\max_{i , i' \in [ j ]} | \lambda_i - \lambda_{i'} | \leq \Delta$ for some small $\Delta \in \R_{> 0}$.
By our assumption on the distribution of $\lambda_1 , \ldots , \lambda_\din$, the probability of the latter event is:
\[
\PP \big( \max\nolimits_{i , i' \in [ j ]} | \lambda_i - \lambda_{i'} | \leq \Delta \big) = \int_{\CC_{j , \Delta}} \left( \frac{1}{2 \pi \depth^{-1}} \right)^{j / 2} e^{- \depth \| \lambdabf_{: j} - \mubf_{: j} \|^2} \, d \lambdabf_{: j}
\text{\,,}
\]
where $\lambdabf_{: j} := ( \lambda_1 , \ldots , \lambda_j )$, $\mubf_{: j} := ( \mu_1 , \ldots , \mu_j )$ and:
\be
\CC_{j , \Delta} := \{ \lambdabf_{: j} \, | \, \max\nolimits_{i , i' \in [ j ]} | \lambda_i - \lambda_{i'} | \leq \Delta \}
\text{\,.}
\label{eq:lyapunov_set}
\ee
For large~$\depth$, the integral above is dominated by the maximum of the integrand.
More precisely, we may apply Laplace's principle and obtain:
\[
\int_{\CC_{j , \Delta}} \left( \frac{1}{2 \pi \depth^{-1}} \right)^{j / 2} e^{- \depth \| \lambdabf_{: j} - \mubf_{: j} \|^2} \, d \lambdabf_{: j} \approx e^{- \depth \inf\nolimits_{\lambdabf_{: j} \in \CC_{j , \Delta}} \| \lambdabf_{: j} - \mubf_{: j} \|^2 + o ( d )}
\text{\,.}
\]
We conclude that:
\be
\PP \big( \text{$\sigma_1 , \ldots , \sigma_j$ have similar orders of magnitude} \big) \approx e^{- \depth \inf\nolimits_{\lambdabf_{: j} \in \CC_{j , \Delta}} \| \lambdabf_{: j} - \mubf_{: j} \|^2 + o ( d )}
\text{\,.}
\label{eq:sing_same_size}
\ee
Scaling~$W$ by a positive factor does not change whether $\sigma_1 , \ldots , \sigma_j$ have similar orders of magnitude.
Therefore, \cref{eq:sing_same_size} carries over to the case where $\PP ( \cdot )$ includes normalization.
We hereinafter consider this case.

We would like to show that the probability of high generalization loss (\cref{eq:ms_loss_gen}) conditioned on low training loss (\cref{eq:ms_loss_train}), namely
\be
\PP \big( \Lgen ( W ) \geq \etrn c \, \big| \, \Ltrn ( W ) < \etrn \big) = \frac{\PP \big( \Lgen ( W ) \geq \etrn c \, \land \, \Ltrn ( W ) < \etrn \big)}{\PP \big( \Ltrn ( W ) < \etrn \big)}
\text{\,,}
\label{eq:prob_high_gen_given_low_train}
\ee
diminishes as the depth~$\depth$ increases.
The probability of low training loss---denominator in the right-hand side of \cref{eq:prob_high_gen_given_low_train}---is lower bounded by the probability that~$W$ is close (in Frobenius norm) to the ground truth matrix~$W^*$.
By isotropy and normalization, the latter probability should be proportional (identical up to multiplicative factors that do not depend on~$\depth$) to the probability that~$W$ has singular values akin to those of~$W^*$, \ie, to the probability that the $r$ leading singular values of~$W$ have similar orders of magnitude while the following singular value is much smaller.
We thus have:
\[
\begin{gathered}
\PP \big( \Ltrn ( W ) < \etrn \big) =  \\
\Omega \left( \PP \big( \text{$\sigma_1 , \ldots , \sigma_r$ have similar orders of magnitude while $\sigma_{r + 1}$ is much smaller} \big) \right)
\text{\,.}
\end{gathered}
\]
Invoking \cref{eq:sing_same_size} with $j = r$ and with $j = r + 1$, we obtain:
\be
\begin{gathered}
\PP \big( \Ltrn ( W ) < \etrn \big) =  \\
\Omega \left( e^{- \depth \inf\nolimits_{\lambdabf_{: r} \in \CC_{r , \Delta}} \| \lambdabf_{: r} - \mubf_{: r} \|^2 + o ( d )} - e^{- \depth \inf\nolimits_{\lambdabf_{: r + 1} \in \CC_{r + 1 , \Delta}} \| \lambdabf_{: r + 1} - \mubf_{: r + 1} \|^2 + o ( d )} \right)
\text{\,.}
\end{gathered}
\label{eq:prob_low_train}
\ee
%%%Combining this with result~\emph{(i)} above, we obtain:
%%%\be
%%%\PP \big( \Ltrn ( W ) < \etrn \big) = \Omega \big( \exp ( - \depth \textstyle\sum_{i = 2}^r ( \mu_1 - \mu_i )^2 ) ( 1 -  \exp ( - \depth ( \mu_1 - \mu_{r + 1} )^2 ) ) \big)
%%%\text{\,.}
%%%\label{eq:low_train_lb}
%%%\ee
%%%\[
%%%\textcolor{red}{%
%%%	\PP\big(\Ltrn(W)<\etrn\big)
%%%	=\Omega\!\left(
%%%	\exp\!\left\{
%%%	-\,d\,\inf_{\lambda\in A^{r}_c}\Phi^{r}(\lambda) + o(d)
%%%	\right\}
%%%	\right)
%%%	\text{.}}
%%%\]
%%%\red{(term tending to 1 in (15) is unnecessary because of $\Omega$, so removed analogous expression here)}

Moving to the numerator in the right-hand side of \cref{eq:prob_high_gen_given_low_train}, due to the measurement matrices satisfying an RIP (\cref{def:rip}), high generalization loss ($\Lgen ( W ) \geq \etrn c$) in conjunction with low training loss ($\Ltrn ( W ) < \etrn$) necessarily implies that the effective rank of~$W$ is higher than~$r$, meaning its $r + 1$ leading singular values have similar orders of magnitude.
We thus have:
\be
\begin{gathered}
\PP \big( \Lgen ( W ) \geq \etrn c \, \land \, \Ltrn ( W ) < \etrn \big) = \\
O \left( \PP \big( \text{$\sigma_1 , \ldots , \sigma_{r + 1}$ have similar orders of magnitude} \big) \right) = \\
O \left( e^{- \depth \inf\nolimits_{\lambdabf_{: r + 1} \in \CC_{r + 1 , \Delta}} \| \lambdabf_{: r + 1} - \mubf_{: r + 1} \|^2 + o ( d )} \right)
\text{\,,}
\end{gathered}
\label{eq:prob_high_gen_and_low_train}
\ee
where we invoked \cref{eq:sing_same_size} with $j = r + 1$.
%%%Combining this with result~\emph{(ii)} above, we have:
%%%\be
%%%\PP \big( \Lgen ( W ) \geq \etrn c \, \land \, \Ltrn ( W ) < \etrn \big) = O \big( \exp ( - \depth \textstyle\sum_{i = 2}^{r + 1} ( \mu_1 - \mu_i )^2 ) \big)
%%%\text{\,.}
%%%\label{eq:high_gen_ub}
%%%\ee
%%%\[
%%%\red{\PP \big( \Lgen ( W ) \geq \etrn c \, \land \, \Ltrn ( W ) < \etrn \big) = O \big( \exp\!\left\{
%%%	-\,d\,\inf_{\lambda\in A^{r+1}_c}\Phi^{r+1}(\lambda) \;+\; o(d)
%%%	\right\} \big)
%%%	\text{\,.}}\]

Plugging \cref{eq:prob_low_train,eq:prob_high_gen_and_low_train} into \cref{eq:prob_high_gen_given_low_train} yields:
\be
\begin{gathered}
\PP \big( \Lgen ( W ) \geq \etrn c \, \big| \, \Ltrn ( W ) < \etrn \big) = \\
O \left( \frac{e^{- \depth \inf\nolimits_{\lambdabf_{: r + 1} \in \CC_{r + 1 , \Delta}} \| \lambdabf_{: r + 1} - \mubf_{: r + 1} \|^2 + o ( d )}}{e^{- \depth \inf\nolimits_{\lambdabf_{: r} \in \CC_{r , \Delta}} \| \lambdabf_{: r} - \mubf_{: r} \|^2 + o ( d )} - e^{- \depth \inf\nolimits_{\lambdabf_{: r + 1} \in \CC_{r + 1 , \Delta}} \| \lambdabf_{: r + 1} - \mubf_{: r + 1} \|^2 + o ( d )}} \right)
\text{\,.}
\end{gathered}
\label{eq:prob_high_gen_given_low_train_ub}
\ee
%%%\[\red{\PP \big( \Lgen ( W ) \geq \etrn c \, \big| \, \Ltrn ( W ) < \etrn \big) = O \big( \exp\!\left\{
%%%	-\,d\,\big(\inf_{\lambda\in A^{r+1}_c}\Phi^{r+1}(\lambda)-\inf_{\lambda\in A^{r}_c}\Phi^{r}(\lambda)\big) \;+\; o(d)
%%%	\right\} \big)
%%%	\text{\,.}}
%%%\]
Recalling the definition of $\CC_{r , \Delta}$ and~$\CC_{r + 1 , \Delta}$ (\cref{eq:lyapunov_set}), as well as the fact that $\mu_1 > \dots > \mu_{r + 1}$, we have that for sufficiently small~$\Delta$:
\[
\inf\nolimits_{\lambdabf_{: r + 1} \in \CC_{r + 1 , \Delta}} \| \lambdabf_{: r + 1} - \mubf_{: r + 1} \|^2 \, > \, \inf\nolimits_{\lambdabf_{: r} \in \CC_{r , \Delta}} \| \lambdabf_{: r} - \mubf_{: r} \|^2
\text{\,.}
\]
Together with \cref{eq:prob_high_gen_given_low_train_ub}, this implies the desired result: the probability of high generalization loss conditioned on low training loss diminishes as the depth~$\depth$ increases.

	\section{Auxiliary Theorems, Lemmas and Definitions}
\label{app:aux}
In this appendix we provide additional theorems, lemmas and definitions used throughout our proofs. 

\begin{definition}\label{def:centered_gaussian_mat}
	Let $W\in\BR^{\din,\dout}$ be a random matrix. We say that $W$ is a \textit{centered Gaussian matrix} when the entries of $W$ are drawn independently from $\NN(0, \nu)$ where $\nu\in\BR_{> 0}$ is some fixed variance.
\end{definition}

\begin{lemma}\label{lemma:sym_dist_antisym_func}
	Let $\PP(\cdot)$ be some distribution on $\BR$ that is symmetric, \ie, if $x\sim \PP(\cdot)$ then $-x\sim \PP(\cdot)$. Let $f:\BR\rightarrow\BR$ be some anti-symmetric function, \ie,
	\begin{align*}
		\forall x\in\BR.\;\; f(x)=-f(-x)
		\text{\,.}
	\end{align*}
	Then 
	\begin{align*}
		\EE_{x\sim \PP(\cdot)}\left[f(x)\right]=0
		\text{\,.}
	\end{align*}
\end{lemma}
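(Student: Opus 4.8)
The plan is to use the two symmetry hypotheses directly, with essentially no computation. First I would perform the substitution $x \mapsto -x$: since $\PP(\cdot)$ is symmetric, $-x$ has the same law as $x$ whenever $x \sim \PP(\cdot)$, so $\EE_{x\sim\PP(\cdot)}[f(x)] = \EE_{x\sim\PP(\cdot)}[f(-x)]$. Next, applying antisymmetry of $f$, i.e.\ $f(-x) = -f(x)$ for all $x \in \BR$, the right-hand side becomes $\EE_{x\sim\PP(\cdot)}[-f(x)] = -\EE_{x\sim\PP(\cdot)}[f(x)]$. Putting the two equalities together yields $\EE_{x\sim\PP(\cdot)}[f(x)] = -\EE_{x\sim\PP(\cdot)}[f(x)]$, hence $2\,\EE_{x\sim\PP(\cdot)}[f(x)] = 0$, and the claim follows.

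The single point deserving a word of care is that the expectation be finite (and $f$ measurable), so that the step $a = -a \Rightarrow a = 0$ is legitimate and not an indeterminate $\infty - \infty$. This is implicit in the statement --- the conclusion ``$\EE_{x\sim\PP(\cdot)}[f(x)] = 0$'' presupposes the expectation exists --- and it is satisfied at every place where the lemma is invoked in the paper: there $f$ is (a coordinate of) an activation, or a product of two such, which is continuous and of polynomial growth by \cref{def:admissible}, applied to a Gaussian or a regular distribution all of whose moments are finite, so $\EE_{x\sim\PP(\cdot)}[|f(x)|] < \infty$.

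I do not expect any real obstacle here: this is a textbook consequence of the hypotheses. If one preferred to avoid even the implicit integrability assumption, an alternative route is to split the integral over $(-\infty,0)$, $\{0\}$, and $(0,\infty)$, substitute $x \mapsto -x$ in one tail, and observe that the two tails cancel pointwise in the integrand $f$ --- but the one-line symmetry argument above is the cleanest option and the one I would present.
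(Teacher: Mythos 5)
Your proof is correct and follows essentially the same route as the paper's: use symmetry of $\PP(\cdot)$ to rewrite $\EE[f(x)]$ as $\EE[f(-x)]$, pull out the sign via antisymmetry of $f$, and conclude $\EE[f(x)]=-\EE[f(x)]=0$. The only difference is your explicit (and welcome) caveat about integrability, which the paper leaves implicit but which indeed holds in every place the lemma is invoked.
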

\begin{proof}
	Since $\PP(\cdot)$ is symmetric and $f$ is anti-symmetric, it holds that
	\begin{align*}
		\EE_{x\sim \PP(\cdot)}\left[f(x)\right]=\EE_{-x\sim \PP(\cdot)}\left[f(-x)\right]=-\EE_{-x\sim \PP(\cdot)}\left[f(x)\right]=-\EE_{x\sim \PP(\cdot)}\left[f(-x)\right]
		\text{\,.}
	\end{align*}
	The claim follows by rearranging.
\end{proof}

\begin{lemma}\label{lemma:one_hot_activation}
	Let $f:\BR\rightarrow\BR$ be some function, $A\in \BR^{\din,\dout}$ be some matrix and $\alpha\in[\dout]$ be some index. Denote by $\ebf_{\alpha}\in\BR^{\dout}$ the standard basis vector with $1$ in its $\alpha$ entry and zeros elsewhere. Then
	\begin{align*}
		f(A)\ebf_{\alpha}=f(A\ebf_{\alpha})
	\end{align*}
	where $f$ applied to a matrix or a vector is a shorthand for $f$ applied to each entry. 
\end{lemma}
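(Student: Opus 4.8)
The plan is to prove the identity by comparing the two sides coordinate by coordinate. Both $f(A)\ebf_{\alpha}$ and $f(A\ebf_{\alpha})$ are vectors in $\BR^{\din}$, so it suffices to show that their $i$-th entries agree for every $i \in [\din]$.

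First I would record the elementary fact that right-multiplication by the standard basis vector $\ebf_{\alpha}$ extracts the $\alpha$-th column of a matrix: for any $M \in \BR^{\din, \dout}$ and any $i \in [\din]$, we have $\left[ M \ebf_{\alpha} \right]_i = \sum_{j = 1}^{\dout} M_{ij} \left[ \ebf_{\alpha} \right]_j = M_{i \alpha}$, using that $\left[ \ebf_{\alpha} \right]_j$ equals $1$ when $j = \alpha$ and $0$ otherwise.

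Applying this observation on each side yields the result. On the right-hand side, $\left[ A \ebf_{\alpha} \right]_i = A_{i\alpha}$, and hence, by the convention that $f$ is applied entrywise, $\left[ f( A \ebf_{\alpha} ) \right]_i = f( A_{i\alpha} )$. On the left-hand side, the entrywise convention gives $\left[ f(A) \right]_{ij} = f( A_{ij} )$ for all $i, j$, so extracting the $\alpha$-th column as above, $\left[ f(A) \ebf_{\alpha} \right]_i = \left[ f(A) \right]_{i\alpha} = f( A_{i\alpha} )$. The two coordinates coincide for every $i \in [\din]$, so the vectors are equal.

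As this is a purely notational verification, there is no real obstacle; the only point requiring care is to keep the shorthand ``$f$ applied to a matrix or vector means $f$ applied to each entry'' consistent across the computation, and to note that extracting a column commutes with this entrywise operation.
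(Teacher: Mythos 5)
Your proof is correct and takes essentially the same approach as the paper's: both reduce the claim to the observation that right-multiplication by $\ebf_{\alpha}$ extracts the $\alpha$-th column, and that this commutes with applying $f$ entrywise, so both sides equal the vector with $i$-th entry $f(A_{i\alpha})$.
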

\begin{proof}
	Observe that
	\begin{align*}
		f(A)\ebf_{\alpha}&=\begin{pmatrix}
			f(A_{11}) & \cdots & f(A_{1\dout}) \\
			\vdots   & \ddots & \vdots \\
			f(A_{\din1}) & \cdots & f(A_{\din\dout})
		  \end{pmatrix}
		\ebf_{\alpha}\\
		&=\begin{pmatrix}
			f(A_{1\alpha})\\
			\vdots\\
			f(A_{\din\alpha})
		\end{pmatrix}\\
		&=f(A\ebf_{\alpha})
	\end{align*}
	as required.
\end{proof}

\begin{definition}\label{def:TV}
	The \textit{total variational distance (TV distance)} between two random variables $X,Y$ on the same space $\Omega$ is defined as
	\begin{align*}
		TV(X,Y)=\sup_{A\subseteq \Omega}|\PP(X\in A)-\PP(Y\in A)|
	\end{align*} 
\end{definition}

\begin{lemma}\label{lemma:TV_conservation}
	For any two random variables $X,Y$ on the same space $\Omega$ and any $c>0$ It holds that
	\begin{align*}
		TV(cX,cY)=TV(X,Y)
	\end{align*}
\end{lemma}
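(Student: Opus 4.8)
The plan is to reduce the identity to the observation that multiplication by a positive scalar is a measurable bijection of the underlying space $\Omega$, and therefore induces a bijection on the collection of sets over which the supremum in \cref{def:TV} ranges. Write $\Omega$ for the space on which $X$ and $Y$ take values (a real vector space in all our uses), and for $c\in\BR_{>0}$ let $\phi_c\colon\Omega\to\Omega$ be $\phi_c(\omega)=c\,\omega$. Since $\phi_c$ is linear and invertible, with inverse $\phi_{1/c}$, it is a homeomorphism, so both $\phi_c$ and $\phi_c^{-1}$ are Borel measurable; consequently $A\mapsto\phi_c^{-1}(A)$ is a bijection of the Borel $\sigma$-algebra of $\Omega$ onto itself.

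First I would record the elementary change-of-variables identity: for every measurable $A\subseteq\Omega$ the events $\{cX\in A\}$ and $\{X\in\phi_c^{-1}(A)\}$ coincide, and likewise for $Y$, whence
\[
\bigl|\PP(cX\in A)-\PP(cY\in A)\bigr|
=\bigl|\PP(X\in B)-\PP(Y\in B)\bigr|,\qquad B:=\phi_c^{-1}(A).
\]
Taking the supremum over all measurable $A$ then yields $TV(cX,cY)$ on the left, by \cref{def:TV}. On the right, as $A$ ranges over all measurable subsets of $\Omega$ the set $B=\phi_c^{-1}(A)$ ranges over exactly the same collection, by the bijectivity just noted, so the supremum equals $TV(X,Y)$. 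Combining the two equalities gives $TV(cX,cY)=TV(X,Y)$, as claimed.

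There is no real obstacle here: the argument is two lines once the change of variables is in place, and the only point meriting a word of justification is that $A\mapsto\phi_c^{-1}(A)$ maps the relevant $\sigma$-algebra onto itself, which is immediate from the invertibility and bi-measurability of scaling by $c$. If one wishes to sidestep measurability considerations altogether, one may restrict the supremum in \cref{def:TV} to subsets of the abstract probability space and use the raw set identity $\{cX\in A\}=\{X\in A/c\}$ together with the bijection $A\mapsto A/c$; the same computation applies verbatim.
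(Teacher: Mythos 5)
Your argument is correct and is essentially the same as the paper's: the paper substitutes $A \mapsto c\cdot A$ and observes that this is a bijection of the collection over which the supremum is taken, which is the same change-of-variables-on-sets idea you use (in the inverse direction, via $A \mapsto A/c$). The extra discussion of Borel measurability is harmless but unnecessary given that \cref{def:TV} takes the supremum over all subsets of $\Omega$, a point you yourself note at the end.
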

\begin{proof}
	By \cref{def:TV} it holds that
	\begin{align*}
		TV(X,Y)=\sup_{A\subseteq \Omega}|\PP(X\in A)-\PP(Y\in A)|
	\end{align*}
    For any $A\subseteq\Omega$ denote $c\cdot A:=\lbrace c\cdot x | x\in A\rbrace$. Hence the above is equal to
	\begin{align*}
        \sup_{A\subseteq \Omega}|\PP(cX\in c\cdot A)-\PP(cY\in c\cdot A)|=TV(cX,cY)
	\end{align*}
    as required.
\end{proof}

% \noindent A main tool for the proof of \cref{thm:width_1} is a theorem adapted from \citet{li2021product}, which gives bounds on the total variation distance between the product of wide gaussian matrices and a single gaussian matrix:
\begin{theorem}\label{thm:RMT_TV}
	Let $\lbrace W_{j}\rbrace_{j\in[d]}$ be a set of random matrices where for each $j\in[d]$, $W_{j}\in\BR^{\din_{j+1},\din_{j}}$ where $\din_{\depth+1}=\din$, $\din_{1} := \dout$ and $\din_{j}=\dhid$ for all $j=2,\dots,\depth$. Suppose that for each $j\in[d]$, the matrix $W_{j}$ is centered Gaussian (\cref{def:centered_gaussian_mat}) with variance $\frac{1}{\din_{j}}$.
	Let \(\We = \prod_{j=\depth}^{1} W_{j}\) and let \(\gt\in \BR^{\din,\dout}\) be a centered Gaussian matrix with variance $\frac{1}{\dout}$. Assume that \(\dhid \geq \din\). Then 
	\begin{align*}
		TV(W,\gt)\leq C(\depth-1)\sqrt{\frac{\din\cdot \dout}{\dhid}}
	\end{align*}
	for some universal constant \(C>0\).
\end{theorem}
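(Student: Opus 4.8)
The plan is to prove the bound by conditioning on all but the last weight matrix, reducing the problem to comparing a single (conditionally) Gaussian matrix to $\gt$, and then controlling the discrepancy through the concentration of a product of Gaussian matrices. Write $\We = W_{\depth} M$ with $M := W_{\depth-1} \cdots W_{1} \in \BR^{\dhid, \dout}$, and note that $W_{\depth} \in \BR^{\din, \dhid}$ has i.i.d.\ $\NN(0, 1/\dhid)$ entries and is independent of $M$. Conditionally on $M$, each row of $W_{\depth} M$ is an independent centered Gaussian vector with covariance $\tfrac{1}{\dhid} M^{\top} M$, so the conditional law of $\We$ is a matrix-variate normal; and $\gt$ is itself a matrix-variate normal with row covariance $I_{\din}$ and column covariance $\tfrac{1}{\dout} I_{\dout}$. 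Since the law of $\We$ is the $M$-mixture of the conditional laws, convexity of total variation gives
\[
TV(\We, \gt) \le \EE_{M}\!\left[ TV\!\left( \mathrm{law}(\We \mid M),\, \mathrm{law}(\gt) \right) \right].
\]

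Next I would bound the conditional total variation by Pinsker's inequality together with the closed form for the Kullback--Leibler divergence between centered Gaussians. If $\lambda_{1}, \dots, \lambda_{\dout}$ denote the eigenvalues of $\tfrac{\dout}{\dhid} M^{\top} M$, the relevant KL divergence equals $\tfrac{\din}{2} \sum_{i} (\lambda_{i} - 1 - \log \lambda_{i})$, which I would upper bound by $O\big( \din \cdot \| \tfrac{\dout}{\dhid} M^{\top} M - I_{\dout} \|_{F}^{2} \big)$ on the event that all $\lambda_{i}$ lie in, say, $[\tfrac12, \tfrac32]$, handling the complementary (exponentially unlikely) event through a crude bound combined with standard Gaussian concentration for $M^{\top} M$; this is where the hypothesis $\dhid \ge \din$ is convenient, ensuring the relevant matrices are full rank. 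Taking expectations and applying Cauchy--Schwarz, the whole problem reduces to establishing a bound of the shape
\[
\EE \big\| \tfrac{\dout}{\dhid} M^{\top} M - I_{\dout} \big\|_{F}^{2} = O\!\left( (\depth - 1)^{2} \cdot \tfrac{\din \dout}{\dhid} \right),
\]
after which the stated estimate follows.

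The main obstacle is this last display: controlling the fluctuations of $M^{\top} M$, the Gram matrix of a product of $\depth - 1$ Gaussian matrices, around its mean $\tfrac{\dhid}{\dout} I_{\dout}$. My approach is to exploit a martingale structure. Setting $S_{j} := (W_{j} \cdots W_{1})^{\top}(W_{j} \cdots W_{1})$, one has $\EE[S_{j} \mid W_{1}, \dots, W_{j-1}] = S_{j-1}$ for $j = 2, \dots, \depth - 1$, because the middle factors are square with $\EE[W_{j}^{\top} W_{j}] = I_{\dhid}$; hence the total variance of $S_{\depth-1}$ decomposes as a sum of the $\depth - 1$ martingale-increment variances plus the variance of $S_{1}$ (a Wishart matrix). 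A direct Gaussian (Wick) fourth-moment computation shows the $j$-th increment variance is $\tfrac{1}{\dhid}\big[ (\Tr S_{j-1})^{2} + \| S_{j-1} \|_{F}^{2} \big]$, which I would then bound using concentration of $S_{j-1}$ around $\tfrac{\dhid}{\dout} I_{\dout}$, established by an inner induction on $j$. Summing the $\depth - 1$ increments yields the factor $(\depth - 1)$ in the rate. The delicate points are (i) ensuring this accumulation stays polynomial in $\depth$ rather than exponential --- legitimate here since $\depth$ is fixed and $\dhid$ large, so each near-orthogonal factor $W_{j}$ contributes only an $O(1/\dhid)$ perturbation and no Lyapunov-type blow-up occurs --- and (ii) tracking the dependence on $\din$ and $\dout$ through the Wishart moments so that it matches the claimed $\sqrt{\din \dout / \dhid}$ scaling.
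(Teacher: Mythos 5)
The paper proves this result by simply invoking Theorem~1 of \citet{li2021product}, which gives exactly the stated total variation bound for products of standard Gaussian matrices, and then rescaling via \cref{lemma:TV_conservation}. Your proposal takes a genuinely different, self-contained route, but it has a gap and cannot produce the stated dimensional dependence.

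The loss is in the very first step: conditioning on $M = W_{\depth-1}\cdots W_1$ and bounding $TV(\We,\gt)$ by $\EE_M\big[TV(\mathrm{law}(\We\mid M),\,\mathrm{law}(\gt))\big]$. The conditional laws each differ from $\gt$ in a direction set by the fluctuation of $\tfrac{\dout}{\dhid}M^\top M$ around $I_\dout$, and these fluctuations largely cancel when one averages over $M$; this cancellation is precisely what makes the marginal law of $\We$ much closer to $\gt$ than the typical conditional law, and the convexity inequality discards it. Quantitatively, Pinsker's inequality applied to the $\din$ i.i.d.\ rows gives the essentially tight estimate $TV(\mathrm{law}(\We\mid M),\gt)\asymp\sqrt{\din}\,\big\|\tfrac{\dout}{\dhid}M^\top M - I_\dout\big\|_F$. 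Since $M\in\BR^{\dhid,\dout}$ does not involve $\din$, the moment bound you would then need is $\EE\big\|\tfrac{\dout}{\dhid}M^\top M - I_\dout\big\|_F^2 = O\big((\depth-1)^2\,\dout/\dhid\big)$ (not the $O\big((\depth-1)^2\,\din\dout/\dhid\big)$ you wrote, which cannot hold: the left-hand side has no $\din$-dependence). But even the corrected target is false from $\depth=2$ onward: writing $M=W_1=\tfrac{1}{\sqrt{\dout}}G$ with $G$ a $\dhid\times\dout$ standard Gaussian matrix, a direct Wick computation gives
\begin{align*}
\EE\Big\|\tfrac{\dout}{\dhid}M^\top M - I_\dout\Big\|_F^2
=\EE\Big\|\tfrac1{\dhid}G^\top G - I_\dout\Big\|_F^2
=\frac{\dout^2+\dout}{\dhid}
=\Theta\!\Big(\frac{\dout^2}{\dhid}\Big)
\text{\,.}
\end{align*}
Running this through your chain yields $TV(\We,\gt)\lesssim(\depth-1)\,\dout\sqrt{\din/\dhid}$, which exceeds the claimed $(\depth-1)\sqrt{\din\dout/\dhid}$ by a factor of $\sqrt{\dout}$; for $\din=\dout=m$ this is $m^{3/2}/\sqrt{\dhid}$ versus the theorem's $m/\sqrt{\dhid}$. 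The martingale decomposition of $S_j$ across layers is a reasonable device for extracting the $(\depth-1)$ factor, but it operates downstream of the loss and cannot repair it. Recovering the sharp rate requires comparing the two laws jointly rather than conditionally (e.g., via a direct density-ratio or characteristic-function argument), which is what \citet{li2021product} does.
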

\begin{proof}
	Theorem $1$ in \citet{li2021product} states that for random matrices $U_{j}\in\BR^{\din_{j+1},\din_{j}}, j\in[\depth]$ where $\din_{\depth+1}=\din,\din_{1}=\dout$ and $\din_{j}=\dhid,j=2,\dots,\depth$ with entries drawn independently from $\NN(0,1)$, and a random matrix $U\in\BR^{\din,\dout}$ with entries drawn independently from $\NN(0,1)$, it holds that
	\begin{align*}
		TV(\frac{1}{\sqrt{\dhid}}U, \prod_{j=\depth}^{1}\frac{1}{\sqrt{\dhid}}U_{j})\leq C(\depth-1)\sqrt{\frac{\din\cdot \dout}{\dhid}}
	\end{align*}
	for some universal constant $C>0$ such. Per \cref{lemma:TV_conservation}, scaling $\frac{1}{\sqrt{\dhid}}U$ and $\prod_{j=\depth}^{1}\frac{1}{\sqrt{\dhid}}U_{j}$ by a factor of $\frac{\sqrt{\dhid}}{\sqrt{\dout}}$ preserves the TV distance between the two random variables. Hence,
	\begin{align*}
		TV(\frac{1}{\sqrt{\dout}}U, \prod_{j=\depth}^{2}\frac{1}{\sqrt{\dhid}}U_{j}\cdot \frac{1}{\sqrt{\dout}}U_{1})=TV(\frac{1}{\sqrt{\dhid}}U, \prod_{j=\depth}^{1}\frac{1}{\sqrt{\dhid}}U_{j})\leq C(\depth-1)\sqrt{\frac{\din\cdot \dout}{\dhid}}
		\text{\,.}
	\end{align*}
	The proof concludes by noting that $\We=\prod_{j=\depth}^{2}\frac{1}{\sqrt{\dhid}}U_{j}\cdot \frac{1}{\sqrt{\dout}}U_{1}$ and $\gt=\frac{1}{\sqrt{\dout}}U$. 
\end{proof}

\begin{lemma}\label{lemma:measure_poly}
    Let $p:\BR^{d}\rightarrow \BR$ be some polynomial. The zero set of $p$,
    \begin{align*}
        \lbrace \xbf\in\BR^{d}:p(\xbf)=0\rbrace,
    \end{align*}
    is either $\BR^{d}$ or has Lebesgue measure zero.
\end{lemma}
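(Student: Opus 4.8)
The plan is to prove the claim by induction on the dimension $d$, at each step peeling off the last coordinate and invoking Fubini's theorem to reduce to the one-variable case. The crucial input is the elementary fact that a nonzero univariate polynomial has only finitely many roots.

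First I would handle the base case $d = 1$. A polynomial $p : \BR \to \BR$ is either the zero polynomial, in which case its zero set is all of $\BR$, or it is nonzero, in which case it has at most $\deg(p)$ roots; a finite set has Lebesgue measure zero, so the dichotomy holds.

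Next, for the inductive step, I would assume the statement in dimension $d - 1$ and take a polynomial $p : \BR^d \to \BR$. Writing $\xbf = (\xbf', x_d)$ with $\xbf' \in \BR^{d-1}$, I expand $p$ in powers of the last coordinate as $p(\xbf', x_d) = \sum_{j=0}^{k} q_j(\xbf')\, x_d^{\,j}$, where each $q_j : \BR^{d-1} \to \BR$ is a polynomial. If every $q_j$ is the zero polynomial then $p \equiv 0$ and its zero set is $\BR^d$; otherwise I fix an index $j_0$ with $q_{j_0} \not\equiv 0$, and by the inductive hypothesis the set $Z := \{ \xbf' \in \BR^{d-1} : q_{j_0}(\xbf') = 0 \}$ has Lebesgue measure zero in $\BR^{d-1}$.

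Finally I would assemble the conclusion via Fubini. The zero set $N := \{ \xbf \in \BR^d : p(\xbf) = 0 \}$ (which is closed, hence Borel, so the measurability needed for Fubini is automatic) decomposes as $N = \big( N \cap (Z \times \BR) \big) \cup \big( N \cap ((\BR^{d-1} \setminus Z) \times \BR) \big)$. The first piece lies in $Z \times \BR$, which has measure zero in $\BR^d$ because $Z$ has measure zero. For the second piece, whenever $\xbf' \notin Z$ the univariate polynomial $x_d \mapsto p(\xbf', x_d)$ has $q_{j_0}(\xbf') \neq 0$ as the coefficient of $x_d^{\,j_0}$, so it is not the zero polynomial and hence its root set $\{ x_d \in \BR : p(\xbf', x_d) = 0 \}$ is finite, in particular of measure zero; integrating these slice measures over $\xbf' \in \BR^{d-1} \setminus Z$ via Fubini's theorem shows the second piece also has measure zero in $\BR^d$. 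Hence $N$ has measure zero, closing the induction. I do not expect a real obstacle here; the only mild care needed is the measurability bookkeeping for Fubini, which is trivial since the zero set is closed.
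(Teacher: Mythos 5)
Your argument is correct, and it is essentially the same induction-on-dimension argument with Fubini that the cited reference (Caron and Traynor, ``The zero set of a polynomial'') uses; the paper itself does not reproduce a proof but defers to that note. Your measurability remark is fine: the zero set is closed, hence Borel, so Tonelli/Fubini applies without further fuss.
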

\begin{proof}
    See \citet{caron2005zero}.
\end{proof}

\begin{lemma}\label{lemma:frob_sub_mult}
	For any two matrices $A\in\BR^{m,n}$ and $B\in\BR^{n,p}$ it holds that
	\begin{align*}
		\|AB\|_{F}\leq \|A\|_{F}\|B\|_{F}
	\end{align*}
\end{lemma}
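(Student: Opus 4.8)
The plan is to derive submultiplicativity of the Frobenius norm from an entrywise application of the Cauchy--Schwarz inequality. First I would name the relevant vectors: let $a_1, \ldots, a_m \in \BR^{n}$ be the rows of $A$ and $b_1, \ldots, b_p \in \BR^{n}$ be the columns of $B$, so that the $(i,j)$ entry of the product satisfies $(AB)_{ij} = \inprod{a_i}{b_j}$. I would also recall that, directly from the definition of the Frobenius norm, $\|A\|_F^2 = \sum_{i=1}^m \|a_i\|_2^2$ and $\|B\|_F^2 = \sum_{j=1}^p \|b_j\|_2^2$.

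Next, applying Cauchy--Schwarz to each entry gives $(AB)_{ij}^2 = \inprod{a_i}{b_j}^2 \le \|a_i\|_2^2 \, \|b_j\|_2^2$. Summing over $i \in \{1, \ldots, m\}$ and $j \in \{1, \ldots, p\}$ and factoring the resulting double sum yields
\[
\|AB\|_F^2 \;=\; \sum_{i=1}^m \sum_{j=1}^p (AB)_{ij}^2 \;\le\; \Big(\sum_{i=1}^m \|a_i\|_2^2\Big)\Big(\sum_{j=1}^p \|b_j\|_2^2\Big) \;=\; \|A\|_F^2 \, \|B\|_F^2 .
\]
Taking square roots of both sides gives the claim.

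I do not expect any genuine obstacle here: the entire content is Cauchy--Schwarz together with the bookkeeping identity that expresses $\|A\|_F^2$ (resp.\ $\|B\|_F^2$) as the sum of squared row (resp.\ column) norms. The only point worth stating carefully is that the double sum $\sum_{i,j} \|a_i\|_2^2 \|b_j\|_2^2$ separates into a product of two single sums, which is immediate. An alternative, equally short route would be to bound $\|AB\|_F \le \|A\|_{\mathrm{op}} \|B\|_F$ and then $\|A\|_{\mathrm{op}} \le \|A\|_F$, but the direct entrywise argument is preferable here since it is entirely self-contained.
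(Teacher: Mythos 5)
Your proof is correct, and it fills in exactly the argument the paper alludes to: the paper's own proof simply states that the claim is classical and follows from the Cauchy--Schwarz inequality, which is precisely the entrywise bound you carry out. Nothing further is needed.
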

\begin{proof}
	This is a clssical result that follows from the Cauchy-Schwarz inequality.
\end{proof}

\begin{lemma}\label{lemma:norm_concen}
	For any centered gaussian matrix $X\in \BR^{p,q}$ (\cref{def:centered_gaussian_mat}) there exists a sufficiently large constant $N\in \BR_{>0}$ and a constant $c_{10}\in \BR_{>0}$ depdendent on $p, q$ such that with probability at least $1-e^{-c_{10}N}$ it holds that
	\begin{align*}
		\norm{X}_{F}\leq N
		\text{\,.}
	\end{align*}
\end{lemma}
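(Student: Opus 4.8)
The plan is to reduce the claim to a standard Gaussian concentration bound for the norm of a random vector. Writing $\nu \in \BR_{>0}$ for the variance in \cref{def:centered_gaussian_mat}, observe that $X$ has the same distribution as $\sqrt{\nu}\,G$, where $G \in \BR^{p,q}$ has i.i.d.\ standard Gaussian entries. Vectorizing $G$ into a standard Gaussian vector $Z \in \BR^{pq}$, we have $\|X\|_F \overset{d}{=} \sqrt{\nu}\,\|Z\|_2$. The map $\vbf \mapsto \|\vbf\|_2$ is $1$-Lipschitz, so Gaussian concentration of measure gives $\PP\!\left(\|Z\|_2 \ge \EE\|Z\|_2 + t\right) \le e^{-t^2/2}$ for every $t > 0$, and by Jensen's inequality $\EE\|Z\|_2 \le \sqrt{\EE\|Z\|_2^2} = \sqrt{pq}$; hence $\PP\!\left(\|Z\|_2 \ge \sqrt{pq} + t\right) \le e^{-t^2/2}$. (If one prefers not to invoke the concentration inequality, the Laurent--Massart tail bound for the $\chi^2_{pq}$ variable $\|Z\|_2^2$ yields the same estimate up to constants.)

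The remaining step is purely bookkeeping: translate the tail bound from the parameter $t$ to the desired threshold $N$. Given $N \in \BR_{>0}$, the event $\{\|X\|_F \ge N\}$ coincides with $\{\|Z\|_2 \ge N/\sqrt{\nu}\}$, so applying the bound above with $t = N/\sqrt{\nu} - \sqrt{pq}$ shows that whenever $N \ge 2\sqrt{\nu\, pq}$ one has $t \ge N/(2\sqrt{\nu})$ and therefore $\PP(\|X\|_F \ge N) \le e^{-N^2/(8\nu)}$. Since $N^2/(8\nu) \ge N$ once $N \ge 8\nu$, it follows that for $N \ge N_0 := \max\{2\sqrt{\nu\, pq}, 8\nu\}$ we get $\PP(\|X\|_F \ge N) \le e^{-N}$. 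Taking $N$ to be any value at least $N_0$ and $c_{10} := 1$ (or, if one wants a constant tuned to $p,q,\nu$, keeping the sharper rate $c_{10} = 1/(8\nu)$ with the milder requirement $N \ge 2\sqrt{\nu\, pq}$) yields the statement: with probability at least $1 - e^{-c_{10} N}$ we have $\|X\|_F \le N$.

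I do not anticipate any genuine obstacle here; the argument is entirely standard concentration of measure. The only points requiring a little care are (i) reducing to the standard Gaussian case by pulling out the scalar $\sqrt{\nu}$, and (ii) choosing $N$ large enough that the additive correction $\sqrt{pq}$ in the tail is absorbed and the sub-Gaussian rate $e^{-cN^2}$ can be downgraded to the sub-exponential rate $e^{-c_{10} N}$ that the lemma actually requires.
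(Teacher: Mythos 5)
Your proof is correct and matches the paper's intended route: the paper's own proof is a one-line appeal to "standard concentration inequalities for $\chi^2$ random variables," and you have simply supplied the details by vectorizing $X$ into a scaled standard Gaussian vector, applying Gaussian Lipschitz concentration (or, as you note, the Laurent--Massart $\chi^2$ tail), and choosing $N$ large enough to absorb the mean term and downgrade the sub-Gaussian rate $e^{-cN^2}$ to the sub-exponential rate $e^{-c_{10}N}$. No gap; the bookkeeping with $N_0=\max\{2\sqrt{\nu pq},\,8\nu\}$ and $c_{10}=1$ is sound.
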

\begin{proof}
	This follows from standard concentration inequalities for \(\chi^{2}\) random variables.
\end{proof}

\begin{lemma}\label{lemma:frob_lower}
	Let \( A \in \mathbb{R}^{m , n} \) and \( C \in \mathbb{R}^{p , q} \) be matrices with singular value decompositions
	\[
	A = \sum_{i=1}^{r_A} \sigma_i^A \ubf_i^A \left(\vbf_i^A\right)^{\top}, \quad
	C = \sum_{j=1}^{r_C} \sigma_j^C \ubf_j^C \left(\vbf_j^C\right)^{\top}
	\text{\,.}
	\]
	We denote the the rank one summands of \( A \) and \( C \) as
	\[
	X_i = \sigma_i^A \ubf_i^A \left(\vbf_i^A\right)^{\top}, \quad
	Y_j = \sigma_j^C \ubf_j^C \left(\vbf_j^C\right)^{\top}
	\text{\,.}
	\]
	Let \( B \in \BR^{n,p}\) be a rank one matrix of the form
	\[
	B = \ubf_B \vbf_B^{\top}
	\text{\,.}
	\]
	Then for any $i\in[r_{A}], j\in[r_{C}]$ it holds that
	\[
	\norm{ABC}_{F}^2 \geq \norm{X_i B Y_j}_{F}^{2}
	\text{\,.}
	\]
\end{lemma}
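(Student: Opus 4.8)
The plan is to expand $ABC$ via the singular value decompositions and to observe that the resulting rank-one pieces are pairwise orthogonal in the Frobenius inner product, so that $\norm{ABC}_F^2$ is an honest sum of squares that dominates any one of its summands.

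First I would substitute the SVDs of $A$ and $C$ and use bilinearity to write $ABC = \sum_{i=1}^{r_A}\sum_{j=1}^{r_C} X_i B Y_j$. Then, using $B = \ubf_B \vbf_B^{\top}$, I would evaluate each term explicitly: since $(\vbf_i^A)^{\top}\ubf_B$ and $\vbf_B^{\top}\ubf_j^C$ are scalars, one obtains
\[
X_i B Y_j \;=\; \sigma_i^A \sigma_j^C \,\inprod{\vbf_i^A}{\ubf_B}\,\inprod{\vbf_B}{\ubf_j^C}\; \ubf_i^A (\vbf_j^C)^{\top} \;=:\; c_{ij}\, \ubf_i^A (\vbf_j^C)^{\top},
\]
a scalar multiple of the rank-one matrix $\ubf_i^A (\vbf_j^C)^{\top}$, with $\norm{X_i B Y_j}_F = |c_{ij}|$ since $\ubf_i^A$ and $\vbf_j^C$ are unit vectors.

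Next I would verify the orthogonality of the cross terms: for index pairs $(i,j)$ and $(i',j')$,
\[
\Tr\!\big( (\vbf_j^C)(\ubf_i^A)^{\top}\ubf_{i'}^A (\vbf_{j'}^C)^{\top} \big) \;=\; \inprod{\ubf_i^A}{\ubf_{i'}^A}\,\inprod{\vbf_j^C}{\vbf_{j'}^C} \;=\; \delta_{ii'}\,\delta_{jj'},
\]
because the left singular vectors $\{\ubf_i^A\}_{i=1}^{r_A}$ are orthonormal and the right singular vectors $\{\vbf_j^C\}_{j=1}^{r_C}$ are orthonormal. Hence the summands $X_i B Y_j$ are mutually orthogonal with respect to the Frobenius inner product, and the Pythagorean identity gives $\norm{ABC}_F^2 = \sum_{i,j} c_{ij}^2 = \sum_{i,j}\norm{X_i B Y_j}_F^2$. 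Discarding all nonnegative terms except the $(i,j)$ one yields $\norm{ABC}_F^2 \ge \norm{X_i B Y_j}_F^2$, which is the claim.

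I do not expect a serious obstacle here: the computation is routine, and the only subtlety is that it is genuinely orthonormality of the singular vectors (not mere linear independence) that makes all cross terms vanish — this is precisely what upgrades a crude triangle-inequality estimate to the exact sum-of-squares decomposition, and hence what makes the inequality hold with the single-term bound on the right-hand side. It is also worth noting explicitly that the argument uses the rank-one structure of $B$ only to keep each $X_i B Y_j$ rank one; the orthogonality step would go through for general $B$ as well, but the clean identification $\norm{X_i B Y_j}_F = |c_{ij}|$ is cleanest in the stated setting.
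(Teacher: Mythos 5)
Your proposal is correct and follows essentially the same route as the paper's proof: expand $ABC=\sum_{i,j}X_iBY_j$ via the SVDs, show the summands are pairwise Frobenius-orthogonal by a trace computation relying on orthonormality of the left singular vectors of $A$ and right singular vectors of $C$, apply the Pythagorean identity, and drop all but the $(i,j)$ term. (Your closing aside is slightly off---each $X_iBY_j$ is automatically rank one whatever the rank of $B$, since $(\vbf_i^A)^\top B\ubf_j^C$ is a scalar---but that remark is not load-bearing and the proof itself is sound.)
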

\begin{proof}
	Substituting the singular value decompositions of \( A \) and \( C \) into the product, we obtain
	\[
	ABC = \left( \sum_{i=1}^{r_A} X_i \right) B \left( \sum_{j=1}^{r_C} Y_j \right)
	\text{\,.}
	\]
	Expanding the product yields
	\[
	ABC = \sum_{i=1}^{r_A} \sum_{j=1}^{r_C} X_i B Y_j
	\text{\,.}
	\]
	
	To show that the terms \( X_i B Y_j \) are mutually orthogonal, we compute the Frobenius inner product between two distinct terms:
	\[
	\inprod{X_i B Y_j}{X_{i'} B Y_{j'}}= \operatorname{Tr} \left( (X_i B Y_j)^{\top} (X_{i'} B Y_{j'}) \right).
	\]
	
	Using the definitions
	\[
	X_i = \sigma_i^A \ubf_i^A \left(\vbf_i^A\right)^{\top}, \quad
	Y_j = \sigma_j^C \ubf_j^C \left(\vbf_j^C\right)^{\top},
	\]
	the term \( X_i B Y_j \) expands as
	\[
	X_i B Y_j = \sigma_i^A\sigma_j^C\left((\vbf_i^A)^{\top} \ubf_B\right)\left(\vbf_B^{\top}  \ubf_j^C\right)\ubf_i^A\left(\vbf_j^C\right)^{\top}
	\text{\,.}
	\]
	Therefore,
	\[
	\left(X_i B Y_j\right)^{\top} = \sigma_i^A\sigma_j^C\left(\left(\vbf_i^A\right)^{\top} \ubf_B\right)\left(\vbf_B^{\top}  \ubf_j^C\right)\vbf_j^C\left(\ubf_i^A\right)^{\top}
	\text{\,.}
	\]
	Likewise,
	\[X_{i'} B Y_{j'}=\sigma_{i'}^A\sigma_{j'}^C\left(\left(\vbf_{i'}^A\right)^{\top} \ubf_B\right)\left(\vbf_B^{\top}  \ubf_{j'}^C\right)\ubf_{i'}^A\left(\vbf_{j'}^C\right)^{\top}
	\text{\,.}
	\]
	Substituting into the inner product and factoring out scalars we obtain that
	\begin{align*}
		&\inprod{X_i B Y_j}{X_{i'} B Y_{j'}}=\\
		&\sigma_i^A\sigma_j^C\left(\left(\vbf_i^A\right)^{\top} \ubf_B\right)\left(\vbf_B^{\top}  \ubf_j^C\right)\sigma_{i'}^A\sigma_{j'}^C\left(\left(\vbf_{i'}^A\right)^{\top} \ubf_B\right)\left(\vbf_B^{\top}  \ubf_{j'}^C\right)\operatorname{Tr} \left( \vbf_j^C\left(\ubf_i^A\right)^{\top} \ubf_{i'}^A\left(\vbf_{j'}^C\right)^{\top}\right)
		\text{\,.}
	\end{align*}
	Using the cyclic property of trace,
	\[
	\operatorname{Tr} \left( \vbf_j^C\left(\ubf_i^A\right)^{\top} \ubf_{i'}^A\left(\vbf_{j'}^C\right)^{\top}\right)=
	\left(\left(\vbf_{j'}^C\right)^{\top}\vbf_j^C\right)\left(\left(\ubf_i^A\right)^{\top} \ubf_{i'}^A\right)
	\text{\,.}
	\]
	Since the singular vectors \( \ubf_i^A, \vbf_i^A, \ubf_j^C, \vbf_j^C \) are orthonormal,
	\[
	\left(\ubf_i^A\right)^{\top} \ubf_{i'}^A = \delta_{ii'}, \quad \left(\vbf_j^C\right)^{\top} \vbf_{j'}^C = \delta_{jj'}
	\text{\,.}
	\]
	Thus, the trace vanishes whenever \(i\neq i'\) or \(j\neq j'\). Applying the Pythagorean theorem for the Frobenius norm,
	\[
	\norm{ABC}_{F}^{2} = \sum_{i=1}^{r_A} \sum_{j=1}^{r_C} \norm{X_i B Y_j}_{F}^{2}
	\text{\,.}
	\]
	Since every term in the sum is non-negative, for any $i\in[r_{A}], j\in[r_{C}]$ it holds that
	\[
	\norm{ABC}_{F}^{2} \geq \norm{X_i B Y_j}_{F}^{2}
	\]
	as required.
\end{proof}

\begin{lemma}\label{lemma:gauss_sym}
	Let \( W \in \BR^{m,n}\) be a centered Gaussian matrix (\cref{def:centered_gaussian_mat}) with variance one. Consider the singular value decomposition (SVD) of \( W \):
	\[
	W = U \Sigma V^{\top}
	\text{\,,}
	\]
	where \( U \in \mathbb{R}^{m , m} \) and \( V \in \mathbb{R}^{n , n} \) are orthogonal matrices, and \( \Sigma \) is a diagonal matrix of singular values. Then, the first left singular vector \( \ubf_1 \) (the first column of \( U \)) and the first right singular vector \( \vbf_1 \) (the first column of \( V \)) are uniformly distributed on the unit spheres \( S^{m-1} \) and \( S^{n-1} \), respectively. 
\end{lemma}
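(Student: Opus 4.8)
The plan is to exploit the bi-orthogonal invariance of the i.i.d.\ Gaussian ensemble. Writing $\mathrm{vec}(W)\in\BR^{mn}$ for the column-stacking of the entries, we have $\mathrm{vec}(W)\sim\mathcal{N}(0,I_{mn})$, and for any $Q\in O(m)$ and $R\in O(n)$ one has $\mathrm{vec}(QWR^{\top})=(R\otimes Q)\,\mathrm{vec}(W)$ with $R\otimes Q$ orthogonal; since $\mathcal{N}(0,I_{mn})$ is invariant under orthogonal maps, this gives $QWR^{\top}\overset{d}{=}W$. Because the singular values of $QWR^{\top}$ coincide with those of $W$, the identity $QWR^{\top}=(QU)\,\Sigma\,(RV)^{\top}$ is an SVD of $QWR^{\top}$ with the \emph{same} sorted diagonal $\Sigma$ and with left and right factors $QU$ and $RV$. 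This is the only structural input needed.

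Next I would upgrade this to the statement that the SVD admits a version in which $U$ is Haar-distributed on $O(m)$, $V$ is Haar-distributed on $O(n)$, and $U,\Sigma,V$ are mutually independent. Fix any measurable choice $W=U_{0}\Sigma V_{0}^{\top}$ with $\Sigma$ sorted (legitimate, since the ordered singular values and a measurable selection of singular vectors exist), and let $Q\sim\mathrm{Haar}(O(m))$ and $R\sim\mathrm{Haar}(O(n))$ be independent of each other and of $W$. Then $QWR^{\top}=(QU_{0})\,\Sigma\,(RV_{0})^{\top}$, and by left-invariance of Haar measure $QU_{0}$ is conditionally Haar on $O(m)$ given $W$ (hence Haar and independent of $(\Sigma,V_{0})$), and symmetrically for $RV_{0}$; moreover $QU_{0}$ and $RV_{0}$ are independent since $Q\perp R$. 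As $QWR^{\top}\overset{d}{=}W$, transporting this decomposition back along the distributional identity yields the claimed version of the SVD of $W$ itself; this is the standard fact that the singular vectors of an i.i.d.\ Gaussian matrix are Haar and independent of the singular values, and it also supplies the independence assertions invoked in the proof of \cref{lemma:nonnormalized_rank1}. The uniform law on the sphere then follows instantly: for Haar $U$ and any $Q\in O(m)$ we have $QU\overset{d}{=}U$, so the first column $\ubf_{1}$ satisfies $Q\ubf_{1}\overset{d}{=}\ubf_{1}$, and the uniform measure is the unique $O(m)$-invariant probability measure on $S^{m-1}$; the same argument over $O(n)$ gives $\vbf_{1}\sim\mathrm{Unif}(S^{n-1})$.

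The main obstacle is the bookkeeping around the non-uniqueness of the SVD, namely the sign (and, on a null event, the ordering) ambiguity. The ordering is pinned down by sorting $\Sigma$, which causes no harm because an i.i.d.\ Gaussian matrix has almost surely distinct singular values --- the joint singular-value density is absolutely continuous --- so in particular the top singular value is simple and $\ubf_{1},\vbf_{1}$ are determined up to a common sign. The sign is genuinely not a function of $W$; I would handle this with the ``version of the SVD'' phrasing used above (picking signs by independent fair coins is one valid realization, and since $\mathrm{Unif}(S^{m-1})$ is invariant under $\xbf\mapsto-\xbf$ the conclusion does not depend on the choice), or equivalently run the entire argument with the sign-invariant objects $\ubf_{1}\ubf_{1}^{\top}$ and $\vbf_{1}\vbf_{1}^{\top}$, which is all that the downstream estimates on $|\langle\ubf_{1},\cdot\rangle|$ require. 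Everything else reduces to routine invariance properties of Haar measure.
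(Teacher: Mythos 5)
Your proof is correct and rests on the same core invariance idea as the paper's: orthogonal bi-invariance of the i.i.d.\ Gaussian ensemble forces the singular vectors to have no preferred direction. Where you diverge is in rigor rather than substance. The paper argues informally that the eigenvectors of $WW^{\top}$ (equivalently the columns of $U$) ``must be uniformly distributed\dots since no particular direction is preferred,'' and cites standard Wishart-theory folklore for the independence of singular values and vectors. You instead make the argument watertight by (i) fixing a measurable SVD selection, (ii) randomizing it by independent Haar elements $Q\sim\mathrm{Haar}(O(m))$, $R\sim\mathrm{Haar}(O(n))$, (iii) using left-invariance of Haar measure to conclude $QU_0$ and $RV_0$ are Haar and independent of $\Sigma$, and (iv) transporting the decomposition back along $QWR^{\top}\overset{d}{=}W$. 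You also explicitly handle the two sources of SVD non-uniqueness --- ordering (resolved a.s.\ by distinctness of singular values) and sign (resolved either by a coin flip or by passing to sign-invariant projectors $\ubf_1\ubf_1^{\top}$, which is all the downstream use of the lemma actually needs). The paper silently assumes these issues away. The one thing you pay for the extra rigor is a slightly longer argument, but you gain a proof that does not lean on an uncited ``standard result'' and that correctly identifies which invariance (left-invariance of Haar, independence of $Q$ and $R$, and the unique $O(m)$-invariance of the uniform measure on $S^{m-1}$) does the work at each step.
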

\begin{proof}
	Since \( W \) is an \( m, n \) matrix with independent standard normal entries, its distribution is invariant under orthogonal transformations. That is, for any orthogonal matrices \( Q \in O(m) \) and \( P \in O(n) \), the distribution of \( W \) satisfies
	\[
	Q W P \stackrel{d}{=} W
	\text{\,.}
	\]
	This follows from the fact that a Gaussian matrix remains Gaussian after orthogonal transformations, and the standard normal distribution is rotationally invariant.
	
	Consider the singular value decomposition
	\[
	W = U \Sigma V^{\top}
	\text{\,.}
	\]
	The left singular vectors of \( W \) are the eigenvectors of \( WW^{\top} \), and the right singular vectors are the eigenvectors of \( W^{\top} W \). Since \( W \) is rotationally invariant, so is the Gram matrix \( WW^{\top} \), which determines the left singular vectors. Specifically, for any fixed orthogonal matrix \( Q \),
	\[
	Q WW^{\top} Q^{\top} \stackrel{d}{=} WW^{\top}
	\text{\,.}
	\]
	This implies that the eigenvectors of \( WW^{\top} \), which form the columns of \( U \), must be uniformly distributed on the unit sphere \( S^{m-1} \), since no particular direction is preferred. Thus, \( \ubf_1 \sim \text{Unif}(S^{m-1}) \).
	
	Similarly, considering \( W^{\top} W \), the right singular vectors (columns of \( V \)) are eigenvectors of \( W^{\top} W \), and by the same rotational invariance argument,
	\[
	P W^{\top} W P^{\top} \stackrel{d}{=} W^{\top} W
	\]
	for any orthogonal matrix \( P \in O(n) \). This implies that \( \vbf_1 \sim \text{Unif}(S^{n-1}) \).
	
	The singular values \( \sigma_1, \dots, \sigma_{\min(m,n)} \) of \( W \) are independent of the singular vectors. This follows from standard results in random matrix theory, where the eigenvectors of a Wishart matrix (which are the singular vectors of \( W \)) are independent of its eigenvalues (which correspond to the squared singular values of \( W \)). Thus, \( \ubf_1 \) and \( \vbf_1 \) are independent from the singular values and remain uniformly distributed on their respective spheres.
\end{proof}

\begin{definition}\label{def:gamma_func}
	The \textit{Gamma function}, denoted by $\Gamma(z)$ for $z > 0$, is defined as:
	\begin{align*}
		\Gamma(z) = \int_0^\infty t^{z-1} e^{-t}dt
		\text{\,.}
	\end{align*}
\end{definition}

\begin{lemma}\label{lemma:gauss_sing}
	Let $A\in \BR^{m,n}$ be a centered Gaussian matrix (\cref{def:centered_gaussian_mat}) with variance one. Then there exists a constant $c_{11}\in \BR_{>0}$ depdendent on $m, n$ such that for any $x\in(0,1)$ it holds that
	\[\PP\left(\sigma_{1}(A)\leq x\right)\leq c_{11}x^{mn}
	\]
	where $\sigma_{1}(A)$ is the largest singular value of $A$.
\end{lemma}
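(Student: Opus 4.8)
The plan is to reduce the operator-norm event $\{\sigma_{1}(A) \leq x\}$ to a much simpler entrywise event, and then exploit the independence of the Gaussian entries. The key observation is that the largest singular value dominates every entry in absolute value: for any $i \in [m]$, $j \in [n]$, the variational characterization $\sigma_{1}(A) = \max_{\|\ubf\|_2 = \|\vbf\|_2 = 1} |\ubf^{\top} A \vbf|$ applied with the standard basis vectors $\ubf = \ebf_i \in \BR^m$, $\vbf = \ebf_j \in \BR^n$ gives $\sigma_{1}(A) \geq |\ebf_i^{\top} A \ebf_j| = |A_{ij}|$. Hence $\{\sigma_{1}(A) \leq x\} \subseteq \bigcap_{i \in [m],\, j \in [n]} \{ |A_{ij}| \leq x \}$, and it suffices to bound the probability of the right-hand side.

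The next step is to use independence: since the entries of $A$ are drawn independently from $\NN(0,1)$ (\cref{def:centered_gaussian_mat} with unit variance),
\[
\PP\big( \sigma_{1}(A) \leq x \big) \;\leq\; \PP\Big( \bigcap_{i,j} \{ |A_{ij}| \leq x \} \Big) \;=\; \prod_{i=1}^{m} \prod_{j=1}^{n} \PP\big( |A_{ij}| \leq x \big) \;=\; \big( \PP(|Z| \leq x) \big)^{mn}
\]
with $Z \sim \NN(0,1)$. I would then bound the single-entry probability using the uniform bound $1/\sqrt{2\pi}$ on the standard Gaussian density, namely $\PP(|Z| \leq x) = \int_{-x}^{x} \tfrac{1}{\sqrt{2\pi}} e^{-t^2/2}\, dt \leq \tfrac{2}{\sqrt{2\pi}}\, x$. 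Substituting gives $\PP(\sigma_{1}(A) \leq x) \leq \big( \tfrac{2}{\sqrt{2\pi}} \big)^{mn} x^{mn}$, so the lemma holds with $c_{11} := \big( \tfrac{2}{\sqrt{2\pi}} \big)^{mn}$, which depends only on $m$ and $n$; since $\tfrac{2}{\sqrt{2\pi}} < 1$, one may in fact take $c_{11} = 1$.

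I do not expect any genuine obstacle — the argument is elementary. The only point needing a little care is the set inclusion $\{\sigma_{1}(A) \leq x\} \subseteq \bigcap_{i,j}\{|A_{ij}| \leq x\}$, i.e., the bound $\sigma_{1}(A) \geq \max_{i,j}|A_{ij}|$; this follows at once from $|\ubf^{\top} A \vbf| \leq \|\ubf\|_2\, \sigma_{1}(A)\, \|\vbf\|_2$ for arbitrary $\ubf, \vbf$ combined with the choice $\ubf = \ebf_i$, $\vbf = \ebf_j$. Everything else (independence of the entries and the density bound) is completely standard for a centered Gaussian matrix, so no further machinery is needed.
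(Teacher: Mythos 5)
Your proof is correct, and it takes a genuinely different route from the paper's. The paper bounds $\PP(\sigma_1(A)\leq x)$ by passing to the Frobenius norm: it uses $\sigma_1(A)^2 \geq \|A\|_F^2/\min\{m,n\}$ together with the fact that $\|A\|_F^2 \sim \chi^2_{mn}$, and then reads off the rate $x^{mn}$ from the small-$x$ behavior of the $\chi^2_{mn}$ density. Your argument instead uses the inclusion $\{\sigma_1(A)\leq x\} \subseteq \bigcap_{i,j}\{|A_{ij}|\leq x\}$ (from $\sigma_1(A)\geq |e_i^{\top}Ae_j|=|A_{ij}|$), exploits independence of the $mn$ entries to factorize the probability, and bounds each one-dimensional factor by the sup of the standard Gaussian density. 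Both routes land on the same polynomial order $x^{mn}$, which is all the surrounding argument in \cref{app:depth:rank1approx} requires; your approach is somewhat more elementary (no chi-squared density or Gamma-function normalization needed) and yields an explicit constant $c_{11}=\big(2/\sqrt{2\pi}\big)^{mn}\leq 1$, whereas the paper's constant is left implicit in the $O(\cdot)$ coming from the chi-squared density and the dimension factor $\min\{m,n\}$.
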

\begin{proof}
	Note that \(\norm{A}_{F}^{2}\) is a Chi-squared random variable with \(mn\) degrees of freedom, \ie, it holds that \(\norm{A}_{F}^{2} \sim \chi_{mn}^{2}\). The density of for this distribution is given by 
	\[
	f(x; mn) =
	\begin{cases}
		\frac{x^{\frac{mn}{2} - 1} e^{-\frac{x}{2}}}{2^{\frac{mn}{2}} \Gamma\left(\frac{mn}{2}\right)}\;, & x > 0 \\
		0\;, & x = 0
	\end{cases}
	\text{\,,}
	\]
	where $\Gamma(\cdot)$ is the Gamma function (\cref{def:gamma_func}). Hence, we obtain that
	\[\PP\left(\norm{A}_{F}^{2}\leq x\right)=\int_{0}^{x}f(s; mn)ds=O(x^{\frac{mn}{2}})
	\text{\,.}\]
	Now note that for any matrix $M\in \BR^{m,n}$ it holds that
	\[\sigma_{1}(M)^{2}\geq \frac{\norm{M}_{F}^{2}}{\min\{m,n\}}\text{\,,} \]
	thus
	\[\PP\left(\sigma_{1}(A)\leq x\right)=\PP\left(\sigma_{1}(A)^{2}\leq x^{2}\right)\leq \PP\left(\norm{A}_{F}^{2}\leq \min\{m,n\}x^{2}\right)=O(x^{mn})\]
	as required.
\end{proof}

\begin{lemma}\label{lemma:inner_prod}
	Let \( \mathbf{v}\in\BR^{n}\) be some fixed unit vector, and let \( \mathbf{u} \) be a random vector uniformly distributed on the unit sphere \( S^{n-1} \) in \( \mathbb{R}^n \).
	Define \( Z = \inprod{\ubf}{\vbf}\) as their inner product. Then there exists a constant \(c_{12}\in\BR_{>0}\) depdendent on $ n$ such that for any $x\in[-1,1]$
	\[\PP(|Z|\leq x)\leq c_{12}|x|
	\text{\,.}\]
\end{lemma}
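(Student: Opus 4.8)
The plan is to use the rotational invariance of the uniform distribution on the sphere to normalize $\vbf$, and then to read off the bound from the explicit density of a single coordinate of a uniformly random point on $S^{n-1}$, which is bounded in a neighbourhood of the origin. Throughout I may assume $x\ge 0$, since for $x<0$ the event $\{|Z|\le x\}$ is empty and there is nothing to prove.

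First I would observe that since $\ubf$ is uniformly distributed on $S^{n-1}$ and $\vbf$ is a unit vector, there exists an orthogonal matrix $Q\in\BR^{n,n}$ with $Q\vbf=\ebf_{1}$; because $Q\ubf$ is again uniform on $S^{n-1}$, the random variable $Z=\inprod{\ubf}{\vbf}=\inprod{Q\ubf}{Q\vbf}$ has the same law as $[\ubf]_{1}$, the first coordinate of a uniform point on the sphere. Hence it suffices to bound $\PP\big(|[\ubf]_{1}|\le x\big)$.

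Next I would invoke the classical fact that, for $n\ge 2$, the coordinate $[\ubf]_{1}$ has Lebesgue density
\[
f(z)=\frac{\Gamma(n/2)}{\sqrt{\pi}\,\Gamma((n-1)/2)}\,(1-z^{2})^{(n-3)/2}\,,\qquad z\in(-1,1)\,.
\]
On the interval $[-\tfrac12,\tfrac12]$ this density is bounded by a constant $C_{n}$ depending only on $n$: for $n\ge 3$ we have $(1-z^{2})^{(n-3)/2}\le 1$, while for $n=2$ we have $(1-z^{2})^{-1/2}\le 2/\sqrt{3}$. Consequently, for $0\le x\le \tfrac12$,
\[
\PP\big(|Z|\le x\big)=\int_{-x}^{x}f(z)\,dz\le 2C_{n}x\,,
\]
whereas for $\tfrac12<x\le 1$ we trivially have $\PP\big(|Z|\le x\big)\le 1<2x$. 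Taking $c_{12}:=\max\{2C_{n},2\}$ yields the claimed bound for every $x\in[-1,1]$.

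I do not expect a genuine obstacle here; the only points requiring a little care are the small-$n$ cases. For $n=1$ the ``sphere'' $S^{0}=\{-1,1\}$ is discrete, so $Z\in\{-1,1\}$ and $\PP(|Z|\le x)=0$ for $x<1$ while it equals $1=|x|$ at $x=1$, consistent with the claim. For $n=2$ the density blows up near $\pm 1$, but it stays bounded near the origin, so the argument above goes through unchanged. An alternative that avoids quoting the density formula is to write $\ubf=\mathbf{g}/\|\mathbf{g}\|_{2}$ for a standard Gaussian vector $\mathbf{g}$ and estimate $\PP(|g_{1}|\le x\|\mathbf{g}\|_{2})$ directly, but the density-based route is the shortest.
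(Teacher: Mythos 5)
Your proof takes essentially the same route as the paper: reduce by rotational invariance to the first coordinate of a uniform point on the sphere, invoke the explicit density $f(z)\propto(1-z^2)^{(n-3)/2}$, bound it near the origin, and integrate. You are in fact a bit more careful than the paper, which asserts that the density is ``bounded'' on $[-1,1]$ (false for $n=2$, where it diverges at $\pm 1$) — your split into $|x|\le \tfrac12$ and $|x|>\tfrac12$ and explicit handling of $n\in\{1,2\}$ repairs that minor imprecision.
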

\begin{proof}
	First note that by symmetry, we can assume WLOG that $\vbf$ is also uniformly distributed on the unit sphere. By Theorem 1 in \citet{cho2009inner}, the probability density function of \( Z \) for any $z\in[-1,1]$ is given by:
	\[
	f_Z(z) = \frac{\Gamma\left(\frac{n}{2}\right)}{\sqrt{\pi} \, \Gamma\left(\frac{n-1}{2}\right)} (1 - z^2)^{\frac{n-3}{2}}
	\text{\,,}
	\]
	where \( \Gamma(\cdot) \) is the Gamma function (\cref{def:gamma_func}). In particular, $Z$ has a bounded density supported on \([-1,1]\). It follows that for any $x\in[-1,1]$
	\[\PP\left(|Z|\leq x\right)=\int_{-x}^{x}f_{Z}(z)ds=O(|x|)\]
	as required.
\end{proof}

\begin{lemma}\label{lemma:cover}
	For any \(m,n\in \BN\) and $\epsilon\in \BR_{>0}$, there exists a collection of rank $1$ matrices $\lbrace E_{i}\in \BR^{m,n}\rbrace_{i\in[M]}$ where $M$ is depdendent on $m,n$ and $\epsilon$, such that for any rank $1$ matrix \(E \in \mathbb{R}^{m , n}\) with \(\norm{E}_{F}=1\) there exists some index \(i\in[M]\) for which 
	\[\norm{E-E_{i}}_{F}<\epsilon
	\text{\,.}\]
\end{lemma}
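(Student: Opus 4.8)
The plan is to recognize that the desired family is nothing more than a finite $\epsilon$-net of the set of unit-norm rank-one matrices in $\BR^{m,n}$, and to construct it explicitly by discretizing the two unit spheres.

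First I would record the elementary fact that any rank-one $E \in \BR^{m,n}$ with $\|E\|_{F} = 1$ can be written as $E = \xbf \ybf^{\top}$ with $\xbf \in \BR^{m}$ and $\ybf \in \BR^{n}$ unit vectors: the singular value decomposition of a rank-one matrix has a single nonzero singular value, which here equals $\|E\|_{F} = 1$. Then I would fix $\delta := \epsilon / 2$ and take finite $\delta$-nets $N_{u} \subseteq S^{m-1}$ and $N_{v} \subseteq S^{n-1}$ of the two unit spheres in Euclidean distance; such nets exist because the spheres are compact (alternatively, by a standard volumetric counting argument), and their cardinalities depend only on $m$, $n$ and $\epsilon$. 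The claimed collection is then $\{ E_{i} \}_{i \in [M]} := \{ \ubf \vbf^{\top} : \ubf \in N_{u},\, \vbf \in N_{v} \}$, consisting of rank-one matrices, with $M = |N_{u}| \cdot |N_{v}|$ depending only on $m$, $n$ and $\epsilon$.

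The one estimate to verify is bilinear in nature. Given a unit-norm rank-one $E = \xbf \ybf^{\top}$, I would pick $\ubf \in N_{u}$ and $\vbf \in N_{v}$ with $\|\xbf - \ubf\|_{2} < \delta$ and $\|\ybf - \vbf\|_{2} < \delta$, and then bound
\begin{align*}
\| \xbf \ybf^{\top} - \ubf \vbf^{\top} \|_{F} \leq \| ( \xbf - \ubf ) \ybf^{\top} \|_{F} + \| \ubf ( \ybf - \vbf )^{\top} \|_{F} = \| \xbf - \ubf \|_{2} \| \ybf \|_{2} + \| \ubf \|_{2} \| \ybf - \vbf \|_{2} < 2 \delta = \epsilon ,
\end{align*}
using the triangle inequality together with the identity $\| \abf \bbf^{\top} \|_{F} = \| \abf \|_{2} \| \bbf \|_{2}$ for vectors $\abf, \bbf$ (which also follows from sub-multiplicativity of the Frobenius norm, \cref{lemma:frob_sub_mult}, since $\| \ybf \|_{2} = \| \ubf \|_{2} = 1$). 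Since every unit-norm rank-one matrix has the form $\xbf \ybf^{\top}$, this shows the chosen $\ubf \vbf^{\top} \in \{ E_{i} \}_{i \in [M]}$ lies within $\epsilon$ of $E$, completing the argument.

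I do not expect a genuine obstacle: the content is just the observation that the unit-norm rank-one matrices form a compact (hence totally bounded) set, and the only points needing routine care are the existence of finite nets of the spheres and the elementary bilinear Lipschitz bound above.
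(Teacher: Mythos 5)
Your proof is correct and is exactly the standard $\epsilon$-net argument that the paper delegates to a textbook reference (\citet{Vershynin2025-ex4.50}): you discretize the two unit spheres, take outer products, and use the bilinear decomposition $\xbf\ybf^{\top}-\ubf\vbf^{\top}=(\xbf-\ubf)\ybf^{\top}+\ubf(\ybf-\vbf)^{\top}$ together with $\|\abf\bbf^{\top}\|_{F}=\|\abf\|_2\|\bbf\|_2$ to get the Lipschitz bound. The only thing the paper itself supplies is the citation, so your write-up simply makes explicit what the referenced exercise proves.
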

\begin{proof}
	Standard, see \citet{Vershynin2025-ex4.50}.
\end{proof}

\begin{definition}\label{def:char_func}
	Let $X$ be a random vector taking values in $\mathbb{R}^\din$. The \textit{characteristic function} of $X$ is the function $\varphi_X: \mathbb{R}^\din \to \mathbb{C}$ defined for any $\tbf\in\BR^{\din}$ by:
	\begin{align*}
		\varphi_X(\tbf) = \mathbb{E}\left[e^{i \inprod{\tbf}{X}}\right]
		\text{\,,}
	\end{align*}
	where $\inprod{\tbf}{X}$ denotes the standard inner product in $\mathbb{R}^{\din}$.
\end{definition}

\begin{lemma}\label{lemma:dense_func_inv_form}
	Let $X$ be a random vector taking values in $\mathbb{R}^\din$ and let $\phi_X(\tbf)$ be its characteristic function (\cref{def:char_func}). If $X$ has a probability density function $f_X(\xbf)$, then it can be recovered using the following inversion formula
	\begin{align*}
		f_X(\xbf) = \frac{1}{(2\pi)^{\din}} \int_{\mathbb{R}^{\din}} e^{-i \inprod{\tbf}{\xbf}} \phi_X(\tbf)d\tbf
		\text{\,.}
	\end{align*}
\end{lemma}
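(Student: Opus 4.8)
This is the classical multivariate Fourier inversion formula; one legitimate option is simply to cite a standard probability or harmonic-analysis reference. If a self-contained argument is wanted, the plan is to proceed via a Gaussian mollifier.

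First I would introduce, for $\sigma > 0$, an auxiliary vector $Z \sim \NN(0, I_\din)$ independent of $X$, so that $X + \sigma Z$ has density $g_\sigma := f_X * p_\sigma$ (with $p_\sigma$ the density of $\NN(0,\sigma^2 I_\din)$) and characteristic function $\tbf \mapsto \phi_X(\tbf)\, e^{-\sigma^2 \|\tbf\|_2^2/2}$. The point of the smoothing is that this characteristic function is absolutely integrable — it is dominated by the integrable Gaussian $e^{-\sigma^2\|\tbf\|_2^2/2}$ — so the inversion formula for $g_\sigma$ can be established directly: I would plug the elementary Gaussian identity $p_\sigma(\xbf-\ybf) = (2\pi)^{-\din}\int_{\BR^\din} e^{-i\inprod{\tbf}{\xbf-\ybf}} e^{-\sigma^2\|\tbf\|_2^2/2}\, d\tbf$ into $g_\sigma(\xbf) = \int f_X(\ybf)\, p_\sigma(\xbf-\ybf)\, d\ybf$ and exchange the order of integration (justified since the joint integrand is absolutely integrable), obtaining
\[
g_\sigma(\xbf) = \frac{1}{(2\pi)^\din}\int_{\BR^\din} e^{-i\inprod{\tbf}{\xbf}}\, \phi_X(\tbf)\, e^{-\sigma^2\|\tbf\|_2^2/2}\, d\tbf \text{\,.}
\]

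Next I would let $\sigma \to 0$. On the left, $f_X * p_\sigma \to f_X$ in $L^1(\BR^\din)$ because $\{p_\sigma\}_{\sigma>0}$ is an approximate identity, so in particular the convergence holds pointwise almost everywhere along a subsequence, and at every continuity point of $f_X$ (hence everywhere, when $f_X$ is continuous). On the right, since $|\phi_X(\tbf)\, e^{-\sigma^2\|\tbf\|_2^2/2}| \le |\phi_X(\tbf)|$, dominated convergence lets the integral pass to $(2\pi)^{-\din}\int_{\BR^\din} e^{-i\inprod{\tbf}{\xbf}}\phi_X(\tbf)\, d\tbf$. Equating the two limits yields the claimed identity.

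The one point requiring care — and the main (mild) obstacle — is the integrability of $\phi_X$: without $\phi_X \in L^1(\BR^\din)$ the right-hand side is not well-defined as an absolutely convergent integral, so the lemma should be understood as carrying that hypothesis, which is precisely the situation in which it is applied in the paper (via \cref{lemma:L1_bound}). Under that hypothesis the dominated-convergence step is immediate; everything else — the Fubini exchange, the Gaussian identity, and the approximate-identity convergence — is routine.
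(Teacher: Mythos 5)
Your argument is correct and is precisely the standard Gaussian-mollifier proof of the multivariate Fourier inversion formula; the paper itself offers no proof, only a citation to a standard reference, so you are filling in exactly the kind of derivation that reference would contain. The steps you give — smoothing $X$ by an independent $\NN(0,\sigma^2 I_\din)$ so that the smoothed characteristic function $\phi_X(\tbf)e^{-\sigma^2\|\tbf\|_2^2/2}$ is absolutely integrable, deriving the inversion identity for the mollified density by a Fubini exchange against the Gaussian Fourier kernel, and then letting $\sigma\to 0$ using approximate-identity convergence on the left and dominated convergence on the right — are all sound and justified as you describe.

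You also make a correct and worthwhile observation: as written, the lemma tacitly assumes $\phi_X\in L^1(\BR^{\din})$, since otherwise the right-hand side need not be an absolutely convergent integral and the dominated-convergence step in your argument (and the very statement of the formula) would break down. In the paper this hypothesis is available wherever the lemma is invoked (it is what \cref{lemma:L1_bound} supplies, and \cref{lemma:density_sup,lemma:equi_upper} only apply it to characteristic functions that are assumed integrable), so the implicit assumption is harmless there — but a careful restatement of the lemma would make the integrability of $\phi_X$ explicit, and your proof correctly identifies this as the only nontrivial hypothesis needed.
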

\begin{proof}
	Standard, see \citet{Zitkovic2013-cor8.5}.
\end{proof}

\begin{lemma}\label{lemma:density_sup}
	Let $f: \mathbb{R}^{\din} \to [0, \infty)$ be a probability density function with characteristic function $\varphi(\tbf)$. The supremum of $f$, denoted by $\norm{f}_{\infty} := \sup_{\xbf \in \mathbb{R}^{\din}} |f(\xbf)|$, is bounded by the $L^1$-norm of its characteristic function. Specifically,
	\begin{align*}
		\norm{f}_{\infty} \leq \frac{1}{(2\pi)^{\din}} \int_{\mathbb{R}^{\din}} |\varphi(\tbf)|d\tbf
		\text{\,.}
	\end{align*}
\end{lemma}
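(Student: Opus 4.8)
The plan is to deduce this directly from the Fourier inversion formula stated in \cref{lemma:dense_func_inv_form}, after first disposing of the trivial case. If $\int_{\mathbb{R}^{\din}} |\varphi(\tbf)| d\tbf = \infty$, then the claimed upper bound equals $+\infty$ and the inequality holds vacuously, so I would assume henceforth that $\varphi$ is integrable, i.e.\ $\varphi \in L^1(\mathbb{R}^{\din})$.

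Under this integrability assumption, \cref{lemma:dense_func_inv_form} gives, for every $\xbf \in \mathbb{R}^{\din}$,
\[
f(\xbf) = \frac{1}{(2\pi)^{\din}} \int_{\mathbb{R}^{\din}} e^{-i \inprod{\tbf}{\xbf}} \varphi(\tbf)\, d\tbf
\text{\,.}
\]
I would then pass to absolute values inside the integral, using the triangle inequality for integrals together with the fact that $|e^{-i \inprod{\tbf}{\xbf}}| = 1$ for all real $\inprod{\tbf}{\xbf}$, to obtain
\[
|f(\xbf)| \leq \frac{1}{(2\pi)^{\din}} \int_{\mathbb{R}^{\din}} \big| e^{-i \inprod{\tbf}{\xbf}} \varphi(\tbf) \big|\, d\tbf = \frac{1}{(2\pi)^{\din}} \int_{\mathbb{R}^{\din}} |\varphi(\tbf)|\, d\tbf
\text{\,.}
\]
The right-hand side does not depend on $\xbf$, so taking the supremum over $\xbf \in \mathbb{R}^{\din}$ yields $\norm{f}_{\infty} \leq \frac{1}{(2\pi)^{\din}} \int_{\mathbb{R}^{\din}} |\varphi(\tbf)|\, d\tbf$, which is exactly the asserted bound.

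There is no real obstacle here; the only point warranting any care is that the inversion formula of \cref{lemma:dense_func_inv_form} is applied only when $\varphi \in L^1$, which is precisely why I would begin with the case split above. (Strictly, one should also note that when $\varphi \in L^1$ the integral defining $f$ on the right-hand side is a bona fide Lebesgue integral of an $L^1$ function, so the bound on $|f(\xbf)|$ holds for every $\xbf$ at which the inversion formula is valid; since we are bounding the essential/pointwise supremum of a density, this suffices.)
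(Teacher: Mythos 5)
Your proof is correct and takes essentially the same route as the paper's: apply the Fourier inversion formula of \cref{lemma:dense_func_inv_form}, move the absolute value inside the integral using $|e^{-i\inprod{\tbf}{\xbf}}|=1$, and take the supremum over $\xbf$. Your preliminary case split to handle $\varphi \notin L^1$ is a small extra touch of rigor that the paper omits, but it does not change the substance of the argument.
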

\begin{proof}
	By the Fourier inversion formula for a probability density function $f$ on $\mathbb{R}^{\din}$ (\cref{lemma:dense_func_inv_form}):
	\begin{align*}
		f(\xbf) = \frac{1}{(2\pi)^{\din}} \int_{\mathbb{R}^{\din}} e^{-i \inprod{\tbf}{\xbf}} \varphi(\tbf)d\tbf
		\text{\,.}
	\end{align*}
	Taking the absolute value, we get:
	\begin{align*}
		|f(\xbf)| \leq \frac{1}{(2\pi)^{\din}} \int_{\mathbb{R}^{\din}} \left|e^{-i \inprod{\tbf}{\xbf}}\right||\varphi(\tbf)| d\tbf
		\text{\,.}
	\end{align*}
	Since $\left|e^{-i \inprod{\tbf}{\xbf}}\right| = 1$ for all $\tbf \in \mathbb{R}^{\din}$, the latter simplifies to:
	\begin{align*}
		|f(\xbf)| \leq \frac{1}{(2\pi)^{\din}} \int_{\mathbb{R}^{\din}} |\varphi(\tbf)| d\tbf
		\text{\,.}
	\end{align*}
	Taking the supremum over all $\xbf \in \mathbb{R}^{\din}$, we obtain:
	\begin{align*}
		\norm{f}_{\infty}  \leq \frac{1}{(2\pi)^{\din}} \int_{\mathbb{R}^{\din}} |\varphi(\tbf)| d\tbf
	\end{align*}
	as required.
\end{proof}

\begin{lemma}\label{lemma:equi_upper}
	Let $X$ be a random vector in $\mathbb{R}^{\din}$ with density function $f_X(\xbf)$ and characteristic function $\phi_X(\tbf)$. For any two points $\xbf, \ybf \in \mathbb{R}^{\din}$, the difference between their densities is bounded by:
	\begin{align*}
		|f_X(\xbf) - f_X(\ybf)| \leq \frac{\|\xbf - \ybf\|_{2}}{(2\pi)^{\din}} \int_{\mathbb{R}^{\din}} \|\tbf\|_{2} |\phi_X(\tbf)| d\tbf
		\text{\,.}
	\end{align*}
\end{lemma}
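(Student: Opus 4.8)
The plan is to derive the bound in one short Fourier-analytic step, using the inversion formula already recorded as \cref{lemma:dense_func_inv_form}. First I would dispose of the trivial case: if $\int_{\mathbb{R}^{\din}} \|\tbf\|_2 |\phi_X(\tbf)|\, d\tbf = \infty$ there is nothing to prove, so I may assume this integral is finite. Since $|\phi_X(\tbf)| \leq 1$ everywhere (the characteristic function is the expectation of a unit-modulus quantity), the contribution of $\{\|\tbf\|_2 \leq 1\}$ to $\int |\phi_X|$ is finite, while on $\{\|\tbf\|_2 > 1\}$ one has $|\phi_X(\tbf)| \leq \|\tbf\|_2 |\phi_X(\tbf)|$; hence $\phi_X \in L^1(\mathbb{R}^{\din})$ and \cref{lemma:dense_func_inv_form} applies, giving
\begin{align*}
f_X(\xbf) - f_X(\ybf) = \frac{1}{(2\pi)^{\din}} \int_{\mathbb{R}^{\din}} \left( e^{-i \inprod{\tbf}{\xbf}} - e^{-i \inprod{\tbf}{\ybf}} \right) \phi_X(\tbf)\, d\tbf
\text{\,.}
\end{align*}
Taking absolute values and pushing them inside (triangle inequality for integrals),
\begin{align*}
|f_X(\xbf) - f_X(\ybf)| \leq \frac{1}{(2\pi)^{\din}} \int_{\mathbb{R}^{\din}} \left| e^{-i \inprod{\tbf}{\xbf}} - e^{-i \inprod{\tbf}{\ybf}} \right| |\phi_X(\tbf)|\, d\tbf
\text{\,.}
\end{align*}

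Next I would bound the exponential difference pointwise in $\tbf$. For real $a, b$ one has $|e^{ia} - e^{ib}| = |e^{i(a-b)} - 1| = 2|\sin((a - b)/2)| \leq |a - b|$. Applying this with $a = -\inprod{\tbf}{\xbf}$ and $b = -\inprod{\tbf}{\ybf}$, and then Cauchy--Schwarz, gives
\begin{align*}
\left| e^{-i \inprod{\tbf}{\xbf}} - e^{-i \inprod{\tbf}{\ybf}} \right| \leq \left| \inprod{\tbf}{\ybf - \xbf} \right| \leq \|\tbf\|_2 \, \|\xbf - \ybf\|_2
\text{\,.}
\end{align*}
Substituting this into the previous display and pulling the factor $\|\xbf - \ybf\|_2$, which does not depend on $\tbf$, outside the integral yields exactly the asserted bound.

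There is essentially no obstacle here: the argument is the inversion formula followed by the elementary estimate $|e^{ia} - e^{ib}| \leq |a - b|$ and Cauchy--Schwarz. The only point requiring a word of care is the applicability of \cref{lemma:dense_func_inv_form}, which is why I would open with the remark that finiteness of the weighted $L^1$ norm $\int \|\tbf\|_2 |\phi_X(\tbf)|\, d\tbf$, together with $|\phi_X| \leq 1$, forces $\phi_X \in L^1$; in the context where this lemma is used (the proof of \cref{lemma:equi_char}) this weighted bound is precisely the standing hypothesis, so the estimate is unconditional there. Together with \cref{lemma:density_sup}, this lemma is exactly what supplies the uniform boundedness and equicontinuity needed to invoke \cref{thm:unif_conv_dens}.
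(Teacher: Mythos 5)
Your proof is correct and follows essentially the same route as the paper's: Fourier inversion, the pointwise estimate $|e^{ia}-e^{ib}|\leq|a-b|$, then Cauchy--Schwarz. The only addition you make is the (sensible) remark that finiteness of the weighted $L^1$ norm together with $|\phi_X|\leq 1$ forces $\phi_X\in L^1$, justifying the use of the inversion formula; the paper takes this for granted.
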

\begin{proof}
	From the inversion formula for the density function (\cref{lemma:dense_func_inv_form}), we write:
	\begin{align*}
		f_X(\xbf) - f_X(\ybf) = \frac{1}{(2\pi)^{\din}} \int_{\mathbb{R}^{\din}} \left(e^{-i \inprod{\tbf}{\xbf}} - e^{-i \inprod{\tbf}{\ybf}}\right) \phi_X(\tbf) d\tbf
		\text{\,.}
	\end{align*}
	By factoring out the exponentials:
	\begin{align*}
		e^{-i \inprod{\tbf}{\xbf}} - e^{-i \inprod{\tbf}{\ybf}} = e^{-i \inprod{\tbf}{\ybf}} \left( 1 - e^{i \inprod{\tbf}{\ybf-\xbf}} \right)
		\text{\,.}
	\end{align*}
	Taking absolute values and using the bound:
	\begin{align*}
		\left| 1 - e^{i \inprod{\tbf}{\ybf-\xbf}} \right| \leq |\inprod{\tbf}{\ybf-\xbf}|
		\text{\,,}
	\end{align*}
	resulting in
	\begin{align*}
		|f_X(\xbf) - f_X(\ybf)| \leq \frac{1}{(2\pi)^{\din}} \int_{\mathbb{R}^{\din}} |\inprod{\tbf}{\ybf-\xbf}| |\phi_X(\tbf)| d\tbf
		\text{\,.}
	\end{align*}
	Finally, applying the Cauchy-Schwarz inequality we obtain that
	\begin{align*}
		|\inprod{\tbf}{\ybf-\xbf}| \leq \|\tbf\|_{2} \|\xbf - \ybf\|_{2}
		\text{\,,}
	\end{align*}
	which implies that
	\begin{align*}
		|f_X(\xbf) - f_X(\ybf)| \leq \frac{\|\xbf - \ybf\|_{2}}{(2\pi)^{\din}} \int_{\mathbb{R}^{\din}} \|\tbf\|_{2} |\phi_X(\tbf)| d\tbf
	\end{align*}
	as required.
\end{proof}

\begin{lemma}\label{lemma:gauss_char}
	Let $X \sim \NN(0, \Sigma)$ be a zero-centered Gaussian random vector in $\mathbb{R}^{\din}$ with covariance matrix $\Sigma\in\BR^{\din,\din}$. The characteristic function of $X$ is given for any $\tbf\in\BR^{\din}$ by:
	\begin{align*}
		\varphi_X(\tbf) = \exp\left(-\frac{1}{2} \inprod{\tbf}{\Sigma\tbf}\right)
		\text{\,.}
	\end{align*}
\end{lemma}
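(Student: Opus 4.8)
The plan is to reduce the multivariate statement to the one-dimensional Gaussian characteristic function. Fix $\tbf \in \BR^{\din}$ and set $Y := \inprod{\tbf}{X}$. Since $X \sim \NN(0,\Sigma)$, the scalar $Y$ is a univariate Gaussian with mean zero and variance $\Var(Y) = \inprod{\tbf}{\Sigma \tbf}$; this is immediate from linearity of expectation together with the defining property of the covariance matrix, and it remains valid when $\Sigma$ is merely positive semidefinite (in which case $Y$ may be degenerate, with $\Var(Y) = 0$). By the definition of the characteristic function (\cref{def:char_func}), $\varphi_X(\tbf) = \EE[e^{i\inprod{\tbf}{X}}] = \EE[e^{iY}]$, so it suffices to show that a real Gaussian $Y \sim \NN(0,\sigma^2)$ satisfies $\EE[e^{iY}] = e^{-\sigma^2/2}$.

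For the one-dimensional claim I would use the ODE method. Define $g : \BR \to \C$ by $g(s) := \EE[e^{isY}]$. Differentiating under the integral sign (justified since $|iy\, e^{isy}| \le |y|$, which is integrable against the Gaussian density $p$) gives $g'(s) = \EE[iY e^{isY}]$. Using the identity $y\, p(y) = -\sigma^2 p'(y)$ followed by integration by parts, this becomes $g'(s) = -\sigma^2 s\, g(s)$, with initial condition $g(0) = 1$. The unique solution is $g(s) = e^{-s^2\sigma^2/2}$, and evaluating at $s = 1$ yields $\EE[e^{iY}] = e^{-\sigma^2/2}$. (Equivalently, one could complete the square in the defining integral and shift the contour of integration into the complex plane, invoking Cauchy's theorem for the entire function $z \mapsto e^{-z^2/(2\sigma^2)}$ together with a tail estimate; either route is routine.) Substituting $\sigma^2 = \inprod{\tbf}{\Sigma\tbf}$ gives $\varphi_X(\tbf) = \exp\big(-\tfrac12 \inprod{\tbf}{\Sigma\tbf}\big)$, as desired; in the degenerate case $\Var(Y) = 0$ both sides equal $1$, so the formula still holds.

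The only point requiring any care is the interchange of differentiation and integration (or, in the contour-shift variant, the application of Cauchy's theorem on a rectangular contour plus the vanishing of the side integrals). Neither step is substantive, and this is in any case a classical fact that could simply be invoked from a standard reference; I expect no genuine obstacle.
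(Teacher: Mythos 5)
Your proof is correct. The paper itself gives no argument here---it simply writes ``Standard, see Vershynin'' and moves on---so there is nothing to compare against beyond noting that your write-up is a self-contained rendition of the classical fact the paper defers to a textbook. The reduction to a one-dimensional Gaussian via $Y=\inprod{\tbf}{X}$ is the standard route, and both of the one-dimensional arguments you sketch (the ODE $g'(s)=-\sigma^2 s\,g(s)$ with $g(0)=1$, or completing the square and shifting the contour) are valid; your handling of the degenerate case $\Var(Y)=0$ is a nice touch that a careless treatment might skip.

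One tiny remark: the identity $y\,p(y)=-\sigma^2 p'(y)$ and the subsequent integration by parts require $\sigma^2>0$ to even write down $p$, so technically the ODE argument is for the nondegenerate case and the degenerate case is the separate observation you already made at the end; it would be cleaner to say so explicitly rather than folding it into the same paragraph. This is cosmetic, not a gap.
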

\begin{proof}
	Standard, see \citet{Vershynin2025-lemma3.4}.
\end{proof}

\begin{lemma}\label{lemma:det_form}
	Let $X \sim \NN(0, I_{\din})$ be a standard Gaussian random vector in $\mathbb{R}^{\din}$, and let $A \in \mathbb{R}^{\din, \din}$ be a positive semi-definite (PSD) matrix. It holds that
	\begin{align*}
		\mathbb{E}\left[e^{-X^{\top} A X}\right] = \frac{1}{\sqrt{\det(I_{\din} + 2A)}}
		\text{\,.}
	\end{align*}
\end{lemma}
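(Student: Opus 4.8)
The plan is to reduce the $\din$-dimensional integral to a product of one-dimensional Gaussian integrals via an orthogonal diagonalization of $A$. First I would observe that $X^\top A X = X^\top \tfrac{1}{2}(A + A^\top) X$, so we may assume without loss of generality that $A$ is symmetric; being PSD, it then admits a spectral decomposition $A = Q \Lambda Q^\top$ with $Q$ orthogonal and $\Lambda = \diag(\lambda_1, \ldots, \lambda_\din)$, where $\lambda_i \geq 0$ for all $i$.

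Next I would substitute $Y := Q^\top X$. Since $X \sim \NN(0, I_\din)$ and $Q$ is orthogonal, $Y$ is again a standard Gaussian vector in $\BR^\din$, so its coordinates $Y_1, \ldots, Y_\din$ are i.i.d.\ $\NN(0,1)$, and $X^\top A X = Y^\top \Lambda Y = \sum_{i=1}^\din \lambda_i Y_i^2$. By independence of the coordinates this factorizes:
\[
\EE\big[e^{-X^\top A X}\big] = \EE\Big[\prod_{i=1}^\din e^{-\lambda_i Y_i^2}\Big] = \prod_{i=1}^\din \EE\big[e^{-\lambda_i Y_i^2}\big]
\text{\,.}
\]
Then I would evaluate each factor by completing the square against the Gaussian density: for any $\lambda \geq 0$,
\[
\EE\big[e^{-\lambda Z^2}\big] = \frac{1}{\sqrt{2\pi}} \int_{\BR} e^{-(\lambda + 1/2) z^2}\, dz = \frac{1}{\sqrt{1 + 2\lambda}}
\text{\,,}
\]
where the PSD assumption is used to guarantee $\lambda + \tfrac12 > 0$ and hence convergence of the integral.

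Multiplying over $i$ gives $\EE\big[e^{-X^\top A X}\big] = \prod_{i=1}^\din (1 + 2\lambda_i)^{-1/2}$, and I would close by noting that $\det(I_\din + 2A) = \det\big(Q(I_\din + 2\Lambda)Q^\top\big) = \det(I_\din + 2\Lambda) = \prod_{i=1}^\din (1 + 2\lambda_i)$, so the product equals $\det(I_\din + 2A)^{-1/2}$, as claimed. There is no genuine obstacle in this argument; the only steps requiring any care are the symmetrization (so that the spectral theorem applies) and the appeal to PSD-ness to ensure every one-dimensional integral converges, after which the computation is routine.
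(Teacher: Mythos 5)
Your proof is correct. It takes a somewhat different route than the paper's: the paper combines the two exponentials into a single quadratic form $e^{-\frac{1}{2}\xbf^\top(I_{\din}+2A)\xbf}$ and then invokes, as a black box, the standard formula $\int_{\BR^{\din}} e^{-\frac{1}{2}\xbf^\top B\xbf}\,d\xbf = (2\pi)^{\din/2}/\sqrt{\det B}$ for PSD $B$. You instead diagonalize $A=Q\Lambda Q^\top$, exploit orthogonal invariance of the standard Gaussian via $Y=Q^\top X$, factor the expectation into one-dimensional integrals $\prod_i \EE[e^{-\lambda_i Y_i^2}]$, evaluate each by completing the square, and reassemble the determinant. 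In effect you are re-deriving the very formula the paper cites, so your argument is more elementary and self-contained, at the cost of a bit more work; you also handle the (immaterial, given the usual convention that PSD implies symmetric) case of non-symmetric $A$ via the symmetrization step. Both arguments are sound, and both rely on PSD-ness only to ensure $I_{\din}+2A$ is positive definite so the Gaussian integral converges.
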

\begin{proof}
	The expectation is given by:
	\begin{align*}
		\mathbb{E}\left[e^{-X^{\top} A X}\right] = \int_{\mathbb{R}^{\din}} e^{-\xbf^{\top} A \xbf} \frac{1}{(2\pi)^{\din/2}} e^{-\frac{1}{2} \xbf^{\top} \xbf} d\xbf
		\text{\,.}
	\end{align*}
	Combine the exponential terms:
	\begin{align*}
		e^{-\xbf^{\top} A \xbf} e^{-\frac{1}{2} \xbf^{\top} \xbf} = e^{-\frac{1}{2} \xbf^{\top} (2A + I_{\din}) \xbf}
		\text{\,.}
	\end{align*}
	Thus,
	\begin{align*}
		\mathbb{E}\left[e^{-X^{\top} A X}\right] = \frac{1}{(2\pi)^{\din/2}}\int_{\mathbb{R}^{\din}}  e^{-\frac{1}{2} \xbf^{\top} (2A + I_{\din}) \xbf}d\xbf
		\text{\,.}
	\end{align*}
	This is the Gaussian integral over $\mathbb{R}^{\din}$ for a quadratic form. Using the standard result for multivariate Gaussian integrals
	\begin{align*}
		\int_{\mathbb{R}^{\din}} e^{-\frac{1}{2} \xbf^{\top} B \xbf}d\xbf = \frac{(2\pi)^{{\din}/2}}{\sqrt{\det(B)}}
	\end{align*}
	where $B$ is a PSD matrix. Observing that the matrix $B = I_{\din}+ 2A$ is PSD, we conclude that
	\begin{align*}
		\mathbb{E}\left[e^{-X^{\top} A X}\right] = \frac{1}{\sqrt{\det(I_{\din}+ 2A)}}
	\end{align*}
	as required.
\end{proof}

\begin{lemma}\label{lemma:var_change_svd}
	Let $f: \mathbb{R}^{\din , \din} \to \mathbb{R}$ be a function that depends only on the singular values of a matrix $X \in \mathbb{R}^{\din, \din}$. Then, the integral of $f$ over the space $\BR^{\din,\din}$ matrices can be expressed as an integral over the singular values as follows:
	\begin{align*}
		\int_{\mathbb{R}^{\din, \din}} f(X)dX = C_{\din} \int_{\sigma_1 \geq \sigma_2 \geq \cdots \geq \sigma_{\din} \geq 0} f(\sigmabf) \cdot \Delta(\sigmabf)^2 d\sigmabf
	\end{align*}
	where:
	\begin{itemize}
		\item 
		$C_{\din}$ is a constant depending on the dimension $\din$.
		\item 
		$\Delta(\sigmabf) = \prod_{1 \leq i < j \leq \din} (\sigma_i^2 - \sigma_j^2)$ is the Vandermonde determinant of the squared singular values.
		\item 
		$d\sigmabf$ represents the differential volume element over the singular values.
	\end{itemize}
\end{lemma}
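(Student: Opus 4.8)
The plan is a change of variables to singular value coordinates. Write $X = U \Sigma V^{\top}$ for the singular value decomposition of $X \in \R^{\din , \din}$, where $U, V \in O(\din)$, $\Sigma = \diag(\sigmabf)$, and $\sigmabf = (\sigma_1, \dots, \sigma_\din)$ with $\sigma_1 \geq \sigma_2 \geq \cdots \geq \sigma_\din \geq 0$. Off a set of Lebesgue measure zero --- the real-analytic, lower-dimensional locus on which two singular values coincide or some singular value vanishes --- this representation exists and is finite-to-one, the only remaining ambiguity being the simultaneous sign flips $(U, V) \mapsto (U D, V D)$ with $D$ a diagonal sign matrix. The first step is thus to localize away from this locus (which carries no mass, as the integral is taken against Lebesgue measure) and rewrite $\int_{\R^{\din , \din}} f(X)\, dX$ as an integral over $O(\din) \times \{\sigma_1 \geq \cdots \geq \sigma_\din \geq 0\} \times O(\din)$ against the pullback of Lebesgue measure.

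The second step is to evaluate the Jacobian of this parameterization --- a classical singular value Jacobian --- by first-order perturbation. Setting $U^{\top} dU = A$ and $V^{\top} dV = B$ with $A, B$ skew-symmetric, and conjugating $dX$ by $U$ and $V$ (which is volume-preserving on $\R^{\din , \din}$), one obtains $U^{\top} (dX)\, V = d\Sigma + A \Sigma - \Sigma B$. Its diagonal entries are exactly $d\sigma_1, \dots, d\sigma_\din$, since skew matrices vanish on the diagonal, so they contribute a trivial factor; for each pair $i < j$ the two off-diagonal entries in positions $(i,j)$ and $(j,i)$ depend linearly on the independent rotational directions supplied by $U$ and by $V$ through a $2 \times 2$ block whose determinant is $\pm(\sigma_i^2 - \sigma_j^2)$. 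The resulting Jacobian weight is the classical one, $\Delta(\sigmabf)^2 = \prod_{1 \leq i < j \leq \din}(\sigma_i^2 - \sigma_j^2)^2$, while the Haar volumes of the two copies of $O(\din)$ and the constant SVD multiplicity are swept into a single dimension-dependent constant $C_\din$. Because $f$ depends only on $\sigmabf$, integrating out $U$ and $V$ then produces the asserted identity.

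I expect the main obstacle to be making this Jacobian step fully rigorous: one must verify that, in the local coordinates above, the product of the two Haar measures on $O(\din)$ coincides --- up to $C_\din$ --- with Lebesgue measure on the skew entries of $A$ and $B$, and that the off-diagonal blocks decouple both from one another and from the diagonal block, so that the Jacobian determinant genuinely factorizes as $\Delta(\sigmabf)^2$ times a constant. A secondary, routine point is checking that the degenerate locus on which the parameterization breaks down has measure zero and may be discarded. Once these are in place, the remainder is a standard computation.
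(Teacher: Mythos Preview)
Your approach is essentially the same as the paper's: both perform the SVD change of variables $X = U\Sigma V^{\top}$, invoke the Jacobian $J(\Sigma) = \Delta(\sigmabf)^2$, and integrate out the orthogonal factors $U,V$ into the constant $C_{\din}$. The paper's proof is actually less detailed than yours---it simply cites a reference for the Jacobian formula rather than deriving it via the perturbation computation $U^{\top}(dX)V = d\Sigma + A\Sigma - \Sigma B$ that you outline.
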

\begin{proof}
	Consider the singular value decomposition (SVD) of $X$:
	\begin{align*}
		X = U \Sigma V^{\top}
		\text{\,,}
	\end{align*}
	where $U, V \in O(\din)$ are orthogonal matrices, and $\Sigma = \operatorname{diag}(\sigma_1, \sigma_2, \ldots, \sigma_{\din})$ is a diagonal matrix containing the singular values $\sigma_i$ of $X$.
	
	The differential volume element $dX$ in the space of $\din, \din$ matrices can be decomposed into the product of volume elements corresponding to $U$, $\Sigma$, and $V$, along with the Jacobian determinant of the transformation:
	\begin{align*}
		dX = J(\Sigma) dU d\Sigma dV
	\end{align*}
	where $J(\Sigma)$ is the Jacobian determinant associated with the change of variables from $X$ to $(U, \Sigma, V)$.
	
	For a function $f$ that depends only on the singular values, the integral over the orthogonal matrices $U$ and $V$ contribute to the constant $C_{\din}$, allowing us to focus on the integral over the singular values. The Jacobian determinant $J(\Sigma)$ for the transformation involving singular values in the space of $\din , \din$ matrices is given by:
	\begin{align*}
		J(\Sigma) = \Delta(\sigmabf)^2
	\end{align*}
	where $\Delta(\sigmabf) = \prod_{1 \leq i < j \leq \din} (\sigma_i^2 - \sigma_j^2)$ (see \citet{RennieF04}).
	
	Therefore, the integral over the space of $\din, \din$ matrices can be rewritten as:
	\begin{align*}
		\int_{\mathbb{R}^{\din, \din}} f(X) dX = C_\din \int_{\sigma_1 \geq \sigma_2 \geq \cdots \geq \sigma_\din \geq 0} f(\sigmabf) \cdot \Delta(\sigmabf)^2 d\sigmabf
	\end{align*}
	where $d\sigmabf$ is the measure on the singular values.
\end{proof}

\begin{corollary}\label{cor:specific_int}
	Consider the function $f(X) = \|X\|_{F} \left(\frac{1}{\sqrt{\det\left(I + \frac{X^{\top} X}{\dhid \din}\right)}}\right)^{\dhid}$, where $X \in \mathbb{R}^{\din , \din}$. Using the change of variables to singular values (\cref{lemma:var_change_svd}):
	\begin{align*}
		&\int_{\mathbb{R}^{\din , \din}} \|X\|_{F} \left(\frac{1}{\sqrt{\det\left(I + \frac{X^{\top} X}{\dhid \din}\right)}}\right)^{\dhid} dX \\
		&= C_{\din} \int_{\sigma_1 \geq \sigma_2 \geq \cdots \geq \sigma_{\din} \geq 0} \left( \sum_{i=1}^{\din} \sigma_i^2 \right)^{1/2} \cdot \left(\frac{1}{\sqrt{\prod_{i=1}^{\din} 1 + \frac{\sigma_i^2}{\dhid \din}}}\right)^{\dhid} \Delta(\sigmabf)^2 d\sigmabf
		\text{\,,}
	\end{align*}
	where:
	\begin{itemize}
		\item $\|X\|_{F} = \left(\sum_{i=1}^{\din} \sigma_i^2\right)^{1/2}$ is the Frobenius norm of $X$.
		\item $\sqrt{\det\left(I + \frac{X^{\top} X}{\dhid \din}\right)} = \sqrt{\prod_{i=1}^{\din} \left(1 + \frac{\sigma_i^2}{\dhid \din}\right)}$.
		\item $\Delta(\sigma) = \prod_{1 \leq i < j \leq \din} \left(\sigma_i^2 - \sigma_j^2\right)$ is the Vandermonde determinant of the squared singular values.
	\end{itemize}
	This reduces the integral to one over the singular values $\sigma_1, \sigma_2, \ldots, \sigma_{\din}$.
\end{corollary}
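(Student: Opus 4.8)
The plan is to verify that the integrand $f(X) = \|X\|_{F} \big(\det(I_{\din} + X^{\top} X/(\dhid\din))\big)^{-\dhid/2}$ is a function of the singular values of $X$ alone, and then to invoke \cref{lemma:var_change_svd} directly, substituting the explicit expressions for $\|X\|_{F}$ and $\det(I_{\din} + X^{\top} X/(\dhid\din))$ in terms of those singular values.

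First I would recall that for a singular value decomposition $X = U\Sigma V^{\top}$ with $U, V \in O(\din)$ and $\Sigma = \diag(\sigma_{1}, \ldots, \sigma_{\din})$, the Frobenius norm satisfies $\|X\|_{F}^{2} = \Tr(X^{\top} X) = \Tr(\Sigma^{2}) = \sum_{i=1}^{\din} \sigma_{i}^{2}$, so that $\|X\|_{F} = \big(\sum_{i=1}^{\din} \sigma_{i}^{2}\big)^{1/2}$. Next, $X^{\top} X = V \Sigma^{2} V^{\top}$, hence $I_{\din} + X^{\top} X/(\dhid\din) = V\big(I_{\din} + \Sigma^{2}/(\dhid\din)\big)V^{\top}$; since the determinant is invariant under conjugation by an orthogonal matrix, $\det\big(I_{\din} + X^{\top} X/(\dhid\din)\big) = \det\big(I_{\din} + \Sigma^{2}/(\dhid\din)\big) = \prod_{i=1}^{\din}\big(1 + \sigma_{i}^{2}/(\dhid\din)\big)$. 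Consequently $f(X) = \big(\sum_{i} \sigma_{i}^{2}\big)^{1/2}\,\big(\prod_{i}(1 + \sigma_{i}^{2}/(\dhid\din))\big)^{-\dhid/2}$ depends only on $\sigma_{1}, \ldots, \sigma_{\din}$, which is precisely the hypothesis required by \cref{lemma:var_change_svd}.

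With this established, \cref{lemma:var_change_svd} applies verbatim: the integral over $\BR^{\din,\din}$ equals $C_{\din}$ times the integral of $f(\sigmabf)\,\Delta(\sigmabf)^{2}$ over the region $\sigma_{1} \geq \cdots \geq \sigma_{\din} \geq 0$, where $\Delta(\sigmabf) = \prod_{1 \leq i < j \leq \din}(\sigma_{i}^{2} - \sigma_{j}^{2})$. Substituting the two explicit expressions derived in the previous step then yields the claimed identity. There is essentially no technical obstacle here; the only point requiring (minor) care is the observation that $f$ is genuinely invariant under $X \mapsto UXV^{\top}$ for orthogonal $U, V$, that is, that it is indeed a function of the singular values alone, and this is exactly what the trace and determinant computations above confirm.
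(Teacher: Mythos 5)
Your proof is correct and follows the same approach as the paper: the corollary is an immediate application of \cref{lemma:var_change_svd}, and the two bullet-point identities in the statement (Frobenius norm and determinant as functions of singular values) are exactly the computations you spell out. The paper states these substitutions without further elaboration, so your slightly more detailed verification of orthogonal invariance is simply making explicit what the paper leaves implicit.
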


\begin{definition}\label{def:beta_func}
	The \textit{Beta function}, denoted by $B(x, y)$ for $x, y > 0$, is defined as:
	\begin{align*}
		B(x, y) = \int_0^\infty \frac{t^{x-1}}{(1+t)^{x+y}}dt
		\text{\,.}
	\end{align*}
\end{definition}

\begin{lemma}\label{lemma:beta_gamma}
	For any $x,y>0$ it holds that
	\begin{align*}
		B(x, y) = \frac{\Gamma(x) \Gamma(y)}{\Gamma(x+y)}
		\text{\,.}
	\end{align*}
\end{lemma}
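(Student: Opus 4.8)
The plan is to establish the identity by linking the integral representation of $B(x,y)$ in \cref{def:beta_func} to the classical product-of-Gammas double integral, via two changes of variables. First I would convert the given "second form" into the more familiar "first form" using the substitution $u = t/(1+t)$, equivalently $t = u/(1-u)$, which maps $(0,\infty)$ bijectively onto $(0,1)$ with $dt = (1-u)^{-2}\,du$ and $1+t = (1-u)^{-1}$. Substituting and collecting the powers of $(1-u)$ (the exponent works out to $-(x-1) + (x+y) - 2 = y-1$) yields $B(x,y) = \int_0^1 u^{x-1}(1-u)^{y-1}\,du$.

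Next I would expand $\Gamma(x)\Gamma(y)$ using \cref{def:gamma_func} as the iterated integral $\int_0^\infty\int_0^\infty s^{x-1}t^{y-1}e^{-(s+t)}\,ds\,dt$, and apply the change of variables $s = u r$, $t = u(1-r)$ with $(u,r)\in(0,\infty)\times(0,1)$, whose Jacobian determinant has absolute value $u$. Under this map $s+t = u$, so the exponential factor becomes $e^{-u}$ and the integrand separates as $u^{x+y-1}e^{-u}\cdot r^{x-1}(1-r)^{y-1}$. The integral therefore factors into $\left(\int_0^\infty u^{x+y-1}e^{-u}\,du\right)\left(\int_0^1 r^{x-1}(1-r)^{y-1}\,dr\right) = \Gamma(x+y)\cdot B(x,y)$, where the second factor is identified with $B(x,y)$ via the first step. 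Dividing by $\Gamma(x+y)$, which is finite and strictly positive for $x+y > 0$, gives the claim.

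The computations are entirely routine; the only points warranting a line of justification are the bijectivity and Jacobian of each substitution on the relevant domain, and the legitimacy of both the Fubini/Tonelli interchange and the multivariate change of variables in the double integral. These are immediate here because, for $x,y>0$, every integrand involved is nonnegative and the integrals converge, so Tonelli's theorem applies directly with no dominating-function argument needed. Consequently I do not expect any genuine obstacle: this is a standard lemma included for completeness, and the proof is simply a two-substitution calculation.
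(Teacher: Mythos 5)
Your proof is correct. The paper does not actually prove this lemma: its proof body consists only of the remark ``Standard, see \citet{Davis1972BetaFunction},'' deferring entirely to a textbook reference. You instead supply the full classical two-substitution argument, and every step checks out: the map $t = u/(1-u)$ correctly converts the paper's integral representation over $(0,\infty)$ to the more familiar $\int_0^1 u^{x-1}(1-u)^{y-1}\,du$ (the $(1-u)$ exponent bookkeeping $-(x-1)+(x+y)-2 = y-1$ is right), and the polar-type substitution $s = ur$, $t = u(1-r)$ with Jacobian $u$ factors $\Gamma(x)\Gamma(y)$ into $\Gamma(x+y)\,B(x,y)$. Your appeal to Tonelli rather than Fubini for the nonnegative integrand is exactly the right justification and removes the need for any integrability preamble. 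The only minor gain from making this explicit rather than citing, as the paper does, is that it makes the appendix self-contained; since the lemma is genuinely classical, either choice is defensible.
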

\begin{proof}
	Standard, see \citet{Davis1972BetaFunction}.
\end{proof}

\begin{lemma}\label{lemma:gamma_asymp}
	For large $z$, the ratio of the Gamma function evaluated at $z$ and $z + c$ for any constant $c$ satisfies:
	\begin{align*}
		\frac{\Gamma(z + c)}{\Gamma(z)} \sim z^c \quad \text{as } z \to \infty
		\text{\,.}
	\end{align*}
\end{lemma}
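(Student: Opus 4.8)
The plan is to derive the claimed asymptotic directly from Stirling's approximation for the Gamma function, in its logarithmic form: $\log\Gamma(z) = (z-\tfrac12)\log z - z + \tfrac12\log(2\pi) + O(1/z)$ as $z\to\infty$ along the positive reals. First I would apply this expansion to each of $\log\Gamma(z+c)$ and $\log\Gamma(z)$ and subtract, obtaining $\log\Gamma(z+c) - \log\Gamma(z) = (z+c-\tfrac12)\log(z+c) - (z-\tfrac12)\log z - c + O(1/z)$. Writing $\log(z+c) = \log z + \log(1+c/z)$ and collecting the coefficients of $\log z$, this becomes $c\log z + (z+c-\tfrac12)\log(1+c/z) - c + O(1/z)$.

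Next I would expand $\log(1+c/z) = c/z + O(1/z^{2})$, so that $(z+c-\tfrac12)\log(1+c/z) = c + O(1/z)$, whence the whole expression collapses to $\log\Gamma(z+c) - \log\Gamma(z) = c\log z + O(1/z)$. Exponentiating yields $\Gamma(z+c)/\Gamma(z) = z^{c}\bigl(1 + O(1/z)\bigr)$, which is exactly the asserted equivalence $\Gamma(z+c)/\Gamma(z)\sim z^{c}$. I would note that this argument is valid for every fixed real (indeed complex) constant $c$, since Stirling's expansion holds uniformly for $z\to\infty$ in any sector bounded away from the negative real axis; in particular it covers the negative values of $c$ that arise in the application in \cref{lemma:L1_bound}.

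As an alternative self-contained route for $c>0$, one can avoid Stirling by combining the identity $\Gamma(z+c)/\Gamma(z) = \Gamma(c)/B(c,z)$ (\cref{lemma:beta_gamma}) with the integral representation $B(c,z) = \int_0^\infty t^{c-1}(1+t)^{-(c+z)}\,dt$ (\cref{def:beta_func}): substituting $t = s/z$ and passing to the limit by dominated convergence (using $(1+s/z)^{z}\ge e^{s/2}$ for $s\le z$, which gives a dominating function of the form $s^{c-1}e^{-s/2}$ up to a fixed constant) shows $z^{c}B(c,z)\to\Gamma(c)$, hence $\Gamma(z+c)/\Gamma(z)\sim z^{c}$; negative $c$ then follows from the recursion $\Gamma(z+c) = \Gamma(z+c+N)/\prod_{j=0}^{N-1}(z+c+j)$ for an integer $N$ with $c+N>0$, since the denominator is $\sim z^{N}$. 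Either way, no step presents a genuine obstacle; the only mild care needed is bookkeeping the $O(1/z)$ error term in the Stirling route, or justifying the dominating bound in the Beta-function route. In the actual write-up I would most likely just cite a standard reference such as \citet{Davis1972BetaFunction}, in keeping with the neighbouring auxiliary lemmas.
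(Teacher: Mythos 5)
Your main route (Stirling's approximation in logarithmic form) is essentially the paper's argument, and it is correct; in fact your bookkeeping is more careful than the paper's, since you explicitly track how the $(z+c-\tfrac12)\log(1+c/z)\to c$ term cancels the $-c$ from the exponentials, whereas the paper's ``approximate $z+c\sim z$'' step silently relies on exactly this cancellation. The Beta-function alternative you sketch is a genuinely different and slightly more self-contained route, but the first argument already matches the paper.
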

\begin{proof}
	Using Stirling's approximation for the Gamma function:
	\begin{align*}
		\Gamma(z) \sim \sqrt{2\pi} z^{z-1/2} e^{-z}, \quad \text{as } z \to \infty
		\text{\,,}
	\end{align*}
	we compute the ratio:
	\begin{align*}
		\frac{\Gamma(z + c)}{\Gamma(z)} \sim \frac{\sqrt{2\pi} (z+c)^{z+c-1/2} e^{-(z+c)}}{\sqrt{2\pi} z^{z-1/2} e^{-z}}
		\text{\,.}
	\end{align*}
	Simplify the terms:
	\begin{align*}
		\frac{\Gamma(z + c)}{\Gamma(z)} \sim \frac{(z+c)^{z+c-1/2} e^{-c}}{z^{z-1/2}}
		\text{\,.}
	\end{align*}
	Taking the dominant term for large $z$, we approximate $z+c \sim z$, so:
	\begin{align*}
		\frac{\Gamma(z + c)}{\Gamma(z)} \sim z^c
	\end{align*}
	as required.
\end{proof}

\begin{lemma}\label{lemma:GCI}
	Let $\Phi$ denote a Gaussian measure on $\mathbb{R}^n$ with mean zero and covariance matrix $\Sigma$. For any two closed, symmetric, convex subsets $A, B \subset \mathbb{R}^n$, the Gaussian Correlation Inequality (GCI) states:
	\[
	\Phi(A \cap B) \geq \Phi(A) \Phi(B)
	\text{\,.}
	\]
\end{lemma}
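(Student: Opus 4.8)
The statement is the celebrated \emph{Gaussian Correlation Inequality}, which remained a conjecture for decades before being settled by Royen. The plan is therefore not to prove it from first principles---a self-contained proof is far beyond the scope of an auxiliary lemma---but to invoke the known result after two routine reductions. First, the linear change of variables $x \mapsto \Sigma^{1/2} x$ turns $\Phi$ into the standard Gaussian measure and sends closed symmetric convex sets to closed symmetric convex sets, so one may assume $\Sigma = I_n$. Second, the inequality is customarily proved for symmetric convex bodies written as finite intersections of ``slabs'' $\{ x : |\langle x , u \rangle| \le t \}$; since every closed symmetric convex set is the intersection of all slabs containing it, and Gaussian measure is continuous under decreasing intersections of such bodies, the general case follows from the slab case by a limiting argument. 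After these reductions I would simply invoke Royen's theorem \citep{royen2014simple} (and the subsequent streamlined expositions of it in the literature).

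The substantive content is Royen's argument itself, which I would not reproduce here: it embeds the block covariance structure into a one-parameter family of multivariate gamma distributions indexed by the cross-covariance, expresses the relevant slab (equivalently, orthant) probability as a function of this parameter, and proves it is monotone in the parameter via a complete-monotonicity computation on the associated Laplace transform. Carrying out that computation would be the main obstacle if a self-contained proof were demanded; but in this paper the inequality is used purely as an off-the-shelf tool---it is invoked only within the proofs of \cref{lemma:ball_uniform} and \cref{lemma:ball_iid}, where it lets us drop conditioning on a symmetric convex event---so the ``proof'' consists of the citation together with the two elementary reductions above.
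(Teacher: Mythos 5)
Your proposal is correct and takes essentially the same approach as the paper, which disposes of the lemma with a bare citation (to the Lata\l a--Matlak exposition of Royen's proof) rather than to Royen's original paper. The two standard reductions you spell out---affine change of variables to $\Sigma = I_n$, and approximation of closed symmetric convex sets by finite intersections of symmetric slabs---are correct and harmless extra bookkeeping, but the paper treats them as implicit in the cited result.
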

\begin{proof}
	See \citet{latala2017royen}.
\end{proof}

\begin{lemma}\label{lemma:gaussian_norm_concen}
	For all $\delta > 0$, there exists a constant $c(\delta)$ such that with probability at least $1 - \delta$, the norm of an $L$-dimensional standard Gaussian vector $X \sim \mathcal{N}(0, I_L)$ satisfies:
	\[
	\|X\| \leq c(\delta) \sqrt{L}
	\text{\,.}
	\]
\end{lemma}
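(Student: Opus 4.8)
The plan is to prove this by a one-line application of Markov's inequality to the squared norm. First I would note that if $X \sim \mathcal{N}(0, I_L)$ then $\|X\|^2 = \sum_{i=1}^{L} X_i^2$ is a chi-squared random variable with $L$ degrees of freedom, and in particular $\EE[\|X\|^2] = L$. Since $\|X\|^2$ is nonnegative, Markov's inequality gives, for every $t > 1$,
\[
\PP\big(\|X\|^2 \geq t L\big) \leq \frac{\EE[\|X\|^2]}{tL} = \frac{1}{t}
\text{\,.}
\]

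Next, given $\delta > 0$, I would take $t = 1/\delta$, obtaining $\PP(\|X\|^2 \geq L/\delta) \leq \delta$. Hence on the complementary event, whose probability is at least $1 - \delta$, we have $\|X\| \leq \sqrt{L/\delta} = \delta^{-1/2}\sqrt{L}$. The claim then follows with $c(\delta) := \delta^{-1/2}$, which depends only on $\delta$ and not on the dimension $L$.

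There is essentially no obstacle here: the statement asks only for the existence of some $c(\delta)$, and the crude Markov bound already suffices. If one wanted a constant with better dependence on $\delta$ (for instance $c(\delta) = O\big(1 + \sqrt{\log(1/\delta)}\big)$ rather than $O(\delta^{-1/2})$), one would instead invoke a standard sub-exponential tail bound for chi-squared variables, such as the Laurent--Massart inequality; but this refinement is not needed for the applications of the lemma in this paper.
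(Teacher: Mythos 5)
Your proof is correct, and it is more elementary than what the paper invokes. The paper does not give an argument at all: it simply cites \citet{Vershynin2018-HDP}, presumably for the standard sub-exponential (Bernstein-type) concentration of the chi-squared random variable $\|X\|^2$, which yields $\|X\| \le (1 + O(\sqrt{\log(1/\delta)/L}))\sqrt{L}$ with probability $1-\delta$. You instead observe that $\EE[\|X\|^2] = L$ and apply Markov's inequality directly to the nonnegative random variable $\|X\|^2$, obtaining $c(\delta) = \delta^{-1/2}$. This is a genuinely different route: Markov is one line, self-contained, and requires no moment-generating-function or sub-Gaussian machinery, at the cost of a much worse $\delta$-dependence ($\delta^{-1/2}$ versus $\sqrt{\log(1/\delta)}$). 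Since the lemma only asserts the existence of some $c(\delta)$ independent of $L$, your weaker bound suffices for everything this lemma is used for (namely \cref{lemma:ball_uniform}, where one only needs $\PP(W_2 \in \D(c))$ to be arbitrarily close to $1$ for a dimension-free choice of $c$), and you correctly flag that the sharper Laurent--Massart or Vershynin-style bound would be available if the constant's $\delta$-dependence ever mattered. No gap.
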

\begin{proof}
	See \citet{Vershynin2018-HDP}.
\end{proof}
	
	\section{Theorem~3.3 From \citet{soltanolkotabi2023implicit}}
\label{app:gd_result}

\cref{result:width_gd} restates Theorem~3.3 from \citet{soltanolkotabi2023implicit} using $O$- and \smash{$\tilde{O}$}-notations.
For completeness, \cref{thm:gd_main} below restates the theorem without these notations.

%In this section we provide the full statement of Theorem 3.3 of \citet{soltanolkotabi2023implicit}, and explain how \cref{result:width_gd} is a restatement of it.

\begin{proposition}[restatement of Theorem~3.3 from \cite{soltanolkotabi2023implicit}, without $O$- and \smash{$\tilde{O}$}-notations]
\label{thm:gd_main}
There exist universal constants $c_1 , \ldots , c_{10} \in \R_{> 0}$ with which the following holds. 
Suppose the activation~$\sigma ( \cdot )$ is linear (\ie, $\sigma ( \alpha ) = \alpha$ for all $\alpha = \R$), and the depth~$d$ equals two.
Let $\kappa \in \R_{> 0}$ be the condition number of~$\gt$.
Let $\QQ ( \cdot )$ be a zero-centered Gaussian probability distribution, \ie, $\QQ ( \cdot ) = \NN ( \cdot \, ; 0 , \nu )$, with variance 
\begin{align*}
	0 < \nu \leq \frac{c_1 \| \gt \|_{F}\sqrt{\dout}}{\dhid^{9.5} \left(\max \{ \din + \dout, \dhid \}\right)^4} \left( \frac{ \sqrt{\dhid} - \sqrt{r} - 1}{c_2 \kappa^2 \sqrt{\max \{ \din + \dout, \dhid \}}} \right)^{c_3 \kappa}
\end{align*}
(recall that $r$ is the rank of the ground truth matrix~$\gt$, whose dimensions are $m$ and~$m'$).
Let $\PP ( \cdot )$ be the probability distribution over weight settings that is generated by~$\QQ ( \cdot )$ (\cref{def:generated}). 
Assume the measurement matrices $( A_i )_{i = 1}^n$ satisfy an RIP (\cref{def:rip}) of order~$2 r + 1$ with a constant $\rip \in ( 0 , \min \{ 1 , c_4 / ( \kappa^3 \sqrt{r}) \} )$.
Consider minimization of the training loss $\Ltrn ( \cdot )$~via~gradient descent (\cref{eq:gd_mf}) with initialization drawn from~$\PP ( \cdot )$ and step size
\begin{align*}
	0 < \eta\leq \frac{c_5}{\kappa^5 \| \gt \|_{F}} \cdot \frac{1}{\ln \left( \frac{4 \sqrt{2}(\dhid\dout)^{1/4} \| \gt \|_{F}}{ \sqrt{\nu} (\sqrt{\dhid} - \sqrt{r} - 1)} \right)}
	\text{\,.}
\end{align*}
Then, there exists some $\tau \in \N$ which satisfies
\begin{align*}
	\tau \leq \frac{c_{6} \ln \left( \frac{4 \sqrt{2}(\dhid\dout)^{1/4} \| \gt \|_{F}}{ \sqrt{\nu} (\sqrt{\dhid} - \sqrt{r} - 1)} \right)}{\eta \sigma_{\min}(\gt)}
	\text{\,,}
\end{align*}
such that for any width~$k$ of the matrix factorization, after $\tau$ iterations of gradient descent, with probability at least $1 - c_7 \exp(-c_8 \dhid) + c_{9}^{\dhid - r + 1}$ over its initialization, the generalization loss~$\Lgen ( \cdot )$ is no more than $c_{10} \| \gt \|_{F}^{7/10} \nu^{3/10} / (\dhid\dout)^{3/20}$.
\end{proposition}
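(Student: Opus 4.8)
\cref{thm:gd_main} is a restatement of Theorem~3.3 in \citet{soltanolkotabi2023implicit}, with the conclusion re-expressed in terms of the generalization loss $\Lgen(\cdot)$ of \cref{eq:ms_loss_gen_mf}; accordingly, the plan is to invoke that reference and reconcile its setting and notation with ours. First I would check that the object analyzed there---an asymmetric two-layer factorization $W = W_2 W_1$ whose factors are initialized with i.i.d.\ zero-centered Gaussian entries of variance proportional to the inverse fan-in of the corresponding layer---coincides with the depth-two matrix factorization of \cref{sec:prelim:mf} under the prior $\PP(\cdot)$ generated by $\QQ(\cdot) = \NN(\cdot\,; 0, \nu)$ (\cref{def:generated}): the Kaiming $1/\sqrt{m_j}$ rescaling in \cref{def:generated} is precisely the fan-in scaling used there, so the two initialization distributions agree after identifying the effective variance. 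I would likewise verify that the RIP hypothesis of order $2r+1$ with constant $\rip$, the condition-number quantity $\kappa$ of $\gt$, the admissible range for the step size $\eta$, and the iteration count $\tau$ are transcribed unchanged, so that the explicit polynomial and exponential dependencies on $\dhid$, $\din$, $\dout$, $r$, $\nu$, $\kappa$ and $\rip$ appear exactly as written, with $c_1,\ldots,c_{10}$ inherited from the reference.

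The one point that requires a short argument is the passage from the reference's guarantee---a high-probability bound on the reconstruction error $\|W^{(\tau)} - \gt\|_F$ (equivalently, via the RIP, on $\|\A(W^{(\tau)} - \gt)\|_2$)---to a bound on $\Lgen(W^{(\tau)})$. For this I would use that the elements of $\B$ form an orthonormal set, hence
\[
\Lgen\big(W^{(\tau)}\big) = \frac{1}{|\B|} \sum\nolimits_{A \in \B} \inprod{A}{W^{(\tau)} - \gt}^2 \leq \sum\nolimits_{A \in \B} \inprod{A}{W^{(\tau)} - \gt}^2 = \big\| \Pi \big( W^{(\tau)} - \gt \big) \big\|_F^2 \leq \big\| W^{(\tau)} - \gt \big\|_F^2 ,
\]
where $\Pi$ denotes the orthogonal projection onto $\Sp(\B)$ (the orthogonal complement of $\Sp\{A_i\}_{i=1}^n$) and we used $|\B| \geq 1$ together with the fact that $\Pi$ is a contraction. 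Squaring the reference's bound on $\|W^{(\tau)} - \gt\|_F$ then yields the stated bound $c_{10}\|\gt\|_F^{7/10}\nu^{3/10}/(\dhid\dout)^{3/20}$ on $\Lgen(W^{(\tau)})$, with $c_{10}$ the square of the reference's reconstruction-error constant.

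The remaining ingredients carry over directly: the failure probability $c_7\exp(-c_8\dhid) + c_9^{\dhid - r + 1}$ refers only to the draw of $(W_1, W_2)$ from $\PP(\cdot)$, which has already been matched to the reference's initialization, and the claim holds ``for any width~$\dhid$'' because the reference's estimate is uniform in the over-parameterization level. I expect the only real difficulty to be the bookkeeping in the first paragraph---confirming that every normalization convention lines up (in particular whether the reference states its variance ceiling directly in terms of $\nu$ or in terms of a fan-in--normalized variance, whether its reported error is already the projected test error or the full reconstruction error, and how its $\kappa$-dependent constants are packaged)---so that no dimension- or rank-dependent factor is dropped in transcribing the explicit bounds. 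No genuinely new argument is needed beyond the orthogonality observation above.
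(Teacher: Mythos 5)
The paper gives no explicit proof of this proposition: \cref{app:gd_result} simply restates Theorem~3.3 of \citet{soltanolkotabi2023implicit} with the $O$- and $\tilde{O}$-notations unpacked, and the implicit justification is the citation itself. Your plan---reconcile the initialization and RIP conventions with the reference, then pass from its reconstruction-error guarantee to $\Lgen$ via orthonormality of~$\B$---is the right way to fill in what the paper leaves silent, and the chain
\[
\Lgen(W) \leq \sum\nolimits_{A\in\B}\inprod{A}{W - \gt}^2 = \big\|\Pi(W - \gt)\big\|_F^2 \leq \|W - \gt\|_F^2
\]
is correct (the $1/|\B|$ factor only helps, and $\Pi$ is a contraction). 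So the approach is essentially the paper's, with the orthogonality step made explicit.

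The one move you should not commit to sight-unseen is the claim that \emph{squaring} the reference's bound on $\|W^{(\tau)} - \gt\|_F$ produces the stated exponents $\|\gt\|_F^{7/10}\nu^{3/10}/(\dhid\dout)^{3/20}$, with $c_{10}$ the square of the reference's constant. Whether the transcription goes by squaring depends on exactly what Theorem~3.3 reports (the Frobenius error or its square) and on whether the paper's variance~$\nu$ corresponds to the reference's initialization scale~$\alpha$ or to~$\alpha^2$. If the reference already states the error with the very same exponents the paper assigns to $\Lgen$, then the restatement must instead rest on $\|W^{(\tau)} - \gt\|_F \leq 1$ (so that $\Lgen \leq \|W^{(\tau)} - \gt\|_F^2 \leq \|W^{(\tau)} - \gt\|_F$), which is a weaker, scale-dependent step and would change which constant ends up as~$c_{10}$. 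You do flag this uncertainty yourself, which is appropriate; but turning your sketch into a proof requires opening the reference and pinning down this arithmetic rather than asserting the squaring outright.
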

%\begin{proof}
%	See \cite{soltanolkotabi2023implicit}.
%\end{proof}

%\subsection{Proposition 1 as a Restatement of Theorem \ref{thm:gd_main}}
%\cref{result:width_gd} is obtained from \cref{thm:gd_main} as follows. First, observe that the constants \(\epsilon\), \(\kappa\), \(r\), and \(\|\gt\|_{F}\) are independent of the width \(\dhid\) for asymptotic analysis purposes. Next, by \cref{def:generated}, the initialization scale \(\alpha\) relates to distribution variance \(\nu\) as \(\alpha = \eta^{1/2}/k^{1/4} = O(k^{-7})\) for a matrix factorization of depth \(d=2\). Substituting these relationships into \cref{eq:gd_iterations} yields the iteration bound \(\tau = \tilde{O}(\eta^{-1})\) as stated in \cref{result:width_gd}. Similarly, substituting into \cref{eq:gd_error_bound} and simplifying produces the generalization error bound \(O(\eta^{3/10}\dhid^{-3/20})\) found in \cref{result:width_gd}. The probability bound in \cref{eq:gd_probability} also simplifies to the form stated in the Proposition for sufficiently large \(\dhid\).

	\section{Further Experiments}
\label{app:exper}
\vspace{-4mm}
\cref{sec:exper} corroborates our theory by empirically demonstrating that in matrix factorization (\cref{sec:prelim:mf}), the generalization attained by \GnC (\cref{sec:prelim:gnc}) deteriorates as width increases and improves as depth increases, whereas gradient descent (\cref{sec:prelim:gd}) attains good generalization throughout. 
This appendix reports further experiments.

\cref{fig:width_momentum,fig:depth_momentum} respectively extend \cref{fig:width,fig:depth} to account for gradient descent with momentum.
\cref{fig:width_rank2,fig:depth_rank2} respectively extend \cref{fig:width,fig:depth} to account for a ground truth matrix of rank two.\note{We note that in \cref{fig:width_rank2,fig:width_momentum,fig:width}, a double descent like phenomenon , namely a (small) spike in generalization loss which is then reversed as model size is further increased,  is observed for gradient descent (but not for \GnC) with the Leaky ReLU activation. We leave investigating this phenomenon for future work.}
\cref{fig:width_uniform,fig:depth_uniform} respectively extend \cref{fig:width,fig:depth} to account for \GnC with a Kaiming Uniform prior distribution.
\cref{fig:width_complete,fig:depth_complete} respectively extend \cref{fig:width,fig:depth} to a special case where the measurement matrices are indicator matrices (meaning each holds one in a single entry and zeros elsewhere), leading to what is known as \emph{low rank matrix completion}---a problem that has been studied extensively~\citep{johnson1990matrix,candes2010power,recht2011simpler,candes2012exact,sun2016guaranteed,jafarov2022survey}.
\cref{fig:depth_no_norm} extends \cref{fig:depth} to account for \GnC with a prior distribution that does not include normalization (\cref{def:generated}). \cref{fig:different_eps} establishes that our large width results are robust to the exact choice of \(\etrn\). Finally, \cref{fig:larger_depths} shows that the trends displayed in \cref{fig:depth}---namely improved performance of \GnC---continue for larger values of depth.

%\begin{itemize}
%    \item 
%    \cref{fig:width_momentum,fig:depth_momentum} respectively extend the experiments reported in \cref{fig:width,fig:depth} to gradient descent with momentum.
%    \item
%    \red{\cref{fig:width_rank2,fig:depth_rank2} respectively extend the experiments reported in \cref{fig:width,fig:depth} to ground truth matrices of rank two.}
%    \item
%    \cref{fig:width_uniform,fig:depth_uniform} respectively extend the experiments reported in \cref{fig:width,fig:depth} to \GnC with a Kaiming uniform prior distribution.
%    \item
%    \red{\cref{fig:width_complete,fig:depth_complete} respectively extend the experiments reported in \cref{fig:width,fig:depth} to the special case where the measurement matrices are all one hot (\ie, each measurement matrix has a single entry equal to one and the remaining entries equal to zero). This special case leads to what is known as the \emph{low rank matrix completion} task which has been extensively studied \citep{johnson1990matrix,candes2010power,recht2011simpler,candes2012exact,sun2016guaranteed,jafarov2022survey}.}
%    \item
%    \cref{fig:depth_no_norm} extends the experiments reported in \cref{fig:depth} to \GnC with a prior distribution that is not normalized.
%\end{itemize}

\begin{figure*}[t]
    \begin{center}
        %\vspace{-13mm}
        \begin{center}
            \includegraphics[width=1\textwidth]{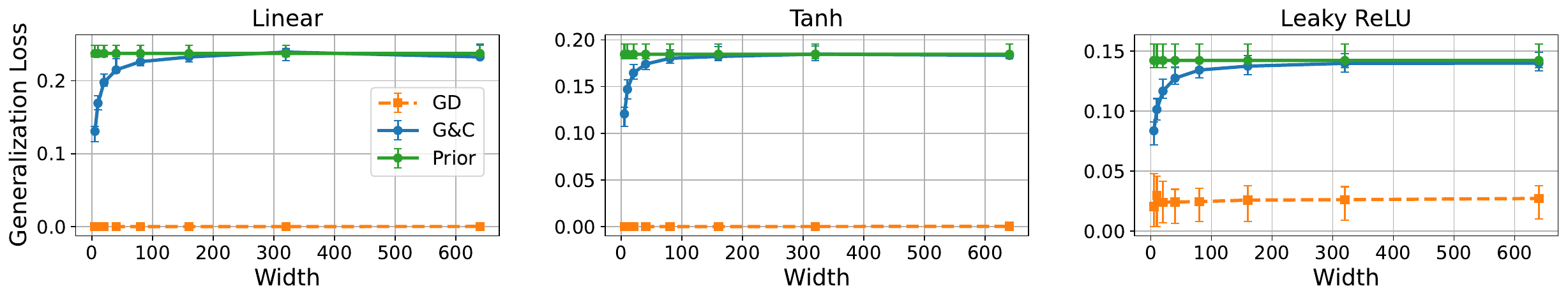}
        \end{center}
        %\vspace{-2mm}
    \end{center}
        \caption{
    	In line with our theory (\cref{sec:analysis:width}), as the width of a matrix factorization increases, the generalization attained by \GnC deteriorates, to the point of being no better than chance, \ie, no better than the generalization attained by randomly drawing a single weight setting from the prior distribution while disregarding the training data. This figure adheres to the caption of \cref{fig:width}, except that we employ gradient descent with a momentum coefficient of 0.9 \citep{qian1999momentum}. For further details see \cref{app:details}.}
%	\vspace{-2mm}
\label{fig:width_momentum}
\end{figure*}

\begin{figure*}[t]
    \begin{center}
        \begin{center}
            \includegraphics[width=1\textwidth]{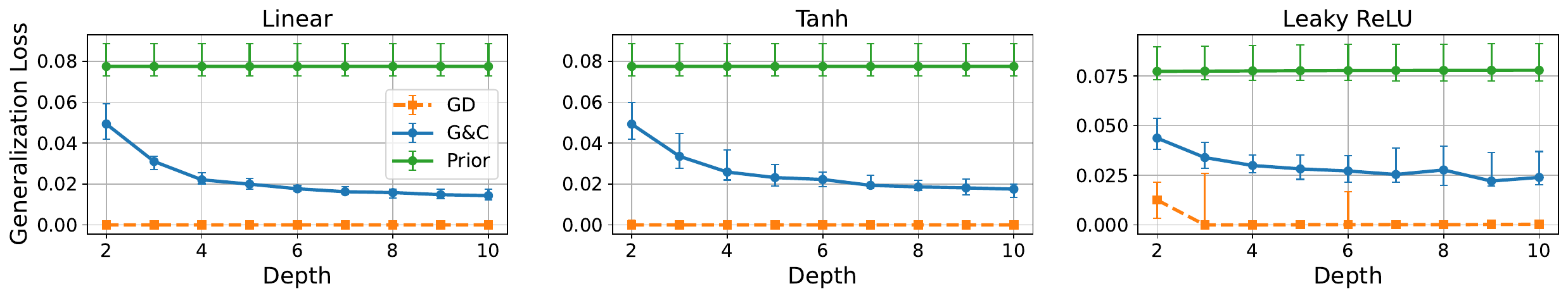}
        \end{center}
        %\vspace{-2mm}
    \end{center}
    \caption{
    	In line with our theory (\cref{sec:analysis:depth}), as the depth of a matrix factorization increases, the generalization attained by \GnC improves, drawing closer to that of gradient descent.
        This figure adheres to the caption of \cref{fig:depth}, except that we employ gradient descent with a momentum coefficient of 0.9 \citep{qian1999momentum}. For further details see \cref{app:details}.}
%	\vspace{-2mm}
\label{fig:depth_momentum}
\end{figure*}

\begin{figure*}[t]
    \begin{center}
        %\vspace{-13mm}
        \begin{center}
            \includegraphics[width=1\textwidth]{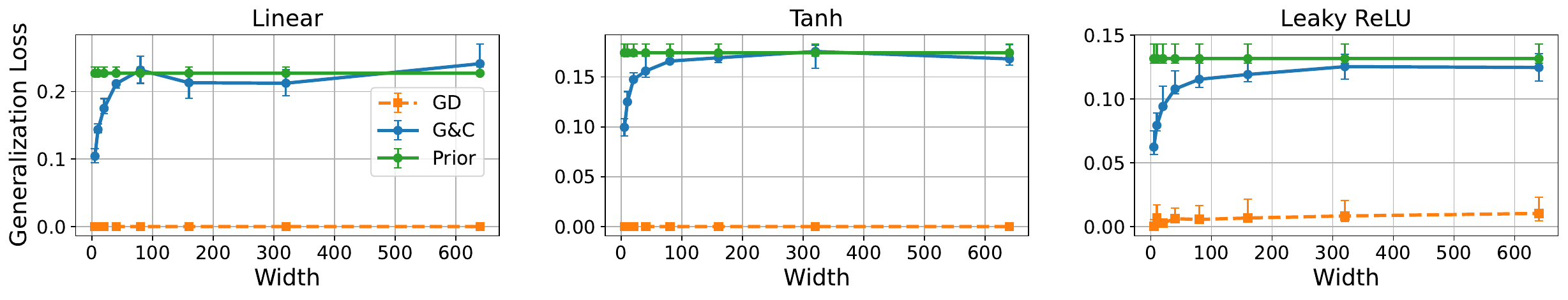}
        \end{center}
        %\vspace{-2mm}
    \end{center}
        \caption{
        In line with our theory (\cref{sec:analysis:width}), as the width of a matrix factorization increases, the generalization attained by \GnC deteriorates, to the point of being no better than chance, \ie, no better than the generalization attained by randomly drawing a single weight setting from the prior distribution while disregarding the training data.
        In contrast, gradient descent attains good generalization across all widths. This figure adheres to the caption of \cref{fig:width}, except that the ground truth matrix had rank two and the training data size was $n=22$.
        For further details see \cref{app:details}.}
%	\vspace{-2mm}
\label{fig:width_rank2}
\end{figure*}

\begin{figure*}[t]
    \begin{center}
        \begin{center}
            \includegraphics[width=1\textwidth]{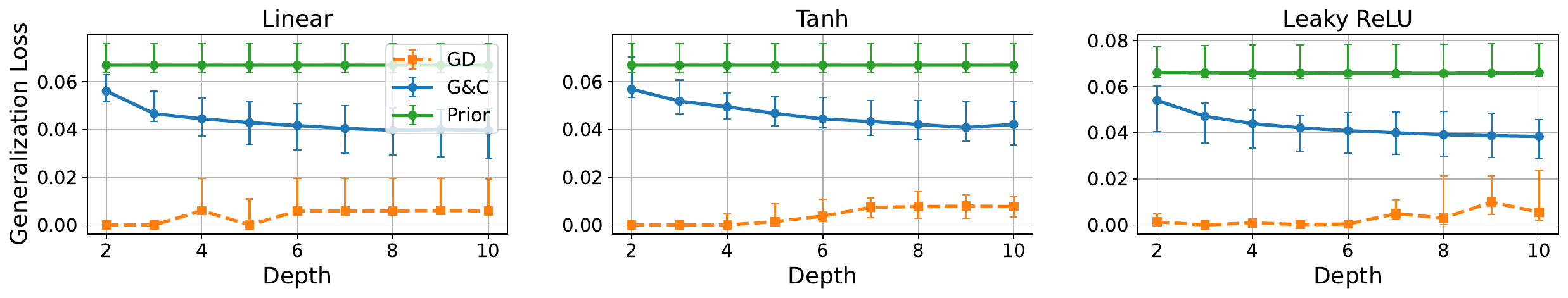}
        \end{center}
        %\vspace{-2mm}
    \end{center}
    \caption{
        In line with our theory (\cref{sec:analysis:depth}), as the depth of a matrix factorization increases, the generalization attained by \GnC improves, drawing closer to that of gradient descent.
    This figure adheres to the caption of \cref{fig:depth}, except that the ground truth matrix had rank two and the training data size was $n=22$.
    We note that with larger depths, the generalization attained by gradient descent is not as good as it is with smaller depths.\protect\footnotemark
    For further details see \cref{app:details}.
    }
%	\vspace{-2mm}
\label{fig:depth_rank2}
\end{figure*}

\begin{figure*}[t]
    \begin{center}
        %\vspace{-13mm}
        \begin{center}
            \includegraphics[width=1\textwidth]{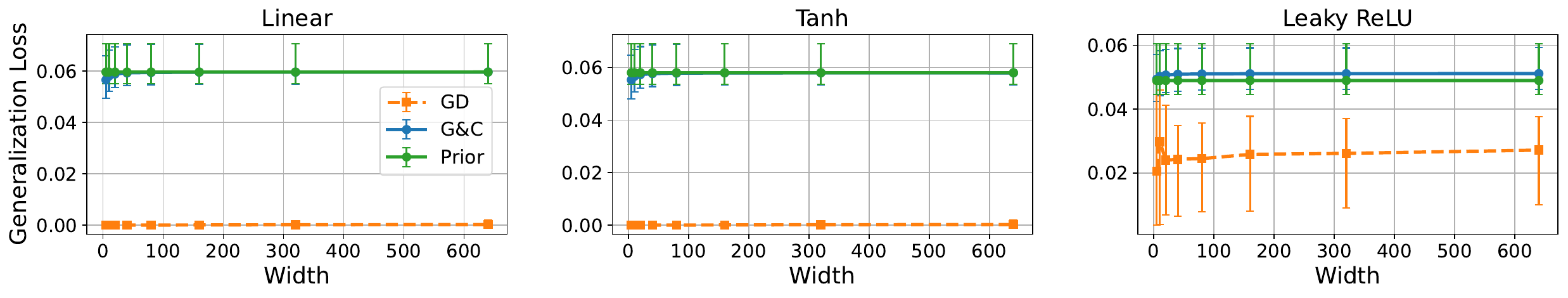}
        \end{center}
        %\vspace{-2mm}
    \end{center}
        \caption{
    	In line with our theory (\cref{sec:analysis:width}), as the width of a matrix factorization increases, the generalization attained by \GnC deteriorates, to the point of being no better than chance, \ie, no better than the generalization attained by randomly drawing a single weight setting from the prior distribution while disregarding the training data. 
    	This figure adheres to the caption of \cref{fig:width}, except that the prior distribution of \GnC was Kaiming Uniform. 
    	For further details see \cref{app:details}.}
%	\vspace{-2mm}
\label{fig:width_uniform}
\end{figure*}

\begin{figure*}[t]
    \begin{center}
        \begin{center}
            \includegraphics[width=1\textwidth]{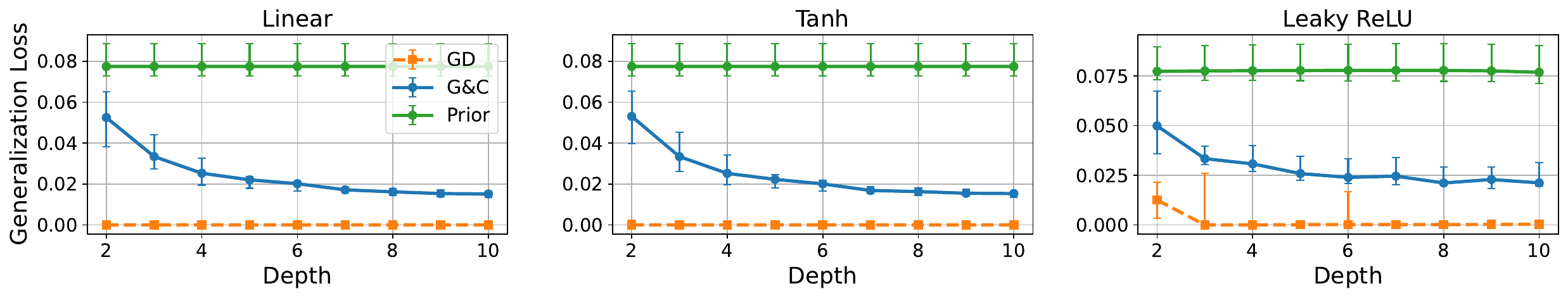}
        \end{center}
        %\vspace{-2mm}
    \end{center}
    \caption{
    	In line with our theory (\cref{sec:analysis:depth}), as the depth of a matrix factorization increases, the generalization attained by \GnC improves, drawing closer to that of gradient descent.
        This figure adheres to the caption of \cref{fig:depth}, except that the prior distribution of \GnC was Kaiming Uniform. 
        For further details see \cref{app:details}.}
%	\vspace{-2mm}
\label{fig:depth_uniform}
\end{figure*}

\begin{figure*}[t]
    \begin{center}
        %\vspace{-13mm}
        \begin{center}
            \includegraphics[width=1\textwidth]{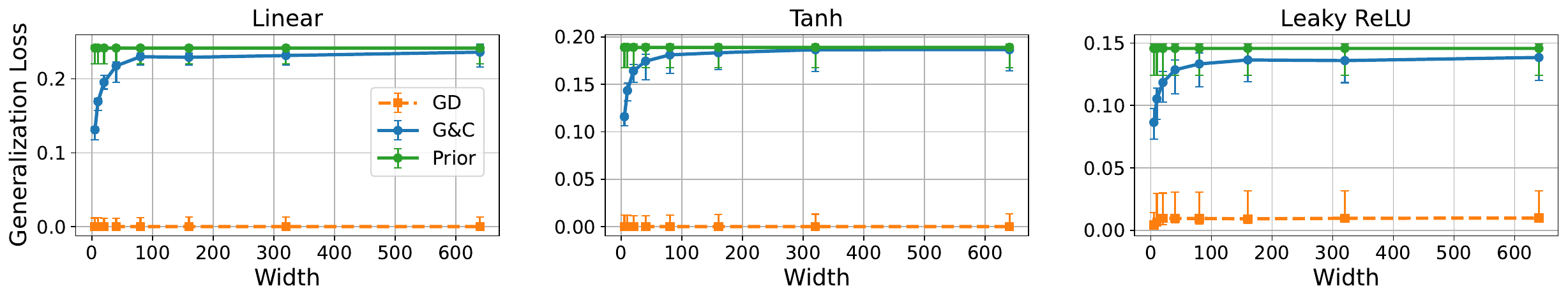}
        \end{center}
        %\vspace{-2mm}
    \end{center}
        \caption{
    	In line with our theory (\cref{sec:analysis:width}), as the width of a matrix factorization increases, the generalization attained by \GnC deteriorates, to the point of being no better than chance, \ie, no better than the generalization attained by randomly drawing a single weight setting from the prior distribution while disregarding the training data. 
    	This figure adheres to the caption of \cref{fig:width}, except that measurement matrices were indicator matrices (meaning each held one in a single entry and zeros elsewhere), leading to a low rank matrix completion problem. 
    	For further details see \cref{app:details}.}
%	\vspace{-2mm}
\label{fig:width_complete}
\end{figure*}

\begin{figure*}[t]
    \begin{center}
        \begin{center}
            \includegraphics[width=1\textwidth]{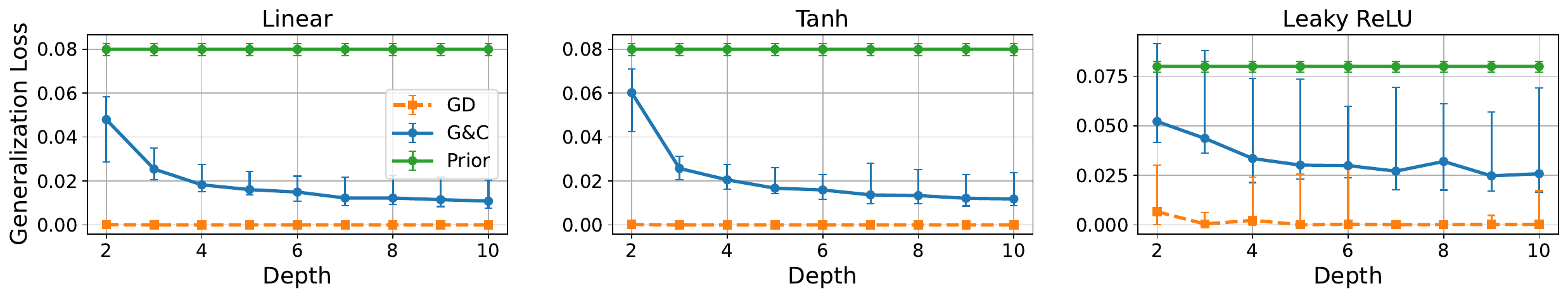}
        \end{center}
        %\vspace{-2mm}
    \end{center}
    \caption{
    	In line with our theory (\cref{sec:analysis:depth}), as the depth of a matrix factorization increases, the generalization attained by \GnC improves, drawing closer to that of gradient descent.
        This figure adheres to the caption of \cref{fig:depth}, except that measurement matrices were indicator matrices (meaning each held one in a single entry and zeros elsewhere), leading to a low rank matrix completion problem. 
        For further details see \cref{app:details}.}
%	\vspace{-2mm}
\label{fig:depth_complete}
\end{figure*}

\begin{figure*}[t]
    \begin{center}
        \begin{center}
            \includegraphics[width=0.7\textwidth]{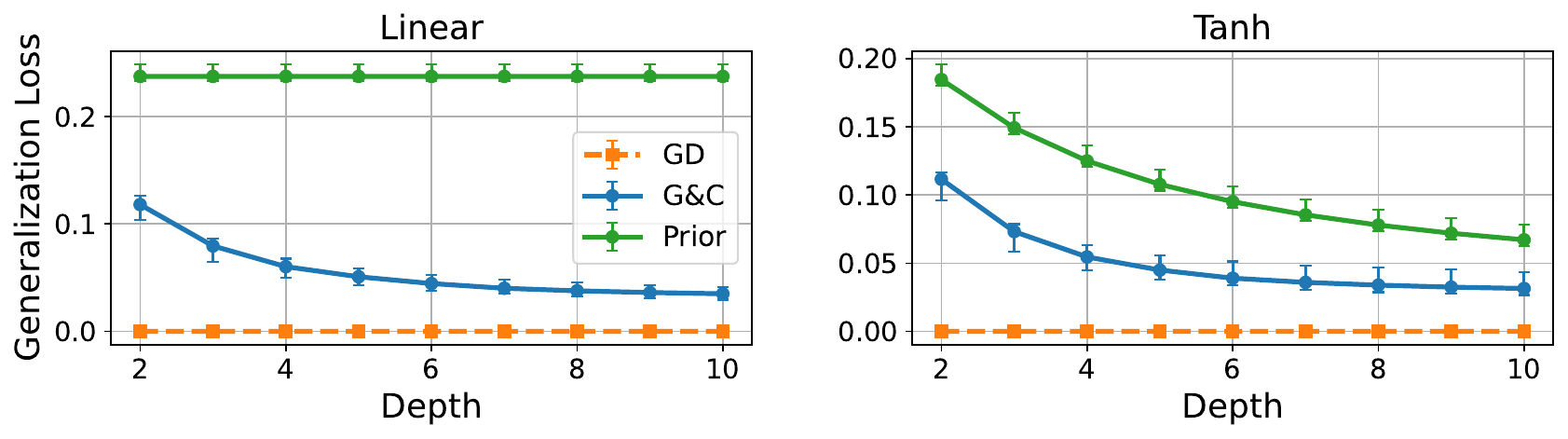}
        \end{center}
        %\vspace{-2mm}
    \end{center}
    \caption{
    	In line with our theory (\cref{sec:analysis:depth}), as the depth of a matrix factorization increases, the generalization attained by \GnC improves, drawing closer to that of gradient descent. 
        This figure adheres to the caption of \cref{fig:depth}, except for the following differences: \emph{(i)} the prior distribution of \GnC did not include normalization; and \emph{(ii)} experiments with Leaky ReLU activation were omitted (because without normalization the number of draws it required from \GnC was computationally prohibitive). 
        We note that with tanh activation, the generalization attained by drawing a single weight setting from the prior distribution (while disregarding the training data) improves as depth increases. This results from the factorized matrix W (\cref{eq:mf}) approaching the zero matrix, whose generalization loss is greater than that attainable by \GnC.
        For further details see \cref{app:details}.}
%	\vspace{-2mm}
\label{fig:depth_no_norm}
\end{figure*}

\begin{figure*}[t]
	\begin{center}
		\begin{center}
			\includegraphics[width=1\textwidth]{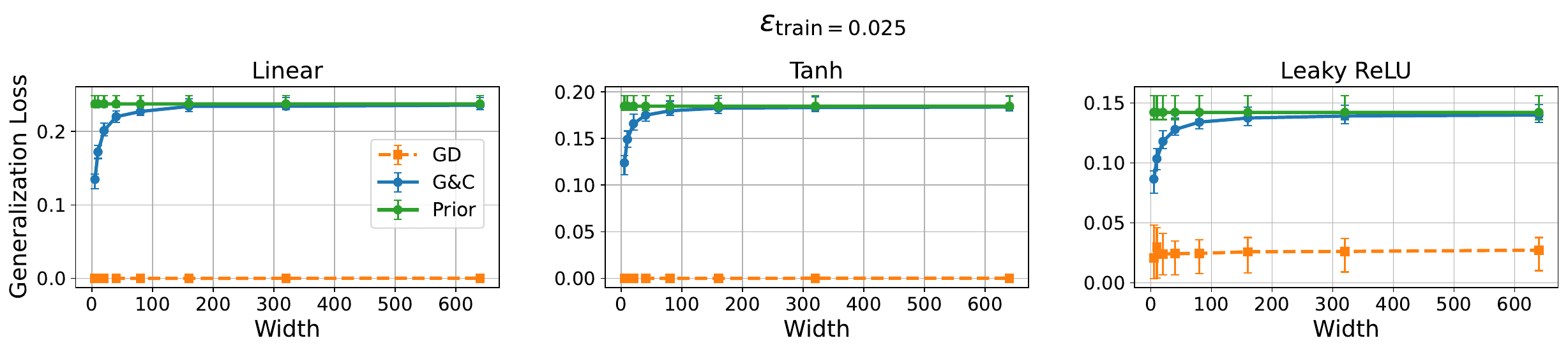}
		\end{center}
		%\vspace{-2mm}
	\end{center}
    \begin{center}
		\begin{center}
			\includegraphics[width=1\textwidth]{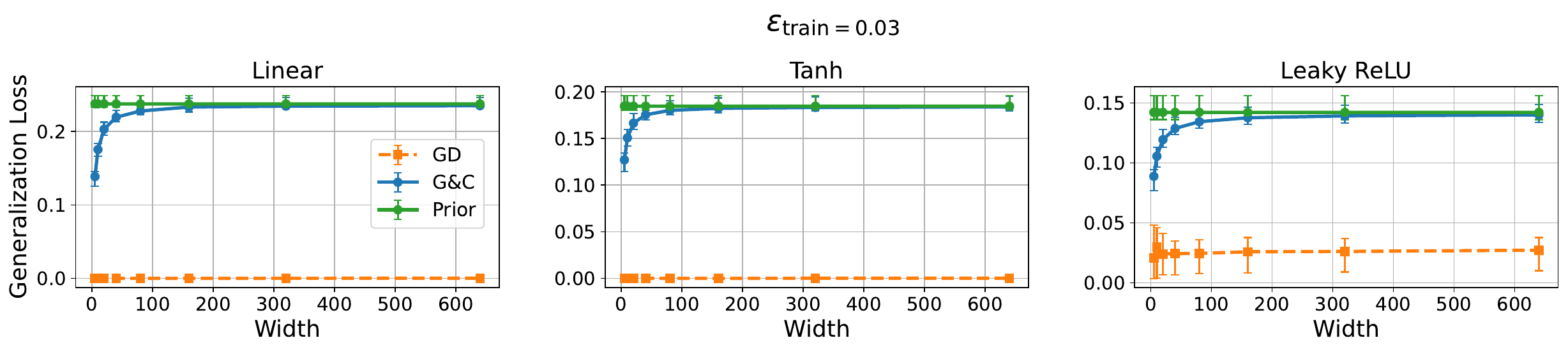}
		\end{center}
		%\vspace{-2mm}
	\end{center}
	\caption{
		In line with our theory (\cref{sec:analysis:width}), as the width of a matrix factorization increases, the generalization attained by \GnC deteriorates, to the point of being no better than chance, \ie, no better than the generalization attained by randomly drawing a single weight setting from the prior distribution while disregarding the training data. The above figures demonstrate that the results in \cref{fig:width} are robust to the choice of \(\etrn\). Here we show \(\etrn=0.025 \) and \(\etrn=0.03\) to complement \(\etrn=0.02\) shown in the main text. We see that the qualitative behavior is the same.}
%	\vspace{-2mm}
\label{fig:different_eps}
\end{figure*}

\begin{figure*}[t]
	\begin{center}
		\begin{center}
			\includegraphics[width=1\textwidth]{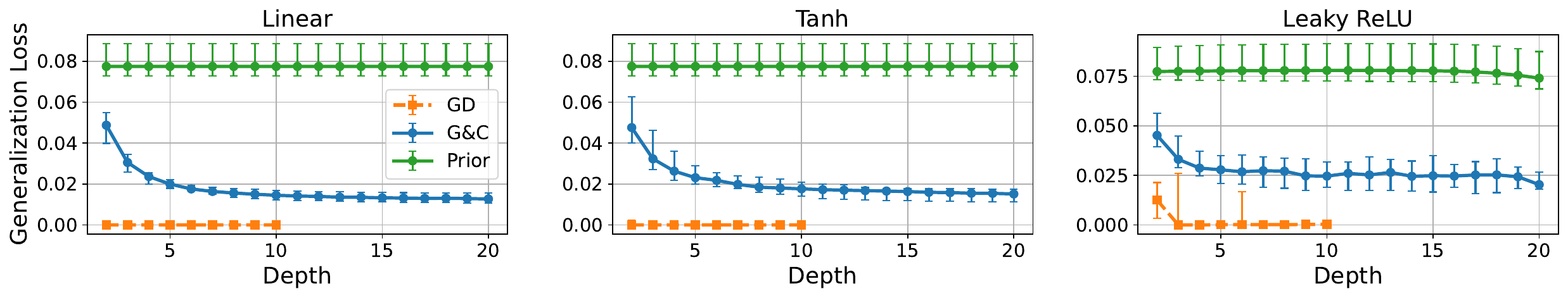}
		\end{center}
		%\vspace{-2mm}
	\end{center}
	\caption{
        In line with our theory (\cref{sec:analysis:depth}), as the depth of a matrix factorization increases, the generalization attained by \GnC improves, drawing closer to that of gradient descent.
        This figure adheres to the caption of \cref{fig:depth}, except that we plot results for \GnC up to depth $20$ as opposed to $10$. 
        Results for gradient descent were omitted for larger depths because running it became computationally prohibitive.
        Note that our theory does not imply that as the depth tends to infinity the generalization loss of \GnC will tend to zero, but rather that the probability that it is smaller than some constant times \(\etrn\) will tend to $1$. For further details see \cref{app:details}}
%	\vspace{-2mm}
	\label{fig:larger_depths}
\end{figure*}
	
	\vspace{-4mm}
\section{Experimental Details}
\label{app:details}
\vspace{-4mm}
In this appendix, we provide experimental details omitted from \cref{sec:exper,app:exper}. Code for reproducing all demonstrations
\ifdefined\CAMREADY
    can be found in \url{https://github.com/YoniSlutzky98/nn-gd-gen-mf}. 
\else
    will be made publicly available with the camera-ready version of the paper.
\fi
All experiments were implemented using PyTorch \citep{paszke2019pytorch} and carried out on a single Nvidia RTX A6000 GPU.

\textbf{Ground truth matrix.} In all experiments the ground truth matrices were generated via the following procedure. First, two matrices $U\in\BR^{\din,r}$ and $V\in\BR^{r,\dout}$ were generated by independently drawing each of their entries from the standard Gaussian distribution $\NN(\cdot;0,1)$. Here $r$ stands for the desired ground truth matrix rank. Then, the ground truth matrix was set as
\begin{align*}
    \gt = \frac{b}{\|UV\|_{F}}\cdot UV
    \text{\,,}
\end{align*}
where $b$ stands for the desired ground truth matrix norm. This procedure ensured that $\gt$ had rank $r$ and norm $b$. In the experiments reported in \cref{fig:width_rank2,fig:depth_rank2}, the ground truth matrices were of rank two and norm one. In the rest of the experiments, the ground truth matrices were of rank one and norm one.

\textbf{Measurement matrices.} In all experiments, the training measurement matrices were generated by independently drawing each of their entries from the standard Gaussian distribution $\NN(\cdot;0,1)$, and then normalizing each matrix to have norm one. For each set of training measurements, the corresponding orthonormal basis $\B$ was generated by performing the Gram-Schmidt process and taking the components which were not spanned by the original set of measurements. In the experiments reported in \cref{fig:width_rank2,fig:depth_rank2} the amount of training measurements was 22. In the rest of the experiments the amount of training measurements was 15.

\textbf{\GnC optimization.} A \GnC sample consisted of a drawing of the weight matrices $W_{1},\dots,W_{\depth}$ and computation of the factorization $\We$ (\cref{eq:mf}). If the training loss (\cref{eq:ms_loss_train_mf}) of the given factorization is lower than $\etrn$ then the sample is considered successful. \cref{tab:etrn} reports the value of $\etrn$ used in each experiment. 

For each trial---that is, for each random draw of the ground truth and measurement matrices---the \GnC algorithm was executed by drawing $\texttt{num\_samples}$ samples and averaging the generalization losses of all successful samples. \cref{tab:num_samples} reports the value of $\texttt{num\_samples}$ used in each experiment. 

To efficiently execute the \GnC algorithm, the following batched implementation was used. Given a sample batch size $\texttt{bs}$, for each layer $j\in[\depth]$, the layer's weight matrices $W_{j}\in\BR^{m_{j+1},m_{j}}$ were: \emph{(i)} drawn in parallel as a tensor of dimensions $(\texttt{bs}, m_{j+1}, m_{j})$; \emph{(ii)} multiplied in parallel with the factorizations produced in the previous layer via the $\texttt{bmm}(\cdot)$ function of PyTorch; and; \emph{(iii)} applied the activation function elementwise. The weights of the $j$th layer were drawn with independent entries from either $\NN(\cdot;0,1)$ or $\U(\cdot;-1,1)$ and then scaled by $\sqrt{m_{j}}$. This procedure was performed until a total of $\texttt{num\_samples}$ samples are accumulated. In experiments where the final factorizations were normalized, a softening constant of size $10^{-6}$ was added to the denominator. 

\begin{table}[t]
    \caption{Training error $\etrn$ used in the experiments of \cref{fig:width,fig:depth,fig:width_momentum,fig:depth_momentum,fig:width_rank2,fig:depth_rank2,fig:width_uniform,fig:depth_uniform,fig:width_complete,fig:depth_complete,fig:depth_no_norm,fig:different_eps,fig:larger_depths}.}
    \centering
    \vspace{1mm}
    \begin{tabular}{lcccc}
            \toprule
            \textbf{Setting} & \textbf{$\etrn$}\\
            \midrule
            \cref{fig:width,fig:width_momentum,fig:width_rank2,fig:width_uniform,fig:width_complete} & $0.02$\\
            \cref{fig:depth,fig:depth_momentum,fig:depth_uniform,fig:depth_complete,fig:larger_depths} & $0.0035$\\
            \cref{fig:depth_rank2,fig:depth_no_norm} & $0.01$\\
            \cref{fig:different_eps} & $0.025,0.03$\\
            \bottomrule
    \end{tabular}
    \label{tab:etrn}
\end{table}

\begin{table}[t]
    \caption{Number of \GnC samples used in the experiments of \cref{fig:width,fig:depth,fig:width_momentum,fig:depth_momentum,fig:width_rank2,fig:depth_rank2,fig:width_uniform,fig:depth_uniform,fig:width_complete,fig:depth_complete,fig:depth_no_norm,fig:larger_depths,fig:different_eps}.}
    \centering
    \vspace{1mm}
    \begin{tabular}{lcccc}
            \toprule
            \textbf{Setting} & \textbf{$\textbf{\texttt{num\_samples}}$}\\
            \midrule
            \cref{fig:width,fig:width_momentum,fig:width_rank2,fig:width_uniform,fig:width_complete,fig:different_eps} & $10^{8}$\\
            \cref{fig:depth,fig:depth_momentum,fig:depth_uniform,fig:depth_complete,fig:depth_rank2,fig:depth_no_norm} & $10^{9}$\\
            \cref{fig:larger_depths} & $5 \times 10^{9}$\\
            \bottomrule
    \end{tabular}
    \label{tab:num_samples}
\end{table}

% Placed footnote here and not in app_exper because there the footnote came before the actual figure that refers to it.

\begingroup

\interfootnotelinepenalty=10000
\footnotetext{%
We examined weight settings found by gradient descent, and observed that with larger depths, the factorized matrix~$W$ (\cref{eq:mf}) had an effective rank~\citep{roy2007effective} lower than that of the ground truth matrix~$\gt$.
This aligns with the conventional wisdom by which adding layers to a matrix factorization leads gradient descent to have stronger implicit bias towards low rank~\cite{arora2019implicitregularizationdeepmatrix,chou2023gradientdescentdeepmatrix}.
The fact that it was possible to fit the training data with an effective rank lower than that of the ground truth matrix, is an artifact of the training data size being limited in order to ensure reasonable runtime by \GnC.
}

\endgroup

\textbf{GD optimization.}
In all of the experiments we trained gradient descent using the empirical sum of squared errors as a loss function and optimized over full batches. 

All weights matrices were initialized as follows. First, for each layer $j\in[d]$, the layer's weight matrix $W_{j}\in\BR^{m_{j+1},m_{j}}$ was drawn with independent entries from $\U\left(\cdot;-1/\sqrt{m_{j}},1/\sqrt{m_{j}}\right)$ (this is the default PyTorch initialization). Next, in order to facilitate a near-zero initialization, all weight matrices were further scaled by a scalar $\texttt{init\_scale}$. $\texttt{init\_scale}$ was set to $10^{-3}$ in all the experiments of \cref{fig:width,fig:width_momentum,fig:width_rank2,fig:width_uniform,fig:width_complete,fig:different_eps}. \cref{tab:init_scale_depth} reports the values of $\texttt{init\_scale}$ used in the experiments of \cref{fig:depth,fig:depth_momentum,fig:depth_rank2,fig:depth_uniform,fig:depth_complete,fig:depth_no_norm,fig:larger_depths}. 

In order to facilitate more efficient experimentation, we optimized using gradient descent with an adaptive learning rate scheme, where at each iteration a base learning rate is divided by the square root of an exponential moving average of squared gradient norms (see appendix D.2 in \citet{pmlr-v162-razin22a} for more details). We used a weighted average coefficient of $\alpha=0.99$ and a softening constant of $10^{-6}$. Note that only the learning rate (step size) is affected by this scheme, not the direction of movement. Comparisons between the adaptive scheme and optimization with a fixed learning rate showed no significant difference in terms of the dynamics, while run times of the former were considerably shorter. The base learning rate $\eta$ was set to $10^{-4}$ in all the experiments of \cref{fig:width,fig:width_momentum,fig:width_rank2,fig:width_uniform,fig:width_complete,fig:different_eps}. \cref{tab:lr_depth} specifies the base learning rates used in the experiments of \cref{fig:depth,fig:depth_momentum,fig:depth_rank2,fig:depth_uniform,fig:depth_complete,fig:depth_no_norm,fig:larger_depths}. \cref{tab:epochs} specifies the number of epochs used in each experiment.

%\begin{table}[ht]
%    \caption{Gradient descent initialization scale used in the experiments of \cref{fig:width,fig:width_momentum,fig:width_rank2,fig:width_uniform,fig:width_complete} (increasing width). The first three columns (left) specify the experiment (setting, activation function and associated widths $\dhid$), and the last column specifies value of $\texttt{init\_scale}$ used.}
%    \centering
%    \vspace{1mm}
%    \red{
%    \begin{tabular}{lcccc}
%        \toprule
%        \textbf{Setting} & \textbf{Activation} & $\dhid$ & \textbf{$\texttt{init\_scale}$}\\
%        \midrule
%        \cref{fig:width,fig:width_momentum,fig:width_complete} & Linear & XXX & $10^{9}$\\
%        \bottomrule
%    \end{tabular}}
%    \label{tab:init_scale_width}
%\end{table}

\begin{table}[t]
    \caption{Gradient descent initialization scale used in the experiments of \cref{fig:depth,fig:depth_momentum,fig:depth_rank2,fig:depth_uniform,fig:depth_complete,fig:depth_no_norm,fig:larger_depths} (increasing depth). The first three columns (left) specify the experiment (setting, activation function and associated depths $\depth$), and the last column specifies value of $\texttt{init\_scale}$ used.}
    \centering
    \vspace{1mm}
    \begin{tabular}{lcccc}
        \toprule
        \textbf{Setting} & \textbf{Activation} & \textbf{$\depth$} & \textbf{$\texttt{init\_scale}$}\\
        \midrule
        \cref{fig:depth,fig:depth_momentum,fig:depth_uniform,fig:depth_complete,fig:depth_rank2,fig:depth_no_norm,fig:larger_depths} & Linear, Tanh, Leaky ReLU & $2, 3, 4$ & $0.001$\\
        \cref{fig:depth,fig:depth_momentum,fig:depth_uniform,fig:depth_complete,fig:depth_rank2,fig:depth_no_norm,fig:larger_depths} & Linear, Tanh & $5, 6, 7, 8$ & $0.1$\\
        \cref{fig:depth,fig:depth_momentum,fig:depth_uniform,fig:depth_complete,fig:depth_rank2,fig:depth_no_norm,fig:larger_depths} & Linear, Tanh & $9, 10$ & $0.2$\\
        \cref{fig:depth,fig:depth_momentum,fig:depth_uniform,fig:depth_complete,fig:depth_rank2,fig:depth_no_norm,fig:larger_depths} & Leaky ReLU & $5$ & $0.03$\\
        \cref{fig:depth,fig:depth_momentum,fig:depth_uniform,fig:depth_complete,fig:depth_rank2,fig:depth_no_norm,fig:larger_depths} & Leaky ReLU & $6, 7$ & $0.1$\\
        \cref{fig:depth,fig:depth_momentum,fig:depth_uniform,fig:depth_complete,fig:depth_rank2,fig:depth_no_norm,fig:larger_depths} & Leaky ReLU & $8, 9$ & $0.2$\\
        \cref{fig:depth,fig:depth_momentum,fig:depth_uniform,fig:depth_complete,fig:depth_rank2,fig:depth_no_norm,fig:larger_depths} & Leaky ReLU & $10$ & $0.8$\\
        \bottomrule
    \end{tabular}
    \label{tab:init_scale_depth}
\end{table}

%\begin{table}[ht]
%    \caption{Gradient descent base learning rate used in the experiments of \cref{fig:width,fig:width_momentum,fig:width_rank2,fig:width_uniform,fig:width_complete} (increasing width). The first three columns (left) specify the experiment (setting, activation function and associated widths $\dhid$), and the last column specifies the base learning rate $\eta$ used.}
%    \centering
%    \vspace{1mm}
%    \red{
%    \begin{tabular}{lcccc}
%        \toprule
%        \textbf{Setting} & \textbf{Activation} & $\dhid$ & \textbf{$\eta$}\\
%        \midrule
%        \cref{fig:depth,fig:depth_momentum,fig:depth_uniform} & Linear & XXX & $10^{9}$\\
%        \bottomrule
%    \end{tabular}}
%    \label{tab:lr_width}
%\end{table}

\begin{table}[t]
    \caption{Gradient descent base learning rate used in the experiments of \cref{fig:depth,fig:depth_momentum,fig:depth_rank2,fig:depth_uniform,fig:depth_complete,fig:depth_no_norm,fig:larger_depths} (increasing depth). The first three columns (left) specify the experiment (setting, activation function and associated depths $\depth$), and the last column specifies the base learning rate $\eta$ used.}
    \centering
    \vspace{1mm}
    \begin{tabular}{lcccc}
        \toprule
        \textbf{Setting} & \textbf{Activation} & \textbf{$\depth$} & \textbf{$\eta$}\\
        \midrule
        \cref{fig:depth,fig:depth_momentum,fig:depth_uniform,fig:depth_complete,fig:depth_rank2,fig:depth_no_norm,fig:larger_depths} & Linear, Tanh & $2,\dots,10$ & $0.01$\\
        \cref{fig:depth,fig:depth_momentum,fig:depth_uniform,fig:depth_complete,fig:depth_rank2,fig:depth_no_norm,fig:larger_depths} & Leaky ReLU & $2, 3, 4$ & $0.01$\\
        \cref{fig:depth,fig:depth_momentum,fig:depth_uniform,fig:depth_complete,fig:depth_rank2,fig:depth_no_norm,fig:larger_depths} & Leaky ReLU & $5, \dots, 10$ & $0.1$\\
        \bottomrule
    \end{tabular}
    \label{tab:lr_depth}
\end{table}

\begin{table}[t]
    \caption{Number of gradient descent epochs used in the experiments of \cref{fig:width,fig:depth,fig:width_momentum,fig:depth_momentum,fig:width_rank2,fig:depth_rank2,fig:width_uniform,fig:depth_uniform,fig:width_complete,fig:depth_complete,fig:depth_no_norm,fig:different_eps,fig:larger_depths}. The first two columns (left) specify the experiment (setting and activation functions), and the last column specifies the number of epochs used.}
    \centering
    \vspace{1mm}
    \begin{tabular}{lcccc}
            \toprule
            \textbf{Setting} & \textbf{Activation} & \textbf{Number of Epochs}\\
            \midrule
            \cref{fig:width,fig:width_momentum,fig:width_rank2,fig:width_uniform,fig:width_complete,fig:different_eps} & Linear, Tanh, Leaky ReLU & $100000$\\
            \cref{fig:depth,fig:depth_momentum,fig:depth_uniform,fig:depth_complete,fig:depth_no_norm,fig:larger_depths} & Linear, Tanh & $20000$\\
            \cref{fig:depth_rank2} & Linear, Tanh & $50000$\\
            \cref{fig:depth,fig:depth_momentum,fig:depth_uniform,fig:depth_complete,fig:depth_rank2,fig:depth_no_norm,fig:larger_depths} & Leaky ReLU & $50000$\\
            \bottomrule
    \end{tabular}
    \label{tab:epochs}
\end{table}

\end{document}